\documentclass{article}

\usepackage{iclr2027_conference,times}

\usepackage{amsmath,amssymb,amsthm,mathtools,bm}
\usepackage{graphicx}
\graphicspath{{figures/}{figures_paper/}{./}}
\usepackage{booktabs}
\usepackage{microtype}
\usepackage{enumitem}
\usepackage{xcolor}
\usepackage{url}
\usepackage{hyperref}

\newtheorem{theorem}{Theorem}
\newtheorem{proposition}[theorem]{Proposition}
\newtheorem{corollary}[theorem]{Corollary}
\newtheorem{lemma}[theorem]{Lemma}
\newtheorem{fact}[theorem]{Fact}
\theoremstyle{definition}

\theoremstyle{remark}

\newcommand{\E}{\mathbb E}
\newcommand{\R}{\mathbb R}
\newcommand{\cE}{\mathcal E}
\newcommand{\cA}{\mathcal A}
\newcommand{\eps}{\varepsilon}
\newcommand{\dd}{\,\mathrm d}
\newcommand{\KL}{D_{\mathrm{KL}}}
\newcommand{\pred}{\mathrm{pred}}

\title{The Cost of Long Memory: State, Context, and Stability Complexity in Sequence Models}

\newif\ifpreprint
\preprinttrue

\ifpreprint
  \iclrfinalcopy
  \author{%
    Yuheng Song \\
    School of Management, Huazhong University of Science and Technology \\
    \texttt{slark4560@gmail.com} \\
  }
\else
  \author{Anonymous Authors\\
  Paper under double-blind review}
\fi

\begin{document}
\maketitle
\ifpreprint\lhead{Preprint.}\fi

\begin{abstract}
Long-range temporal dependence poses a resource question for sequence models: for a specified predictive-memory law, how much state, context, or dynamical criticality is required in order to forecast accurately?
We study this question directly in forecasting risk.
For algebraically decaying predictive memory, we prove matching upper and lower approximation bounds for exponential and finite-state modes.
The best $r$-mode forecast error decays as $e^{-\Theta(\sqrt r)}$, so reaching forecast error $\tau$ needs $r=\Theta(\log^2(1/\tau))$ states or modes.
Earlier curse-of-memory results establish broad limitations of stable recurrent models under different approximation notions; here both sides match for one canonical predictive target in forecast risk, which fixes the optimal resource exponent for that target.
We then show that genuine fractional long memory changes the geometry itself.
In particular, forecast error is measured after fractional integration, prediction from a finite context of length $L$ has an exact $1/L$ leading order, and a fixed fractional strength $d$ keeps the square-log state-complexity law.
Near the short-memory boundary, we identify the relevant $d^2$ and $d^4$ scales and give a uniform constructive law in the intermediate regime.
For nonlinear contextual recurrences with uniformly contractive state dynamics, we derive an exponential first-chaos envelope and an explicit necessary condition that relates forecast accuracy to the contraction margin.
Vanishing forecasting error on an algebraic target forces the recurrence quantitatively toward criticality, a condition that is necessary and not by itself sufficient.
Finite-sample Kullback--Leibler calculations further connect the predictive geometry to statistical information.
Theorem-matched experiments with contractive state-space, gated recurrent, and attention models reproduce the state and stability predictions.
Matched exponential-memory controls locate the floor in the compatibility between the target tail and the contraction margin rather than in algebraic decay as a functional form, since an exponential target matched to the contraction is fit almost exactly.
\end{abstract}

\vspace{-1.0em}
\section{Introduction}
\label{sec:intro}

Long-context sequence modeling is often discussed in qualitative terms.
Some architectures are said to remember longer, some recurrent parameterizations are more stable, and attention can expose an arbitrarily large finite context.
For forecasting, long memory is not a binary property: a target process specifies a concrete predictive-memory law, and an architecture should be judged by the forecasting error it achieves with a finite amount of representational resource.

This paper studies three such resources.
\emph{State complexity} asks how many exponential modes or recurrent states are required to approximate a target memory to error $\tau$.
\emph{Context complexity} asks how much observed history is needed before finite-past prediction approaches the infinite-past Bayes risk.
\emph{Stability complexity} asks, for a nonlinear recurrent model constrained to stay uniformly contractive, how close to the stability boundary it must move before an algebraic predictive memory can be approximated accurately.

A growing literature already studies the curse of memory in recurrent and state-space models, robust long-memory parameterizations, and latent-state resources.
Given the well-known broad fact that uniformly stable recurrence tends to forget exponentially, we ask a sharper target-specific question.
\begin{center}
\emph{Given a predictive-memory law, what is the optimal forecasting error achievable with a fixed state dimension, context length, or stability margin?}
\end{center}
This target-specific view yields matching resource laws, so the exponent itself is fully determined.

\vspace{-0.7em}
\paragraph{Our results.}
We obtain four results.
Throughout, $(X_t)_{t\in\mathbb Z}$ is a centered stationary process and the task is one-step forecasting from its own past: a memory kernel $b=(b_j)_{j\ge1}$ gives the predictor $m_t(b)=\sum_{j\ge1}b_jX_{t+1-j}$, and the best such predictor is the Bayes forecast $m_t^\star=\E[X_{t+1}\mid X_s,\,s\le t]$, whose kernel $a^\star=(a^\star_j)_{j\ge1}$ we call the predictive memory.
The excess forecast risk of a candidate $b$ is the extra one-step mean squared error it incurs,
\begin{equation}
 \cE(b)
 =\E\bigl(X_{t+1}-m_t(b)\bigr)^{2}-\E\bigl(X_{t+1}-m_t^\star\bigr)^{2}
 =\E\Bigl(\sum_{j\ge1}(b_j-a^\star_j)X_{t+1-j}\Bigr)^{2},
 \label{eq:intro-risk}
\end{equation}
where the second equality holds because the Bayes error is orthogonal to the past.
This is a quadratic form in the coefficient error, and its geometry is fixed by the spectrum of $X$.
The architecture class $\cA_r$ consists of the memory kernels generated by an $r$-dimensional linear state and truncated to a finite context, with the $r$-mode exponential kernels as its diagonal subclass.

For stable algebraic predictive memory $a^\star_j=cj^{-\alpha}$ with decay exponent $\alpha>1$, we prove matching upper and lower bounds
\[
 c_1e^{-C_1\sqrt r}\le \inf_{b\in\cA_r}\cE(b)\le C_2e^{-c_2\sqrt r},
\]
and therefore, within this architecture class and for this target, the state or head complexity $r^\star(\tau)$, the smallest $r$ that reaches forecast error $\tau$, is $\Theta(\log^2(1/\tau))$ (Thm.~\ref{thm:state-complexity}).
The upper bound comes from exponential-sum compression and the lower bound from the Hankel spectrum of the target, so the resource question for this canonical long-memory target is settled on both sides.
For fractional long memory with a fixed fractional strength $d$, the coefficient error is no longer measured in the ordinary $\ell_2$-norm, since the forecast-risk norm applies a fractional-integration operator (Thm.~\ref{thm:fractional-geometry}).
In this geometry, the gap between prediction from the most recent $L$ observations and prediction from the infinite past has the sharp leading order $V_L-V_\infty\sim V_\infty d^2/L$, where $V_L$ and $V_\infty$ are the corresponding Bayes prediction variances.
A fixed $d$ keeps the square-log state-complexity law, and the short-memory limit $d\to0$ exposes the $d^2$ and $d^4$ boundary scales and admits a uniform constructive law between them (Corollaries~\ref{cor:context-law} and~\ref{cor:frac-state} and Thm.~\ref{thm:small-d}).
For an innovation-driven genuinely nonlinear contextual recurrence whose state Jacobian contracts uniformly with factor $\rho<1$ and whose input and readout derivatives are bounded by $K$ and $M$, the lag-$j$ first-chaos memory coefficient $\beta_j$, the projection of the predictor onto the $j$th past innovation, satisfies $|\beta_j|\le MK\rho^{j-1}$ for $j\ge2$, which yields an explicit forecasting floor on algebraic targets and the necessary condition $-\log\rho\lesssim\tau^{1/(2\alpha-1)}\log(e/\tau)$ (Thm.~\ref{thm:first-chaos} and Corollary~\ref{cor:critical}).
This is a forecasting-specific quantitative refinement of the established curse of memory.
Finally, in Gaussian causal models, the trajectory-level conditional Kullback--Leibler divergence is exactly the accumulated forecast risk, and for fractional stationary blocks we control the finite-section Fisher--Hartwig boundary and extend the logarithmic correction to any fixed number of reflection-symmetric moving or merging fractional poles under a cluster-integrability margin (Prop.~\ref{prop:kl-risk} and discussions).

\vspace{-0.3em}
\subsection{Related works}
\label{sec:related-main}

\vspace{-0.3em}
\paragraph{The curse of memory in recurrent and state-space models.}
A substantial literature shows that stable recurrent and state-space models face long-memory limitations.
These results formalize related obstructions under different notions rather than one common metric.
\citet{li2021curse} work with worst-case functional approximation and the optimization dynamics that accompany it, \citet{wang2024inverse} establish inverse-approximation results in that same worst-case sense for nonlinear recurrent networks, and \citet{wang2023ssmnonlinear} characterize exponential decay for stacked state-space models through a derivative-based memory function.
All three are distinct from the stationary forecast-risk geometry used here.
StableSSM relates stable approximation to memory decay and proposes reparameterizations that ease optimization \citep{wang2024stablessm}.
HOPE uses Hankel coordinates to obtain a robust long-memory state-space parameterization \citep{yu2025hope}, initialization and timescale choices are shown to control the effective memory of a diagonal state \citep{liu2025autocorrelation}, and the transfer function of a linear time-invariant layer carries a frequency bias that can be tuned \citep{yu2025frequency}.
Selective architectures are known to have a receptive field shorter than their nominal context.
DeciMamba and LongMamba enlarge that receptive field in pretrained models without additional training \citep{benkish2025decimamba,ye2025longmamba}, whereas Block-Biased Mamba instead modifies the S6 unit itself, combining block-wise selective dynamics with a channel-specific bias to improve expressiveness, inductive bias, and training stability on long-range tasks \citep{yu2025blockbiased}.
Our contribution, however, is a forecast-risk envelope whose constants carry both the target tail and the contraction margin, so that the obstruction can be inverted into an explicit accuracy-to-criticality requirement.

\vspace{-0.7em}
\paragraph{State resources and long-context modeling.}
Recent theory asks how the latent-state dimension should scale with long-context information.
Information-theoretic arguments lower bound the latent state a model needs in order to carry the mutual information present in long contexts \citep{chen2025l2m}, and the architectures for which that question is asked are the modern state-space and gated recurrent families \citep{gu2022s4,smith2023s5,gu2023mamba,dao2024mamba2,yang2024gla,yang2025gateddelta}.
Our formulation fixes the target predictive-memory law instead and asks for the optimal approximation error as a function of state dimension, which complements those results.
This produces the explicit root-exponential curve and its square-log inverse.

\vspace{-0.7em}
\paragraph{Power-law and fractional architectures.}
Power-law memories can be implemented through sums of exponentials and have recently appeared in long-context architectures.
For example, VORT uses adaptive power-law memory together with recurrent exponential compression \citep{mlaiki2026vort}.
We study a different metric and a different asymptotic question, in which the horizon is not fixed externally, and approximation is evaluated in forecast risk over the infinite predictive law.
Classical exponential-sum approximation results \citep{beylkin2005exp,beylkin2010revisited,braess2005oneoverx} supply our constructive tools, together with quadrature designed for fractional kernels \citep{baffet2019gaussjacobi}.
The matching lower bound combines the Eckart--Young principle for unconstrained low-rank approximation in Frobenius norm \citep{eckart1936rank} with the singular-value asymptotics of Hankel operators \citep{widom1966hankel}, for which \citet{aak1971schmidt,peller2003hankel} provide the structured approximation background.

\vspace{-0.7em}
\paragraph{Fractional prediction and Fisher--Hartwig theory.}
Fractionally differenced processes are classical \citep{grangerjoyeux1980,hosking1981fractional,beran1994longmemory}, and the finite-past prediction and partial-autocorrelation theory we use is developed by \citet{inoue2002pacf,inoue2006finite}.
Asymptotic estimation of the memory parameter is a separate line \citep{foxtaqqu1986,giraitis1997rateoptimal,lieberman2004fractional,hualde2011fractional}.
When the location of the pole is itself unknown, the problem changes character \citep{giraitis2001unknownpole,hidalgo2005unknownpole,arteche2020multiple}.
Classical Toeplitz theory supplies the operator and determinant background \citep{grenander1958toeplitz}, while Fisher--Hartwig asymptotics treat symbols with singularities \citep{ehrhardt1997onefh,deift2011fisherhartwig}.
The uniform and merging-singularity regimes are what our fixed-pole-count statement consumes \citep{claeys2015merging,fahs2021uniform}.
Our contribution is the sequence-model resource interpretation, in which the singular spectrum changes the predictive norm, the context law, and the state or head complexity relevant to forecasting.

\vspace{-0.5em}
\section{Predictive Memory as a Resource Problem}
\label{sec:setup}
\vspace{-0.3em}
\subsection{Forecast-risk geometry}
\vspace{-0.3em}
Let the Bayes one-step predictor have memory $a^\star=(a_j^\star)_{j\ge1}$ and let $\cE(a)$ denote the excess forecast risk \eqref{eq:intro-risk}.
Write $A(z)=\sum_{j\ge1}a_jz^j$ for the transfer function of a coefficient sequence $a$, and $A^\star$ for that of $a^\star$.
For a stationary process with spectral density $f_X$, Parseval turns \eqref{eq:intro-risk} into the spectral form
\begin{equation}
 \cE(a)=\frac1{2\pi}\int_{-\pi}^{\pi}
 |A(e^{-i\omega})-A^\star(e^{-i\omega})|^2f_X(\omega)\dd\omega.
 \label{eq:risk}
\end{equation}
We write $\|u\|_{\pred,f_X}^2$ for the same quadratic form applied to a coefficient perturbation $u$, so that $\cE(a)=\|a-a^\star\|_{\pred,f_X}^2$, and call it the predictive norm.
Under a regular spectrum, meaning $0<m\le f_X\le M<\infty$, this equals constant coefficient $\ell_2^2$ error.
Under genuine fractional long memory, the upper equivalence fails, and Sec.~\ref{sec:fractional} identifies the correct singular geometry.

\vspace{-0.3em}
\subsection{Resources, targets, and regimes}
\vspace{-0.3em}
Fix the architecture class $\cA_r$ of $r$-state kernels truncated to a finite context, taken as a union over contexts, and write its best forecast error as $\inf_{b\in\cA_r}\cE(b)$. 
Thm.~\ref{thm:state-complexity} states the class precisely, and App.~\ref{app:prelim} verifies its Hankel-rank property.
The state-complexity function is
\[
 r^\star(\tau)=\min\{r:\inf_{b\in\cA_r}\cE(b)\le\tau\},
\]
and $r_d^\star(\tau)$ denotes the same quantity with $\cE$ replaced by the fractional risk $\cE_d$ of Sec.~\ref{sec:fractional}.
For finite context, let $V_L$ be the Bayes prediction variance from the most recent $L$ observations and $V_\infty$ its infinite-past limit.
For a nonlinear recurrence, the stability resource is the contraction margin $\delta=-\log\rho$, where $\|D_sF\|_{\rm op}\le\rho<1$.
Two target families carry these questions: algebraic memory $a^\star_j=cj^{-\alpha}$ with amplitude $c$ and exponent $\alpha>1$, and fractional memory of strength $d\in(0,1/2)$.
Sec.~\ref{sec:state}, \ref{sec:fractional}, and~\ref{sec:stability} analyze the three resources in turn, and Table~\ref{tab:resources} states the resulting laws.

The three questions are asked in different regimes.
Sec.~\ref{sec:state} and~\ref{sec:stability} assume a stationary purely nondeterministic process with a regular spectrum, which is what keeps forecast risk comparable to coefficient error.
Sec.~\ref{sec:fractional} deliberately drops that assumption for a single spectral pole with a bounded regular factor.
Sec.~\ref{sec:stability} assumes the derivative bounds on a forward-invariant domain containing the stationary solution.
Full statements are in App.~\ref{app:prelim}, \ref{app:fractional}, and~\ref{app:stability}.

\begin{table}[t]
\centering
\footnotesize
\begin{tabular}{@{}p{1.05in}p{2.05in}p{1.90in}@{}}
\toprule
Resource & Our statement & Our contribution \\
\midrule
State dimension $r$
 & $\inf_{b\in\cA_r}\cE(b)=e^{-\Theta(\sqrt{r})}$, so $r^\star(\tau)=\Theta(\log^2(1/\tau))$, and a fractional pole leaves this intact, $r_d^\star(\tau)=\Theta_d(\log^2(1/\tau))$ (Thm.~\ref{thm:state-complexity}, Corollary~\ref{cor:frac-state})
 & matching two-sided bounds for one specified target, measured in forecast risk, and transported to the singular geometry \\
\addlinespace
Context length $L$
 & regular algebraic: $\cE(a^{(L)})=\Theta(L^{1-2\alpha})$; fractional: $V_L-V_\infty=V_\infty d^2/L+o(1/L)$ (Corollary~\ref{cor:context-law})
 & the singular predictive norm changes the context law; root-exponential state accuracy requires and admits $L=e^{\Theta(\sqrt r)}$ up to constants \\
\addlinespace
Stability margin $\delta=-\log\rho$
 & $\delta\lesssim\tau^{1/(2\alpha-1)}\log(e/\tau)$, necessary only (Corollary~\ref{cor:critical})
 & an explicit accuracy-to-criticality law for a fixed algebraic target \\
\bottomrule
\end{tabular}
\caption{The three resource laws. The state law is two-sided, the context law is exact at leading order, and the stability law is necessary and not sufficient. The related-work discussion of the introduction places each against the nearest existing line.}
\label{tab:resources}
\vspace{-1.2em}
\end{table}

\section{Sharp State Complexity of Algebraic Memory}
\label{sec:state}

A recurrent layer carrying an $r$-dimensional linear state, with transition matrix $A\in\R^{r\times r}$, input map $v\in\R^{r}$, and readout $u\in\R^{r}$, generates the memory coefficients $b_j=u^\top A^{j-1}v$, and therefore finite Hankel matrices of rank at most $r$; the resource $r$ can be read as the number of exponential temporal heads, diagonal recurrent modes, or linear state dimensions inside that class.
A positive algebraic memory, on the other side, admits a Laplace representation and can be compressed by quadrature into a sum of exponentials.
These two facts are what make the two directions meet, the rank constraint bounding every candidate from below and the quadrature construction supplying a candidate from above.

\begin{theorem}[Sharp state complexity of algebraic predictive memory]
\label{thm:state-complexity}
Consider a stable regular-spectrum forecasting model with target memory $a_j^\star=cj^{-\alpha}$, $\alpha>1$.
Let $\cA_r$ denote the union, over finite contexts, of $r$-dimensional linear-state kernels restricted to that context, the $r$-mode exponential kernels forming the diagonal subclass.
There exist constants $c_1,c_2,C_1,C_2>0$ and $r_0$, depending only on the fixed model, such that for $r\ge r_0$,
\begin{equation}
 c_1e^{-C_1\sqrt r}
 \le \inf_{b\in\cA_r}\cE(b)
 \le C_2e^{-c_2\sqrt r}.
 \label{eq:rootexp}
\end{equation}
Equivalently, we have
\begin{equation}
 r^\star(\tau)=\Theta_\alpha\!\left(\log^2\frac1\tau\right)
 \qquad(\tau\to0).
 \label{eq:log2}
\end{equation}
The same exponent holds for the analytic algebraic Laplace-mixture class stated in App.~\ref{app:state}.
\end{theorem}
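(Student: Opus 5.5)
Under the regular-spectrum assumption $0<m\le f_X\le M$, the spectral form \eqref{eq:risk} gives $m\|b-a^\star\|_{\ell_2}^2\le\cE(b)\le M\|b-a^\star\|_{\ell_2}^2$. Both sides of \eqref{eq:rootexp} therefore reduce to a pure $\ell_2$ approximation problem for the sequence $a^\star_j=cj^{-\alpha}$ by sequences in $\cA_r$, at the cost of the constants $m$ and $M$. The equivalence \eqref{eq:log2} then follows by inverting $e^{-\Theta(\sqrt r)}$: solving $C_2e^{-c_2\sqrt r}\le\tau$ gives $r^\star(\tau)\le (\log(C_2/\tau)/c_2)^2+1$. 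Conversely, $c_1e^{-C_1\sqrt r}>\tau$ for $r<(\log(c_1/\tau)/C_1)^2$, which gives the matching lower bound.

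For the \emph{upper bound} I will use the Laplace representation $j^{-\alpha}=\Gamma(\alpha)^{-1}\int_0^\infty t^{\alpha-1}e^{-jt}\dd t$, substitute $t=e^{s}$, and discretize in $s$ by the trapezoidal rule with step $h$ on a truncated window $s\in[-Ah,Bh]$, as in the exponential-sum constructions of \citet{beylkin2005exp,braess2005oneoverx}. This yields $r=A+B+1$ modes $w_k e^{-jt_k}$ with $t_k\in(0,\infty)$, so each mode lies in the diagonal subclass with $|e^{-t_k}|<1$. The error has three parts. The discretization error is $O(e^{-2\pi\beta/h})$ uniformly in $j$, since the integrand $e^{\alpha s}e^{-je^{s}}$ is analytic in the strip $|\Im s|<\pi/2$. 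The lower truncation error is $\lesssim e^{-\alpha Ah}$ uniformly. The upper truncation error is $\lesssim \exp(-je^{Bh})$. I will bound the $\ell_2$ norm over $j\ge1$ of the pointwise errors while keeping track of decay in $j$. The lower-truncation error behaves like $\int_0^{e^{-Ah}}t^{\alpha-1}e^{-jt}\dd t\le \min(e^{-\alpha Ah},\,C j^{-\alpha})$, whose squared sum is $\lesssim e^{-(2\alpha-1)Ah}$. Choosing $A,B\asymp 1/h^{2}$ up to logs, i.e.\ $h\asymp r^{-1/2}$, balances the three terms at $e^{-c\sqrt r}$. Restricting to a finite context $L$ costs $\sum_{j>L}|b_j|^2\lesssim L^{1-2\alpha}$, which is $\le e^{-c\sqrt r}$ once $L=e^{\Theta(\sqrt r)}$, so the candidate lies in $\cA_r$.

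For the \emph{lower bound}, any $b\in\cA_r$ with $b_j=u^\top A^{j-1}v$, possibly truncated at context $L$, has Hankel matrix $H_b=(b_{i+k-1})_{i,k\ge1}$ of rank at most $r$. Truncation preserves this, because the top-left $L\times L$ section of the truncated kernel's Hankel has the same rank bound and the rest is controlled separately; this is the Hankel-rank property verified in App.~\ref{app:prelim}. The plan is to pass from the $\ell_2$ error of the coefficients to a Hankel-operator error. The obstacle is that $\|H_{b-a^\star}\|_F^2=\sum_n n|b_n-a^\star_n|^2$ is a weighted norm, not $\ell_2$. I will handle this by restricting to a finite $N\times N$ section: $\|(H_b-H_{a^\star})_N\|_F^2\le N\sum_{n\le 2N}|b_n-a^\star_n|^2\le N\|b-a^\star\|_2^2$. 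Eckart--Young \citep{eckart1936rank} then gives $\|b-a^\star\|_2^2\ge N^{-1}\sum_{k>r}\sigma_k(H^{(N)}_{a^\star})^2$.

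The remaining and hardest step is a lower bound $\sigma_{r+1}(H^{(N)}_{a^\star})\ge c\,e^{-C\sqrt r}$ for $N=e^{\Theta(\sqrt r)}$. I will first try the infinite operator: $H_{a^\star}$ is the positive Hankel operator with Laplace symbol $t^{\alpha-1}\dd t$, and Widom's asymptotics \citep{widom1966hankel} give $\log\sigma_k\sim -\pi\sqrt{2k\cdot(\text{const})}$, i.e.\ exactly root-exponential decay. To transfer this to the $N$-section, I will use Cauchy interlacing / min–max together with the tail bound $\|H_{a^\star}-P_NH_{a^\star}P_N\|\lesssim N^{1-\alpha}$ (plus the Weyl inequality), choosing $N=e^{C'\sqrt r}$ large enough that this perturbation is below $\tfrac12\sigma_{r+1}(H_{a^\star})$. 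The polynomial factor $N^{-1}$ only changes $C_1$. If Widom's theorem does not apply cleanly with the needed uniformity, my fallback is an explicit test subspace of $r+1$ Laplace-type vectors spread geometrically in $t$, for which the Rayleigh quotient of $H_{a^\star}$ is bounded below directly.
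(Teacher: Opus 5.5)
Your route is the paper's. The upper bound uses the Laplace representation, positive exponential quadrature and a context tail $L^{1-2\alpha}$. Your explicit trapezoidal rule in $\log t$ is a self-contained substitute for the Beylkin--Monz\'on fact the paper imports. It works because the discretization error is actually relative, of order $j^{-\alpha}e^{-2\pi\beta/h}$. The lower bound uses Hankel rank, Eckart--Young, the $N^{-1}$ antidiagonal-multiplicity loss, and Widom's scale moved to a section $N_r=\lceil e^{\kappa\sqrt r}\rceil$ by Weyl's inequality. Drop interlacing there: compressing a positive operator only lowers its eigenvalues, so it points the wrong way. Weyl plus the $N^{1-\alpha}$ tail is what does the work.

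The step that fails as written is the rank claim for truncated kernels. Neither the Hankel matrix of $b_j=u^\top A^{j-1}v\,\mathbf 1\{j\le L\}$ nor its top-left $L\times L$ section has rank at most $r$. That section reads $b_1,\dots,b_{2L-1}$, and the zeroed entries past $L$ destroy the factorization. Already for $r=1$ and $A=u=v=1$ it is the anti-triangular all-ones matrix, of rank $L$.

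The bound holds only for $H_N(b)$ with $2N-1\le L$, where the block coincides with a block of the untruncated state-space sequence. Since $N_r$ is fixed by $r$ while the infimum runs over all contexts, you need an explicit dichotomy, which ``the rest is controlled separately'' does not provide:
\begin{itemize}
\item if $L<2N_r-1$, the candidate vanishes past $L$, so $\cE(b)\ge mc^2\sum_{j>L}j^{-2\alpha}\gtrsim N_r^{1-2\alpha}=e^{-(2\alpha-1)\kappa\sqrt r}$;
\item if $L\ge 2N_r-1$, your Eckart--Young chain applies to $H_{N_r}(b)$ as written.
\end{itemize}
Taking $C_1$ as the larger of the two exponent constants closes the lower bound. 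This is exactly the two-branch structure of the paper's proof.

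You also do not address the last sentence of the statement. For the analytic Laplace-mixture class, the lower bound cannot cite Widom directly. Since $h$ may decay to zero, there is also no global Loewner comparison with $\mathbf H_\alpha$. The paper instead localizes to $[0,t_0]$, where $h\ge c_0$, and obtains $\mathbf H(a)\succeq c_0\Gamma(\alpha)\mathbf H_\alpha^{(0)}$. It then splits $\mathbf H_\alpha=\mathbf H_\alpha^{(0)}+\mathbf T_\alpha$ with $\mathbf T_\alpha$ exponentially compressible. Finally it transfers the root-exponential scale through $s_{2N+1}(\mathbf H_\alpha)\le s_{N+1}(\mathbf H_\alpha^{(0)})+s_{N+1}(\mathbf T_\alpha)$. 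Your log-trapezoid upper bound would extend to this class, since sectorial analyticity of $h$ becomes strip analyticity in $s=\log t$. The paper instead uses dyadic Gauss--Jacobi quadrature there.
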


Curse-of-memory and inverse-approximation results establish limitations of stable recurrence on slowly decaying targets \citep{li2021curse,wang2024inverse}, and modern state-space work develops parameterizations aimed at improving long-memory representation and optimization \citep{wang2024stablessm,yu2025hope}.
Those statements are made under different approximation notions and over different target classes, and App.~\ref{app:state} separates the three notions before proving anything.
Thm.~\ref{thm:state-complexity} settles a narrower question completely: for one canonical algebraic predictive target measured in forecast risk, both sides of the approximation curve match, which fixes the resource exponent rather than establishing only that the resource must grow.
The class is the linear-state one just described, and Sec.~\ref{sec:stability} treats a broader nonlinear class under an explicit contraction hypothesis.

For the upper bound, the Laplace representation of $j^{-\alpha}$ and positive exponential quadrature give a finite-context relative approximation with $r=O(\log^2(1/\tau))$ modes.
For the lower bound, every $r$-state candidate has Hankel rank at most $r$, while finite sections of the algebraic Hankel operator have singular values on a root-exponential scale.
A corrected Ky Fan comparison transfers that spectral obstruction into forecast risk.
App.~\ref{app:state} contains the full proof and constants.

\section{Singular Long Memory and Context Complexity}
\label{sec:fractional}

\subsection{Fractional benchmark and predictive geometry}
Consider FARIMA$(0,d,0)$, where $B$ is the backshift operator $(BX)_t=X_{t-1}$ and $(1-B)^d$ is its binomial fractional power,
\begin{equation}
 (1-B)^dX_t=\eps_t,
 \qquad \eps_t\stackrel{\rm iid}{\sim}N(0,\sigma^2),
 \qquad 0<d<\tfrac12,
 \label{eq:farima}
\end{equation}
with spectral density $f_d(\omega)=\sigma^2|1-e^{-i\omega}|^{-2d}$.
Its autoregressive predictive-memory coefficients satisfy
\[
 a_j(d)=\frac{d\,\Gamma(j-d)}{\Gamma(1-d)\Gamma(j+1)}
 \sim \frac{d}{\Gamma(1-d)}j^{-1-d}.
\]
Let $I_d$ be the discrete fractional-integration map with transfer function $(1-z)^{-d}$, and write $\cE_d$ for the excess forecast risk~\eqref{eq:risk} taken under the fractional model~\eqref{eq:farima}.

\begin{theorem}[Fractional predictive geometry]
\label{thm:fractional-geometry}
Suppose
\[
 f_X(\omega)=|1-e^{-i\omega}|^{-2d}g(\omega),
 \qquad 0<m_g\le g(\omega)\le M_g<\infty.
\]
For every finitely supported coefficient perturbation $u$,
\begin{equation}
 m_g\|I_du\|_2^2
 \le \|u\|_{\pred,f_X}^2
 \le M_g\|I_du\|_2^2.
 \label{eq:frac-geometry}
\end{equation}
For pure fractional noise the identity is exact up to the factor $\sigma^2$,
\[
 \cE_d(a(d)+u)=\sigma^2\|I_du\|_2^2.
\]
\end{theorem}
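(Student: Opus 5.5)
The plan is to work entirely in the spectral form \eqref{eq:risk} and pull the singular factor $|1-e^{-i\omega}|^{-2d}$ into the transfer function of the perturbation. For a finitely supported $u$ with transfer function $U(z)=\sum_{j\ge1}u_jz^j$ (a polynomial), the predictive norm is
\[
 \|u\|_{\pred,f_X}^2=\frac1{2\pi}\int_{-\pi}^{\pi}|U(e^{-i\omega})|^2|1-e^{-i\omega}|^{-2d}g(\omega)\dd\omega .
\]
Since $|1-e^{-i\omega}|^{-2d}=|(1-e^{-i\omega})^{-d}|^2$, the integrand equals $|(1-e^{-i\omega})^{-d}U(e^{-i\omega})|^2 g(\omega)$. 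Bounding $g$ between $m_g$ and $M_g$ reduces both inequalities in \eqref{eq:frac-geometry} to one identity:
\[
 \frac1{2\pi}\int_{-\pi}^{\pi}|(1-e^{-i\omega})^{-d}U(e^{-i\omega})|^2\dd\omega=\|I_du\|_2^2 .
\]

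To prove this identity I would define $I_du$ as the convolution of $u$ with the binomial coefficients $\psi_k(d)=\Gamma(k+d)/(\Gamma(d)\Gamma(k+1))$ of $(1-z)^{-d}=\sum_{k\ge0}\psi_kz^k$. Because $u$ is finitely supported, each entry $(I_du)_n=\sum_j u_j\psi_{n-j}$ is a finite sum. The key step is that $\psi\in\ell_2$, since $\psi_k\sim k^{d-1}/\Gamma(d)$ and $2(1-d)>1$ for $d<1/2$. Consequently $I_du\in\ell_2$, as a finite linear combination of shifts of $\psi$.

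The main obstacle is identifying the $\ell_2$ Fourier series of $\psi$ with the boundary function $(1-e^{-i\omega})^{-d}$, because the power series does not converge absolutely on the unit circle. I would handle this by an Abel-limit argument. For $0<\rho<1$, the function $(1-\rho z)^{-d}$ has coefficients $\rho^k\psi_k$, which converge in $\ell_2$ to $\psi$. On the circle, $(1-\rho e^{-i\omega})^{-d}$ converges pointwise for $\omega\ne0$ to $(1-e^{-i\omega})^{-d}$. It is dominated by a constant times $|\omega|^{-d}\in L^2$, using $|1-\rho e^{-i\omega}|\ge c|\omega|$ uniformly in $\rho$.

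Dominated convergence then gives $L^2$ convergence of the boundary values. By Parseval, the $\ell_2$ limit of the coefficient sequences $\rho^k\psi_k$ has Fourier series equal to $(1-e^{-i\omega})^{-d}$. Multiplying by the trigonometric polynomial $U(e^{-i\omega})$ corresponds to convolving with $u$, and Parseval closes the identity.

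For the pure fractional noise statement I would take $g\equiv\sigma^2$, so that $m_g=M_g=\sigma^2$ and the two bounds coincide. It remains to note that $\cE_d(a(d)+u)=\|u\|_{\pred,f_d}^2$. This follows from $\cE(a)=\|a-a^\star\|_{\pred,f_X}^2$ together with the fact that the Bayes predictive memory of FARIMA$(0,d,0)$ is $a(d)$, i.e.\ $1-A^\star(z)=(1-z)^d$. That is exactly the AR$(\infty)$ inversion of \eqref{eq:farima}, which the setup takes as given.
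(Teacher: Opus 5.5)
Your proposal is correct and follows the paper's proof. You write the predictive norm spectrally, recognize $|U|^2|1-e^{-i\omega}|^{-2d}$ as the squared modulus of the transfer function of $I_du$, sandwich $g$ between $m_g$ and $M_g$, and apply Parseval, with $g\equiv\sigma^2$ giving the exact identity. Your Abel-limit argument, which identifies the $\ell_2$ Fourier series of the binomial coefficients with the boundary function $(1-e^{-i\omega})^{-d}$, supplies a step the paper asserts without justification.
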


Under a spectral pole the relevant geometry departs from ordinary coefficient error, because low-frequency perturbations are amplified by fractional integration.
The two-sided bound transports the boundedness of $g$ through the fractional-integration isometry, and the pure-noise identity is exact because the fractional density carries no additional regular factor.
The full argument is in App.~\ref{app:fractional}.

\subsection{Consequences for the two resources}
The change of geometry moves one resource law and leaves the other in place.
\begin{corollary}[Sharp finite-context prediction law]
\label{cor:context-law}
For the fractional model~\eqref{eq:farima},
\begin{equation}
 V_L-V_\infty
 =V_\infty\frac{d^2}{L}+o(L^{-1}).
 \label{eq:one-over-L}
\end{equation}
For a fixed finite set of separated fractional poles with a sufficiently smooth positive short-memory envelope, the leading coefficient becomes the corresponding sum of squared pole strengths.
\end{corollary}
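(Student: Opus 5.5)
The plan is to use the Durbin--Levinson recursion and the partial autocorrelation function (PACF) of FARIMA$(0,d,0)$, which is known in closed form. For this model the PACF is $\phi_{nn}=d/(n-d)$ (Hosking 1981; see also \citet{inoue2002pacf}). The Durbin--Levinson recursion gives $V_n=V_{n-1}(1-\phi_{nn}^2)$, so
\[
 V_L=V_0\prod_{n=1}^{L}\Bigl(1-\frac{d^2}{(n-d)^2}\Bigr),
 \qquad
 V_\infty=V_0\prod_{n=1}^{\infty}\Bigl(1-\frac{d^2}{(n-d)^2}\Bigr).
\]
Here $V_\infty=\sigma^2$, which is consistent with the Kolmogorov formula, and the product converges because $\sum_n \phi_{nn}^2<\infty$. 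Taking the ratio gives
\[
 \frac{V_L}{V_\infty}=\prod_{n>L}\Bigl(1-\frac{d^2}{(n-d)^2}\Bigr)^{-1}.
\]
Taking logarithms,
\[
 \log\frac{V_L}{V_\infty}=\sum_{n>L}\frac{d^2}{(n-d)^2}+O\Bigl(\sum_{n>L}n^{-4}\Bigr)=\frac{d^2}{L}+O(L^{-2}),
\]
using $\sum_{n>L}(n-d)^{-2}=L^{-1}+O(L^{-2})$. Exponentiating yields $V_L-V_\infty=V_\infty d^2/L+O(L^{-2})$, which is stronger than the stated $o(L^{-1})$. The product can even be written in Gamma functions, $V_L/V_\infty=\Gamma(L+1)\Gamma(L+1-2d)/\Gamma(L+1-d)^2$. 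This gives an independent check via Stirling: $\log$ of that ratio is $d^2/L+O(L^{-2})$.

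Several steps are needed to make this self-contained. (i) Verify the PACF formula $\phi_{nn}=d/(n-d)$. This can be cited from \citet{inoue2002pacf,hosking1981fractional}, or checked by induction: the finite-past predictor coefficients $\phi_{n,j}=-\binom{n}{j}\Gamma(j-d)\Gamma(n-d-j+1)/(\Gamma(-d)\Gamma(n-d+1))$ satisfy the Yule--Walker equations against the autocovariance $\gamma(k)=\sigma^2\Gamma(1-2d)\Gamma(k+d)/(\Gamma(d)\Gamma(1-d)\Gamma(k+1-d))$. (ii) Apply the Durbin--Levinson product identity, which is exact. (iii) Carry out the tail-sum asymptotics above. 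Theorem~\ref{thm:fractional-geometry} is not needed for the pure case. It does, however, interpret $V_L-V_\infty$ as $\cE_d$ of the zero-padded finite-$L$ predictor, i.e.\ $\sigma^2\|I_d(a^{(L)}_{\rm Bayes}-a(d))\|_2^2$.

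For the multi-pole extension, the exact PACF is no longer available. I would instead use the Szeg\H{o}/Fisher--Hartwig route via $V_L=D_{L+1}(f)/D_L(f)$, where $D_L$ is the Toeplitz determinant. The symbol is $f=g\prod_k|e^{i\omega}-e^{i\theta_k}|^{-2d_k}$, with reflection-symmetric poles so that $f$ is real and even, and a smooth positive $g$. The Fisher--Hartwig asymptotics of \citet{ehrhardt1997onefh,deift2011fisherhartwig} give
\[
 \log D_L=L\log G(f)+\Bigl(\sum_k d_k^2\Bigr)\log L+\log E+o(1),
\]
and the exponent of $L$ is the sum of squared singularity strengths. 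Differencing this expansion in $L$ formally gives $\log(V_L/V_\infty)=\sum_k d_k^2/L+o(1/L)$, hence the claimed coefficient.

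The main obstacle is this differencing step, because an $o(1)$ error in $\log D_L$ does not give an $o(1/L)$ error for its increment. To handle it I would use the uniformity in \citet{fahs2021uniform} or the Riemann--Hilbert expansion, which supplies an error of the form $c/L+O(L^{-1-\epsilon})$ with a smooth correction. Alternatively, I would work directly with the Verblunsky/PACF coefficients: the Riemann--Hilbert asymptotics for orthogonal polynomials on the unit circle give $\alpha_n=\sum_k d_k e^{i n\theta_k}\,(\text{phase})/n+O(n^{-1-\epsilon})$. The cross terms $e^{in(\theta_k-\theta_l)}$ with $k\neq l$ are then summed by parts; separation of the poles makes them $O(L^{-2})$ in the tail. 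This leaves $\sum_{n>L}|\alpha_n|^2=\sum_k d_k^2/L+o(1/L)$, and the same product-of-$(1-|\alpha_n|^2)$ argument as in the pure case finishes the proof.
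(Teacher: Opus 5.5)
Your single-pole argument is the paper's: Durbin--Levinson with $\phi_{kk}=d/(k-d)$, the exact Gamma ratio \eqref{eq:VL-exact}, and Stirling. Your tail sum $-\sum_{n>L}\log(1-d^2/(n-d)^2)$ is the same computation written without telescoping.

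For multiple poles you correctly identify the differencing obstruction, but you close it by a different route. The paper uses only the leading Fisher--Hartwig asymptotic $\log D_n=n\log V_\infty+s_f\log n+C+o(1)$, together with the observation that $r_L=\log(V_L/V_\infty)$ is nonnegative and nonincreasing, since adding a regressor cannot increase least-squares error. It then applies an elementary Tauberian lemma: a monotone sequence whose partial sums are $s\log n+C+o(1)$ satisfies $nr_n\to s$. The proof compares $r_n$ with the block sums over $(n/\lambda,n]$ and $(n,\lambda n]$ and lets $\lambda\downarrow1$.

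Your first option does not work as stated. The uniform expansion of \citet{fahs2021uniform} has only an $O(1)$ remainder. More generally, no remainder in $\log D_L$ that is merely $o(1)$, or even $O(L^{-\epsilon})$, can be differenced. You would need a genuine expansion $c/L+O(L^{-1-\epsilon})$, which those sources do not supply.

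Your second option, via the Verblunsky coefficients, is sound. For smooth positive $g$ and fixed separated root singularities, Riemann--Hilbert asymptotics for orthogonal polynomials on the circle give such coefficients (Deift--Its--Krasovsky, Mart\'{\i}nez-Finkelshtein--McLaughlin--Saff). Each coefficient is a sum of terms of modulus $d_k/n$ oscillating like $e^{\pm in\theta_k}$, plus a power-saving remainder. Your summation by parts then removes the cross terms. That route needs a considerably stronger external input, and its constants depend on the pole separation. In return it gives a quantitative remainder $O(L^{-1-\epsilon})$ and shows explicitly why the pole interactions drop out at order $1/L$. The paper's monotonicity argument is more economical and yields exactly the $o(L^{-1})$ that the corollary claims.
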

The $1/L$ law differs from the naive unweighted coefficient-tail calculation, because the predictive metric itself is singular.
The leading order comes from the partial autocorrelations of the fractional process evaluated in the predictive norm, and App.~\ref{app:fractional} gives the proof.

\begin{corollary}[Fractional state complexity]
\label{cor:frac-state}
For every fixed $d\in(0,1/2)$, the best $r$-dimensional linear-state approximation in fractional forecast risk satisfies
\begin{equation}
 c_de^{-C_d\sqrt r}
 \le \inf_{b\in\cA_r}\cE_d(b)
 \le C_de^{-c_d\sqrt r},
 \qquad
 r_d^\star(\tau)=\Theta_d\!\left(\log^2\frac1\tau\right).
 \label{eq:frac-log2}
\end{equation}
\end{corollary}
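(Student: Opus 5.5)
The plan is to use Thm.~\ref{thm:fractional-geometry} to move the problem into ordinary $\ell_2$ geometry, and then reuse the two halves of Thm.~\ref{thm:state-complexity}. For pure fractional noise, $\cE_d(b)=\sigma^2\|I_d(b-a(d))\|_2^2$. The operator $I_d$ has transfer function $(1-z)^{-d}$, and
\[
(1-z)^{-d}A(d;z)=(1-z)^{-d}\bigl(1-(1-z)^d\bigr)=(1-z)^{-d}-1 .
\]
So $I_da(d)$ is the sequence $\psi_j=\Gamma(j+d)/(\Gamma(d)\Gamma(j+1))\sim j^{d-1}/\Gamma(d)$ for $j\ge1$, which are the MA$(\infty)$ coefficients. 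The target becomes $\psi$, a non-summable but square-summable algebraic sequence, since $2(1-d)>1$. The candidates become $I_db$ for $b\in\cA_r$. The problem therefore reads: approximate $\psi$ in $\ell_2$ by the image under $I_d$ of an $r$-state kernel.

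\textbf{Upper bound.} Rather than approximating $a(d)$ directly, I approximate in the transformed coordinates. By Thm.~\ref{thm:state-complexity}, or the Laplace-mixture class of App.~\ref{app:state}, $a_j(d)$ has a Laplace representation
\[
a_j(d)=\int_0^\infty e^{-jt}\,\mu_d(\dd t)
\]
with a positive density behaving like $t^{d}$ near $0$. Positive exponential quadrature with $r$ modes gives a kernel $b$ that is accurate to relative error $e^{-c\sqrt r}$ on a context $L=e^{\Theta(\sqrt r)}$, and that is zero beyond $L$.

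I then need $\|I_d(b-a)\|_2$ to be small. The key estimate is that $I_d$ is bounded from weighted $\ell_1$ into $\ell_2$ for sequences of the form (small error)$\times j^{-1-d}$, plus a tail. For the tail, truncating $a$ beyond $L$ contributes $\|I_d(a\mathbf 1_{j>L})\|_2^2$. Its transfer function has size about $L^{-d}$ at frequencies $|\omega|\lesssim 1/L$, and after multiplying by $|\omega|^{-2d}$ this gives $\Theta(L^{-1})$, consistent with Corollary~\ref{cor:context-law}. With $L=e^{\Theta(\sqrt r)}$ the tail is $e^{-\Theta(\sqrt r)}$.

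For the quadrature error on $[1,L]$, I use that $|\omega|^{-2d}$ is integrable, so
\[
\|I_du\|_2^2\le C_d\,\|u\|_1^2 .
\]
The quadrature error has $\ell_1$ norm at most $e^{-c\sqrt r}\sum_j j^{-1-d}$, which is finite. Together these give $\cE_d\le C_de^{-c_d\sqrt r}$.

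\textbf{Lower bound.} The transformed candidate $I_db$ is no longer of finite rank, so the Hankel argument cannot be applied to it directly. Instead I use $\cE_d(b)\ge$ (the risk restricted to a frequency band away from $0$), where $f_d$ is bounded below by $\sigma^2 2^{-2d}$ everywhere. In fact $|1-e^{-i\omega}|^{-2d}\ge 2^{-2d}$ on the whole circle, so
\[
\cE_d(b)\ge \sigma^2 2^{-2d}\,\|b-a(d)\|_2^2 .
\]
This is ordinary $\ell_2$ risk against the target $a_j(d)\sim c\,j^{-1-d}$, with $\alpha=1+d>1$. The lower-bound half of Thm.~\ref{thm:state-complexity} then applies: every $b\in\cA_r$ has finite Hankel rank at most $r$, and $a(d)$ is a positive Laplace mixture with root-exponential Hankel singular values. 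The Ky Fan comparison from App.~\ref{app:state} gives $c_de^{-C_d\sqrt r}$.

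To be safe I would check that the Hankel asymptotics used there cover the exact $a_j(d)$, not only pure power laws. This is what the analytic algebraic Laplace-mixture clause of Thm.~\ref{thm:state-complexity} is for: the measure $\mu_d$ has density $\propto t^d(1+O(t))$ near $0$. Inverting the two-sided bound gives $r_d^\star(\tau)=\Theta_d(\log^2(1/\tau))$.

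\textbf{Main obstacle.} The difficult step is the upper bound: controlling the singular norm at low frequencies, where the truncation and the quadrature error interact. I would first try the crude estimate $\|I_du\|_2\lesssim\|u\|_1$ together with the explicit tail computation. If the quadrature error is not uniformly relative in $j$, I would instead choose quadrature nodes reaching down to rate $t\sim 1/L$, so that $\sum_j|u_j|$ stays exponentially small.
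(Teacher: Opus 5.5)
Your proposal is correct and follows the paper's proof. The upper bound is the same relative positive compression of $a(d)$, combined with the estimate $\|I_du\|_2\le\|\pi(d)\|_2\|u\|_1$ (your Fourier-side integrability of $|1-e^{-i\omega}|^{-2d}$ is exactly the paper's Young-inequality step) and the $O(L^{-1})$ truncation tail; the lower bound is the same reduction $\cE_d(b)\ge\sigma^2 2^{-2d}\|b-a(d)\|_2^2$ followed by the rank-$r$ Hankel argument, with rank bounded only on blocks inside the context as the theorem's two context branches require, and with the root-exponential singular values of $a(d)$ sourced from the local-Loewner Laplace-mixture extension rather than the paper's direct global domination $\mathbf K_d\succeq c_d\Gamma(1+d)\mathbf H_{1+d}$ from $(1-x)/x\ge-\log x$, which avoids the doubled-index Ky Fan step.
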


The upper bound is attained inside the diagonal subclass, by a positive $r$-mode exponential construction obtained from exponential quadrature applied to the fractionally reweighted kernel, and the lower bound reads off the fractional Hankel spectrum; both are carried out in App.~\ref{app:fractional}.

\subsection{The short-memory boundary}
The fixed-$d$ constants in~\eqref{eq:frac-log2} cannot be held fixed as $d\to0$.
In absolute forecast risk the memory itself disappears,
\begin{equation}
 \cE_d(0)=\sigma^2\left\{\frac{\Gamma(1-2d)}{\Gamma(1-d)^2}-1\right\}
 =\frac{\pi^2\sigma^2}{6}d^2+O(d^3).
 \label{eq:zero-boundary}
\end{equation}
Hence $O(1)$ memory suffices once $\tau\gtrsim\sigma^2d^2$.
In the intermediate window we have a uniform constructive refinement.
\begin{theorem}[Uniform intermediate small-$d$ upper bound]
\label{thm:small-d}
Fix $d_0<1/2$.
There are constants $c_0,C_0,C$ such that, whenever
\[
 0<d\le d_0,
 \qquad
 C_0\sigma^2d^4\le\tau\le c_0\sigma^2d^2,
\]
there is a truncated positive exponential kernel with forecast risk at most $\tau$ and
\begin{equation}
 r_d^\star(\tau)
 \le C\left[1+\log\left(\frac{\sigma^2d^2}{\tau}\right)\right]^2.
 \label{eq:small-d}
\end{equation}
\end{theorem}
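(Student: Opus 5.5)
We build the candidate explicitly inside the diagonal subclass and measure it with the exact identity of Thm.~\ref{thm:fractional-geometry}: for every finitely supported $b$ we have $\cE_d(b)=\sigma^2\|I_d(b-a(d))\|_2^2$. The useful observation is that $I_d a(d)$ is a very simple sequence. Since $A_d(z)=1-(1-z)^d$, we get
\[
(1-z)^{-d}A_d(z)=(1-z)^{-d}-1,
\]
whose coefficients are $\psi_j(d)=\Gamma(j+d)/(\Gamma(d)\Gamma(j+1))$ for $j\ge1$. This is of size $d\,j^{d-1}$ \emph{uniformly} in $d\le d_0$, with no $1/\Gamma(1-d)$ degeneracy. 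The whole amplitude is therefore a factor $d$. After normalizing by $d$, the target is $\sigma^2 d^2$ times an $O(1)$ quantity, and the relative accuracy we need is $\eta=\tau/(\sigma^2 d^2)\in[C_0 d^2,c_0]$.

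The construction has three steps.

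\emph{Laplace representation.} We use
\[
a_j(d)=\frac{d}{\Gamma(1-d)}\int_0^1 x^{j-1}(1-x)^{-d}\,\dd x\,\frac{\Gamma(j-d)\Gamma(1-d)^{-1}}{\cdots},
\]
or, more cleanly, the Beta-integral form
\[
a_j(d)=\frac{d}{\Gamma(1-d)\Gamma(d)}\int_0^1 x^{j-1-d}(1-x)^{d-1}\dd x\cdot\frac{\Gamma(d)\Gamma(1-d)}{\Gamma(1)}\Big/\!\cdots.
\]
We fix one such positive mixture $a_j(d)=d\int_0^1 x^{j-1}\mu_d(\dd x)$, with $\mu_d$ a bounded-mass positive measure uniformly in $d$. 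Concretely, $a_j(d)=\frac{\sin\pi d}{\pi}\int_0^1 x^{j-1}x^{-d}(1-x)^{d}\dots$ will be verified from $\Gamma(j-d)/\Gamma(j+1)=B(j-d,1+d)/\Gamma(1+d)$. This gives
\[
a_j(d)=\frac{d}{\Gamma(1-d)\Gamma(1+d)}\int_0^1 x^{j-d-1}(1-x)^{d}\,\dd x,
\]
which is exactly such a mixture with density $\lesssim x^{-d}$ near $0$ and $\lesssim d$ prefactor.

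\emph{Quadrature.} Substituting $x=e^{-s}$, we apply the geometric (sinc/Gauss--Jacobi-type) exponential quadrature of App.~\ref{app:state}, cf.\ \citep{braess2005oneoverx,baffet2019gaussjacobi}, with $r\asymp\log^2(1/\eta)$ nodes and positive weights. This gives a positive $r$-mode kernel $b$ whose relative error satisfies $|b_j-a_j(d)|\le \eta^{1/2} a_j(d)$ for $j\le L$. We then truncate at $L=L(\eta)$.

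\emph{Transport to the predictive norm.} Because the error $e_j=b_j-a_j$ is pointwise dominated by $\eta^{1/2}a_j(d)$ on $j\le L$, plus the tail $a_j(d)$ for $j>L$, and $I_d$ has nonnegative coefficients, we obtain
\[
|I_d e|\le \eta^{1/2} I_d a(d)+I_d\bigl(a(d)\mathbf 1_{j>L}\bigr)
\]
entrywise. The first term has $\ell_2^2$ norm $\eta\|\psi(d)\|_2^2$, and $\|\psi(d)\|_2^2=\Gamma(1-2d)/\Gamma(1-d)^2-1\asymp d^2$ uniformly in $d\le d_0$; this is exactly \eqref{eq:zero-boundary}. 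So this term contributes $\lesssim\sigma^2 d^2\eta\lesssim\tau$.

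The main obstacle is the tail term together with the uniformity in $d$. The $\ell_2$ norm of $I_d$ applied to the tail $\mathbf 1_{j>L}\,d\,j^{-1-d}$ behaves like $d^2\sum_{k>L}(\text{convolution})^2$. Using the explicit convolution $(I_d a)_k=\psi_k(d)$, the tail of $I_d$ applied to the truncated sequence is bounded by $\psi_k$ for $k>L$ and by $d\,L^{-d}k^{d-1}$-type terms. Its squared norm is of order $d^2 L^{2d-1}/(1-2d)$, uniformly for $d\le d_0$. Requiring $d^2L^{-1+2d}\lesssim d^2\eta$ gives $L\asymp\eta^{-1/(1-2d_0)}$. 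The quadrature must then have relative accuracy on $[1,L]$, which costs $\log L\cdot\log(1/\eta)\asymp\log^2(1/\eta)$ nodes, with constants uniform because $1-2d\ge1-2d_0$.

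The lower window $\tau\ge C_0\sigma^2d^4$, i.e.\ $\eta\ge C_0 d^2$, is where we expect subtlety in the quadrature near $s\to0$. There the density's $(1-x)^{d}$ factor and the $1/d$-scale of $d$-dependence interact. If the uniform quadrature bound degrades by an $O(d)$ relative term, we would first try to absorb it by writing $a_j(d)=d\,j^{-1}\cdot(1+O(d\log j))$ and using that $O(d\log L)^2\lesssim \eta$ exactly when $\eta\gtrsim d^2$ up to logs. Finally, for $\eta\ge c_0$ the trivial $O(1)$-mode kernel suffices, which accounts for the ``$1+$'' in \eqref{eq:small-d}.
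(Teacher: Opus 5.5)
\textbf{Verdict: there is a genuine gap.} Your argument depends on a $d$-uniform compression of the fractional kernel $a(d)$ itself, which you assert but do not prove. The rest of the reduction is sound.

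\textbf{What works.} Your reduction is in one respect cleaner than the paper's. Since $\pi_k(d)\ge0$ and $I_d a(d)=(\pi_n(d))_{n\ge1}$, take any $b$ with $|b_j-a_j(d)|\le\epsilon\,a_j(d)$ for $j\le L$ and $b_j=0$ for $j>L$. Then $|(I_d\{b-a(d)\})_n|\le\epsilon\,\pi_n(d)+\pi_n(d)\mathbf 1\{n>L\}$, and hence
\[
\cE_d(b)\le 2\epsilon^2\,\cE_d(0)+2\sigma^2\sum_{n>L}\pi_n(d)^2\le C_{d_0}\sigma^2d^2\bigl(\epsilon^2+L^{2d-1}\bigr),
\]
with constants depending only on $d_0$. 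This single estimate replaces both the paper's Young-inequality step and Lem.~\ref{lem:harmonic-tail}. (The first two displays of your Laplace step are garbled; the third, equivalently Lem.~\ref{lem:frac-mixture}, is the one you need.)

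\textbf{The gap.} The estimate above must be fed by a positive $R$-mode kernel with relative error $\epsilon$ on $1\le j\le L$ and
\[
R\le C(1+\log\epsilon^{-1})(1+\log L+\log\epsilon^{-1}),
\]
where $C$ is independent of $d\in(0,d_0]$. Uniformity in $d$ is the entire content of the theorem, and nothing you cite supplies it.
\begin{itemize}
\item Thm.~\ref{thm:compression} concerns $j^{-\alpha}$, not $a_j(d)$.
\item The paper's compression of the fractional kernel, Corollary~\ref{fact:frac-compression}, has a constant $C_d$ that the paper explicitly does not control as $d\to0$.
\item Your phrase ``constants uniform because $1-2d\ge1-2d_0$'' concerns only the choice of $L$, and your last paragraph concedes that the quadrature may degrade.
\item Your fallback imposes a coefficientwise relative error $O(d\log L)$. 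That needs $\eta\gtrsim d^2\log^2(1/\eta)$, so it misses the stated window by logarithms.
\end{itemize}
A symptom of the gap is that your main line never uses the hypothesis $\tau\ge C_0\sigma^2d^4$.

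\textbf{How the paper avoids it.} The paper never compresses $a(d)$. It compresses the $d$-free kernel $h_j=1/j$ using Thm.~\ref{thm:compression} with $\alpha=1$, which has an absolute constant. It then takes $b=d\,s_R\mathbf 1\{j\le L\}$ and pays the linearization error $\sigma^2\|I_d\{a(d)-dh\}\|_2^2\le C_{d_0}\sigma^2d^4$ from Lem.~\ref{lem:linearization}. That error is measured in the predictive norm, not coefficientwise, and it is exactly the origin of the lower edge of the window.

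\textbf{How you could close it directly.} I believe your own route can be completed, though I have only sketched the check below.
\begin{itemize}
\item Write $a_j(d)=\frac{\sin\pi d}{\pi}\int_{\R}g_d(e^u)e^{u}e^{-(j-1)e^u}\dd u$ with $g_d(t)=e^{-(1-d)t}(1-e^{-t})^d$.
\item The integrand is analytic in $|\operatorname{Im}u|<\pi/2$, because $\operatorname{Re}(1-e^{-z})>0$ on $\operatorname{Re}z>0$.
\item The bound $|1-e^{-z}|\le\min(2,|z|)$ on $\operatorname{Re}z\ge0$ controls the $L^1$ norm on each line $\operatorname{Im}u=\pm\theta$, $\theta<\pi/2$, by $\Gamma(1+d)\{(j-d)\cos\theta\}^{-1-d}$.
\item That quantity is comparable to $\pi a_j(d)/\sin(\pi d)=\Gamma(j-d)\Gamma(1+d)/\Gamma(j+1)$, uniformly in $j\ge1$ and $d\le d_0$.
\item The trapezoidal rule with step $h\asymp1/\log(e/\epsilon)$, truncated to $u\in[\log(c\epsilon/L),\log(C\log(e/\epsilon))]$, then gives positive weights, uniform relative error $\epsilon$, and the required node count with $d_0$-only constants.
\end{itemize}
If this holds, your argument would cover every $\tau\le c_0\sigma^2d^2$, including the regime $\tau\ll\sigma^2d^4$ that the paper records as open. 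That is all the more reason this step has to be written out in full rather than cited.
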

The construction linearizes the fractional coefficients as $a_j(d)=d/j+O(d^2)$ and truncates a positive harmonic exponential kernel, with the details in App.~\ref{app:fractional}.
A matching uniform lower law in this boundary layer, and the deeper regime $\tau\ll\sigma^2d^4$, remain open.

\section{Quantitative Stability Barriers for Recurrent Memory}
\label{sec:stability}

\subsection{Relation to existing curse-of-memory theory}
Several formal results constrain stable recurrent approximation of slowly decaying targets, under different notions and for different model classes.
\citet{li2021curse} study worst-case approximation and the optimization limitations that accompany it for linear recurrent models, \citet{wang2024inverse} give inverse-approximation results in the same worst-case sense for nonlinear recurrent networks, and \citet{wang2023ssmnonlinear} establish exponential memory decay for stacked state-space models through a derivative-based memory function.
The notions are not interchangeable, and none of them is the stationary forecast risk used here.
StableSSM and HOPE are representative modern responses to this difficulty \citep{wang2024stablessm,yu2025hope}.
We derive a forecast-risk lower bound whose constants expose the target tail and the contraction margin, which lets the qualitative obstruction be inverted into an explicit requirement linking accuracy and criticality.

\subsection{Contextual nonlinear recurrence}
The predictors in this section are no longer given by a coefficient kernel, so for a general predictor $\widehat m_t$ we keep the same symbol and write $\cE(\widehat m):=\E[(m_t^\star-\widehat m_t)^2]$ for its excess forecasting risk in this innovation-driven noiseless Gaussian benchmark, where $m_t^\star$ is a fixed target functional of the innovation sequence, specialized to the algebraic form in Thm.~\ref{thm:first-chaos}.
Let $Z_t\stackrel{\rm iid}{\sim}N(0,1)$ and consider
\[
 S_t=F(S_{t-1},Z_t),
 \qquad
 \widehat m_t=G(S_t,Z_t),
\]
with a stationary solution and derivative budgets on its forward-invariant state domain
\[
 \|D_sF(s,z)\|_{\rm op}\le\rho<1,
 \qquad
 \|D_zF(s,z)\|\le K,
 \qquad
 \|D_sG(s,z)\|\le M.
\]

\begin{theorem}[Uniform contraction gives an exponential first-chaos envelope]
\label{thm:first-chaos}
Define the lag-$j$ first-chaos coefficient
\[
 \beta_j=\E[\widehat m_t Z_{t+1-j}].
\]
Under the regularity conditions in App.~\ref{app:stability},
\begin{equation}
 |\beta_j|\le MK\rho^{j-1},\qquad j\ge2.
 \label{eq:first-chaos}
\end{equation}
For the algebraic Gaussian target $m_t^\star=\sum_{j\ge1}cj^{-\alpha}Z_{t+1-j}$ with $c>0$ and $\alpha>1$, orthogonal projection onto the first Wiener chaos gives
\begin{equation}
 \cE(\widehat m)\ge\sum_{j\ge1}(cj^{-\alpha}-\beta_j)^2.
 \label{eq:chaos-lower}
\end{equation}
\end{theorem}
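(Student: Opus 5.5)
The envelope \eqref{eq:first-chaos} will come from Gaussian integration by parts (Stein's lemma), which converts a first-chaos coefficient into an expected derivative. Since $Z_{t+1-j}$ is standard Gaussian and independent of every other innovation, we have
\[
 \beta_j=\E[\widehat m_t Z_{t+1-j}]=\E\bigl[\partial_{Z_{t+1-j}}\widehat m_t\bigr],
\]
provided $\widehat m_t$ is a differentiable functional of the innovation sequence with integrable derivative. This is the first point where the regularity conditions of App.~\ref{app:stability} are needed. I would justify it by truncating the recursion to a finite past: start the chain at time $t-N$ from a fixed state, apply the one-dimensional Stein identity conditionally on the other innovations, and let $N\to\infty$. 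Passing to the limit uses the contraction to show that the truncated chain converges to the stationary solution together with its derivatives, uniformly on the forward-invariant domain.

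Next I compute the derivative by the chain rule. For $j\ge2$, the lagged innovation $Z_{t+1-j}$ enters $S_{t+1-j}=F(S_{t-j},Z_{t+1-j})$ through $D_zF$. It then propagates through $j-1$ further state updates $S_{s}=F(S_{s-1},Z_s)$ for $s=t+2-j,\dots,t$, and is finally read out by $G(S_t,Z_t)$. The readout's direct $Z_t$-dependence is irrelevant here because $t\neq t+1-j$. The resulting derivative is
\[
 \partial_{Z_{t+1-j}}\widehat m_t
 = D_sG(S_t,Z_t)\,\prod_{s=t+2-j}^{t}D_sF(S_{s-1},Z_s)\;D_zF(S_{t-j},Z_{t+1-j}),
\]
a product of $j-1$ state Jacobians. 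Bounding operator norms pointwise on the invariant domain gives $|\partial\widehat m_t|\le M\rho^{j-1}K$. Taking expectations then yields $|\beta_j|\le MK\rho^{j-1}$.

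The case $j=1$ is excluded because $Z_t$ also enters $G$ directly, and $D_zG$ is not budgeted.

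For \eqref{eq:chaos-lower} I would use the Wiener chaos decomposition of $L^2$ functionals of $(Z_s)_{s\le t}$. The first chaos is the closed span of the innovations $\{Z_{t+1-j}\}_{j\ge1}$, which form an orthonormal basis of it. The target $m_t^\star=\sum_j cj^{-\alpha}Z_{t+1-j}$ lies entirely in the first chaos, and this sum converges in $L^2$ because $\alpha>1$. Let $P_1$ denote orthogonal projection onto the first chaos. Since $P_1$ is a contraction,
\[
 \cE(\widehat m)=\|m^\star_t-\widehat m_t\|^2\ge\|P_1(m^\star_t-\widehat m_t)\|^2 .
\]
The projection has coordinates $\E[(m_t^\star-\widehat m_t)Z_{t+1-j}]=cj^{-\alpha}-\beta_j$. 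Parseval in this orthonormal basis then gives the stated sum. This step requires $\widehat m_t\in L^2$, which I would take as part of the stationarity assumptions.

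The main obstacle is justifying the Stein identity and the chain rule for the stationary (infinite-past) solution rather than a finite recursion. This means exchanging differentiation with the limit $N\to\infty$, and it needs uniform-in-$N$ domination of the derivatives. The contraction $\rho<1$ makes the Jacobian products summable, so I expect dominated convergence on the forward-invariant domain to close the argument.
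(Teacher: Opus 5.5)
Your proposal is correct and follows the paper's proof essentially step for step: Stein's identity in the coordinate $Z_{t+1-j}$ conditionally on the other innovations, the chain rule with exactly $j-1$ state Jacobians bounded pointwise by the budgets, the exclusion of $j=1$ because $D_zG$ is unbudgeted, and the lower bound by orthogonal projection onto the first Wiener chaos. The one difference is that your $N\to\infty$ truncation is not needed: with the other innovations fixed, $\widehat m_t$ depends on $Z_{t+1-j}$ only through a finite composition applied to $S_{t-j}$, which is independent of $Z_{t+1-j}$, so the one-dimensional Stein identity applies directly with the uniform bound $MK\rho^{j-1}$ as the domination.
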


The envelope follows from a Gaussian Stein identity together with contraction of the state Jacobian across lags, and the projection bound is orthogonality in the first Wiener chaos.
App.~\ref{app:stability} gives the proof.

\subsection{Accuracy forces near-critical dynamics}
\begin{corollary}[Necessary quantitative near-criticality]
\label{cor:critical}
Under the assumptions of Thm.~\ref{thm:first-chaos}, in particular $\alpha>1$, with the derivative budgets $K$ and $M$ and the target amplitude $c$ held fixed:
For sufficiently small $\delta=-\log\rho$,
\begin{equation}
 \cE(\widehat m)
 \ge C\left[\frac{\delta}{\log(e/\delta)}\right]^{2\alpha-1}.
 \label{eq:critical-floor}
\end{equation}
Consequently, for all sufficiently small $\tau$, if $\cE(\widehat m)\le\tau$, then necessarily
\begin{equation}
 -\log\rho
 \le C'\tau^{1/(2\alpha-1)}\log\frac e\tau.
 \label{eq:critical-rate}
\end{equation}
\end{corollary}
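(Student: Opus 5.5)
We start from the projection bound \eqref{eq:chaos-lower} of Thm.~\ref{thm:first-chaos} and discard the lags $j<N$, keeping only the tail
\[
 \cE(\widehat m)\ge\sum_{j\ge N}(cj^{-\alpha}-\beta_j)^2 .
\]
The cutoff $N\ge2$ will be chosen in terms of $\delta$. For $j\ge N$ the envelope \eqref{eq:first-chaos} gives $|\beta_j|\le MKe^{-\delta(j-1)}$. We then pick $N$ so that this envelope is at most half the target on the relevant window. Concretely, set
\[
 N=\Bigl\lceil \frac{A}{\delta}\log\frac e\delta\Bigr\rceil
\]
with a large constant $A$ depending on $\alpha,c,M,K$. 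On the window $N\le j\le 2N$ we need
\[
 MKe^{-\delta(j-1)}\le \tfrac12 c\,(2N)^{-\alpha}.
\]
This is implied by
\[
 \delta(N-1)\ge \log(2MK/c)+\alpha\log(2N).
\]
The left side is about $A\log(e/\delta)$. The right side is $O(\log(1/\delta)+\log\log(e/\delta))$. So the inequality holds for $A$ large and $\delta$ small, and this is where ``sufficiently small $\delta$'' enters.

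On that window each summand is therefore at least
\[
 \bigl(\tfrac12 cj^{-\alpha}\bigr)^2\ge \tfrac14 c^2(2N)^{-2\alpha}.
\]
Summing over the $N+1$ lags gives
\[
 \cE(\widehat m)\gtrsim c^2 N^{1-2\alpha}.
\]
Since $N\asymp \delta^{-1}\log(e/\delta)$, this becomes $N^{1-2\alpha}\asymp[\delta/\log(e/\delta)]^{2\alpha-1}$, which is \eqref{eq:critical-floor} with $C$ depending on $c,\alpha,M,K$. An alternative is to sum the whole tail $j\ge N$ with the comparison $\sum_{j\ge N}j^{-2\alpha}\asymp N^{1-2\alpha}$. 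That requires checking that the envelope stays below $\tfrac12 cj^{-\alpha}$ for all $j\ge N$, which holds because $e^{-\delta j}$ decays faster than $j^{-\alpha}$ once $\delta j\gtrsim\alpha\log j$. The window version avoids this check.

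For \eqref{eq:critical-rate}, we invert the floor. Suppose $\cE(\widehat m)\le\tau$. If $\delta$ is in the small regime, then $C[\delta/\log(e/\delta)]^{2\alpha-1}\le\tau$, so
\[
 x:=\frac{\delta}{\log(e/\delta)}\le (\tau/C)^{1/(2\alpha-1)}.
\]
The map $\delta\mapsto\delta/\log(e/\delta)$ is increasing on $(0,1]$. Its inverse satisfies $\delta\le 2x\log(e/x)$ for small $x$; this can be checked by substituting $\delta=2x\log(e/x)$, using $\log(e/\delta)\le\log(e/x)$ up to lower order. Since $\log(e/x)\lesssim\log(e/\tau)$, we get $\delta\le C'\tau^{1/(2\alpha-1)}\log(e/\tau)$.

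The main obstacle is the regime split: \eqref{eq:critical-floor} is only proved for $\delta\le\delta_0$, so we must rule out large $\delta$ when $\tau$ is small. For $\delta\ge\delta_0$, the same argument with $N$ fixed by $\delta_0$ gives a floor that is uniform in $\delta$. Indeed, the envelope only shrinks as $\delta$ grows, so the condition on the window is preserved and $\cE(\widehat m)\ge c_*>0$. Hence for $\tau<c_*$ we must have $\delta<\delta_0$, and the inversion above applies.
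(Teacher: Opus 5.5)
Your proof is correct and follows the paper's route: first-chaos projection, a crossing lag $N\asymp\delta^{-1}\log(e/\delta)$, a tail sum of order $N^{1-2\alpha}$, then inversion. The window $[N,2N]$ spares you the paper's implicit check that the envelope stays below $\tfrac12 cj^{-\alpha}$ for all $j\ge J$, and your uniform floor for $\delta\ge\delta_0$ makes explicit a case that the paper's inversion assumes without comment.

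The one slip is in the inversion. The claim $\log(e/x)\lesssim\log(e/\tau)$ goes the wrong way when $x\ll a:=(\tau/C)^{1/(2\alpha-1)}$. Use instead that $x\mapsto x\log(e/x)$ is increasing on $(0,1)$, which gives $\delta\le 2x\log(e/x)\le 2a\log(e/a)\lesssim\tau^{1/(2\alpha-1)}\log(e/\tau)$. Alternatively, use the paper's split into the cases $\delta\le a$ and $\delta>a$.
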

The condition is necessary and not by itself sufficient: approaching criticality removes the fixed-contraction obstruction, and it does not on its own guarantee an accurate representation of algebraic memory.
The lower envelope inverts the exponential first-chaos tail against the algebraic target, and the sharp form is proved in App.~\ref{app:stability}.

\section{Finite-Sample Information Consequences}
\label{sec:information}

\subsection{Conditional trajectory information equals forecast risk}
Let $\theta\mapsto a_\theta$ index causal Gaussian models as in Sec.~\ref{sec:setup} with a common innovation variance $\sigma^2$, let $P_{\theta,T}$ be the law of $X_1,\dots,X_T$ under memory $a_\theta$, and write $\cE_\theta(b)$ for the excess risk~\eqref{eq:risk} of a candidate $b$ when the truth is $a_\theta$.
\begin{proposition}[Conditional Gaussian information and forecast identity]
\label{prop:kl-risk}
For two Gaussian causal predictors with common innovation variance $\sigma^2$, conditioning on the infinite past gives
\begin{equation}
\E_\theta\,\KL\!\left(P_{\theta,T}(\cdot\mid\mathcal F_0)\,\|\,P_{\vartheta,T}(\cdot\mid\mathcal F_0)\right)
 =\frac{T}{2\sigma^2}\cE_\theta(a_\vartheta).
 \label{eq:kl-risk}
\end{equation}
\end{proposition}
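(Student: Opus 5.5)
The plan is to use the chain rule for Kullback--Leibler divergence along the trajectory and compute each one-step term exactly as a Gaussian divergence. Under either causal Gaussian model, conditionally on $\mathcal F_{t-1}$ (the infinite past up to time $t-1$) the next observation is Gaussian:
\[
X_t\mid\mathcal F_{t-1}\sim N\bigl(m^\theta_{t-1},\sigma^2\bigr)
\quad\text{under }\theta,
\qquad
X_t\mid\mathcal F_{t-1}\sim N\bigl(m^\vartheta_{t-1},\sigma^2\bigr)
\quad\text{under }\vartheta .
\]
Here $m^\theta_{t-1}=\sum_{j\ge1}a_{\theta,j}X_{t-j}$ and $m^\vartheta_{t-1}$ is defined in the same way with $a_\vartheta$. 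Both models are causal with a common innovation variance, so the conditional variances coincide, and the mean is a fixed linear functional of the past. This holds for the $\vartheta$ model evaluated on the same realized past, which is exactly how $P_{\vartheta,T}(\cdot\mid\mathcal F_0)$ is built.

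First, factor $P_{\theta,T}(\cdot\mid\mathcal F_0)=\prod_{t=1}^T p_\theta(x_t\mid\mathcal F_{t-1})$, and likewise for $\vartheta$. Note that $\mathcal F_{t-1}$ is generated by $\mathcal F_0$ together with $X_1,\dots,X_{t-1}$. The chain rule then gives
\[
\KL\!\left(P_{\theta,T}(\cdot\mid\mathcal F_0)\,\|\,P_{\vartheta,T}(\cdot\mid\mathcal F_0)\right)
=\sum_{t=1}^T\E_\theta\!\left[\KL\!\left(p_\theta(\cdot\mid\mathcal F_{t-1})\,\|\,p_\vartheta(\cdot\mid\mathcal F_{t-1})\right)\,\middle|\,\mathcal F_0\right].
\]
Second, apply the equal-variance Gaussian formula $\KL(N(\mu_1,\sigma^2)\|N(\mu_2,\sigma^2))=(\mu_1-\mu_2)^2/(2\sigma^2)$. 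Each summand becomes
\[
\frac{1}{2\sigma^2}\,\E_\theta\!\left[\Bigl(\sum_{j\ge1}(a_{\vartheta,j}-a_{\theta,j})X_{t-j}\Bigr)^2\,\middle|\,\mathcal F_0\right].
\]

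Third, take $\E_\theta$ over $\mathcal F_0$ and use the tower property. Each term then becomes the unconditional second moment, which by stationarity under $\theta$ does not depend on $t$. By \eqref{eq:intro-risk}, that second moment is exactly $\cE_\theta(a_\vartheta)$, and it equals the spectral form \eqref{eq:risk} with $f_X=f_\theta$. Summing the $T$ identical terms yields \eqref{eq:kl-risk}.

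The main obstacle is measure-theoretic rather than computational. Two points need care. The first is that conditioning on the infinite past $\mathcal F_0$ makes sense: the conditional densities must exist and the $\vartheta$-predictor must be defined on $\theta$-typical pasts. The second is that the series $\sum_j(a_{\vartheta,j}-a_{\theta,j})X_{t-j}$ must converge in $L^2(P_\theta)$. I would handle both by assuming $\cE_\theta(a_\vartheta)<\infty$ and defining the series as an $L^2$ limit of finite truncations, using the stationary purely nondeterministic setting of Sec.~\ref{sec:setup}. If $\cE_\theta(a_\vartheta)=\infty$, both sides are $+\infty$ by monotone convergence applied to the truncated version.
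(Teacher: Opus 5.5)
Your proposal is correct and follows the paper's proof essentially step for step: the chain rule for relative entropy over the filtration, the equal-variance Gaussian formula for each one-step term, and then the outer expectation with stationarity, which identifies every summand with $\cE_\theta(a_\vartheta)/(2\sigma^2)$ under only the finiteness of $\cE_\theta(a_\vartheta)$. Your closing remark about the case $\cE_\theta(a_\vartheta)=\infty$ is unnecessary and is not justified as written, so drop it and assume finiteness as the paper does: in that case the $\vartheta$-predictor need not be defined on $\theta$-typical pasts, and the truncated quadratic forms $\E_\theta\bigl(\sum_{j\le J}(a_{\vartheta,j}-a_{\theta,j})X_{t-j}\bigr)^2$ are not monotone in $J$, so monotone convergence does not apply.
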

The same predictive geometry therefore controls both approximation and statistical distinguishability.
The identity is the conditional Gaussian relative-entropy computation given in App.~\ref{app:information}.

\subsection{Fractional finite-section corrections}
Write $P_d^{(T)}$ for the law of a length-$T$ block of the fractional process~\eqref{eq:farima}, $P_0^{(T)}$ for the white-noise law, and $\overline D(d\|0)$ for the Kullback--Leibler divergence rate per observation between the two stationary laws.
For fractional noise against white noise,
\begin{equation}
 0\le T\overline D(d\|0)-\KL(P_d^{(T)}\|P_0^{(T)})
 \le C_{d_0}d^2\log(eT),
 \label{eq:finite-kl}
\end{equation}
so the finite stationary block has the same local information constant as the entropy-rate experiment on the $d=h/\sqrt T$ scale.
For any fixed number of pure fractional poles with a \emph{fixed} strength vector satisfying a uniform cluster-integrability margin, meaning that the strengths summed over every merging cluster stay bounded away from $1/2$ so that each cluster limit remains inside the stationary fractional class, uniform Fisher--Hartwig asymptotics give an $O(\log T)$ finite-section correction that is uniform over reflection-symmetric pole locations even when those locations move with $T$ and merge.
The uniformity is over locations only; we do not claim the corresponding strength-uniform refinement for triangular arrays of vanishing pole strengths.
App.~\ref{app:information} gives the exact determinant formulas and states precisely what fails to be uniform when the number or strengths of the singularities vary with $T$.

\section{Experiments}
\label{sec:experiments}
The experiments are matched to the theorems.
App.~\ref{app:experiments} gives the architecture definitions, the width-independent budget enforcement, the training and seed protocol, and the complete numerical tables.
Fig.~\ref{fig:resource-laws} numerically evaluates the deterministic resource quantities from Sec.~\ref{sec:state} and~\ref{sec:fractional}, and Fig.~\ref{fig:e11} tests the width and stability consequences in controlled sequence models.

\begin{figure}[t]
\centering
\includegraphics[width=\linewidth]{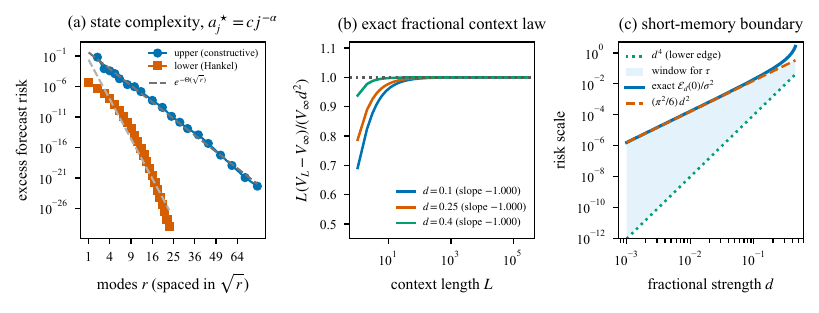}
\vspace{-2.0em}
\caption{Computed resource laws.
Every point is computed directly from deterministic theoretical quantities, through closed-form formulas, spectral quadrature, or controlled deterministic optimization.
Panel (a) brackets the best $r$-mode algebraic-memory error between a constructive exponential-quadrature upper bound and the Hankel rank-$r$ lower bound, on an axis spaced in $\sqrt{r}$ so that $e^{-\Theta(\sqrt{r})}$ appears as a straight line.
The lower curve is evaluated on finite Hankel sections and is therefore a numerical surrogate for the operator singular values, not the operator spectrum itself.
The two exponent constants differ, which is what Thm.~\ref{thm:state-complexity} asserts.
Panel (b) shows the exact fractional finite-past prediction gap normalized by $d^2$, whose collapse onto one confirms the sharp $1/L$ law of Corollary~\ref{cor:context-law}.
Panel (c) shows the exact zero-memory fractional risk against $d$, compared with $(\pi^2\sigma^2/6)d^2$, with the intermediate resource window of Thm.~\ref{thm:small-d} shaded.
The shaded region indicates the schematic $d^4$-to-$d^2$ scaling window only; the theorem's window is $C_0\sigma^2d^4\le\tau\le c_0\sigma^2d^2$ and its constants are suppressed in the plot.}
\label{fig:resource-laws}
\vspace{-1em}
\end{figure}

\begin{figure}[t]
\centering
\includegraphics[width=\linewidth]{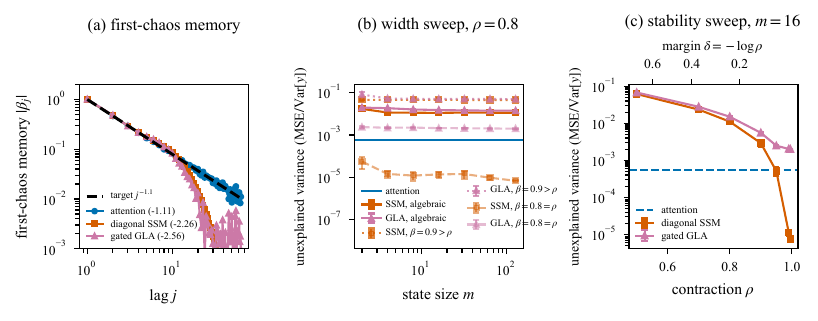}
\vspace{-2.0em}
\caption{Theorem-matched recurrent and attention experiment.
Panel (a) shows the first-chaos memory shape for the algebraic target, a contractive diagonal SSM, a bounded nonlinear gated recurrence, and attention.
The numbers in the legend are log-log slopes fitted over the plotted lag range; they summarize the shape on that range and are not estimates of a theoretical exponent, since Thm.~\ref{thm:first-chaos} predicts an exponential envelope rather than a power law for the contractive models.
Panel (b) shows a width sweep at fixed contraction and width-independent derivative budget, together with two exponential-target controls.
Because the three targets carry different variances, the vertical axis is the unexplained fraction of target variance, meaning held-out mean squared error divided by the variance of the target.
One control is matched to the contraction scale and is fit almost exactly.
The other is still exponential and decays more slowly than the contraction, and it leaves a floor above the algebraic one.
For the diagonal state-space model, that floor is essentially width-independent, while the remaining recurrent curves improve at small widths and then plateau well above zero.
An exponential target is therefore both the easiest and the hardest case here, which shows that functional form alone does not explain the floor in this benchmark and points instead to compatibility with the contraction envelope.
Panel (c) shows a stability sweep as $\rho\to1$, with the upper axis reparameterized by $\delta=-\log\rho$, in which the diagonal model keeps improving while the gated model retains a residual floor.}
\label{fig:e11}
\vspace{-1em}
\end{figure}

\paragraph{Resource-law calculations.}
Panels (b) and (c) of Fig.~\ref{fig:resource-laws} are closed-form Gamma-function identities, and panel (a) combines a numerical evaluation of the spectral-risk identity for each candidate kernel with a controlled deterministic search over the quadrature scalings at a hard budget of $r$ modes.
The fractional context panel uses the exact finite-past prediction variance, and the short-memory panel uses the Gamma-function expression~\eqref{eq:zero-boundary}.

\paragraph{Architecture-facing experiment.}
At fixed $\rho$ the models satisfy the stated budgets; for the gated model these hold on its invariant domain, while the $\rho$-sweep is qualitative.
At fixed $\rho$ we vary the state width, and at fixed width we vary $\rho$.
Two exponential targets are included as specificity controls, holding the architecture, the budgets, the optimizer, the data splits, and the matched initialization seeds fixed and changing only the target, the algebraic sweep additionally including one further seed.
The first is matched to the contraction scale, with decay equal to $\rho$.
The diagonal eigenvalues are $\rho$ times a sigmoid and therefore lie strictly below $\rho$, which places this target on the closure of the model class rather than in its interior, approachable to high accuracy without being realized exactly at finite width.
The second decays more slowly than the contraction, so its tail leaves the envelope of Thm.~\ref{thm:first-chaos} even though it is still exponential.
The three targets carry different variances, so we report the unexplained fraction of target variance, meaning held-out mean squared error divided by the variance of the target.
For the diagonal state-space model at the largest width, the boundary-matched control is fit to about $7\times10^{-6}$, the algebraic target to about $1.1\times10^{-2}$, and the slower exponential to about $4.5\times10^{-2}$, with the last of these insensitive to width.
The gated recurrence shows the same ordering with a much smaller gap on the boundary-matched control.
The controls therefore isolate tail compatibility rather than functional form, since an exponential target is fit essentially exactly when it is matched to the contraction and leaves a width-insensitive floor when it decays more slowly, with the algebraic target lying between the two.
This argues against generic optimization or finite-width capacity as the sole explanation for the observed algebraic-memory floor.
The targets here are noiseless linear functionals of the input, so the target functional is directly identified, in contrast with the stochastic forecasting setting of \citet{song2026learning}, where trained Transformers attain forecasting error close to the matched autoregressive benchmark while recovering the generating memory law only loosely.
In the stability sweep the diagonal state-space model improves steadily as $\rho$ approaches one, while the gated model stops improving beyond a moderate contraction and retains a residual floor.
At $L=64$, the largest-$\rho$ points are finite-context illustrations rather than asymptotic tests; the gated floor may also contain higher-chaos variance.

\section{Discussion and Limitations}
\label{sec:discussion}
The sharp state-complexity theorem concerns the specified finite-state and exponential architecture class, and it is not an unrestricted bound for arbitrary neural networks.
The nonlinear stability theorem requires uniform contraction and derivative budgets, and selective or noncontractive architectures can fall outside this class.
The near-criticality law is necessary only.
All three resource questions are posed for a scalar target.
App.~\ref{app:multivariate} gives a multivariate shared-factor extension, related to tensor and factor vector autoregression \citep{wang2022tensorvar,huang2025supervised}, together with the additional singularities that head collisions create, which are specific to that parameterization.
The small-$d$ architecture boundary is not fully resolved below the $d^4$ scale, and a growing number of moving Fisher--Hartwig singularities remains outside the finite-section information theory developed here.
These boundaries matter because modern selective state-space models can change the relevant representation and stability classes, and they need not stay inside a fixed one.
Two changes of setting would alter the conclusions rather than weaken them.
A contraction budget that varies across the state would replace the single margin of Thm.~\ref{thm:first-chaos} by a spectrum of margins, and a readout outside the derivative budgets would remove the uniform sensitivity control needed to derive the geometric envelope on the first-chaos coefficients.
Both are realistic descriptions of selective architectures \citep{gu2023mamba,benkish2025decimamba}, and both sit outside the class treated here.

\section*{Acknowledgment}
We thank Yu Ji for helpful discussion during the preparation of this work.
The code, which includes the synthetic generators, the model-selection procedures, the train, validation, and test split logic, the initialization controls, the checkpoint metadata, and scripts that reproduce every main-text and appendix figure, is available from the corresponding author on reasonable request.
This manuscript was prepared with AI-assisted drafting and editing.
The authors are responsible for verifying every mathematical claim, citation, experiment, and final wording.



\bibliography{predictive_memory_refs}
\bibliographystyle{iclr2027_conference}

\appendix

\section{Predictive-Risk and Hankel Preliminaries}
\label{app:prelim}

\subsection{Stationary forecast-risk identity}
Let $(X_t)_{t\in\mathbb Z}$ be centered, stationary, and purely nondeterministic, with $\mathcal F_t=\sigma(X_s:s\le t)$ and Bayes forecast $m_t^\star=\E[X_{t+1}\mid\mathcal F_t]=\sum_{j\ge1}a_j^\star X_{t+1-j}$.
The innovation $\eps_{t+1}=X_{t+1}-m_t^\star$ satisfies $\E[\eps_{t+1}\mid\mathcal F_t]=0$ and is therefore orthogonal to every square-integrable $\mathcal F_t$-measurable variable.
For a candidate kernel $a$ write $m_t(a)=\sum_{j\ge1}a_jX_{t+1-j}$, and let $\cE(a)$ be the excess of $\E[(X_{t+1}-m_t(a))^2]$ over its value at $a^\star$.

With the convention $\gamma(h)=(2\pi)^{-1}\int_{-\pi}^{\pi}e^{ih\omega}f_X(\omega)\dd\omega$ and $\Delta_j=a_j-a_j^\star$, the orthogonality of the innovation and Parseval's identity give \eqref{eq:risk}, namely
\begin{equation}
 \cE(a)=\frac1{2\pi}\int_{-\pi}^{\pi}|\Delta(e^{-i\omega})|^2f_X(\omega)\dd\omega .
 \label{eq:app-risk-identity}
\end{equation}
The argument is the standard one.
Partial sums $\sum_{j\le J}\Delta_jX_{t+1-j}$ are Cauchy in $L_2$ whenever $f_X$ is bounded above and $\Delta\in\ell_2$, the limit is $\mathcal F_t$-measurable and therefore orthogonal to $\eps_{t+1}$, and the spectral representation converts the second moment of the limit into the displayed integral.
For a perturbation $u\in\ell_2$ we write
\begin{equation}
 \|u\|_{\pred,f_X}^2:=\frac1{2\pi}\int_{-\pi}^{\pi}|U(e^{-i\omega})|^2f_X(\omega)\dd\omega ,
 \label{eq:app-pred-norm}
\end{equation}
which is the seminorm appearing in Thm.~\ref{thm:fractional-geometry}.

\subsection{Regular and singular spectra}
Under the regular condition $0<m\le f_X\le M<\infty$, inserting the bounds into \eqref{eq:app-risk-identity} and applying Parseval gives
\begin{equation}
 m\|a-a^\star\|_2^2\le\cE(a)\le M\|a-a^\star\|_2^2 ,
 \label{eq:app-regular}
\end{equation}
so forecast risk and coefficient error are equivalent up to constants, and the truncation of the truth at context $L$ has risk between $m$ and $M$ times $\sum_{j>L}(a_j^\star)^2$.

The two halves of \eqref{eq:app-regular} are used by different results and should be separated.
The lower half needs only $f_X\ge m>0$ and is what converts an approximation obstruction into a risk lower bound, so Thm.~\ref{thm:state-complexity} survives on spectra that are unbounded above.
The upper half needs $f_X\le M$ and is what converts a construction into a risk upper bound.
Sec.~\ref{sec:fractional} deliberately violates the upper half, since a spectral pole at the origin makes $M$ infinite, and App.~\ref{app:fractional} shows that the correct replacement is not a weaker constant but a different norm.

\subsection{Finite-state kernels and Hankel rank}
For a sequence $a$ and an integer $n$ define the Hankel block $H_n(a)=[a_{i+j-1}]_{i,j=1}^{n}$, which involves only the first $2n-1$ coefficients, and write $s_1\ge s_2\ge\cdots$ for singular values.
An $r$-dimensional linear state with transition $A\in\R^{r\times r}$ and maps $u,v\in\R^{r}$ produces $b_j=u^\top A^{j-1}v$, and the factorization $[H_n(b)]_{ij}=(u^\top A^{i-1})(A^{j-1}v)$ exhibits $H_n(b)$ as a product of an $n\times r$ and an $r\times n$ matrix, so
\begin{equation}
 \operatorname{rank}H_n(b)\le r\qquad\text{for every }n .
 \label{eq:app-hankel-rank}
\end{equation}
The same bound holds for any $r$-mode exponential kernel $b_j=\sum_{q\le r}w_q\rho_q^{\,j-1}$, since $H_n(b)=\sum_q w_qz_qz_q^\top$ with $z_q=(1,\rho_q,\dots,\rho_q^{n-1})^\top$.
The architecture class must record the finite context as well as the state dimension, because a hard truncation of a state-space sequence is in general no longer a state-space sequence and no longer has globally bounded Hankel rank.
We therefore let $\cA_r$ denote the union over finite contexts of the truncations of $r$-state kernels,
\begin{equation}
 \cA_r:=\bigcup_{L\ge1}\Bigl\{b:\ b_j=u^\top A^{j-1}v\,\mathbf{1}\{j\le L\},\ A\in\R^{r'\times r'},\ u,v\in\R^{r'},\ r'\le r\Bigr\},
 \label{eq:app-Ar}
\end{equation}
and we call $L$ the context of that member.
The $r$-mode exponential kernels form the diagonal-state subclass, obtained by taking $A$ diagonal with entries $\rho_q$ and absorbing the weights into $u$.

The only property of \eqref{eq:app-Ar} used below is the following restricted form of \eqref{eq:app-hankel-rank}.
For a member with context $L$ and any $n$ with $2n-1\le L$, the block $H_n(b)$ involves only coefficients with index at most $L$, so it coincides with the corresponding block of the underlying untruncated state-space sequence and therefore satisfies $\operatorname{rank}H_n(b)\le r$.
No rank claim is made for blocks that reach past the context, and none is needed.
A lower bound over $\cA_r$ applies simultaneously to $r$ exponential modes, $r$ diagonal recurrent states, and $r$ general linear states, in each case at every finite context.

\subsection{Ky Fan and low-rank approximation inequalities}
\begin{lemma}[Low-rank Frobenius approximation]
\label{lem:app-eckart}
If $\operatorname{rank}(B)\le r<n$ then $\|M-B\|_F^2\ge\sum_{q>r}s_q(M)^2$ for every $M\in\R^{n\times n}$.
\end{lemma}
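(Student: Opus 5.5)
This is the Eckart--Young statement in Frobenius norm, proved through the Ky Fan / Weyl singular-value interlacing inequality. The plan is to compare singular values of $M$ with those of $M-B$ index by index, using that $B$ has rank at most $r$.

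The key input is Weyl's inequality for singular values: for any $n\times n$ matrices $X,Y$ and indices $i,j\ge1$ with $i+j-1\le n$,
\[
 s_{i+j-1}(X+Y)\le s_i(X)+s_j(Y).
\]
I would prove it from the Courant--Fischer characterization $s_k(X)=\min_{\dim W=n-k+1}\max_{x\in W,\|x\|=1}\|Xx\|$. Take $W_1$ of codimension $i-1$ attaining $s_i(X)$ and $W_2$ of codimension $j-1$ attaining $s_j(Y)$. Their intersection has codimension at most $i+j-2$, and on it $\|(X+Y)x\|\le s_i(X)+s_j(Y)$.

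Apply this with $X=M-B$, $Y=B$ and $j=r+1$. Since $\operatorname{rank}B\le r$, we have $s_{r+1}(B)=0$, so for each $i=1,\dots,n-r$,
\[
 s_{i+r}(M)\le s_i(M-B).
\]
Square these inequalities and sum over $i$:
\[
 \sum_{q>r}s_q(M)^2=\sum_{i=1}^{n-r}s_{i+r}(M)^2\le\sum_{i=1}^{n-r}s_i(M-B)^2\le\sum_{i=1}^{n}s_i(M-B)^2=\|M-B\|_F^2 .
\]
The last equality holds because the squared Frobenius norm is the trace of $(M-B)^\top(M-B)$, which is the sum of its eigenvalues, namely the squared singular values.

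The only delicate step is the min--max dimension count in Weyl's inequality, and that is standard. If a lighter route is wanted, the same conclusion follows from von Neumann's trace inequality. Expand $\|M-B\|_F^2=\|M\|_F^2-2\langle M,B\rangle+\|B\|_F^2$, bound $\langle M,B\rangle\le\sum_{q\le r}s_q(M)s_q(B)$, and minimize over the at most $r$ nonzero values $s_q(B)$ by completing the square, which gives $s_q(B)=s_q(M)$. Either route gives the lemma, and equality is attained by the truncated SVD of $M$.
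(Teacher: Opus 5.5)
Your proof is correct, but it takes a different route from the paper. The paper never compares singular values index by index. It lets $P$ be the orthogonal projection onto the column space of $B$ and uses $(I-P)B=0$ together with the fact that a projection does not increase the Frobenius norm. This gives $\|M-B\|_F^2\ge\|(I-P)M\|_F^2=\operatorname{tr}(MM^\top(I-P))$. In an eigenbasis of $MM^\top$ this trace is $\sum_q s_q(M)^2(1-p_q)$, with $0\le p_q\le1$ and $\sum_q p_q\le r$, so monotonicity of the $s_q$ forces it to be at least $\sum_{q>r}s_q(M)^2$.

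That final step is in effect the special case of your von Neumann alternative in which the second matrix is a rank-$r$ projection. The paper's argument is therefore the more elementary one: a projection and a trace computation, with no Courant--Fischer, tailored to the Frobenius norm.

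Your Weyl-inequality route buys more. The index-wise domination $s_{i+r}(M)\le s_i(M-B)$ yields the lower bound in every unitarily invariant norm, in particular $\|M-B\|_{\rm op}\ge s_{r+1}(M)$. It is also the same inequality the paper invokes later, in the Laplace-mixture extension, in the form $s_{2N+1}(A+B)\le s_{N+1}(A)+s_{N+1}(B)$. Using it here would therefore let one tool serve both places.
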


\begin{proof}
Let $P$ project onto the column space of $B$, so $(I-P)B=0$ and $\operatorname{rank}P\le r$.
Projection does not increase the Frobenius norm, so $\|M-B\|_F^2\ge\|(I-P)M\|_F^2=\operatorname{tr}(MM^\top(I-P))$.
Diagonalizing $MM^\top$ and writing $p_q$ for the diagonal of $P$ in that basis gives $\sum_qs_q^2(1-p_q)$ with $0\le p_q\le1$ and $\sum_qp_q\le r$, and since the $s_q$ decrease the sum is at least $\sum_{q>r}s_q^2$.
\end{proof}

The second ingredient is the multiplicity count that converts a Hankel discrepancy back to coefficient error.
Writing $w_{n,k}$ for the number of pairs $(i,j)$ in $\{1,\dots,n\}^2$ with $i+j-1=k$,
\begin{equation}
 \|H_n(e)\|_F^2=\sum_{k=1}^{2n-1}w_{n,k}e_k^2,
 \qquad 1\le w_{n,k}\le n ,
 \label{eq:app-hankel-weights}
\end{equation}
so $\sum_{k\le2n-1}e_k^2\ge n^{-1}\|H_n(e)\|_F^2$.
A coefficient appears on up to $n$ antidiagonals of the block, so a comparison that ignores $w_{n,k}$ counts that mass more than once, and the factor $n^{-1}$ is the correction.
This is the step referred to in App.~\ref{app:state} as the corrected Ky Fan indexing.

\subsection{Gaussian first-chaos projection}
Let $Z=(Z_t)$ be independent standard Gaussians and let $\widehat m_t$ be a square-integrable function of $Z$.
The Wiener chaos decomposition writes $L_2(\sigma(Z))$ as an orthogonal sum $\bigoplus_{p\ge0}\mathcal H_p$, and the projection onto the first chaos is
\begin{equation}
 P_1\widehat m_t=\sum_{j\ge1}\beta_jZ_{t+1-j},
 \qquad \beta_j=\E[\widehat m_tZ_{t+1-j}] ,
 \label{eq:app-first-chaos}
\end{equation}
the coefficients being well defined because the $Z_{t+1-j}$ are orthonormal.

If the target is itself in the first chaos, $m_t^\star=\sum_{j\ge1}a_j^\star Z_{t+1-j}$, then orthogonality of the chaoses gives
\begin{equation}
 \E[(m_t^\star-\widehat m_t)^2]
 \ \ge\ \|m_t^\star-P_1\widehat m_t\|_2^2
 \ =\ \sum_{j\ge1}(a_j^\star-\beta_j)^2 ,
 \label{eq:app-chaos-lower}
\end{equation}
since the higher chaoses of $\widehat m_t$ are orthogonal to $m_t^\star$ and only add to the error.
Inequality \eqref{eq:app-chaos-lower} is what makes a statement about the linear part of a nonlinear predictor into a lower bound on its full risk, and it is the reason Thm.~\ref{thm:first-chaos} can control an arbitrary nonlinear recurrence through a single sequence $\beta$.
Nothing in \eqref{eq:app-chaos-lower} asserts that the higher chaoses are small, only that they cannot help.

\section{Sharp State Complexity of Algebraic Memory}
\label{app:state}

Throughout this appendix the target is $a_j^\star=cj^{-\alpha}$ with $\alpha>1$ inside a stable regular-spectrum forecasting model, so that \eqref{eq:app-regular} holds with constants $m_\alpha,M_\alpha$ determined by $(\alpha,c,\sigma)$.

\subsection{Comparison with existing approximation metrics}
The literature contains several distinct notions of memory approximation, and the constants and even the exponents depend on which one is used.
Since Thm.~\ref{thm:state-complexity} is a statement about one of them, we separate the three before proving anything.

\subsubsection{Functional-memory approximation in curse-of-memory theory}
Related curse-of-memory results formalize different notions of memory obstruction rather than one common metric.
\citet{li2021curse} and the inverse-approximation results of \citet{wang2024inverse} use worst-case functional approximation over a class of inputs, whereas \citet{wang2023ssmnonlinear} characterize exponential decay through a derivative-based memory function for stacked state-space models.
None of these notions is the stationary forecast risk used here, and the separation is the one already drawn in Sec.~\ref{sec:stability}.
Those results give explicit rates in their own metrics, and what they do not provide is a two-sided resource curve for one fixed predictive target in forecast risk, which is the question asked here.
The constructions used for their upper directions are likewise not optimized for a single fixed target.
We do not claim to sharpen those statements on their own terms, since neither the metric nor the target class coincides with ours.

\subsubsection{Finite-horizon uniform approximation in power-law recurrent constructions}
A second notion fixes a horizon $[1,T]$ and asks for uniform approximation of a power-law kernel on that window by a sum of exponentials, which is the setting of recent architectural constructions \citep{mlaiki2026vort}.
The horizon is externally imposed and the error is measured on it, so the resulting complexity depends on $T$ and does not by itself say anything about the infinite predictive law.

\subsubsection{Infinite predictive-law approximation in forecast risk}
Our metric is \eqref{eq:risk}, the horizon is not externally fixed, and the infimum in $\inf_{b\in\cA_r}\cE(b)$ is taken over the context as well as over the state.
This is the notion under which both sides match, and it is the one relevant to forecasting because it is the risk actually incurred.
A construction that is good on a finite window is not automatically good here, since the unseen tail contributes to \eqref{eq:risk} whether or not it lies inside the window.

\subsection{Laplace representation of algebraic kernels}
\begin{lemma}[Power laws as continuous exponential mixtures]
\label{lem:laplace}
For every $\alpha>0$ and $x>0$,
\begin{equation}
 x^{-\alpha}=\frac1{\Gamma(\alpha)}\int_0^{\infty}t^{\alpha-1}e^{-xt}\dd t .
 \label{eq:laplace}
\end{equation}
In particular $j^{-\alpha}$ is a positive continuous mixture of the exponential atoms $e^{-tj}$.
\end{lemma}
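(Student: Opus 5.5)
The plan is to prove the identity by a change of variables in the Euler integral for the Gamma function. Fix $\alpha>0$ and $x>0$ and start from $\Gamma(\alpha)=\int_0^\infty s^{\alpha-1}e^{-s}\dd s$, which converges absolutely for $\alpha>0$: near $0$ the integrand is $O(s^{\alpha-1})$, which is integrable exactly because $\alpha>0$, and at infinity the factor $e^{-s}$ dominates every power. Substitute $s=xt$ with $t\in(0,\infty)$. Since $x>0$ this is an increasing bijection of $(0,\infty)$ onto itself with $\dd s=x\dd t$. The integrand becomes $x^{\alpha-1}t^{\alpha-1}e^{-xt}\,x\dd t$, so $\Gamma(\alpha)=x^{\alpha}\int_0^\infty t^{\alpha-1}e^{-xt}\dd t$. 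Dividing by $x^{\alpha}\Gamma(\alpha)$, which is positive and finite, gives \eqref{eq:laplace}.

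For the ``in particular'' clause, set $x=j\in\{1,2,\dots\}$. The weight $t^{\alpha-1}/\Gamma(\alpha)$ is strictly positive on $(0,\infty)$, and $e^{-tj}=(e^{-t})^{j}$ is a geometric atom with ratio $e^{-t}\in(0,1)$. Hence $j^{-\alpha}$ is a positive continuous superposition of exponential kernels, which is the form the quadrature construction will discretize. It is worth recording the substitution $t=e^{y}$, which turns the integral into $\Gamma(\alpha)^{-1}\int_{\R}e^{\alpha y-je^{y}}\dd y$. This integrand is analytic in $y$ and decays double-exponentially as $y\to+\infty$ and exponentially as $y\to-\infty$. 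That is what later allows trapezoidal quadrature with $r$ nodes to reach $e^{-c\sqrt r}$ accuracy, relative to $j^{-\alpha}$, uniformly in $j$ on the relevant range.

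There is no genuine obstacle in this lemma. The only points to verify are the absolute convergence at $t\to0$, which uses $\alpha>0$ (in the application $\alpha>1$), and the positivity of $x$, which keeps the substitution orientation-preserving. I will state both explicitly so that the later quadrature error analysis can cite them.
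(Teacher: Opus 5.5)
Your proof is correct and is exactly the paper's argument: the substitution $s=xt$ in Euler's integral $\Gamma(\alpha)=\int_0^\infty s^{\alpha-1}e^{-s}\dd s$, with positivity of the weight $t^{\alpha-1}/\Gamma(\alpha)$ giving the mixture interpretation. Your side remark on the further substitution $t=e^{y}$ matches the log-Laplace form \eqref{eq:log-laplace} that the paper uses in the quadrature step, but the lemma itself does not need it.
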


\begin{proof}
Substitute $s=xt$ in the definition $\Gamma(\alpha)=\int_0^\infty s^{\alpha-1}e^{-s}\dd s$.
\end{proof}

Lem.~\ref{lem:laplace} is the whole reason a state can represent algebraic memory at all.
The mixture is positive, so the discretization below produces positive weights and stable modes rather than a signed cancellation, and this is what keeps the construction inside $\cA_r$ with a controlled norm.

\subsection{Positive exponential quadrature}
Substituting $t=e^{u}$ in \eqref{eq:laplace} gives
\begin{equation}
 j^{-\alpha}=\frac1{\Gamma(\alpha)}\int_{-\infty}^{\infty}e^{\alpha u}e^{-je^{u}}\dd u ,
 \label{eq:log-laplace}
\end{equation}
whose integrand decays like $e^{\alpha u}$ as $u\to-\infty$ and doubly exponentially as $u\to+\infty$, and is analytic in the strip $|\operatorname{Im}u|<\pi/2$ where $\operatorname{Re}e^{u}>0$.
Applying the trapezoidal rule with step $h$ on a truncated range therefore incurs a discretization error of order $e^{-\pi^2/h}$ and a left-truncation error of order $e^{\alpha u_{\min}}$.
Each node contributes one exponential mode with positive weight, since $\lambda=e^{u}>0$ and the weight $h\,e^{\alpha u}/\Gamma(\alpha)$ is positive.

The upper bound uses one external approximation result, which we state exactly as it is used.

\begin{fact}[Positive exponential approximation of a power law]
\label{fact:bm}
Fix $\alpha>0$.
For every $0<\varrho<1$ and every relative tolerance $0<\eta\le e^{-1}$ there are an integer $R$, positive weights $w_q$, and positive exponents $p_q$ with
\begin{equation}
 \Bigl|x^{-\alpha}-\sum_{q=1}^{R}w_qe^{-p_qx}\Bigr|\le\eta\,x^{-\alpha}
 \qquad(\varrho\le x\le1),
 \label{eq:bm-relative}
\end{equation}
and, for fixed $\alpha$, the construction may be chosen with
\begin{equation}
 R\le C_\alpha\bigl(1+\log\eta^{-1}\bigr)\bigl(1+\log\varrho^{-1}+\log\eta^{-1}+\log\log\eta^{-1}\bigr),
 \label{eq:bm-count}
\end{equation}
where $C_\alpha$ depends only on $\alpha$.
This is the exponential-sum approximation of \citet{beylkin2005exp,beylkin2010revisited}, in the relative-error form on a bounded interval bounded away from the origin.
\end{fact}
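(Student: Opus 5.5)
The plan is to prove Fact~\ref{fact:bm} directly from the log-Laplace representation \eqref{eq:log-laplace}, by discretizing it with the trapezoidal rule. The key observation is that the relative error is translation invariant in the logarithmic variable. For $x>0$, substitute $v=u+\log x$ in \eqref{eq:log-laplace}. This gives
\[
 x^{-\alpha}=\frac{x^{-\alpha}}{\Gamma(\alpha)}\int_{-\infty}^{\infty}\phi(v)\dd v,
 \qquad \phi(v)=e^{\alpha v}e^{-e^{v}} .
\]
The same substitution turns the discrete sum $\sum_k h\,e^{\alpha u_k}e^{-xe^{u_k}}/\Gamma(\alpha)$, with nodes $u_k=a+kh$, into $x^{-\alpha}\Gamma(\alpha)^{-1}\sum_k h\,\phi(u_k+\log x)$. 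Every error term, divided by $x^{-\alpha}$, is therefore an error for the fixed function $\phi$ on a grid shifted by $\log x$. The candidate is
\[
 w_q=\frac{h\,e^{\alpha u_q}}{\Gamma(\alpha)},\qquad p_q=e^{u_q},\qquad u_q=a+qh,\quad 0\le q<R,
\]
which has positive weights and positive exponents automatically. It remains to choose $h$, $a$ and $b=a+Rh$ so that three relative errors are each at most $\eta/3$ uniformly for $x\in[\varrho,1]$.

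\emph{Discretization.} $\phi$ is analytic in the strip $|\operatorname{Im}v|<\pi/2$. On the line $\operatorname{Im}v=\pm\theta$ with $\theta<\pi/2$, say $\theta=\pi/4$, we have $|\phi(v)|=e^{\alpha\operatorname{Re}v}e^{-\cos\theta\,e^{\operatorname{Re}v}}$, which is integrable with a bound depending only on $\alpha$. Poisson summation, or the standard contour-shift estimate, then gives a bi-infinite trapezoidal error at most $C_\alpha e^{-2\pi\theta/h}$ for every grid offset. This uniformity in the offset is exactly what absorbs the shift by $\log x$. Choosing $h=c_\alpha/(1+\log\eta^{-1})$ makes this error $\le\eta/3$.

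\emph{Truncation.} The left tail of the grid, with shifted nodes at most $a+\log x\le a$ since $x\le1$, contributes at most $\sum h e^{\alpha v}\lesssim e^{\alpha a}/\alpha$. Taking $a=-\alpha^{-1}\log(C_\alpha/\eta)$ makes it $\le\eta/3$. The right tail has shifted nodes at least $b+\log\varrho$. There $\phi$ decays doubly exponentially, so the tail sum is at most $C_\alpha e^{\alpha s}e^{-e^{s}}/h$ at $s=b+\log\varrho$. It suffices to take $e^{s}\gtrsim\log\eta^{-1}+\log h^{-1}$, that is $b=\log\varrho^{-1}+\log\log\eta^{-1}+O_\alpha(1)$. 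The node count is then
\[
 R=\frac{b-a}{h}\le C_\alpha\bigl(1+\log\eta^{-1}\bigr)\bigl(1+\log\varrho^{-1}+\log\eta^{-1}+\log\log\eta^{-1}\bigr),
\]
which is \eqref{eq:bm-count}. Summing the three contributions gives \eqref{eq:bm-relative}.

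\emph{Main obstacle.} The step needing care is the offset-uniform trapezoidal bound. The grid is fixed, but relative to $\phi$ it slides by $\log x$, so a bound that is only valid for one particular alignment would not suffice. I would handle this through the Poisson-summation form
\[
 \sum_k h\,\phi(\xi+kh)-\int\phi=\sum_{m\neq0}\widehat\phi(2\pi m/h)\,e^{2\pi i m\xi/h},
\]
together with the decay $|\widehat\phi(\omega)|\le C_\alpha e^{-\theta|\omega|}$ obtained by shifting the Fourier contour to $\operatorname{Im}v=\pm\theta$. Bounding each term in absolute value removes the phase, and hence the dependence on the offset $\xi$. A secondary check is that the right-tail bound survives the factor $1/h\sim\log\eta^{-1}$, which is why the $\log\log\eta^{-1}$ term appears.
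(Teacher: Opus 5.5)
Your proposal is correct and takes essentially the paper's route. The paper does not prove Fact~\ref{fact:bm} itself: it cites Beylkin--Monz\'on and sketches exactly this argument in its positive exponential quadrature paragraph, namely the trapezoidal rule on \eqref{eq:log-laplace}, strip analyticity for the discretization error, and the two truncations. Your shift $v=u+\log x$, combined with the offset-uniform Poisson-summation bound, is the mechanism that turns this into a relative error uniform on $[\varrho,1]$ with the stated node count.
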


\begin{theorem}[Finite-context exponential compression]
\label{thm:compression}
Fix $\alpha>0$, an integer $L\ge2$, and $0<\eta\le e^{-1}$.
There are positive $b_q$ and $\lambda_q$, $1\le q\le R$, with
\begin{equation}
 \Bigl|j^{-\alpha}-\sum_{q=1}^{R}b_qe^{-\lambda_q(j-1)}\Bigr|\le\eta\,j^{-\alpha}
 \qquad(1\le j\le L),
 \label{eq:relative-compression}
\end{equation}
and
\begin{equation}
 R\le C_\alpha\bigl(1+\log\eta^{-1}\bigr)\bigl(1+\log L+\log\eta^{-1}+\log\log\eta^{-1}\bigr).
 \label{eq:head-count}
\end{equation}
\end{theorem}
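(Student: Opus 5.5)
The plan is to rescale the lattice $\{1,\dots,L\}$ into the interval $[1/L,1]$ and then apply Fact~\ref{fact:bm} with $\varrho=1/L$. After that, the only work is to absorb the shift $j\mapsto j-1$ and the scaling factors into the weights.

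Set $x=j/L$, so that $1\le j\le L$ corresponds to $\varrho:=1/L\le x\le 1$, with $0<\varrho<1$ because $L\ge2$. Fact~\ref{fact:bm}, applied with this $\varrho$ and the given $\eta$, supplies positive $w_q,p_q$, $1\le q\le R$, satisfying
\[
\Bigl|(j/L)^{-\alpha}-\sum_{q=1}^R w_q e^{-p_q j/L}\Bigr|\le \eta\,(j/L)^{-\alpha}\qquad(1\le j\le L).
\]
Multiplying through by $L^{-\alpha}>0$ preserves the relative form:
\[
\Bigl|j^{-\alpha}-\sum_q L^{-\alpha}w_q e^{-p_qj/L}\Bigr|\le\eta\, j^{-\alpha}.
\]

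Next, set $\lambda_q:=p_q/L>0$ and $b_q:=L^{-\alpha}w_qe^{-\lambda_q}>0$. Then
\[
L^{-\alpha}w_q e^{-p_qj/L}=b_q e^{-\lambda_q(j-1)}
\]
identically in $j$, which gives \eqref{eq:relative-compression} with positive weights and positive decay rates. In particular, each mode is a stable diagonal state with $\rho_q=e^{-\lambda_q}\in(0,1)$.

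For the count, substitute $\log\varrho^{-1}=\log L$ into \eqref{eq:bm-count}. This gives \eqref{eq:head-count} with the same constant $C_\alpha$.

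The only step needing care is the range of validity of Fact~\ref{fact:bm}. As stated, the relative bound holds on the whole interval $[\varrho,1]$ with $R$ depending on $\varrho$ only logarithmically, and the lattice points $j/L$ all lie in that interval, so the substitution is legitimate. If one prefers a self-contained route, the alternative is the truncated trapezoidal rule applied to \eqref{eq:log-laplace}. One would take step $h\asymp 1/\log\eta^{-1}$, so that the discretization error $e^{-\pi^2/h}$ is at most $\eta$ in relative terms, uniformly in $x\in[\varrho,1]$ after rescaling. The left truncation point would be chosen with $e^{\alpha u_{\min}}\lesssim\eta\varrho^{\alpha}$ to control the relative error at the small end, and the right truncation point near $\log(\varrho^{-1}\log\eta^{-1})$, where the doubly exponential decay takes over. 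The node count, (range)/$h$, then reproduces the product form of \eqref{eq:head-count}. The main obstacle in that route is keeping the trapezoidal error relative rather than absolute uniformly near $x=\varrho$. Here, however, we simply invoke the Fact.
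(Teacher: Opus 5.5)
Your proposal is correct and matches the paper's proof step for step. You rescale $x=j/L$, apply Fact~\ref{fact:bm} with $\varrho=1/L$, absorb the shift through $\lambda_q=p_q/L$ and $b_q=L^{-\alpha}w_qe^{-\lambda_q}$, and read the count directly off \eqref{eq:bm-count}; the trapezoidal alternative you sketch is not needed.
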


\begin{proof}
Apply Fact~\ref{fact:bm} with $\varrho=1/L$, so that \eqref{eq:bm-relative} holds on $[1/L,1]$ and \eqref{eq:bm-count} reads $R\le C_\alpha(1+\log\eta^{-1})(1+\log L+\log\eta^{-1}+\log\log\eta^{-1})$.
Set $x=j/L$, which lies in $[1/L,1]$ for $1\le j\le L$.
Then $j^{-\alpha}=L^{-\alpha}x^{-\alpha}$, and substituting the approximant gives
\[
 \Bigl|j^{-\alpha}-L^{-\alpha}\sum_q w_qe^{-p_qj/L}\Bigr|\le\eta\,j^{-\alpha}
 \qquad(1\le j\le L).
\]
Putting $\lambda_q=p_q/L>0$ and $b_q=L^{-\alpha}w_qe^{-\lambda_q}>0$ turns the right-hand sum into $\sum_qb_qe^{-\lambda_q(j-1)}$, which is \eqref{eq:relative-compression} with positive weights and positive rates.
The count is unchanged, giving \eqref{eq:head-count}.
\end{proof}

The error is relative rather than absolute, which matters because the risk weights every lag and an absolute guarantee would be vacuous at large $j$.
Balancing the two error sources in \eqref{eq:log-laplace} at $R$ nodes gives $\log\eta^{-1}\asymp\pi\sqrt{\alpha R}$, which is the origin of the square root and is consistent with \eqref{eq:head-count}.

\subsection{Finite-context upper bound}
\begin{corollary}[Risk of $R$ exponential modes at context $L$]
\label{cor:risk-upper}
With $R$ as in \eqref{eq:head-count} there is $b\in\cA_R$ supported on $\{1,\dots,L\}$ with
\begin{equation}
 \cE(b)\le M_\alpha c^2\Bigl\{\zeta(2\alpha)\eta^2+\frac{L^{1-2\alpha}}{2\alpha-1}\Bigr\}.
 \label{eq:risk-upper}
\end{equation}
\end{corollary}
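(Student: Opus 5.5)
The plan is to build the candidate directly from Thm.~\ref{thm:compression} and then split its risk into an inside-context part and a tail part.
Take the positive weights $b_q$ and rates $\lambda_q$ given there, and set
\[
 b_j=c\sum_{q=1}^{R}b_qe^{-\lambda_q(j-1)}\mathbf 1\{j\le L\}.
\]
This is a member of $\cA_R$ in the sense of \eqref{eq:app-Ar}. The state can be taken diagonal with entries $\rho_q=e^{-\lambda_q}\in(0,1)$, the input map $v=(1,\dots,1)^\top$ and the readout $u=(cb_q)_q$, so that $u^\top A^{j-1}v=c\sum_qb_qe^{-\lambda_q(j-1)}$, and the context is $L$. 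Stability is automatic because every $\rho_q<1$.

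Next bound the coefficient error $\Delta=b-a^\star$ in $\ell_2$. For $1\le j\le L$, the relative guarantee \eqref{eq:relative-compression} multiplied by $c$ gives $|\Delta_j|\le\eta\,cj^{-\alpha}$. Hence
\[
 \sum_{j\le L}\Delta_j^2\le c^2\eta^2\sum_{j\ge1}j^{-2\alpha}=c^2\eta^2\zeta(2\alpha),
\]
and the sum converges since $2\alpha>2$. For $j>L$ the candidate vanishes, so $\Delta_j=-cj^{-\alpha}$. An integral comparison, using that $x^{-2\alpha}$ is decreasing, then gives
\[
 \sum_{j>L}\Delta_j^2=c^2\sum_{j>L}j^{-2\alpha}\le c^2\int_L^\infty x^{-2\alpha}\dd x=\frac{c^2L^{1-2\alpha}}{2\alpha-1}.
\]

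Finally apply the upper half of \eqref{eq:app-regular}, which is valid because the spectrum is regular with $f_X\le M_\alpha$. This gives $\cE(b)\le M_\alpha\|\Delta\|_2^2$, and adding the two pieces yields \eqref{eq:risk-upper}.

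There is no real obstacle here. The only points that need care are checking that the truncated kernel is admissible, which is exactly why $\cA_r$ was defined as a union over contexts, and making sure the relative form of the error is what allows the inside-context error to be summed into $\zeta(2\alpha)\eta^2$ rather than into something that grows with $L$.
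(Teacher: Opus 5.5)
Your proposal is correct and follows the paper's own proof: you scale the compression of Thm.~\ref{thm:compression} by $c$ and truncate at $L$, bound the in-context error by $c^2\eta^2\zeta(2\alpha)$ and the tail by $c^2L^{1-2\alpha}/(2\alpha-1)$, and convert to risk with the upper half of \eqref{eq:app-regular}. Your explicit diagonal realization, with $\rho_q=e^{-\lambda_q}$, $v=\mathbf 1$ and $u=(cb_q)_q$, just spells out the membership in $\cA_R$ that the paper leaves implicit.
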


\begin{proof}
By \eqref{eq:app-regular} it suffices to bound $\|b-a^\star\|_2^2$.
On $\{1,\dots,L\}$ the relative bound \eqref{eq:relative-compression} gives a squared error at most $\eta^2c^2\sum_{j\le L}j^{-2\alpha}\le\eta^2c^2\zeta(2\alpha)$, and beyond $L$ the candidate vanishes so the error is the tail $c^2\sum_{j>L}j^{-2\alpha}\le c^2L^{1-2\alpha}/(2\alpha-1)$.
\end{proof}

\begin{corollary}[Constructive root-exponential curve]
\label{cor:constructive}
There are $C_L,C_{\rm up},c_{\rm up}>0$ depending only on the fixed model such that for every sufficiently large $r$ there is $L_r\le\exp(C_L\sqrt r)$ and $b\in\cA_r$ supported on $\{1,\dots,L_r\}$ with $\cE(b)\le C_{\rm up}e^{-c_{\rm up}\sqrt r}$.
Equivalently, reaching risk $\tau$ needs at most $C_{\rm mode}\log^2(C_{\rm up}/\tau)$ modes for another fixed constant $C_{\rm mode}>0$.
\end{corollary}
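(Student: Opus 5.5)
The plan is to instantiate Corollary~\ref{cor:risk-upper} with a joint choice of the context $L$ and the relative tolerance $\eta$, tied to a single parameter $s$ that will end up proportional to $\sqrt r$. I will set $\eta=e^{-s}$ (so $\eta\le e^{-1}$ once $s\ge1$) and $L=\lceil e^{\kappa s}\rceil$, with $\kappa=2/(2\alpha-1)$. With this choice the two terms in \eqref{eq:risk-upper} are of the same order: $\zeta(2\alpha)\eta^2=\zeta(2\alpha)e^{-2s}$, and $L^{1-2\alpha}/(2\alpha-1)\le e^{-2s}/(2\alpha-1)$. Hence
\[
\cE(b)\le M_\alpha c^2\Bigl(\zeta(2\alpha)+\tfrac{1}{2\alpha-1}\Bigr)e^{-2s}.
\]

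The next step is to bound the mode count. By \eqref{eq:head-count},
\[
R\le C_\alpha(1+s)\bigl(1+\kappa s+\log 2+s+\log s\bigr)\le A_\alpha s^2
\]
for all $s\ge1$, where $A_\alpha$ depends only on $\alpha$. The constructed kernel is a positive exponential sum truncated at $L$. It is therefore a diagonal-state member of $\cA_R$ with context $L$, and hence also a member of $\cA_r$ whenever $R\le r$, because \eqref{eq:app-Ar} allows $r'\le r$.

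Given a large $r$, I will choose $s=\sqrt{r/A_\alpha}$, which guarantees $R\le r$. This gives
\[
\cE(b)\le C_{\rm up}e^{-c_{\rm up}\sqrt r},\qquad c_{\rm up}=2/\sqrt{A_\alpha},\qquad C_{\rm up}=M_\alpha c^2\bigl(\zeta(2\alpha)+(2\alpha-1)^{-1}\bigr),
\]
and
\[
L_r\le 2e^{\kappa\sqrt{r/A_\alpha}}\le \exp(C_L\sqrt r)
\]
for a suitable $C_L$ and $r$ large. The condition $s\ge1$, i.e.\ $r\ge A_\alpha$, is exactly what "sufficiently large $r$" will mean here.

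For the equivalent form, given $\tau<C_{\rm up}$ I will take $r=\lceil c_{\rm up}^{-2}\log^2(C_{\rm up}/\tau)\rceil$, enlarged if necessary to exceed the threshold $r_0$. Then $\cE\le\tau$, and $r\le C_{\rm mode}\log^2(C_{\rm up}/\tau)$ for small $\tau$; the ceiling and the threshold are absorbed into $C_{\rm mode}$ because $\log(C_{\rm up}/\tau)$ is bounded below. Since $\inf_{\cA_r}\cE$ is nonincreasing in $r$ (the classes are nested), this is enough.

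The only delicate point I expect is the coupling between $L$ and $\eta$. The count \eqref{eq:head-count} grows with $\log L$, so $L$ must be at most exponential in $s$, or the product in \eqref{eq:head-count} would exceed order $s^2$. At the same time, the truncation tail requires $L^{1-2\alpha}\lesssim\eta^2$. Both requirements are met by the choice $\log L\asymp s$. This coupling is the source of the $L=e^{\Theta(\sqrt r)}$ context requirement in Table~\ref{tab:resources}, and the need for $\alpha>1/2$ (guaranteed here since $\alpha>1$) enters only through $\kappa>0$ and the tail sum.
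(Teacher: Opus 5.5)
Your proposal is correct and follows the paper's proof. Both take $L^{1-2\alpha}\asymp\eta^2$ to balance the two terms of Corollary~\ref{cor:risk-upper}, which gives $R\lesssim(1+\log\eta^{-1})^2$ from \eqref{eq:head-count}, and then invert; your explicit choice $\eta=e^{-s}$, $L=\lceil e^{\kappa s}\rceil$, $s=\sqrt{r/A_\alpha}$ makes the inversion and the bound $L_r\le\exp(C_L\sqrt r)$ more precise than the paper's one-line $\asymp$ argument.
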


\begin{proof}
Choose $\eta$ and $L$ so that the two terms of Corollary~\ref{cor:risk-upper} are comparable, that is $L^{1-2\alpha}\asymp\eta^2$, so $\log L\asymp\log\eta^{-1}/(2\alpha-1)$.
Then \eqref{eq:head-count} gives $R\le C(1+\log\eta^{-1})^2$, and inverting yields $\log\eta^{-1}\asymp\sqrt R$ and hence $\cE\le C_{\rm up}e^{-c_{\rm up}\sqrt R}$.
\end{proof}

\subsection{Hankel operator formulation}
Let $\mathbf H_\alpha$ be the Hankel operator on $\ell_2(\mathbb N_0)$ with symbol $(i+j+1)^{-\alpha}$, and let $G^{(\alpha)}_N$ be its leading $N\times N$ section.
For $\alpha>1$ the entries are square summable, so $\mathbf H_\alpha$ is Hilbert--Schmidt, hence compact and self-adjoint, and being a moment matrix of a positive measure it is positive semidefinite.
By the Eckart--Young principle the best unconstrained rank-$r$ approximation in Frobenius norm is governed by the tail of the singular spectrum \citep{eckart1936rank}, and classical Hankel-operator theory supplies the structured spectral and approximation background used here \citep{widom1966hankel,aak1971schmidt,peller2003hankel}.
By \eqref{eq:app-hankel-rank} every member of $\cA_r$ has Hankel rank at most $r$.

\subsection{Finite-section singular-value lower scale}
\begin{lemma}[Finite sections retain the relevant singular value]
\label{lem:finite-section}
Let $P_N$ project onto the first $N$ coordinates and put $\mathbf H_{\alpha,N}=P_N\mathbf H_\alpha P_N$.
Then $\|\mathbf H_\alpha-\mathbf H_{\alpha,N}\|_{\rm op}\le C_\alpha N^{1-\alpha}$.
Consequently there are $\kappa_\alpha,c_\alpha,C_\alpha>0$ and $r_0$ such that, with $N_r=\lceil e^{\kappa_\alpha\sqrt r}\rceil$, one has $N_r>r$ and
\begin{equation}
 s_{r+1}\bigl(G^{(\alpha)}_{N_r}\bigr)\ \ge\ c_\alpha e^{-C_\alpha\sqrt r}
 \qquad(r\ge r_0).
 \label{eq:finite-section-sv}
\end{equation}
\end{lemma}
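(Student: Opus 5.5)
The plan has two separate steps: an operator-norm bound for the truncation error, and a lower bound on the singular values of the full operator $\mathbf H_\alpha$. Weyl's inequality then combines them, with $N_r$ chosen so that the truncation error is dominated.

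\emph{Truncation estimate.} I would bound the operator norm by the Hilbert--Schmidt norm,
\[
 \|\mathbf H_\alpha-\mathbf H_{\alpha,N}\|_{\rm op}^2\le\sum_{\max(i,j)\ge N}(i+j+1)^{-2\alpha}.
\]
Group the terms by antidiagonals $k=i+j+1$. Every pair with $\max(i,j)\ge N$ has $k\ge N+1$, and each antidiagonal contains at most $k$ pairs. So the sum is at most $\sum_{k>N}k^{1-2\alpha}\le C_\alpha N^{2-2\alpha}$, using $\alpha>1$. Taking square roots gives $\|\mathbf H_\alpha-\mathbf H_{\alpha,N}\|_{\rm op}\le C_\alpha N^{1-\alpha}$.

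\emph{Spectral lower scale of the full operator.} This is the main obstacle: I need $s_{r+1}(\mathbf H_\alpha)\ge c\,e^{-C\sqrt r}$. I would start from the moment representation obtained from Lem.~\ref{lem:laplace} with $x=e^{-t}$,
\[
 (k+1)^{-\alpha}=\int_0^1x^k\dd\mu(x),\qquad \dd\mu(x)=\frac{(\log(1/x))^{\alpha-1}}{\Gamma(\alpha)}\dd x .
\]
So $\langle\mathbf H_\alpha y,y\rangle=\int_0^1|\sum_k y_kx^k|^2\dd\mu$. On $[1/2,1]$ one has $\log(1/x)\ge 1-x$, and $\log(1/x)\le 2(1-x)$ there as well, so the density of $\mu$ is comparable to $(1-x)^{\alpha-1}$. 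In Loewner order, $\mathbf H_\alpha$ therefore dominates $c_\alpha$ times the Hankel moment operator $\mathbf H_\nu$ of $\dd\nu=(1-x)^{\alpha-1}\mathbf 1_{[1/2,1]}\dd x$. By the min--max principle, $s_{r+1}(\mathbf H_\alpha)\ge c_\alpha s_{r+1}(\mathbf H_\nu)$.

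For $\mathbf H_\nu$, the measure is supported on an interval whose right endpoint is $1$ and it has an algebraic (Jacobi-type) weight there. Widom's asymptotics \citep{widom1966hankel} then give eigenvalues of the form $\exp(-C\sqrt n\,(1+o(1)))$. The paper cites this as the Hankel singular-value background, and I would invoke it here. If a self-contained route is needed instead, I would use min--max with an explicit $(r+1)$-dimensional test subspace: polynomials of degree $r$ that are orthonormal for $\nu$ after the change of variable $x=e^{-t}$. I would bound $\int|p|^2\dd\nu$ from below against $\|y\|_2^2$ through a coefficient-growth estimate, and I expect that estimate to reproduce only the $e^{-C\sqrt r}$ scale, with a worse constant $C$.

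\emph{Combination.} The nonzero singular values of $G^{(\alpha)}_N$ coincide with those of $\mathbf H_{\alpha,N}$. Weyl's perturbation inequality then gives
\[
 s_{r+1}(G^{(\alpha)}_N)\ge s_{r+1}(\mathbf H_\alpha)-\|\mathbf H_\alpha-\mathbf H_{\alpha,N}\|_{\rm op}\ge c\,e^{-C\sqrt r}-C_\alpha N^{1-\alpha}.
\]
With $N_r=\lceil e^{\kappa_\alpha\sqrt r}\rceil$, the error term is at most $C_\alpha e^{-\kappa_\alpha(\alpha-1)\sqrt r}$. Choosing $\kappa_\alpha>C/(\alpha-1)$ makes this at most $\tfrac c2e^{-C\sqrt r}$ for all $r\ge r_0$, which yields \eqref{eq:finite-section-sv} with $c_\alpha=c/2$ and $C_\alpha=C$. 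Finally, $N_r\ge e^{\kappa_\alpha\sqrt r}>r$ for $r$ large, so after enlarging $r_0$ the singular value $s_{r+1}$ is well defined.
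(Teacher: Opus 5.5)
Your proposal is correct and follows the paper's proof. Both bound the discarded block in Hilbert--Schmidt norm; your antidiagonal count is a clean replacement for the paper's integral comparison. Both then take Widom's root-exponential scale for $s_{r+1}(\mathbf H_\alpha)$ as the single external input, and finish with Weyl's inequality and $\kappa_\alpha(\alpha-1)$ larger than the Widom constant. Your Loewner reduction to the Jacobi weight $(1-x)^{\alpha-1}$ on $[1/2,1]$ is valid but unnecessary, since the paper applies Widom to $\mathbf H_\alpha$ directly.

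One caution: your optional self-contained fallback would not work as sketched. Min--max over polynomials of degree at most $r$ in $x$ only certifies the smallest eigenvalue of the $(r+1)\times(r+1)$ moment section. For a measure on $[1/2,1]$ that eigenvalue is $e^{-\Theta(r)}$, because Chebyshev polynomials there have exponentially large coefficients. Polynomials in $t=\log(1/x)$ do not help either, since they are not power series in $x$ with $\ell_2$ coefficients. So that route cannot reach $e^{-C\sqrt r}$, and Widom's theorem, or an argument of comparable depth, is genuinely needed at that step.
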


\begin{proof}
For the operator bound, $\mathbf H_\alpha-\mathbf H_{\alpha,N}$ is supported on the indices with $\max(i,j)\ge N$, whose entries are at most $(i+j+1)^{-\alpha}$.
Bounding the operator norm by the Hilbert--Schmidt norm and comparing the resulting sum with $\int_N^\infty x^{1-2\alpha}\dd x$, which converges because $\alpha>1$, gives $\|\mathbf H_\alpha-\mathbf H_{\alpha,N}\|_{\rm op}\le C_\alpha N^{1-\alpha}$.

For the singular value we use one external input, namely the classical root-exponential scale for the singular values of the Hankel operator with power-law symbol, in the form
\begin{equation}
 s_{r+1}(\mathbf H_\alpha)\ \ge\ c_\alpha e^{-C_\alpha\sqrt r}
 \qquad\text{for all large }r,
 \label{eq:widom-scale}
\end{equation}
which follows from the spectral theory of Hankel matrices developed by \citet{widom1966hankel}, specifically the singular-value asymptotics for Hankel operators with power-law symbols established there; \citet{peller2003hankel} restates that theory systematically and is the more convenient modern entry point.
Only the stretched-exponential scale in \eqref{eq:widom-scale} is used, and no claim is made about the exact constant $C$ or about the leading asymptotic coefficient.
Weyl's inequality for the perturbation $\mathbf H_{\alpha,N}=\mathbf H_\alpha-(\mathbf H_\alpha-\mathbf H_{\alpha,N})$ gives
\[
 s_{r+1}\bigl(G^{(\alpha)}_N\bigr)\ \ge\ s_{r+1}(\mathbf H_\alpha)-\|\mathbf H_\alpha-\mathbf H_{\alpha,N}\|_{\rm op}
 \ \ge\ c_\alpha e^{-C_\alpha\sqrt r}-C_\alpha' N^{1-\alpha}.
\]
Choose $\kappa_\alpha:=2C_\alpha/(\alpha-1)$ and $N_r=\lceil e^{\kappa_\alpha\sqrt r}\rceil$, so that
$C_\alpha' N_r^{1-\alpha}\le C_\alpha' e^{-(\alpha-1)\kappa_\alpha\sqrt r}=C_\alpha' e^{-2C_\alpha\sqrt r}$,
which is below $\tfrac12c_\alpha e^{-C_\alpha\sqrt r}$ for all large $r$.
The displayed difference is then at least $\tfrac12c_\alpha e^{-C_\alpha\sqrt r}$; decreasing the constant $c_\alpha$ in the statement by a factor of two gives \eqref{eq:finite-section-sv}, and $N_r>r$ holds for large $r$ since $e^{\kappa_\alpha\sqrt r}$ grows faster than $r$.
\end{proof}

The two-step structure is what the argument needs.
The singular values of the infinite operator decay root-exponentially, which is the classical Widom scale, but a lower bound on $\cE$ must be expressed through a finite section, because only finitely many coefficients enter a Hankel block.
Lem.~\ref{lem:finite-section} says the section can be taken exponentially large in $\sqrt r$ while still retaining the $(r+1)$st singular value up to a constant, and this is exactly the size at which the truncation error $N^{1-\alpha}$ is below the singular value being retained.

\subsection{Corrected Ky Fan indexing in the singular-value split}
Combining Lem.~\ref{lem:app-eckart} with a Hankel block of order $N$ bounds $\|H_N(b-a^\star)\|_F^2$ below by $\sum_{q>r}s_q(G^{(\alpha)}_N)^2$, but the passage back to coefficients must use \eqref{eq:app-hankel-weights} and therefore loses a factor $N$.
The block of order $N$ involves the first $2N-1$ coefficients, so a context of at least $2N-1$ is required for the block to be a genuine restriction of the candidate, and the admissibility condition is $2N-1\le L$.
Writing the two steps in the wrong order, that is applying Ky Fan to a block larger than the context allows, would compare against coefficients the candidate has not been asked to produce.

The loss of the factor $N=N_r=\lceil e^{\kappa_\alpha\sqrt r}\rceil$ is what makes the constant in the exponent of the lower bound differ from the constant in the upper bound.
Both are of the form $e^{-\Theta(\sqrt r)}$ because $\log N_r\asymp\sqrt r$, so dividing by $N_r$ changes the constant in front of $\sqrt r$ and not the exponent.
Thm.~\ref{thm:state-complexity} asserts matching rates and not matching constants, and this is where the discrepancy enters.

\subsection{Proof of Thm.~\ref{thm:state-complexity}}
\begin{proof}[Proof of Thm.~\ref{thm:state-complexity}]
The upper bound is Corollary~\ref{cor:constructive}.

For the lower bound, fix $r\ge r_0$, let $N=N_r$ from Lem.~\ref{lem:finite-section}, and let $b\in\cA_r$ be arbitrary, with context $L$ as in \eqref{eq:app-Ar}.
If $L<2N-1$ then the coefficients of $b$ beyond $L$ vanish while those of $a^\star$ are of order $j^{-\alpha}$, so the tail term alone gives $\cE(b)\ge m_\alpha c^2\sum_{j>L}j^{-2\alpha}\ge c_{\rm tail}e^{-C_{\rm tail}\sqrt r}$ with $C_{\rm tail}=(2\alpha-1)\kappa_\alpha$ after absorbing the ceiling and the factor $2N-1$ into $c_{\rm tail}$.
Otherwise $2N-1\le L$, so the block $H_N(b)$ lies inside the context and, by the restricted rank statement following \eqref{eq:app-Ar}, has rank at most $r$.
Lem.~\ref{lem:app-eckart} applied with $M=cG^{(\alpha)}_N$ gives
\[
 \|H_N(b-a^\star)\|_F^2\ \ge\ c^2\sum_{q>r}s_q\bigl(G^{(\alpha)}_N\bigr)^2\ \ge\ c^2s_{r+1}\bigl(G^{(\alpha)}_N\bigr)^2 ,
\]
and \eqref{eq:app-hankel-weights} converts this to $\|b-a^\star\|_2^2\ge c^2N^{-1}s_{r+1}(G^{(\alpha)}_N)^2$.
Inserting \eqref{eq:finite-section-sv} and $N=\lceil e^{\kappa_\alpha\sqrt r}\rceil$, and using the lower half of \eqref{eq:app-regular}, gives
\[
 \cE(b)\ \ge\ m_\alpha c^2c_\alpha^2e^{-\kappa_\alpha\sqrt r}e^{-2C_\alpha\sqrt r}.
\]
Taking $C_1=\max\{C_{\rm tail},\kappa_\alpha+2C_\alpha\}$ and decreasing $c_1>0$ to cover both context branches gives $\cE(b)\ge c_1e^{-C_1\sqrt r}$.
Taking the infimum over $b\in\cA_r$ and over $L$ gives \eqref{eq:rootexp}.

For \eqref{eq:log2}, apply the elementary inversion that if $c_1e^{-C_1r^\gamma}\le\inf_{b\in\cA_r}\cE(b)\le C_2e^{-c_2r^\gamma}$ for large $r$, then $r^\star(\tau)=\Theta(\log^{1/\gamma}(1/\tau))$.
Here $\gamma=1/2$, which gives $\Theta(\log^2(1/\tau))$.
\end{proof}

\subsection{Analytic algebraic Laplace-mixture extension}
The same exponent holds for a broader class.
Fix $\alpha>1$ and let the analytic algebraic Laplace-mixture class consist of the positive sequences
\begin{equation}
 a_j=\int_0^{\infty}e^{-(j-1)t}t^{\alpha-1}h(t)\dd t,
 \qquad j\ge1,
 \label{eq:cm-representation}
\end{equation}
where $h$ is positive on $[0,\infty)$ with $h(0)>0$, extends holomorphically to an open set containing both a neighbourhood $\{|z|<r_0\}$ of the origin for some $r_0>0$ and a sector $\Sigma_\vartheta=\{z\ne0:|\arg z|<\vartheta\}$ with $0<\vartheta<\pi/2$, and satisfies
\begin{equation}
 |h(z)|\le C_he^{-\mu\Re z}
 \qquad(z\in\Sigma_\vartheta)
 \label{eq:cm-decay}
\end{equation}
for constants $C_h,\mu>0$.
The three requirements are used separately below, so none is decorative.
Positivity of $h$ makes the mixture a moment representation and keeps the constructed modes positive, the value $h(0)>0$ is what the local Loewner step consumes, and the exponential decay \eqref{eq:cm-decay} together with sectorial analyticity is what the dyadic Bernstein-ellipse quadrature consumes.
Every such $a$ satisfies $a_j\sim h(0)\Gamma(\alpha)j^{-\alpha}$ and is squeezed between two positive multiples of $j^{-\alpha}$, so its tail obeys $\sum_{j>L}a_j^2=\Theta(L^{1-2\alpha})$.
Ordinary algebraic memory and the fractional kernel of App.~\ref{app:fractional} both belong to this class.

Two statements carry over.
On the upper side, the geometry that the composite quadrature needs is uniform across dyadic Laplace intervals, and it is worth stating exactly, since it is the only place where sectorial analyticity is consumed.

\begin{lemma}[Dyadic Bernstein-ellipse geometry]
\label{lem:dyadic-ellipse}
Let $[a,2a]\subset(0,\infty)$ and let $t=a(3+x)/2$ map $[-1,1]$ onto it.
For every Bernstein ellipse $E_\rho$ with $1<\rho<3+2\sqrt2$, the image satisfies
\[
 \min_{E_\rho}\Re t=\frac a4\bigl(6-\rho-\rho^{-1}\bigr)>0 ,
\]
so the image lies in the open right half-plane, and for $\rho>1$ small enough it lies inside any prescribed sector $\Sigma_\vartheta$.
Consequently $|e^{-(j-1)t}|\le e^{-c_\rho(j-1)a}$ uniformly on $E_\rho$.
On the endpoint interval $[0,L^{-1}]$ the affine image of a fixed ellipse obeys $|t|\le C_\rho/L$, hence for all sufficiently large $L$ it lies inside the fixed holomorphic neighbourhood of the origin; the finitely many remaining $L$ are absorbed into the constants. Moreover $|(j-1)t|\le C_\rho$ and $|e^{-(j-1)t}|\le e^{C_\rho}$ for every $j\le L$.
\end{lemma}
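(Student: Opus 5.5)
The plan is to compute everything directly from the Joukowski parametrisation of the Bernstein ellipse. I write $E_\rho$ for the image of $|w|=\rho$ under $x=\tfrac12(w+w^{-1})$, together with its interior. The only facts needed are the ranges of the real part, the imaginary part and the modulus of $x$ on this closed region:
\[
 |\Re x|\le\tfrac12(\rho+\rho^{-1}),\qquad |\Im x|\le\tfrac12(\rho-\rho^{-1}),\qquad |x|\le\tfrac12(\rho+\rho^{-1}).
\]
The extreme value $\Re x=-\tfrac12(\rho+\rho^{-1})$ is attained at $w=-\rho$. By the maximum and minimum principles for the harmonic function $\Re x$, these bounds hold on the filled ellipse as well as on its boundary.

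\emph{Dyadic intervals.} Since $t=a(3+x)/2$ is affine with a real positive slope, I get $\Re t=\tfrac a2(3+\Re x)$. Its minimum over $E_\rho$ is therefore
\[
 \tfrac a2\bigl(3-\tfrac12(\rho+\rho^{-1})\bigr)=\tfrac a4\bigl(6-\rho-\rho^{-1}\bigr).
\]
This quantity is positive exactly when $\rho^2-6\rho+1<0$, that is for $1<\rho<3+2\sqrt2$, which gives the half-plane claim.

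For the sector claim I bound the argument of $t$ by the ratio of imaginary to real part:
\[
 \frac{|\Im t|}{\Re t}\le\frac{\tfrac a2\cdot\tfrac12(\rho-\rho^{-1})}{\tfrac a4(6-\rho-\rho^{-1})}=\frac{\rho-\rho^{-1}}{6-\rho-\rho^{-1}} .
\]
This ratio is independent of $a$ and tends to $0$ as $\rho\downarrow1$. Hence for $\rho$ close enough to $1$ it is below $\tan\vartheta$, uniformly over all dyadic intervals. The exponential bound follows from $|e^{-(j-1)t}|=e^{-(j-1)\Re t}$ with $c_\rho:=\tfrac14(6-\rho-\rho^{-1})>0$, which is again uniform in $a$.

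\emph{Endpoint interval.} On $[0,L^{-1}]$ the affine map is $t=(1+x)/(2L)$. The modulus bound on $x$ gives
\[
 |t|\le\frac{1+\tfrac12(\rho+\rho^{-1})}{2L}=:\frac{C_\rho}{L}.
\]
Once $L>C_\rho/r_0$, the image lies in the disc $\{|z|<r_0\}$ on which $h$ is holomorphic. The finitely many smaller values of $L$ only affect constants, as the statement allows. For $j\le L$ I have $|(j-1)t|\le(L-1)C_\rho/L\le C_\rho$, and then $|e^{-(j-1)t}|\le e^{|(j-1)t|}\le e^{C_\rho}$.

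There is no genuine obstacle here. The one point to handle carefully is the uniformity in $a$ of both the sector condition and $c_\rho$, which is exactly what the scale invariance of the ratio above provides.
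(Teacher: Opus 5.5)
Your proposal is correct and follows essentially the paper's route: the Joukowski parametrisation, the minimum of $\Re x$ (attained at $w=-\rho$), the identity $|e^{-(j-1)t}|=e^{-(j-1)\Re t}$, and the triangle inequality on the endpoint map. The only difference is the sector claim. There you give the explicit, $a$-independent bound $|\Im t|/\Re t\le(\rho-\rho^{-1})/(6-\rho-\rho^{-1})$, where the paper argues that $E_\rho$ shrinks to $[-1,1]$; your version makes the uniformity over dyadic intervals explicit rather than implicit.
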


\begin{proof}
A point of $E_\rho$ is $x=\tfrac12(\zeta+\zeta^{-1})$ with $|\zeta|\in[\rho^{-1},\rho]$, so $\Re x\ge-\tfrac12(\rho+\rho^{-1})$ and $\Re t\ge\tfrac a2\{3-\tfrac12(\rho+\rho^{-1})\}$, which is the stated quantity and is positive exactly when $\rho+\rho^{-1}<6$, that is $\rho<3+2\sqrt2$.
The sector statement follows because $E_\rho$ shrinks to $[-1,1]$ as $\rho\to1$ and the image of $[-1,1]$ is the compact interval $[a,2a]$ on the positive axis.
The exponential bound is $|e^{-(j-1)t}|=e^{-(j-1)\Re t}$ with the displayed lower bound on $\Re t$, and the endpoint statement is the triangle inequality on the image of a fixed ellipse under a map of size $O(1/L)$.
\end{proof}

\noindent
The compression then follows in three steps, each contributing one factor to the head count.
First, the decay \eqref{eq:cm-decay} lets the Laplace integral be truncated at $T_\eta=C(1+\log\eta^{-1})$ with relative error at most $\eta/3$ uniformly in $j$, with no dependence on $L$ at all.
The point is that the discarded mass carries the same power of $j$ as $a_j$ itself, so the comparison is scale free.
Writing $q_j:=j-1+\mu$, the bound $|h(t)|\le C_he^{-\mu t}$ gives
\[
 R_j(T):=\int_T^\infty e^{-(j-1)t}t^{\alpha-1}|h(t)|\dd t
 \ \le\ C_h\int_T^\infty e^{-q_jt}t^{\alpha-1}\dd t
 \ =\ C_hq_j^{-\alpha}\Gamma(\alpha,q_jT),
\]
after the substitution $u=q_jt$.
The upper incomplete Gamma function obeys $\Gamma(\alpha,x)\le C_\alpha e^{-x/2}$ for $x\ge1$, the polynomial prefactor being absorbed into the weaker exponential, and $q_j\ge\mu>0$ with $q_j\asymp_\mu j$, so $R_j(T)\le C_{\alpha,\mu}e^{-\mu T/2}j^{-\alpha}$.
Since $h(0)>0$ gives $a_j\ge cj^{-\alpha}$ on the same class, the ratio obeys $R_j(T)/a_j\le Ce^{-cT}$ uniformly in $j$, and $T_\eta=C(1+\log\eta^{-1})$ suffices.
Second, partition $[L^{-1},T_\eta]$ dyadically and treat $[0,L^{-1}]$ separately; the number of intervals is $M=O(1+\log L+\log\log\eta^{-1})$.
Third, apply the analytic Gauss--Jacobi estimate of \citet{baffet2019gaussjacobi} on each interval with $J$ nodes.
Lem.~\ref{lem:dyadic-ellipse} supplies the uniform ellipse geometry that estimate requires. On a dyadic interval $[a,2a]$, the ellipse bound gives an absolute quadrature error at most $C\rho^{-2J}e^{-c(j-1)a}a^\alpha\le C\rho^{-2J}j^{-\alpha}$, and the endpoint interval obeys the same $C\rho^{-2J}j^{-\alpha}$ bound after the preceding neighbourhood argument. Summing the $M$ interval errors therefore gives at most $CM\rho^{-2J}j^{-\alpha}\le C_{\rm quad}M\rho^{-2J}a_j$.
Choosing $J=O(1+\log\eta^{-1}+\log M)$ makes that at most $\eta/3$, and since $M=O(1+\log L+\log\log\eta^{-1})$ the total node count $R=MJ$ remains $O\{(1+\log L+\log\eta^{-1})^2\}$, which is the analogue of Thm.~\ref{thm:compression}.

The lower side needs care, and a global Loewner comparison is not available under these hypotheses.
A positive analytic bounded $h$ may decay to zero as $t\to\infty$, so the two mixing measures need not be comparable on the whole half-line, and we do not assume $h$ bounded away from zero.
The correct argument is local.
The hypothesis $h(0)>0$ from \eqref{eq:cm-representation}, together with continuity, gives $t_0>0$ and $c_0>0$ with $h(t)\ge c_0$ on $[0,t_0]$.
Writing $v(t)=(1,e^{-t},e^{-2t},\dots)^\top$ and splitting the Laplace representation of $\mathbf H_\alpha$ at $t_0$ as $\mathbf H_\alpha=\mathbf H_\alpha^{(0)}+\mathbf T_\alpha$, the bound $e^{-t}\le1$ on $[0,t_0]$ gives the Loewner inequality $\mathbf H(a)\succeq c_0\Gamma(\alpha)\mathbf H_\alpha^{(0)}$ on the low-$t$ piece alone.
The tail operator is exponentially compressible, since $\|(I-P_N)v(t)\|_2^2\le C_{t_0}e^{-2Nt_0}$ for $t\ge t_0$ gives $\|\mathbf T_\alpha-P_N\mathbf T_\alpha P_N\|_{\rm op}\le Ce^{-cN}$ and hence $s_{N+1}(\mathbf T_\alpha)\le Ce^{-cN}$.
Applying the Ky Fan inequality $s_{2N+1}(A+B)\le s_{N+1}(A)+s_{N+1}(B)$ to $\mathbf H_\alpha=\mathbf H_\alpha^{(0)}+\mathbf T_\alpha$, together with Widom's scale for $\mathbf H_\alpha$ and the exponentially small tail term, transfers the root-exponential lower scale to $\mathbf H_\alpha^{(0)}$ and therefore, through the Loewner inequality, to $\mathbf H(a)$, giving $s_n(\mathbf H(a))\ge c_ae^{-C_a\sqrt n}$.
The doubled index in the Ky Fan step is what changes the constant in the exponent and not the exponent itself.

Combining the two sides gives $\inf_{b\in\cA_r}\cE(b)=e^{-\Theta(\sqrt r)}$ and $r^\star(\tau)=\Theta(\log^2(1/\tau))$ over the whole class, so the law is not an artifact of the exact power.

\subsection{Current-token affine-gating benchmark}
The theorem restricts the architecture class, so it is worth recording one enlargement that does not change the exponent.
Consider a recurrence whose transition is affine in the previous state with coefficients depending on the current input, $S_t=A(Z_t)S_{t-1}+b(Z_t)$ and $\widehat m_t=c(Z_t)^\top S_{t-1}+d(Z_t)$, on the noiseless iid Gaussian benchmark with target in the first chaos.
Assume a stationary causal solution with
\[
 \E\|S_t\|^2<\infty,
 \qquad
 \E\widehat m_t^2<\infty ,
\]
taken as hypotheses rather than derived: a random affine recursion can have a stationary law under $\E\log\|A(Z)\|_{\rm op}<0$ while that law has an infinite second moment, so a moment condition on the maps alone does not deliver them.
Assume in addition
\[
 \E\bigl[(1+Z^2)\bigl\{\|A(Z)\|_{\rm op}^2+\|b(Z)\|^2+\|c(Z)\|^2+d(Z)^2\bigr\}\bigr]<\infty .
\]
This last condition is stronger than necessary, but it ensures that the Bochner expectations $\bar A$, $\bar c$ and $v$ below and the first-chaos covariances are well defined.
The readout is taken before the current update.
A gate that reads after the update, $\widehat m_t=\widetilde c(Z_t)^\top S_t+\widetilde d(Z_t)$, is the same predictor with $c(z)=A(z)^\top\widetilde c(z)$ and $d(z)=\widetilde c(z)^\top b(z)+\widetilde d(z)$, and is therefore covered whenever those induced maps satisfy the moment condition above; that is a mild restriction rather than an automatic consequence, since the induced $c$ involves a product of two input-dependent factors.
Let $\mu_S=\E S_t$, $\widetilde S_t=S_t-\mu_S$, $\bar A=\E A(Z_t)$, $\bar c=\E c(Z_t)$, and define the centered injection $\xi(z)=A(z)\mu_S+b(z)-\mu_S$, for which $\E\xi(Z_t)=0$. Then $\widetilde S_t=\bar A\widetilde S_{t-1}+\{A(Z_t)-\bar A\}\widetilde S_{t-1}+\xi(Z_t)$. Writing $v=\E[Z_t\xi(Z_t)]$, independence gives $\E[\widetilde S_{t-1}Z_{t-k}]=\bar A^{k-1}v$, and hence the first-chaos coefficients satisfy
\begin{equation}
 \beta_j=\bar c^\top\bar A^{\,j-2}v\qquad(j\ge2).
 \label{eq:gate-beta}
\end{equation}
The shifted sequence therefore has Hankel rank at most the state dimension, by \eqref{eq:app-hankel-rank} applied to \eqref{eq:gate-beta}.

Consequently the entire lower-bound argument applies to the averaged transition, and the best $m$-state gated risk on this benchmark is again bracketed by $c_1e^{-C_1\sqrt m}$ and $C_2e^{-c_2\sqrt m}$, so $m^\star(\tau)=\Theta(\log^2(1/\tau))$.
A gated linear attention head with key width $d_k$ contributes at most $d_k$ such modes through $\rho_\ell=\E\alpha_\ell(Z_t)$, so $H$ heads give at most $\sum_hd_{k,h}$ linear predictive modes and the same bound with $r$ replaced by that budget.
Data-dependent gating of this affine kind therefore buys no better exponent, and Sec.~\ref{sec:stability} is needed precisely because it treats recurrences that are not affine in the state.

\section{Fractional and Singular Predictive Geometry}
\label{app:fractional}

\subsection{Predictive Hilbert-space completion}
App.~\ref{app:prelim} assumed $f_X$ bounded above, which fails at a spectral pole.
The identity \eqref{eq:app-risk-identity} survives, but its right-hand side is no longer comparable to a coefficient norm and the space of admissible perturbations must be constructed rather than assumed.

Let $c_{00}$ be the finitely supported sequences and let $\mathcal H_{f_X}$ be the completion of $c_{00}$ under the seminorm \eqref{eq:app-pred-norm}, with elements written as equivalence classes $\mathbf u=[u^{(k)}]$ of Cauchy sequences.

\begin{proposition}[Forecast risk on the singular predictive space]
\label{prop:singular-space}
Do not assume $f_X$ bounded above.
For $\mathbf u\in\mathcal H_{f_X}$ the filters $U_t^{(k)}=\sum_{j\ge1}u_j^{(k)}X_{t+1-j}$ are Cauchy in $L_2$ with a limit $U_t(\mathbf u)$ depending only on the class, and the perturbed predictor $m_t^{\mathbf u}=m_t^\star+U_t(\mathbf u)$ satisfies
\begin{equation}
 \cE(\mathbf u)=\E\{X_{t+1}-m_t^{\mathbf u}\}^2-\E\{X_{t+1}-m_t^\star\}^2=\|\mathbf u\|_{\pred,f_X}^2 .
 \label{eq:singular-risk}
\end{equation}
\end{proposition}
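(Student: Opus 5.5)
The proof rests on the Kolmogorov isometry between the closed linear span of $(X_s)$ in $L_2$ and $L_2(f_X\,\dd\omega/2\pi)$. Under the spectral representation $X_t=\int e^{it\omega}\,Z(\dd\omega)$ with $\E|Z(\dd\omega)|^2=f_X(\omega)\dd\omega/2\pi$, a finitely supported filter becomes
\[
 U_t^{(k)}=\sum_j u_j^{(k)}X_{t+1-j}=\int e^{i(t+1)\omega}\,U^{(k)}(e^{-i\omega})\,Z(\dd\omega).
\]
Consequently
\[
 \E|U_t^{(k)}-U_t^{(l)}|^2=\|u^{(k)}-u^{(l)}\|_{\pred,f_X}^2 .
\]
This is a finite sum of covariances $\gamma(\cdot)$, so it needs only $f_X\in L_1$, which holds for every stationary process. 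The first step is therefore immediate. If $(u^{(k)})$ is Cauchy in the seminorm \eqref{eq:app-pred-norm}, then $(U_t^{(k)})$ is Cauchy in $L_2$ and has a limit $U_t(\mathbf u)$. If two Cauchy sequences are equivalent, the same identity applied to their difference shows that the limits coincide. Hence $U_t$ depends only on the class $\mathbf u$. Passing to the limit in the isometry also gives $\E U_t(\mathbf u)^2=\|\mathbf u\|_{\pred,f_X}^2$, where the norm on $\mathcal H_{f_X}$ is the limit of the seminorms.

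The second step is measurability. Each $U_t^{(k)}$ is a finite combination of $X_s$ with $s\le t$, so it lies in the closed subspace $L_2(\mathcal F_t)$. That subspace is closed, so the limit $U_t(\mathbf u)$ lies in it as well. The innovation $\eps_{t+1}=X_{t+1}-m_t^\star$ is orthogonal to $L_2(\mathcal F_t)$, as recorded in App.~\ref{app:prelim}, so $\E[\eps_{t+1}U_t(\mathbf u)]=0$.

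The third step expands the square:
\[
 \E\{X_{t+1}-m_t^{\mathbf u}\}^2=\E\{\eps_{t+1}-U_t(\mathbf u)\}^2=\E\eps_{t+1}^2+\E U_t(\mathbf u)^2 .
\]
Subtracting $\E\eps_{t+1}^2$ and using the limiting isometry gives \eqref{eq:singular-risk}.

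I expect the main obstacle to be conceptual rather than computational. We must check that nothing uses an upper bound on $f_X$. The earlier argument in App.~\ref{app:prelim} needed $f_X\le M$ only to make infinite coefficient series converge in $L_2$. Here we work exclusively with finitely supported $u^{(k)}$ and define the infinite object through the completion, so that convergence issue is bypassed entirely.

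The one remaining subtlety is that the identity $\E|U_t^{(k)}|^2=\|u^{(k)}\|_{\pred,f_X}^2$ for finite filters must be justified from the covariance formula $\gamma(h)=(2\pi)^{-1}\int e^{ih\omega}f_X\dd\omega$. This is a direct double-sum computation:
\[
 \sum_{j,l}u_ju_l\gamma(j-l)=\frac1{2\pi}\int|U(e^{-i\omega})|^2f_X\dd\omega .
\]
It holds whenever $f_X$ is integrable, which a pole $|\omega|^{-2d}$ with $d<1/2$ satisfies.

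A final point concerns the seminorm. If it has a nontrivial kernel, quotienting by that kernel is harmless, since the isometry sends the kernel to the zero random variable.
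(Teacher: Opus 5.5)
Your proposal is correct and follows the paper's proof essentially step for step. The finite-filter isometry $\E[(U_t^{(k)}-U_t^{(l)})^2]=\|u^{(k)}-u^{(l)}\|_{\pred,f_X}^2$ carries Cauchy sequences and equivalence classes over to $L_2$; the limit lies in the closed past subspace and is therefore orthogonal to $\eps_{t+1}$; and expanding the square identifies the excess risk with $\lim_k\|u^{(k)}\|_{\pred,f_X}^2$. Your extra remarks, that only integrability of $f_X$ is used and that the kernel of the seminorm maps to the zero random variable, are accurate elaborations of that same argument.
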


\begin{proof}
For finitely supported $u$ the spectral computation of App.~\ref{app:prelim} gives $\E[(U_t^{(k)}-U_t^{(l)})^2]=\|u^{(k)}-u^{(l)}\|_{\pred,f_X}^2$, so a Cauchy sequence in the predictive seminorm maps to a Cauchy sequence in $L_2$, and equivalent sequences have the same limit.
The limit is $\mathcal F_t$-measurable and therefore orthogonal to $\eps_{t+1}$, so the excess risk equals $\E[U_t(\mathbf u)^2]$, which is the limit of $\|u^{(k)}\|_{\pred,f_X}^2$.
\end{proof}

The perturbed predictor in \eqref{eq:singular-risk} is defined as an affine object and not as a coordinatewise coefficient sum, because an abstract element of $\mathcal H_{f_X}$ need not have a convergent coefficient expansion in general.
For the fractional class below it does, and Thm.~\ref{thm:fractional-geometry} is what shows this, so Prop.~\ref{prop:singular-space} is the general statement and Thm.~\ref{thm:fractional-geometry} is the case in which coordinates are recovered.

\subsection{Fractional integration operator}
Let $I_d$ be the discrete fractional-integration operator with transfer function $(1-z)^{-d}$, so that $(I_du)_n=\sum_{j\le n}\pi_{n-j}(d)u_j$ with $\pi_k(d)=\Gamma(k+d)/(\Gamma(d)\Gamma(k+1))$.
Multiplication by $(1-e^{-i\omega})^{-d}$ on the Fourier side is exactly division by the singular factor of the spectrum, so $I_d$ is the change of variables that turns a singular predictive norm into an ordinary $\ell_2$ norm.

For $0<d<1/2$ the operator is unbounded on $\ell_2$ but is defined on $c_{00}$, and the composition $I_dI_{-d}$ is the identity there.
Its role below is not as a bounded map but as a coordinate change under which the predictive norm becomes Euclidean.

\subsection{FARIMA coefficients and exact singular geometry}
For FARIMA$(0,d,0)$ as in \eqref{eq:farima} the spectral density is $f_d(\omega)=\sigma^2|1-e^{-i\omega}|^{-2d}$, and the autoregressive predictive-memory coefficients are
\begin{equation}
 a_j(d)=\frac{d\,\Gamma(j-d)}{\Gamma(1-d)\Gamma(j+1)}\sim\frac{d}{\Gamma(1-d)}j^{-1-d} .
 \label{eq:farima-coeff}
\end{equation}
Two facts are worth separating.
The coefficients decay like $j^{-1-d}$, so their unweighted tail is $\sum_{j>L}a_j(d)^2=\Theta(L^{-1-2d})$.
The predictive tail is different, and truncating the exact kernel at context $L$ gives
\begin{equation}
 \cE_d\bigl(a^{(L)}(d)\bigr)=\Theta_d(L^{-1}),
 \label{eq:frac-truncation}
\end{equation}
which is strictly larger than the coefficient tail.
The discrepancy between $L^{-1}$ and $L^{-1-2d}$ is the whole content of the singular geometry, and it is why a calculation that measures coefficient error would get the context law wrong.

\paragraph{Proof of \eqref{eq:frac-truncation}.}
For pure fractional noise, Thm.~\ref{thm:fractional-geometry} gives the exact identity $\cE_d(a^{(L)}(d))=\sigma^2\|I_da^{>L}(d)\|_2^2$.
Write $A_{>L}(\omega)=\sum_{j>L}a_j(d)e^{-ij\omega}$ for the omitted tail. Since $a_j(d)\asymp_d j^{-1-d}$ and is eventually decreasing, for $|\omega|\le L^{-1}$,
\[
 |A_{>L}(\omega)|\le \sum_{j>L}a_j(d)\le C_dL^{-d}.
\]
For $L^{-1}<|\omega|\le\pi$, Abel summation and the standard bound on geometric partial sums give
\[
 |A_{>L}(\omega)|\le C_d\frac{L^{-1-d}}{|\omega|}.
\]
Using $|1-e^{-i\omega}|\asymp|\omega|$ on $[-\pi,\pi]$ and splitting the exact fractional risk at $|\omega|=L^{-1}$ yields
\[
 \|I_da^{>L}(d)\|_2^2
 \le C_d\left\{L^{-2d}\int_0^{L^{-1}}\omega^{-2d}\dd\omega
 +L^{-2-2d}\int_{L^{-1}}^\pi\omega^{-2-2d}\dd\omega\right\}
 \le C_dL^{-1}.
\]
This proves the upper bound in \eqref{eq:frac-truncation}. For the lower bound, hard truncation is one admissible $L$-context predictor, so its excess risk is at least the optimal finite-context gap $V_L-V_\infty\asymp_d L^{-1}$ from Corollary~\ref{cor:context-law}.

\subsection{Proof of Thm.~\ref{thm:fractional-geometry}}
\begin{proof}[Proof of Thm.~\ref{thm:fractional-geometry}]
Write $f_X(\omega)=|1-e^{-i\omega}|^{-2d}g(\omega)$ with $0<m_g\le g\le M_g$.
For a finitely supported $u$ with transfer function $U$,
\[
 \|u\|_{\pred,f_X}^2
 =\frac1{2\pi}\int_{-\pi}^{\pi}|U(e^{-i\omega})|^2|1-e^{-i\omega}|^{-2d}g(\omega)\dd\omega .
\]
The transfer function of $I_du$ is $U(e^{-i\omega})(1-e^{-i\omega})^{-d}$, whose squared modulus is exactly $|U|^2|1-e^{-i\omega}|^{-2d}$.
Inserting the two-sided bound on $g$ and applying Parseval to $I_du$ gives \eqref{eq:frac-geometry}.
For pure fractional noise $g\equiv\sigma^2$, so both inequalities are equalities and $\cE_d(a(d)+u)=\sigma^2\|I_du\|_2^2$.

For the last claim, since $|1-e^{-i\omega}|\le2$ we have $f_X\ge2^{-2d}m_g$, so convergence in the predictive norm implies convergence in $\ell_2$.
Every abstract element of $\mathcal H_{f_X}$ therefore has canonical $\ell_2$ coefficient coordinates in this class, and $I_d$ extends uniquely and continuously to $\mathcal H_{f_X}$ with closed image.
\end{proof}

The theorem should be read as identifying the correct inner product rather than as a perturbation of the regular case.
Low-frequency coefficient error is amplified by $|1-e^{-i\omega}|^{-2d}$, so two candidates with the same $\ell_2$ error can have very different forecast risk depending on where in the spectrum their error sits.

\subsection{Finite-context prediction and partial autocorrelations}
Let $V_L$ be the variance of the best linear predictor from the last $L$ observations and $V_\infty$ its limit.
Two classical facts drive Corollary~\ref{cor:context-law}.
First, the Durbin--Levinson recursion gives $V_L=V_0\prod_{k\le L}(1-\phi_{kk}^2)$ with $\phi_{kk}$ the partial autocorrelations.
Second, for FARIMA$(0,d,0)$ these are available in closed form,
\begin{equation}
 \phi_{kk}=\frac{d}{k-d},
 \label{eq:pacf}
\end{equation}
which is the classical fractional identity \citep{hosking1981fractional,inoue2002pacf}.

The determinant route gives the same object.
For any purely nondeterministic process with nonsingular finite covariance matrices, $V_L=D_{L+1}(f_X)/D_L(f_X)$ for the Toeplitz determinants $D_L$, so $D_n=\prod_{j<n}V_j$ and the finite-context law is equivalent to a Toeplitz determinant asymptotic.
This is the connection used in App.~\ref{app:information}.

\subsection{Proof of Corollary~\ref{cor:context-law}}
\begin{proof}[Proof of Corollary~\ref{cor:context-law}]
Substituting \eqref{eq:pacf} into the Durbin--Levinson product and telescoping the Gamma factors gives the exact identity
\begin{equation}
 \frac{V_L}{\sigma^2}=\frac{\Gamma(L+1)\Gamma(L+1-2d)}{\Gamma(L+1-d)^2},
 \label{eq:VL-exact}
\end{equation}
since $\prod_{k\le L}(1-d^2/(k-d)^2)=\prod_{k\le L}k(k-2d)/(k-d)^2$ and $V_0=\gamma(0)=\sigma^2\Gamma(1-2d)/\Gamma(1-d)^2$.
Stirling's formula gives $\log(V_L/\sigma^2)=d^2/L+O(L^{-2})$, so $V_\infty=\sigma^2$ and $V_L-V_\infty=V_\infty d^2/L+o(L^{-1})$.

For the last sentence of the corollary, the same computation for a spectrum with $Q$ separated fractional poles of strengths $d_1,\dots,d_Q$ replaces $d^2$ by $\sum_\nu d_\nu^2$, which is the content of the multiple-pole result below. For a real-valued process the pole set is reflection symmetric, so a nonzero physical frequency contributes its two conjugate singularities separately to this sum.
For a Gaussian process the finite-past conditional expectation is linear, so \eqref{eq:VL-exact} is also the excess Bayes risk of the best measurable predictor using $L$ observations, not merely the best linear one.
\end{proof}

Equation~\eqref{eq:VL-exact} is exact and not asymptotic, which is what makes the numerical panel of Fig.~\ref{fig:resource-laws} a check of the theorem rather than of an approximation.
The numerically stable route is to evaluate $\log(V_L/V_\infty)$ as the cancellation-free tail sum $-\sum_{k>L}\log(1-d^2/(k-d)^2)$ plus a leading-tail trigamma correction, since differencing the three log-Gamma terms directly loses several digits at large $L$.

\subsection{Multiple separated fractional poles}
Suppose $f_X(\omega)=g(\omega)\prod_{\nu\le Q}|1-e^{-i(\omega-\omega_\nu)}|^{-2d_\nu}$ with distinct $\omega_\nu$.
Two hypotheses on $g$ are used, and they are not the same.
The geometry statement below needs only $0<m_g\le g\le M_g<\infty$.
The sharp context law additionally needs $g=e^{V}$ with $V$ real and smooth enough for the Fisher--Hartwig determinant asymptotics of \citet{deift2011fisherhartwig}, since boundedness above and below does not by itself place the symbol in the class those asymptotics cover; the pure product of fractional poles, $g\equiv\sigma^2$, is the simplest admissible case.
Writing $\mathcal I_{\mathbf d,\boldsymbol\omega}$ for the composition of the corresponding fractional-integration factors, the same argument as in Thm.~\ref{thm:fractional-geometry} gives
\begin{equation}
 m_g\|\mathcal I_{\mathbf d,\boldsymbol\omega}u\|_2^2\le\|u\|_{\pred,f_X}^2\le M_g\|\mathcal I_{\mathbf d,\boldsymbol\omega}u\|_2^2 ,
 \label{eq:multipole-geometry}
\end{equation}
so each pole contributes one factor to the geometry.
The finite-context law becomes $V_L-V_\infty=V_\infty L^{-1}s_f+o(L^{-1})$ with $s_f=\sum_\nu d_\nu^2$, the strengths adding in squares because separated Fisher--Hartwig singularities contribute additively to the Toeplitz determinant at leading order.
Passing from that determinant asymptotic to the ratio law is not a one-line step, and it is worth recording why.
Fisher--Hartwig asymptotics give $\log D_n=n\log V_\infty+s_f\log n+\log C_f+o(1)$, while the determinant factorizes as $\log D_n=\sum_{j<n}\log V_j$.
Subtracting $n\log V_\infty$ and writing $r_L:=\log(V_L/V_\infty)$ yields $\sum_{j<n}r_j=s_f\log n+C+o(1)$, but differencing that relation does not give $r_L=s_f/L+o(L^{-1})$ on its own, because a difference of two $o(1)$ terms need not be $o(L^{-1})$.
What closes the gap is monotonicity: adding a regressor cannot increase least-squares prediction error, so $(r_L)$ is nonnegative and nonincreasing, and a monotone sequence whose partial sums are $s\log n+C+o(1)$ satisfies $Lr_L\to s$.

\begin{lemma}[Monotone density]
\label{lem:monotone-density}
Let $(r_n)_{n\ge1}$ be nonnegative and nonincreasing with $\sum_{k\le n}r_k=s\log n+C+o(1)$ for some $s>0$.
Then $nr_n\to s$.
\end{lemma}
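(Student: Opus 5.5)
The plan is a Tauberian argument that uses monotonicity to compare $r_n$ with block averages over geometric windows. Write $S_n=\sum_{k\le n}r_k$. Fix $\lambda>1$. By hypothesis,
\[
 S_{\lfloor\lambda n\rfloor}-S_n=s\log\frac{\lfloor\lambda n\rfloor}{n}+o(1)=s\log\lambda+o(1).
\]
Nothing is claimed about the $o(1)$ remainders beyond their vanishing; the two remainders are simply subtracted. The same relation holds for the backward window, $S_n-S_{\lceil n/\lambda\rceil}=s\log\lambda+o(1)$.

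Next, use that $(r_k)$ is nonincreasing. For the forward window, every $k\in(n,\lambda n]$ satisfies $r_k\le r_n$. Hence
\[
 S_{\lfloor\lambda n\rfloor}-S_n\le(\lfloor\lambda n\rfloor-n)\,r_n\le(\lambda-1)n\,r_n .
\]
This gives
\[
 \liminf_n n r_n\ge\frac{s\log\lambda}{\lambda-1}.
\]
For the backward window, every $k\in(n/\lambda,n]$ satisfies $r_k\ge r_n$. Hence
\[
 S_n-S_{\lceil n/\lambda\rceil}\ge(n-\lceil n/\lambda\rceil)\,r_n\ge\bigl((1-\lambda^{-1})n-1\bigr)r_n .
\]
This gives
\[
 \limsup_n n r_n\le\frac{s\log\lambda}{1-\lambda^{-1}}.
\]
For the upper bound one first needs $r_n$ to be bounded, so that the $-r_n$ term from the ceiling is negligible after dividing by $n$. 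This holds because $r_n\le r_1<\infty$, and nonnegativity also ensures no sign issues arise.

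Finally, let $\lambda\downarrow1$. Both $\log\lambda/(\lambda-1)\to1$ and $\log\lambda/(1-\lambda^{-1})\to1$, so $\liminf_n nr_n\ge s$ and $\limsup_n nr_n\le s$, which gives $nr_n\to s$. The assumption $s>0$ is not really needed beyond making the statement meaningful.

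The only point requiring care, and the main obstacle as the preceding discussion already flags, is that one must never difference the asymptotic relation at consecutive indices. The windows must have length proportional to $n$, so that the $o(1)$ error is divided by a quantity of order $n\cdot r_n$-scale. Concretely, after dividing by $(\lambda-1)$ the error stays $o(1)$ for each fixed $\lambda$, and $\lambda\to1$ is taken only after $n\to\infty$. I would write the proof with this order of limits made explicit.
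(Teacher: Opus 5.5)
Your proposal is correct and is essentially the paper's proof: geometric windows $(n,\lambda n]$ and $(n/\lambda,n]$ combined with monotonicity give $\liminf_n nr_n\ge s\log\lambda/(\lambda-1)$ and $\limsup_n nr_n\le s\lambda\log\lambda/(\lambda-1)$, and then $\lambda\downarrow1$. One small correction: boundedness $r_n\le r_1$ is not what disposes of the ceiling term (moving $r_n$ to the right would leave a non-vanishing $+r_1$); instead, divide $\bigl((1-\lambda^{-1})-1/n\bigr)nr_n\le s\log\lambda+o(1)$ by the positive factor $(1-\lambda^{-1})-1/n$, or use that $r_n\to0$ because $S_n=O(\log n)$.
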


\begin{proof}
Fix $\lambda>1$ and set $m_n=\lfloor n/\lambda\rfloor$, $N_n=\lfloor\lambda n\rfloor$.
Monotonicity gives $(n-m_n)r_n\le\sum_{m_n<k\le n}r_k=s\log(n/m_n)+o(1)=s\log\lambda+o(1)$, so $\limsup_nnr_n\le s\lambda\log\lambda/(\lambda-1)$.
In the other direction $\sum_{n<k\le N_n}r_k\le(N_n-n)r_n$, and the left side is $s\log\lambda+o(1)$, so $\liminf_nnr_n\ge s\log\lambda/(\lambda-1)$.
Both bounds tend to $s$ as $\lambda\to1$.
\end{proof}

\noindent
Applying Lem.~\ref{lem:monotone-density} to $(r_L)$ and exponentiating, $V_L/V_\infty=e^{r_L}=1+s_f/L+o(L^{-1})$, which is the stated law.

A seasonal fractional model is the degenerate case in which the poles are equally spaced.
There the exact analogue of \eqref{eq:VL-exact} holds with $L$ replaced by $m_L=\lfloor L/s\rfloor$, giving $V_L-\sigma^2=\sigma^2sd^2/L+o(L^{-1})$, so a seasonal pole structure multiplies the context cost by the period.

\subsection{Exact exponent misspecification and relative entropy}
\begin{theorem}[Exact risk under exponent misspecification]
\label{thm:d-misspec}
For fractional noise with true exponent $d\in(0,1/2)$ and candidate $d'=d+\Delta\in(0,1/2)$,
\begin{equation}
 \cE_d(a(d'))=\sigma^2\Bigl\{\frac{\Gamma(1+2\Delta)}{\Gamma(1+\Delta)^2}-1\Bigr\},
 \label{eq:d-misspec}
\end{equation}
and as $\Delta\to0$ this is $\tfrac{\pi^2\sigma^2}{6}\Delta^2+O(|\Delta|^3)$.
\end{theorem}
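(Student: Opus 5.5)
The plan is to use the exact pure-noise identity of Thm.~\ref{thm:fractional-geometry}, $\cE_d(a(d)+u)=\sigma^2\|I_du\|_2^2$, with $u=a(d')-a(d)$. The computation is cleanest in the spectral form \eqref{eq:risk}. The transfer function of the predictive memory is $A_d(z)=1-(1-z)^d$, because $(1-B)^dX_t=\eps_t$ gives $X_t=\sum_{j\ge1}a_j(d)X_{t-j}+\eps_t$ with $\sum_j a_j(d)z^j=1-(1-z)^d$. This is consistent with the sign of \eqref{eq:farima-coeff}, since $-\binom{d}{j}(-1)^j=d\Gamma(j-d)/(\Gamma(1-d)\Gamma(j+1))$. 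The error transfer function is then
\[
A_{d'}(z)-A_d(z)=(1-z)^d-(1-z)^{d'}=(1-z)^d\bigl\{1-(1-z)^{\Delta}\bigr\}.
\]
Multiplying by the spectral weight $\sigma^2|1-e^{-i\omega}|^{-2d}$ cancels the factor $|1-z|^{2d}$ exactly, and we obtain
\[
\cE_d(a(d'))=\frac{\sigma^2}{2\pi}\int_{-\pi}^{\pi}\bigl|1-(1-e^{-i\omega})^{\Delta}\bigr|^2\dd\omega .
\]
In coefficient terms this says $I_du$ is the sequence with transfer function $1-(1-z)^\Delta$, namely $a(\Delta)$ when $\Delta>0$ and $-\binom{-|\Delta|}{j}(-1)^j$-type coefficients in general. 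I would justify this at the level of $\mathcal H_{f_X}$ using Prop.~\ref{prop:singular-space}, since $u$ is not finitely supported. The justification is that $u\in\ell_2$, the integrand above is integrable because $\Delta>-1/2$ makes $(1-z)^\Delta$ square-integrable on the circle, and truncations of $u$ converge in the predictive norm by dominated convergence on the Fourier side.

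Next I evaluate the integral. By Parseval it equals the squared $\ell_2$ norm of the coefficients of $1-(1-z)^\Delta$ for $j\ge1$. That norm is $\|(1-z)^\Delta\|_{L_2}^2-1$, because the constant coefficient of $(1-z)^\Delta$ is $1$ and it cancels against the leading $1$. The classical identity
\[
\frac1{2\pi}\int_{-\pi}^{\pi}|1-e^{-i\omega}|^{2\Delta}\dd\omega=\frac{\Gamma(1+2\Delta)}{\Gamma(1+\Delta)^2}
\]
is the variance formula for FARIMA$(0,-\Delta,0)$, the same one behind $V_0$ in the proof of Corollary~\ref{cor:context-law} with $d$ replaced by $-\Delta$. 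Alternatively it follows from $|1-e^{-i\omega}|=2|\sin(\omega/2)|$ and the Beta integral, or from Vandermonde's identity $\sum_k\binom{\Delta}{k}^2=\binom{2\Delta}{\Delta}$. Together these give \eqref{eq:d-misspec}, valid for $\Delta>-1/2$, which holds automatically because $d'>0$ and $d<1/2$.

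For the small-$\Delta$ expansion, set $\phi(\Delta)=\log\Gamma(1+2\Delta)-2\log\Gamma(1+\Delta)$. Then $\phi(0)=0$, $\phi'(0)=2\psi(1)-2\psi(1)=0$, and $\phi''(0)=4\psi'(1)-2\psi'(1)=2\zeta(2)=\pi^2/3$. Hence $e^{\phi}-1=\tfrac{\pi^2}{6}\Delta^2+O(|\Delta|^3)$, matching \eqref{eq:zero-boundary}, which is the special case $\Delta=-d$. The main obstacle is the rigor of applying the identity to the infinitely supported perturbation, specifically the completion argument and the $\ell_2$ coordinates from the last part of the proof of Thm.~\ref{thm:fractional-geometry}. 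The rest is a closed-form computation.
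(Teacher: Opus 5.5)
Your proposal is correct and follows essentially the same route as the paper: the factorization $A_{d'}(z)-A_d(z)=(1-z)^d\{1-(1-z)^\Delta\}$, cancellation of the pole against the spectral weight, and reduction to $J(\Delta)-1$. You reach $J(\Delta)-1$ via Parseval, which is the same fact as the paper's use of the circular mean of $(1-z)^\Delta$, and you then evaluate $J(\Delta)$ by the Beta integral, as the paper does via Wallis and Legendre duplication. On the infinite-support step, $u=a(d')-a(d)$ lies in $\ell_1$, not merely $\ell_2$, so its truncations converge uniformly on the circle; since $|1-e^{-i\omega}|^{-2d}$ is integrable for $d<1/2$, the predictive-norm convergence you need then follows at once.
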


\begin{proof}
The autoregressive transfer functions are $A_{d}(z)=1-(1-z)^{d}$, so
\[
 A_{d'}(z)-A_{d}(z)=(1-z)^{d}\bigl\{1-(1-z)^{\Delta}\bigr\}.
\]
Under the true spectral density $f_d(\omega)=\sigma^2|1-e^{-i\omega}|^{-2d}$ the factor $|(1-e^{-i\omega})^{d}|^2$ cancels the pole exactly, so Prop.~\ref{prop:singular-space} gives
\begin{equation}
 \cE_d(a(d'))=\frac{\sigma^2}{2\pi}\int_{-\pi}^{\pi}\bigl|1-(1-e^{-i\omega})^{\Delta}\bigr|^2\dd\omega .
 \label{eq:d-misspec-integral}
\end{equation}
The function $(1-z)^{\Delta}$ is analytic in the open unit disc with integrable boundary values for $\Delta>-1$, so its circular mean equals its value at $z=0$, namely one.
Expanding the modulus in \eqref{eq:d-misspec-integral} therefore contributes $1$ from the constant term, $-2$ from the cross term, and $J(\Delta)$ from the remaining term, where
\[
 J(\Delta):=\frac1{2\pi}\int_{-\pi}^{\pi}|1-e^{-i\omega}|^{2\Delta}\dd\omega .
\]
Writing $|1-e^{-i\omega}|=2|\sin(\omega/2)|$ and substituting $x=\omega/2$,
\[
 J(\Delta)=\frac{2^{2\Delta+1}}{\pi}\int_0^{\pi/2}(\sin x)^{2\Delta}\dd x
 =\frac{2^{2\Delta}}{\sqrt\pi}\frac{\Gamma(\Delta+\tfrac12)}{\Gamma(\Delta+1)}
 =\frac{\Gamma(1+2\Delta)}{\Gamma(1+\Delta)^2},
\]
the middle step being the Wallis integral and the last the Legendre duplication formula.
Hence $\cE_d(a(d'))=\sigma^2\{J(\Delta)-1\}$, which is \eqref{eq:d-misspec}.

For the expansion, $\log\Gamma(1+x)=-\gamma_{\rm E}x+\sum_{k\ge2}(-1)^k\zeta(k)x^k/k$ gives
\[
 \log\Gamma(1+2\Delta)-2\log\Gamma(1+\Delta)=\sum_{k\ge2}\frac{(-1)^k\zeta(k)}{k}\bigl(2^k-2\bigr)\Delta^k=\zeta(2)\Delta^2+O(|\Delta|^3),
\]
the Euler constants cancelling, and exponentiating and subtracting one yields $\pi^2\Delta^2/6+O(|\Delta|^3)$.
\end{proof}

Two features are worth noting.
The exact expression depends on $\Delta$ alone and not on the true $d$, so the local curvature $\pi^2\sigma^2/6$ is the same everywhere in the fractional family.
The relative entropy rate is exactly half the risk divided by $\sigma^2$, that is $\overline D(d\|d')=\cE_d(a(d'))/(2\sigma^2)$, which is the specialization of the duality in App.~\ref{app:information} to this family.

More generally, if two models share a singular factor and differ only in a regular factor, the shared factor cancels in the likelihood ratio, and the relative entropy rate is again half the normalized forecast risk.
A known common singularity therefore contributes nothing to distinguishability, which is what makes the fractional exponent, rather than the presence of a pole, the estimable quantity.

\subsection{Fisher information and root-\texorpdfstring{$T$}{T} estimation}
Since the forecast curvature in $d$ is $\pi^2\sigma^2/6$, the Fisher information for the fractional exponent is $\pi^2/6$.
If an estimator satisfies $\sqrt T(\widehat d_T-d)\Rightarrow N(0,V_d)$ then
\begin{equation}
 T\,\cE_d(a(\widehat d_T))\Rightarrow\frac{\pi^2\sigma^2V_d}{6}\chi^2_1 ,
 \label{eq:d-risk-clt}
\end{equation}
by the local quadratic-risk argument of App.~\ref{app:information} applied to a one-parameter family.
Standard likelihood theory for the fractional exponent gives an efficient asymptotic variance $V_d=6/\pi^2$, the reciprocal of the Fisher information, under the regularity conditions for maximum likelihood in this model \citep{lieberman2004fractional,hualde2011fractional}.
With that variance the limit is $\sigma^2\chi^2_1$ and, given uniform integrability, the expected excess risk is $\sigma^2/T+o(T^{-1})$.
Estimating a fractional exponent therefore costs one parameter's worth of forecast risk, the same as any smooth scalar, so in this metric and under these regularity conditions long memory does not make the exponent harder to learn.

Combining estimation with a finite context, on a compact interval $[d_-,d_+]\subset(0,1/2)$ and for $|d'-d|\log L$ small, one has $\cE_d(a^{(L)}(d'))\le C\{L^{-1}+(d'-d)^2\}$, so
\begin{equation}
 \cE_d\bigl(a^{(L_T)}(\widehat d_T)\bigr)=O_{\mathbb P}\bigl(L_T^{-1}+T^{-1}\bigr) .
 \label{eq:frac-combined}
\end{equation}
The two terms are the context law and the parametric law, and the context term dominates unless $L_T$ grows at least linearly in $T$.

\subsection{Unknown pole location and cusp geometry}
If the location of the pole is unknown the geometry changes character.
For the real two-pole fractional benchmark, write
\[
 D_{\omega,d}(z)=(1-e^{i\omega}z)^d(1-e^{-i\omega}z)^d,
 \qquad
 f_{\omega,d}(\lambda)=\sigma^2|D_{\omega,d}(e^{-i\lambda})|^{-2},
\]
and let $A_{\omega,d}=1-D_{\omega,d}$ be the causal autoregressive whitening transfer function. Define $\cE_{\omega,d}(\Delta)$ as the excess forecast risk under $f_{\omega,d}$ when $A_{\omega+\Delta,d}$ is used in place of $A_{\omega,d}$.

\begin{theorem}[Pole-location misspecification has a cusp]
\label{thm:cusp}
Fix compact ranges $\omega\in[\omega_-,\omega_+]\subset(0,\pi)$ and $d\in[d_-,d_+]\subset(0,1/2)$.
There are $0<c<C<\infty$ and $\Delta_0>0$ such that the excess risk of using pole location $\omega+\Delta$ in place of $\omega$ satisfies
\begin{equation}
 c|\Delta|\le\cE_{\omega,d}(\Delta)\le C|\Delta|\qquad(|\Delta|\le\Delta_0).
 \label{eq:cusp}
\end{equation}
\end{theorem}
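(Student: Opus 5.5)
The plan is to reduce the risk to an explicit integral of a ratio of pole factors, and then show that only an $O(|\Delta|)$-wide window around the pole contributes at leading order. By Prop.~\ref{prop:singular-space} and \eqref{eq:risk},
\[
 \cE_{\omega,d}(\Delta)=\frac{1}{2\pi}\int_{-\pi}^{\pi}\bigl|D_{\omega+\Delta,d}(e^{-i\lambda})-D_{\omega,d}(e^{-i\lambda})\bigr|^2 f_{\omega,d}(\lambda)\dd\lambda
 =\frac{\sigma^2}{2\pi}\int_{-\pi}^{\pi}\bigl|Q_\Delta(\lambda)-1\bigr|^2\dd\lambda,
\]
where $Q_\Delta=D_{\omega+\Delta,d}/D_{\omega,d}$ on the circle. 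This mirrors the cancellation used in the proof of Thm.~\ref{thm:d-misspec}: the true pole factor in $f_{\omega,d}$ exactly cancels $|D_{\omega,d}|^2$. The ratio $Q_\Delta$ factors into two one-pole ratios, one for $\pm\omega$ each, of the form
\[
 \Bigl(\frac{1-e^{\pm i(\omega+\Delta)}e^{-i\lambda}}{1-e^{\pm i\omega}e^{-i\lambda}}\Bigr)^{d},
\]
with principal branches. Everything therefore reduces to local analysis near $\lambda=\pm\omega$. These points are separated from each other and from $0,\pi$ uniformly over the compact $\omega$-range.

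\textbf{Decomposition of the circle.} Fix a small $\eta>0$, uniform over the compact parameter ranges. I split the circle into the two windows $|\lambda\mp\omega|\le\eta$ and the complement.
\begin{itemize}[leftmargin=1.5em]
\item \emph{Complement.} Here $Q_\Delta$ is smooth in $\Delta$ with $Q_0=1$, so $|Q_\Delta-1|\le C|\Delta|$. This region contributes $O(\Delta^2)$.
\item \emph{Window at $+\omega$.} Write $x=\lambda-\omega$. Then $1-e^{i(\omega+\Delta)}e^{-i\lambda}=1-e^{-i(x-\Delta)}$, which equals $i(x-\Delta)(1+O(|x|+|\Delta|))$. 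Hence the ratio is $\bigl((x-\Delta)/x\bigr)^d$ times a factor $1+O(|\Delta|)$, smooth in $x$. The conjugate factor is smooth there and equals $1+O(|\Delta|)$.
\end{itemize}
Substituting $x=\Delta y$ on the window gives the main term
\[
 |\Delta|\int_{|y|\le\eta/|\Delta|}\bigl|(1-1/y)^{d}-1\bigr|^2\dd y .
\]
The branch is taken consistently with the sign of $y$; for $0<y<1$ the quantity $(1-1/y)^d$ carries the phase $e^{\pm i\pi d}$.

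\textbf{Upper bound.} The integrand of the main term is integrable on all of $\R$, for two reasons:
\begin{itemize}[leftmargin=1.5em]
\item near $y=0$ it is $O(|y|^{-2d})$, which is integrable because $d<1/2$;
\item near $y=1$ it is bounded;
\item as $|y|\to\infty$ it is $\sim d^2/y^2$.
\end{itemize}
So the main term is at most $\kappa(d)|\Delta|$ with $\kappa(d)=\int_\R|(1-1/y)^d-1|^2\dd y<\infty$, continuous on $[d_-,d_+]$. The cross terms from the $1+O(|\Delta|)$ corrections are bounded by Cauchy--Schwarz:
\[
 O(|\Delta|)\cdot\bigl(\text{main}\bigr)^{1/2}\cdot O(1)+O(\Delta^2)=O(|\Delta|^{3/2}).
\]
One point needs care: the correction multiplies $(x-\Delta)^d/x^d$, which is singular at $x=0$. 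It is still square integrable against $\dd x$ on the window, since $|x|^{-2d}$ is integrable there and the factor $|x-\Delta|^{2d}$ is bounded. The window at $-\omega$ is handled identically. This gives $\cE_{\omega,d}(\Delta)\le C|\Delta|$.

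\textbf{Lower bound.} Restrict the integral to the window $|y|\le1$, which is inside $|x|\le\eta$ once $|\Delta|\le\Delta_0$. The main term there is at least $|\Delta|\,\kappa_1(d)$ with $\kappa_1(d)=\int_{-1}^{1}|(1-1/y)^d-1|^2\dd y>0$; it is nonzero because the integrand blows up at $0$. The correction terms are $O(|\Delta|^{3/2})$, so $\cE_{\omega,d}(\Delta)\ge\tfrac12\kappa_1|\Delta|$ for small $\Delta$. Since $\kappa_1(d)$ is continuous and positive on the compact $d$-range, the constants are uniform.

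\textbf{Main obstacle.} The hardest step is making the error terms uniform in $(\omega,d)$ while tracking the branch of the fractional power across the singular points $x=0$ and $x=\Delta$. I would do this by writing every factor as $\exp\{d\log(\cdot)\}$ with the principal logarithm. On the circle, $1-e^{i\theta}$ lies in the closed right half-plane, so the logarithms are continuous away from the pole, and the smooth remainder is $O(|x|+|\Delta|)$ uniformly on the window.
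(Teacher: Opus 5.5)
Your proposal is correct and follows essentially the paper's route: cancel the true pole factor against $f_{\omega,d}$, discard the region away from $\pm\omega$ as $O(\Delta^2)$, rescale each pole window by $|\Delta|$, and use the $|y|^{-2d}$ and $y^{-2}$ envelopes of the limit profile $|(1-1/y)^d-1|^2$. The differences are minor. You bound the rescaled integral from both sides (integrability on $\mathbb R$ for the upper bound, positivity on $|y|\le1$ for the lower bound) and control the $1+O(|\Delta|)$ factors with an explicit Cauchy--Schwarz estimate, whereas the paper extracts the constant $C_d$ by dominated convergence; your version is, if anything, more explicit about the cross terms. One justification should be stated precisely: the factor is $1+O(|\Delta|)$ because $u\mapsto\log\{(1-e^{-iu})/(iu)\}$ is Lipschitz near $0$, not because of the $O(|x|+|\Delta|)$ remainder you quote.
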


\begin{proof}
Write $\lambda=\omega+x$ near the positive pole and fix a neighbourhood $|x|\le\eta$ small enough to miss the conjugate pole at $-\omega$.
On that neighbourhood the ratio of the two spectral factors is
\[
 R_{+,\Delta}(x)\Bigl(\frac{1-e^{-i(x-\Delta)}}{1-e^{-ix}}\Bigr)^{d},
\]
where $R_{+,\Delta}$ collects the contribution of the conjugate pole and satisfies $R_{+,\Delta}(x)=1+O(|\Delta|)$ uniformly on the compact parameter set.
Away from the two pole neighbourhoods every factor is smooth in $\omega$, so that region contributes $O(\Delta^2)$ and is negligible at first order.

It therefore suffices to analyse
\[
 J_d(\Delta):=\int_{-\eta}^{\eta}\Bigl|\Bigl(\frac{1-e^{-i(x-\Delta)}}{1-e^{-ix}}\Bigr)^{d}-1\Bigr|^2\dd x .
\]
Substituting $x=|\Delta|y$ contributes the Jacobian factor $|\Delta|$, and $1-e^{-iu}=iu+O(u^2)$ turns the integrand into a function of $y$ alone at leading order, namely
\[
 \Bigl|\Bigl(\frac{y-\operatorname{sgn}\Delta}{y}\Bigr)^{d}_{\rm br}-1\Bigr|^2 ,
\]
up to an $O(|\Delta|)$ correction.
The subscript records that the power is taken along the continuous boundary branch inherited from the spectral factor, which must be specified because the ratio is negative for $y$ between the two singular points and $d$ is not an integer; the branch is the one obtained by continuing the factor $(1-e^{-i(x-\Delta)})^d$ from the positive axis, and with it the limiting integrand is real, nonnegative, and not identically zero.
Two envelopes make the passage to the limit legitimate.
Near $y=0$ the integrand is $O(|y|^{-2d})$, which is integrable because $2d<1$; for $|y|\to\infty$ it is $O(|y|^{-2})$, since $(1-1/y)^d-1=O(1/y)$.
The local Taylor bounds for $1-e^{-iu}$ supply an integrable dominating function with exactly these two envelopes, uniformly for $d$ in the compact range and uniformly in $\Delta$ over the shrinking window, so dominated convergence applies and gives $J_d(\Delta)=|\Delta|\{C_d+o(1)\}$ with $0<C_d<\infty$ bounded above and below on compact $d$-sets.
The conjugate pole contributes a second positive term of the same order, while $R_{+,\Delta}=1+O(|\Delta|)$ and the region away from the poles contribute $o(|\Delta|)$.
Hence $J_d(\Delta)\asymp|\Delta|$, and inserting this into the spectral risk identity gives \eqref{eq:cusp}.
\end{proof}

The risk is locally linear rather than quadratic in the location error, so the pole location is not a regular parameter and the usual $T^{-1}$ forecast cost does not apply to it.
The reason is that misplacing a pole leaves a singularity uncancelled at one frequency and introduces a spurious one at another, and the resulting mismatch is of first order in the displacement.
Estimating a location therefore behaves differently from estimating a strength, and the classical theory for unknown pole location reflects this \citep{giraitis2001unknownpole,hidalgo2005unknownpole,arteche2020multiple}.

\subsection{Fractional exponential-state lower bound}
We now transport the state-complexity theory of App.~\ref{app:state} into the singular geometry.
Let $\mathbf K_d$ be the Hankel operator built from the fractional kernel $a_j(d)$.

\begin{lemma}[Fractional AR memory is a positive exponential mixture]
\label{lem:frac-mixture}
For $0<d<1$,
\begin{equation}
 a_j(d)=\int_0^\infty w_d(t)e^{-(j-1)t}\dd t,
 \qquad
 w_d(t)=\frac{\sin(\pi d)}{\pi}e^{-(1-d)t}(1-e^{-t})^{d}>0 .
 \label{eq:frac-mixture}
\end{equation}
\end{lemma}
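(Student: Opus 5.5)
The plan is to reduce \eqref{eq:frac-mixture} to a Beta integral with the substitution $x=e^{-t}$, and then match constants with the reflection formula for the Gamma function. No earlier result of the paper is needed beyond the closed form \eqref{eq:farima-coeff} for $a_j(d)$.

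\textbf{Positivity and convergence.} For $0<d<1$ and $t>0$ we have $\sin(\pi d)>0$, $e^{-(1-d)t}>0$ and $1-e^{-t}\in(0,1)$, so $w_d(t)>0$. Near $t=0$ the integrand behaves like $t^{d}$. As $t\to\infty$ it decays like $e^{-(j-d)t}$, and $j-d>0$ for every $j\ge1$. Hence the integral converges absolutely for every $j\ge1$.

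\textbf{Change of variables.} Put $x=e^{-t}$, so that $\dd t=-\dd x/x$ and $t\in(0,\infty)$ maps onto $x\in(0,1)$. The integrand becomes
\[
 \frac{\sin(\pi d)}{\pi}\,x^{1-d}(1-x)^{d}x^{j-1}\,\frac{\dd x}{x}
 =\frac{\sin(\pi d)}{\pi}\,x^{j-1-d}(1-x)^{d}\,\dd x .
\]
Integrating over $(0,1)$ gives
\[
 \frac{\sin(\pi d)}{\pi}\,B(j-d,1+d)=\frac{\sin(\pi d)}{\pi}\,\frac{\Gamma(j-d)\Gamma(1+d)}{\Gamma(j+1)} ,
\]
using $(j-d)+(1+d)=j+1$.

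\textbf{Matching the constant.} Write $\Gamma(1+d)=d\,\Gamma(d)$ and apply the reflection formula $\Gamma(d)\Gamma(1-d)=\pi/\sin(\pi d)$. This gives
\[
 \frac{\sin(\pi d)}{\pi}\,\Gamma(1+d)=\frac{d}{\Gamma(1-d)} .
\]
The right-hand side therefore equals $d\,\Gamma(j-d)/\bigl(\Gamma(1-d)\Gamma(j+1)\bigr)=a_j(d)$, which is exactly \eqref{eq:farima-coeff}.

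The only point needing care is this constant bookkeeping: one must pair $\Gamma(1+d)$ with the reflection formula rather than expand $\Gamma(j-d)$. Everything else is routine.

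As a consistency check, $w_d(t)\sim\frac{\sin(\pi d)}{\pi}t^{d}$ as $t\to0$. By Watson's lemma this reproduces the stated tail $a_j(d)\sim\frac{d}{\Gamma(1-d)}j^{-1-d}$. It also places the kernel in the Laplace-mixture class \eqref{eq:cm-representation} with $\alpha=1+d$ and $h(t)=w_d(t)t^{-d}$. That $h$ has $h(0)=\sin(\pi d)/\pi>0$ and decays like $e^{-(1-d)t}$. Analyticity at the origin follows because $(1-e^{-t})/t$ is entire and nonvanishing near $0$, so it has an analytic $d$-th power there.
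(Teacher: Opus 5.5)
Your proposal is correct and follows the same route as the paper: substitute $x=e^{-t}$, evaluate the Beta integral $B(j-d,1+d)=\Gamma(j-d)\Gamma(1+d)/\Gamma(j+1)$, and convert the constant with the reflection formula $\Gamma(1+d)\Gamma(1-d)=\pi d/\sin(\pi d)$. Your explicit convergence check and the Watson's-lemma consistency remark are correct additions that the paper does not spell out.
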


\begin{proof}
Substitute $x=e^{-t}$, so that $\dd t=-\dd x/x$ and $e^{-(j-1)t}=x^{\,j-1}$.
The density becomes $w_d(t)=\tfrac{\sin\pi d}{\pi}x^{1-d}(1-x)^d$, so
\[
 \int_0^\infty w_d(t)e^{-(j-1)t}\dd t
 =\frac{\sin\pi d}{\pi}\int_0^1x^{\,j-d-1}(1-x)^d\dd x
 =\frac{\sin\pi d}{\pi}\cdot\frac{\Gamma(j-d)\Gamma(1+d)}{\Gamma(j+1)},
\]
by the Beta integral, which converges because $j-d>0$ and $d>-1$.
Euler's reflection formula $\Gamma(1+d)\Gamma(1-d)=\pi d/\sin\pi d$ turns the constant into $d/\Gamma(1-d)$, giving $d\Gamma(j-d)/\{\Gamma(1-d)\Gamma(j+1)\}=a_j(d)$ as in \eqref{eq:farima-coeff}.
\end{proof}

Positivity of $w_d$ is what makes the constructive side work, exactly as in Lem.~\ref{lem:laplace}, and it also makes $\mathbf K_d$ a moment matrix of a positive measure and hence positive semidefinite.

\begin{lemma}[Loewner domination by the power-law Hankel operator]
\label{lem:frac-domination}
For $0<d<1/2$ the operator $\mathbf K_d$ is positive and Hilbert--Schmidt, and in quadratic-form order
\begin{equation}
 \mathbf K_d\ \succeq\ c_d\,\Gamma(1+d)\,\mathbf H_{1+d},
 \qquad c_d=\frac{\sin\pi d}{\pi},
 \label{eq:frac-loewner}
\end{equation}
where $\mathbf H_{1+d}$ is the power-law Hankel operator of exponent $1+d$.
Consequently $s_n(\mathbf K_d)\ge c\,e^{-C_d\sqrt n}$ for large $n$.
\end{lemma}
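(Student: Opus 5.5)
The plan is to compare the two quadratic forms through their Laplace representations, reduce the Loewner inequality \eqref{eq:frac-loewner} to a pointwise inequality between mixing densities, and then pass to singular values by Courant--Fischer and the Widom scale \eqref{eq:widom-scale}.

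\textbf{Step 1: the two quadratic forms.} Index $\ell_2(\mathbb N_0)$ as in App.~\ref{app:state}, so that $(\mathbf K_d)_{ik}=a_{i+k+1}(d)$ and $(\mathbf H_{1+d})_{ik}=(i+k+1)^{-(1+d)}$. Fix a finitely supported $x$ and write $X(t)=\sum_i x_ie^{-it}$. Lem.~\ref{lem:frac-mixture} gives $a_{i+k+1}(d)=\int_0^\infty w_d(t)e^{-(i+k)t}\dd t$. Since the sum is finite, it can be exchanged with the integral, and
\[
 \langle x,\mathbf K_dx\rangle=\int_0^\infty w_d(t)\,|X(t)|^2\dd t .
\]
Lem.~\ref{lem:laplace} with $\alpha=1+d$ writes $(i+k+1)^{-(1+d)}=\Gamma(1+d)^{-1}\int_0^\infty t^{d}e^{-t}e^{-(i+k)t}\dd t$. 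In the same way this gives
\[
 \Gamma(1+d)\langle x,\mathbf H_{1+d}x\rangle=\int_0^\infty t^de^{-t}|X(t)|^2\dd t .
\]
Both forms are integrals of $|X(t)|^2$ against nonnegative densities, so both are nonnegative. This is the positivity of $\mathbf K_d$ and of $\mathbf H_{1+d}$.

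\textbf{Step 2: pointwise density comparison.} It now suffices to show $w_d(t)\ge c_d\,t^de^{-t}$ for $t>0$. Rewrite
\[
 w_d(t)=c_de^{-(1-d)t}(1-e^{-t})^d=c_de^{-t}(e^t-1)^d .
\]
Since $e^t-1\ge t$ and $d>0$, the claim follows. Hence $\langle x,\mathbf K_dx\rangle\ge c_d\Gamma(1+d)\langle x,\mathbf H_{1+d}x\rangle$ on $c_{00}$.

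\textbf{Step 3: boundedness and extension to $\ell_2$.} From \eqref{eq:farima-coeff}, $a_k(d)\asymp_d k^{-1-d}$. Each antidiagonal index $k$ appears at most $k$ times in the matrix, so
\[
 \|\mathbf K_d\|_{\rm HS}^2\le C\sum_k k\cdot k^{-2-2d}<\infty .
\]
Thus $\mathbf K_d$ is Hilbert--Schmidt. The operator $\mathbf H_{1+d}$ is Hilbert--Schmidt because $1+d>1$, as noted in App.~\ref{app:state}. Both forms are therefore continuous on $\ell_2$, and the inequality of Step 2 extends from the dense subspace $c_{00}$ to all of $\ell_2$. This proves \eqref{eq:frac-loewner}.

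\textbf{Step 4: singular values.} Both operators are compact, self-adjoint and positive semidefinite, so their singular values are their ordered eigenvalues. By the Courant--Fischer min--max characterization, the Loewner order \eqref{eq:frac-loewner} gives $s_n(\mathbf K_d)\ge c_d\Gamma(1+d)\,s_n(\mathbf H_{1+d})$ for every $n$. Applying \eqref{eq:widom-scale} with $\alpha=1+d>1$ gives $s_n(\mathbf K_d)\ge c\,e^{-C_d\sqrt n}$ for large $n$; the index shift between $n$ and $r+1$ is absorbed into the constants.

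The only step carrying real content is the density comparison in Step 2, and it reduces to $e^t-1\ge t$. The remaining points are routine: the sum--integral exchange is justified by working on finitely supported vectors, which sidesteps any convergence question, and the extension to $\ell_2$ follows from the Hilbert--Schmidt bounds. The deep input is the imported Widom scale.
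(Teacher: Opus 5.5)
Your proposal is correct and follows the paper's own argument. The paper writes both mixing densities in the variable $x=e^{-t}$ and uses $(1-x)/x\ge-\log x$, which is exactly your $e^t-1\ge t$; it then orders the moment matrices, applies min--max, and invokes the Widom scale. Your Hilbert--Schmidt count and the extension from $c_{00}$ to $\ell_2$ supply details that the paper leaves implicit.
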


\begin{proof}
In the $x$-moment representation of Lem.~\ref{lem:frac-mixture} the fractional density carries the factor
$x^{-d}(1-x)^d=\{(1-x)/x\}^{d}$, while the power-law kernel $j^{-(1+d)}$ has moment density proportional to $(-\log x)^{d}$ by the substitution $x=e^{-t}$ in Lem.~\ref{lem:laplace}.
For $0<x<1$ the elementary inequality $(1-x)/x\ge-\log x$ holds, since $u-1\ge\log u$ at $u=1/x$.
Raising it to the power $d>0$ gives $x^{-d}(1-x)^d\ge(-\log x)^d$ pointwise, so the fractional moment measure dominates $c_d\Gamma(1+d)$ times the power-law one.
Two moment matrices ordered by their measures are ordered in quadratic-form order, which is \eqref{eq:frac-loewner}.
The min-max principle then gives $s_n(\mathbf K_d)\ge c_d\Gamma(1+d)s_n(\mathbf H_{1+d})$, and the Widom scale \eqref{eq:widom-scale} applied at exponent $1+d>1$ gives the stated bound.
\end{proof}

\begin{lemma}[Finite fractional Hankel sections retain the relevant singular value]
\label{lem:frac-section}
For $0<d<1/2$ there is $C_d$ with $\|\mathbf K_d-P_N\mathbf K_dP_N\|_{\rm op}\le C_dN^{-d}$.
Consequently there are $\kappa_d,c_d',C_d'>0$ such that, with $N_r=\lceil e^{\kappa_d\sqrt r}\rceil$,
\begin{equation}
 s_{r+1}\bigl(P_{N_r}\mathbf K_dP_{N_r}\bigr)\ \ge\ c_d'e^{-C_d'\sqrt r}
 \qquad\text{for all large }r.
 \label{eq:frac-section-sv}
\end{equation}
\end{lemma}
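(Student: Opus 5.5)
The plan mirrors the proof of Lem.~\ref{lem:finite-section}, replacing the exponent $\alpha$ by the fractional decay $1+d$ and the Widom input by Lem.~\ref{lem:frac-domination}.

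\textbf{Step 1: operator bound.} The difference $\mathbf K_d-P_N\mathbf K_dP_N$ is supported on index pairs $(i,j)\in\mathbb N_0^2$ with $\max(i,j)\ge N$. Its entries are $a_{i+j+1}(d)\le C_d(i+j+1)^{-1-d}$, by \eqref{eq:farima-coeff}. I will bound the operator norm by the Hilbert--Schmidt norm. Grouping by the antidiagonal $k=i+j$, each $k\ge N$ carries at most $k+1$ pairs, so
\[
\|\mathbf K_d-P_N\mathbf K_dP_N\|_{\rm HS}^2\le C_d\sum_{k\ge N}(k+1)(k+1)^{-2-2d}\le C_d\sum_{k\ge N}(k+1)^{-1-2d}\le C_dN^{-2d}.
\]
This gives the claimed $C_dN^{-d}$. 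It matches the earlier rate $N^{1-\alpha}$ at $\alpha=1+d$, and it converges because $d>0$.

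\textbf{Step 2: infinite-operator singular values.} Lem.~\ref{lem:frac-domination} gives $s_{r+1}(\mathbf K_d)\ge c\,e^{-C_d\sqrt{r+1}}\ge c\,e^{-C_d''\sqrt r}$ for large $r$.

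\textbf{Step 3: Weyl perturbation.} Weyl's inequality gives
\[
s_{r+1}(P_N\mathbf K_dP_N)\ge s_{r+1}(\mathbf K_d)-\|\mathbf K_d-P_N\mathbf K_dP_N\|_{\rm op}\ge c\,e^{-C_d''\sqrt r}-C_dN^{-d}.
\]
Choose $\kappa_d:=2C_d''/d$ and $N_r=\lceil e^{\kappa_d\sqrt r}\rceil$. Then $C_dN_r^{-d}\le C_de^{-2C_d''\sqrt r}$, which is eventually at most $\tfrac12 c\,e^{-C_d''\sqrt r}$. This yields \eqref{eq:frac-section-sv} with $c_d'=c/2$ and $C_d'=C_d''$. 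Since the singular values of $P_N\mathbf K_dP_N$ coincide with those of its $N\times N$ block, the statement applies to the finite section directly.

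\textbf{Main obstacle.} The only delicate point is that the decay margin is now $d$ rather than $\alpha-1$. As a result, $\kappa_d$ blows up like $1/d$ as $d\to0$, so the constants are genuinely $d$-dependent. This is consistent with the fixed-$d$ scope of the lemma and of Corollary~\ref{cor:frac-state}.

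Everything else is routine: the constant $C_d$ in Step 1 uses the bound $a_j(d)\le C_dj^{-1-d}$, which holds for all $j\ge1$ by Gamma-ratio bounds, and the requirement $N_r>r$ follows as before.
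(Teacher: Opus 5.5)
Your proposal is correct and follows the paper's argument step for step: a Hilbert--Schmidt bound on the discarded entries (which you make explicit via the antidiagonal count), the Loewner-transported Widom scale from Lem.~\ref{lem:frac-domination}, and Weyl's inequality with $\kappa_d$ of order $2C/d$. It also tidies two points the paper's write-up glosses over. You keep separate names for the operator-norm constant and the exponent constant, which the paper both calls $C_d$, and you handle the index shift from $\sqrt{r+1}$ to $\sqrt{r}$ explicitly.
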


\begin{proof}
Since $a_j(d)\asymp j^{-1-d}$ by \eqref{eq:farima-coeff}, the discarded entries obey
\[
 \|\mathbf K_d-P_N\mathbf K_dP_N\|_{\rm op}^2
 \le\|\mathbf K_d-P_N\mathbf K_dP_N\|_{\rm HS}^2
 \lesssim\sum_{\max(i,j)\ge N}(i+j+1)^{-2-2d}
 \lesssim N^{-2d},
\]
the sum converging because $2+2d>2$, which gives the operator bound.
For the singular value, Lem.~\ref{lem:frac-domination} gives $s_{r+1}(\mathbf K_d)\ge c\,e^{-C_d\sqrt r}$.
Choosing $\kappa_d>2C_d/d$ makes $C_dN_r^{-d}\le C_de^{-d\kappa_d\sqrt r}\le C_de^{-2C_d\sqrt r}$, which is below half of $c\,e^{-C_d\sqrt r}$ for large $r$, and Weyl's inequality
$s_{r+1}(P_N\mathbf K_dP_N)\ge s_{r+1}(\mathbf K_d)-\|\mathbf K_d-P_N\mathbf K_dP_N\|_{\rm op}$
gives \eqref{eq:frac-section-sv}.
\end{proof}

The two steps are separated because they do different work.
The Loewner comparison transports the Widom scale of App.~\ref{app:state} without redoing the spectral analysis, and the finite-section step is what makes the bound usable inside a Hankel block of bounded order.

\begin{theorem}[Root-exponential lower bound in fractional risk]
\label{thm:frac-lower}
For fractional noise there are $c_-,C_->0$ such that $\inf_{b\in\cA_r}\cE_d(b)\ge c_-e^{-C_-\sqrt r}$ for large $r$.
\end{theorem}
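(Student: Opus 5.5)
We follow the template of the proof of Thm.~\ref{thm:state-complexity}, with two changes: the Hankel spectrum is that of the fractional kernel $a(d)$, and forecast risk is compared with coefficient error through the singular geometry of Thm.~\ref{thm:fractional-geometry}. The key point is that only the \emph{lower} half of the norm comparison is needed, and that half survives the pole. Since $|1-e^{-i\omega}|\le2$, the density obeys $f_d\ge 2^{-2d}\sigma^2$, so for every $b\in\cA_r$,
\[
 \cE_d(b)\ \ge\ 2^{-2d}\sigma^2\|b-a(d)\|_2^2 ,
\]
by Prop.~\ref{prop:singular-space} and \eqref{eq:app-risk-identity}. This inequality holds with both sides possibly infinite, and it is the step App.~\ref{app:prelim} flagged as requiring only $f_X\ge m>0$. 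The problem therefore reduces to an $\ell_2$ lower bound on $b-a(d)$, uniformly over members of $\cA_r$ and over their contexts $L$.

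Fix large $r$ and let $N=N_r=\lceil e^{\kappa_d\sqrt r}\rceil$ be as in Lem.~\ref{lem:frac-section}, and split according to the context $L$ of $b$.

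\emph{Short context}, $L<2N-1$. Here $b$ vanishes beyond $L$, so
\[
 \|b-a(d)\|_2^2\ \ge\ \sum_{j\ge 2N}a_j(d)^2\ \gtrsim_d\ N^{-1-2d},
\]
using $a_j(d)\asymp j^{-1-d}$ from \eqref{eq:farima-coeff}. This is at least $c\,e^{-(1+2d)\kappa_d\sqrt r}$.

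\emph{Long context}, $2N-1\le L$. By the restricted rank statement following \eqref{eq:app-Ar}, $\operatorname{rank}H_N(b)\le r$. The block $H_N(a(d))$ is exactly the section $P_N\mathbf K_dP_N$, up to the index shift between $\ell_2(\mathbb N_0)$ and $\{1,\dots,N\}$. Lem.~\ref{lem:app-eckart} then gives
\[
 \|H_N(b-a(d))\|_F^2\ \ge\ s_{r+1}(P_N\mathbf K_dP_N)^2 .
\]
The multiplicity count \eqref{eq:app-hankel-weights} converts this to
\[
 \|b-a(d)\|_2^2\ \ge\ N^{-1}s_{r+1}(P_N\mathbf K_dP_N)^2 ,
\]
and \eqref{eq:frac-section-sv} bounds the right-hand side below by $c\,e^{-(\kappa_d+2C_d')\sqrt r}$.

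Taking $C_-=\max\{(1+2d)\kappa_d,\ \kappa_d+2C_d'\}$ and shrinking $c_-$ to cover both branches and absorb $2^{-2d}\sigma^2$, then taking the infimum over $b$ and $L$, gives the theorem.

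The substantive difficulty has already been isolated in Lem.~\ref{lem:frac-domination} and Lem.~\ref{lem:frac-section}, namely transporting the Widom scale to $\mathbf K_d$. Within this proof, the step needing care is the first one: we must justify that $\cE_d(b)$ is given by the spectral formula for $b-a(d)$. This holds because $b-a(d)$ lies in $\ell_2$ and is the $\ell_2$-limit of finitely supported truncations. The identity can be applied on the truncations, and Fatou's lemma then yields the lower bound even if the predictive norm of the limit is not a priori finite. A second, smaller point is checking that the shift between the Hankel indexing $a_{i+j-1}$ and the operator symbol in Lem.~\ref{lem:frac-section} is consistent, so that the section's singular values are exactly those of $H_N(a(d))$.
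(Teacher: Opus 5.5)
Your proposal is correct and follows the paper's proof: the same short/long-context split from Thm.~\ref{thm:state-complexity}, with Lem.~\ref{lem:frac-section} in place of Lem.~\ref{lem:finite-section}, and the same conversion $\cE_d(b)\ge 2^{-2d}\sigma^2\|b-a(d)\|_2^2$ from $f_d\ge 2^{-2d}\sigma^2$. One small correction: to pass from the finitely supported truncations to $b-a(d)$, $\ell_2$ convergence of coefficients is not what you need; you need $L_2$ convergence of the truncated AR predictors to $m_t^\star$ (for instance from \eqref{eq:frac-truncation}), and once you have that the bound passes to the limit directly, with no need for Fatou.
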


\begin{proof}
Repeat the proof of Thm.~\ref{thm:state-complexity} with $\mathbf H_\alpha$ replaced by $\mathbf K_d$, Lem.~\ref{lem:finite-section} replaced by Lem.~\ref{lem:frac-section}, and the Widom scale replaced by Lem.~\ref{lem:frac-domination}.
The only step that needs the geometry is the final conversion from coefficient error to risk.
For pure fractional noise the regular factor is the constant $g\equiv\sigma^2$, so the lower half of \eqref{eq:frac-geometry} reads $\cE_d(u)\ge\sigma^2\|I_du\|_2^2$ with no further factor, and since $|1-e^{-i\omega}|\le2$ on the circle the fractional multiplier obeys $|1-e^{-i\omega}|^{-2d}\ge2^{-2d}$, so Parseval yields $\|I_du\|_2^2\ge2^{-2d}\|u\|_2^2$ and hence
\begin{equation}
 \cE_d(u)\ \ge\ \sigma^2\,2^{-2d}\,\|u\|_2^2 .
 \label{eq:frac-risk-to-coeff}
\end{equation}
That inequality is what converts the coefficient-level Hankel obstruction into forecast risk.
\end{proof}

\subsection{Fractional exponential-state upper bound}
On the constructive side, Lem.~\ref{lem:frac-mixture} puts $a(d)$ in the analytic Laplace-mixture class of App.~\ref{app:state}, so the dyadic Bernstein-ellipse quadrature applies verbatim.

\begin{corollary}[Positive exponential compression of the fractional kernel]
\label{fact:frac-compression}
Fix $0<d<1/2$.
There is $C_d<\infty$ such that for every $L\ge2$ and $0<\eta\le e^{-1}$ there are positive $b_q,\lambda_q$, $1\le q\le R$, with
$|a_j(d)-\sum_{q\le R}b_qe^{-\lambda_q(j-1)}|\le\eta\,a_j(d)$ for $1\le j\le L$ and
\begin{equation}
 R\le C_d\bigl(1+\log L+\log\eta^{-1}\bigr)^2 .
 \label{eq:frac-compression}
\end{equation}
\end{corollary}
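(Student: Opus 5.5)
The plan is to read Lemma~\ref{lem:frac-mixture} as placing $a(d)$ inside the analytic algebraic Laplace-mixture class of App.~\ref{app:state}, and then to run the three-step dyadic Bernstein-ellipse compression described there. Write $w_d(t)=t^{\alpha-1}h(t)$ with $\alpha=1+d$ and
\[
 h(t)=\frac{\sin\pi d}{\pi}\,e^{-(1-d)t}\Bigl(\frac{1-e^{-t}}{t}\Bigr)^{d}.
\]
The first task is to verify the hypotheses on $h$. The function $(1-e^{-t})/t$ is entire and equals $1$ at $t=0$, so $h(0)=\sin(\pi d)/\pi>0$. The function $h$ is also positive on $[0,\infty)$. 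The power $d$ is holomorphic wherever $(1-e^{-z})/z$ avoids the cut $(-\infty,0]$. This holds on a small disc around the origin, and on a sector $\Sigma_\vartheta$ with $\vartheta<\pi/2$, because there $\Re z>0$, so $|e^{-z}|<1$ and $1-e^{-z}$ has positive real part. With these bounds and $|z|^{-d}$ bounded away from the origin, one gets $|h(z)|\le C e^{-(1-d)\Re z}$ on $\Sigma_\vartheta$, which is \eqref{eq:cm-decay} with $\mu=1-d$. The representation uses $e^{-(j-1)t}$, which matches \eqref{eq:cm-representation} exactly.

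Once the hypotheses hold, I carry out the three steps. First, truncate the Laplace integral at $T_\eta=C(1+\log\eta^{-1})$. The incomplete-Gamma bound gives tail mass at most $(\eta/3)a_j(d)$, uniformly in $j$, and this uses $a_j(d)\asymp j^{-1-d}$ from \eqref{eq:farima-coeff}. Second, split $[0,T_\eta]$ into $[0,L^{-1}]$ and dyadic pieces of $[L^{-1},T_\eta]$; there are $M=O(1+\log L+\log\log\eta^{-1})$ pieces. Third, apply Gauss--Jacobi quadrature with $J$ nodes on each piece, using the Jacobi weight $t^{d}$ on the endpoint interval. Lemma~\ref{lem:dyadic-ellipse} supplies the uniform ellipse geometry. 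Each piece then has absolute error $C\rho^{-2J}j^{-1-d}$, and the factor $e^{-c(j-1)a}a^{1+d}$ is bounded by $j^{-1-d}$. Summing over pieces and choosing $J=O(1+\log\eta^{-1}+\log M)$ makes the total error at most $(2\eta/3)a_j(d)$ for $j\le L$.

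Positivity of the output comes from the quadrature itself. Gauss--Jacobi weights are positive and the integrand $h$ is positive, so every mode has $b_q>0$. The nodes $\lambda_q$ lie in $(0,T_\eta]$, so every rate is positive. The node count is $R=MJ\le C_d(1+\log L+\log\eta^{-1})^2$, where the $\log\log$ and $\log M$ terms are absorbed, and this is \eqref{eq:frac-compression}.

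The main obstacle is the endpoint interval $[0,L^{-1}]$. There the weight $t^{d}$ is singular, and the uniformity in $j\le L$ must hold even though $(j-1)t$ is of order one. My first approach is to rescale $t=s/L$. This turns the piece into a fixed-interval Jacobi quadrature of $e^{-(j-1)s/L}h(s/L)$. This function is analytic and uniformly bounded on a fixed ellipse, by the endpoint clause of Lemma~\ref{lem:dyadic-ellipse}. Its error is $\rho^{-2J}L^{-1-d}$, which is at most $\rho^{-2J}j^{-1-d}$ for $j\le L$, as required.
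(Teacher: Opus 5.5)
Your proposal is correct and follows the paper's route. You rewrite Lemma~\ref{lem:frac-mixture} in the form \eqref{eq:cm-representation} with $\alpha=1+d$ and $h_d(t)=\frac{\sin\pi d}{\pi}e^{-(1-d)t}\bigl((1-e^{-t})/t\bigr)^{d}$, check positivity, $h_d(0)>0$, holomorphy on a disc and a sector, and the decay \eqref{eq:cm-decay} with $\mu=1-d$, and then invoke the dyadic Bernstein-ellipse Gauss--Jacobi compression of App.~\ref{app:state}; your explicit three-step unpacking, including the rescaled Jacobi-weight treatment of $[0,L^{-1}]$, only spells out what the paper cites as applying verbatim.
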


\begin{proof}
Lem.~\ref{lem:frac-mixture} exhibits $a_j(d)$ as a positive Laplace mixture, and rewriting it in the form \eqref{eq:cm-representation} with exponent $\alpha=1+d$ and density
\[
 h_d(t)=c_d\,e^{-(1-d)t}\Bigl(\frac{1-e^{-t}}{t}\Bigr)^{d},
 \qquad c_d=\frac{\sin\pi d}{\pi},
\]
shows that $h_d$ is positive on $[0,\infty)$ with $h_d(0)=c_d>0$.
The remaining two hypotheses of \eqref{eq:cm-representation} are verified rather than assumed.
For analyticity, the only zeros of $1-e^{-z}$ are the points $2\pi ik$, so the quotient $(1-e^{-z})/z$ has a removable singularity at the origin and is nonzero on a small disk about the origin and on any fixed sector $|\arg z|<\vartheta<\pi/2$. It therefore admits a single analytic branch of its $d$th power on the neighbourhood required in App.~\ref{app:state}, agreeing with the positive one on the real axis; multiplying by $e^{-(1-d)z}$ preserves analyticity.
For the decay \eqref{eq:cm-decay}, the same quotient is bounded on the sector while $|e^{-(1-d)z}|=e^{-(1-d)\Re z}$, so $|h_d(z)|\le C_de^{-\mu\Re z}$ with $\mu=1-d>1/2$.
The composite quadrature therefore has both the sectorial and near-origin holomorphic neighbourhoods it requires.
The analytic Laplace-mixture compression of App.~\ref{app:state} therefore applies with $\alpha=1+d$, and its head count is \eqref{eq:frac-compression}.
\end{proof}

The constant $C_d$ here is for fixed $d$ and is not claimed to be uniform as $d\to0$, which is why the short-memory boundary of Thm.~\ref{thm:small-d} is proved by a different route.
The relative form of the guarantee is what allows it to be converted in the singular norm.
An absolute bound would be useless here, because the fractional-integration factor amplifies low-frequency error and a uniform absolute error need not have small predictive norm.

\subsection{Proof of Corollary~\ref{cor:frac-state}}
\begin{proof}[Proof of Corollary~\ref{cor:frac-state}]
For the upper bound, apply Corollary~\ref{fact:frac-compression} and split the coefficient error into the in-context part $e^{\rm in}$, supported on $\{1,\dots,L\}$, and the omitted tail $e^{\rm tail}$.
The compression guarantee is relative and pointwise, $|e^{\rm in}_j|\le\eta\,a_j(d)$, so it does not by itself bound a fractionally integrated $\ell_2$ norm and one further step is needed.
Summing the relative bound gives $\|e^{\rm in}\|_1\le\eta\sum_{j\ge1}a_j(d)=\eta$, the identity because the transfer function satisfies $A_d(1)=1-(1-1)^d=1$.
The fractional-integration kernel of App.~\ref{app:fractional} has $\pi_k(d)\asymp k^{d-1}$, hence lies in $\ell_2$ precisely because $d<1/2$, so Young's convolution inequality $\ell_1*\ell_2\to\ell_2$ gives $\|I_de^{\rm in}\|_2\le\|e^{\rm in}\|_1\,\|\pi(d)\|_2\le C_d\eta$.
This is the step that converts a relative coefficient guarantee into the singular predictive norm; an absolute coefficient bound would not suffice, since fractional integration amplifies low-frequency error.
The omitted tail is controlled by \eqref{eq:frac-truncation}, which gives $\|I_de^{\rm tail}\|_2^2=\Theta_d(L^{-1})$, and the upper half of \eqref{eq:frac-geometry} together with $(u+v)^2\le2u^2+2v^2$ combines the two into $\cE_d(b)\lesssim_d\eta^2+L^{-1}$.
Balancing $\eta^2$ against $L^{-1}$ and inverting \eqref{eq:frac-compression} as in Corollary~\ref{cor:constructive} gives $C_de^{-c_d\sqrt r}$.
The lower bound is Thm.~\ref{thm:frac-lower}.
The inversion of a two-sided stretched-exponential bound with exponent $1/2$ gives $r_d^\star(\tau)=\Theta_d(\log^2(1/\tau))$.

The subscript on $\Theta_d$ is not cosmetic, and it is worth saying how it coexists with Thm.~\ref{thm:small-d}.
Corollary~\ref{cor:frac-state} is a fixed-$d$ statement whose constants come from Lemmas~\ref{lem:frac-domination} and \ref{lem:frac-section} and from Corollary~\ref{fact:frac-compression}, and each of those degrades as $d\to0$.
Thm.~\ref{thm:small-d} is a different statement, uniform over $0<d\le d_0$, and it is uniform precisely because it never invokes those inputs, compressing the fixed harmonic kernel instead.
The two are therefore complementary rather than in tension, and neither implies the other.
Reading the fixed-$d$ constants as uniform in $d$ is the error that Thm.~\ref{thm:small-d} is designed to avoid, and the open problem recorded at the end of this appendix is exactly that no matching lower bound is available with $d_0$-only constants.
\end{proof}

The square-log law therefore survives the change of geometry, with constants depending on $d$.
What does not survive is the context law, which has no counterpart in the regular case, and this asymmetry is the point made in the comparison table of Sec.~\ref{sec:setup}.

\subsection{Zero-memory boundary and Gamma expansion}
\begin{proposition}[Zero-memory risk]
\label{prop:zero-memory}
For fractional noise with $0<d<1/2$, the zero predictor has
\begin{equation}
 \cE_d(0)=\sigma^2\Bigl\{\frac{\Gamma(1-2d)}{\Gamma(1-d)^2}-1\Bigr\}
 =\frac{\pi^2\sigma^2}{6}d^2+O(d^3).
 \label{eq:zero-memory}
\end{equation}
\end{proposition}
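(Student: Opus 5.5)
The plan is to compute $\cE_d(0)$ exactly from the spectral risk identity and then expand the resulting Gamma ratio. The zero predictor corresponds to the perturbation $u=-a(d)$. The cleanest route uses the identity $\cE(0)=\operatorname{Var}(X_{t+1})-\operatorname{Var}(X_{t+1}-m_t^\star)$. This follows from orthogonality of the innovation to the past, which is exactly the first equality in \eqref{eq:intro-risk} with $b=0$, or equivalently from Prop.~\ref{prop:singular-space}.

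For fractional noise the Bayes one-step error is the innovation $\eps_{t+1}$, with variance $\sigma^2$. This agrees with $V_\infty=\sigma^2$ obtained from \eqref{eq:VL-exact} in the proof of Corollary~\ref{cor:context-law}. The unconditional variance is $\gamma(0)=V_0=\sigma^2\Gamma(1-2d)/\Gamma(1-d)^2$, already recorded in that proof.

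As an independent check, $\gamma(0)$ can be computed directly as
\[
\gamma(0)=\frac{\sigma^2}{2\pi}\int_{-\pi}^{\pi}|1-e^{-i\omega}|^{-2d}\dd\omega .
\]
This is $\sigma^2 J(-d)$ with $J$ as in the proof of Thm.~\ref{thm:d-misspec}. The Wallis and duplication computation there gives $J(\Delta)=\Gamma(1+2\Delta)/\Gamma(1+\Delta)^2$ for $\Delta>-1/2$, and substituting $\Delta=-d$ with $d<1/2$ yields the same ratio.

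An alternative that reuses Thm.~\ref{thm:d-misspec} even more directly is to note that $a(0)=0$, since $A_0(z)=1-(1-z)^0=0$. Formally, then, $\cE_d(0)=\cE_d(a(d'))$ with $d'=0$ and $\Delta=-d$. The integral formula \eqref{eq:d-misspec-integral} gives $\sigma^2\{J(-d)-1\}$ because the circular-mean argument for $(1-z)^{\Delta}$ remains valid for $\Delta=-d>-1$. The one point needing care is that Thm.~\ref{thm:d-misspec} was stated for $d'\in(0,1/2)$. I would therefore rerun its short proof with $d'=0$ rather than cite it, checking only that the integrand $|1-(1-e^{-i\omega})^{-d}|^2$ is integrable. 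It is, because it is $O(|\omega|^{-2d})$ near $0$ and $2d<1$. Either route produces the exact expression in \eqref{eq:zero-memory}.

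For the expansion I would again follow Thm.~\ref{thm:d-misspec}. Using the series $\log\Gamma(1+x)=-\gamma_{\rm E}x+\sum_{k\ge2}(-1)^k\zeta(k)x^k/k$, one gets
\[
\log\Gamma(1-2d)-2\log\Gamma(1-d)=\sum_{k\ge2}\frac{\zeta(k)}{k}(2^k-2)d^k .
\]
The Euler constants cancel, and the $k=2$ term is $\zeta(2)d^2=\pi^2d^2/6$. The series converges for $2d<1$, and on a compact range $d\le d_0<1/2$ its remainder is $O(d^3)$ uniformly. Exponentiating and subtracting one then gives $\pi^2d^2/6+O(d^3)$.

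The only step requiring real attention is justifying the extension of the misspecification integral to $\Delta=-d$, i.e.\ the boundary mean-value property for $(1-z)^{-d}$. The direct variance route avoids it entirely, so I would present that route as the main proof.
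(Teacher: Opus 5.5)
Your proposal is correct and matches the paper's proof: the zero predictor's excess risk is $\gamma(0)-\sigma^2$ with $\gamma(0)=\sigma^2\Gamma(1-2d)/\Gamma(1-d)^2$, and the expansion uses the $\log\Gamma(1+x)$ series, in which the Euler constants cancel and the $k=2$ term is $\zeta(2)d^2$. Your cross-check of $\gamma(0)$ as $\sigma^2 J(-d)$ is valid, since the Wallis integral converges for $\Delta>-1/2$, but the argument does not need it.
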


\begin{proof}
The zero predictor has excess risk $\gamma(0)-\sigma^2$, and $\gamma(0)=\sigma^2\Gamma(1-2d)/\Gamma(1-d)^2$ for fractional noise, which gives the exact expression.
For the expansion use $\log\Gamma(1+x)=-\gamma_{\rm E}x+\sum_{k\ge2}(-1)^k\zeta(k)x^k/k$, so that
\[
 \log\Gamma(1-2d)-2\log\Gamma(1-d)=\sum_{k\ge2}\frac{\zeta(k)}{k}\bigl(2^k-2\bigr)d^k=\zeta(2)d^2+O(d^3),
\]
the Euler constants cancelling.
Exponentiating and subtracting one gives $\zeta(2)d^2+O(d^3)=\pi^2d^2/6+O(d^3)$.
\end{proof}

Consequently, once the target accuracy satisfies $\tau\gtrsim\sigma^2d^2$, the zero-output architecture already attains it and no memory at all is required.
This is the upper edge of the window in Thm.~\ref{thm:small-d}, and it is why the constants in Corollary~\ref{cor:frac-state} cannot be held fixed as $d\to0$.

\subsection{Harmonic linearization of the fractional coefficients}
Expanding \eqref{eq:farima-coeff} in $d$ gives $a_j(d)=d/j+O(d^2)$ coefficientwise, and the useful statement is that the remainder is small in the predictive norm uniformly in $d$.

\begin{lemma}[Uniform first-order linearization]
\label{lem:linearization}
Fix $0<d_0<1/2$ and let $h_j=1/j$.
There is $C_{d_0}<\infty$ such that for every $0<d\le d_0$,
\begin{equation}
 \cE_d(dh)=\sigma^2\|I_d\{a(d)-dh\}\|_2^2\le C_{d_0}\sigma^2d^4 .
 \label{eq:linearization}
\end{equation}
\end{lemma}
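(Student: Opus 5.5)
The plan is to compute everything on the Fourier side, where the fractional integration and the harmonic kernel both have closed forms. The transfer function of $a(d)$ is $A_d(z)=1-(1-z)^d$, as used in the proof of Thm.~\ref{thm:d-misspec}. The harmonic kernel $h_j=1/j$ has transfer function $H(z)=\sum_{j\ge1}z^j/j=-\log(1-z)$. Writing $\ell(z)=\log(1-z)$ with the principal branch, the perturbation $u=a(d)-dh$ has transfer function $U(z)=1-e^{d\ell}+d\ell$. The fractional integration $I_d$ multiplies by $(1-z)^{-d}=e^{-d\ell}$. Hence the transfer function of $I_du$ is
\[
 F_d(z)=e^{-d\ell}(1+d\ell)-1=g(d\ell(z)),\qquad g(x):=e^{-x}(1+x)-1 .
\]
The first identity in \eqref{eq:linearization} is the pure-noise case of Thm.~\ref{thm:fractional-geometry}, $\cE_d(a(d)+u)=\sigma^2\|I_du\|_2^2$ with the perturbation $dh-a(d)=-u$. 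Since the statement is for finitely supported $u$, I will extend it to this $u$ through Prop.~\ref{prop:singular-space}: approximate $u$ by its truncations $u^{(N)}$. These converge in $\ell_2$, and by dominated convergence on the spectral side they also converge in the predictive norm. The reason is that $|U|^2|1-e^{-i\omega}|^{-2d}$ will be shown integrable, and the truncated transfer functions converge in $L_2$ of this weight. I would make this rigorous by working at radius $\varrho<1$ and letting $\varrho\to1$, using Fatou and the Abel-limit boundary values of the analytic functions involved.

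The core estimate is a pointwise bound on $g$. Since $g(0)=0$ and $g'(x)=-xe^{-x}$, we have $g(x)=-\int_0^1 x^2 s\,e^{-sx}\dd s$, so
\[
 |g(x)|\le |x|^2\max_{0\le s\le1}|e^{-sx}| .
\]
With $x=d\ell$ on the circle, $\Re(-s d\ell)=-sd\log|1-z|$. This gives $|e^{-sx}|\le\max(1,|1-z|^{-d})\le 1+|1-z|^{-d}$. Therefore
\[
 |F_d(e^{-i\omega})|^2\le 2d^4|\ell(e^{-i\omega})|^4\bigl(1+|1-e^{-i\omega}|^{-2d}\bigr).
\]
On the circle, $|\ell|\le |\log|1-e^{-i\omega}||+\pi/2$, and $|1-e^{-i\omega}|\asymp|\omega|$. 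Also $|1-e^{-i\omega}|^{-2d}\le 2^{2d_0}\max(1,|\omega|^{-2d_0})$ uniformly over $0<d\le d_0$. By Parseval, $\|I_du\|_2^2$ is at most $d^4$ times
\[
 \frac{1}{\pi}\int_{-\pi}^{\pi}\bigl(|\log|\omega||+C\bigr)^4\bigl(1+C|\omega|^{-2d_0}\bigr)\dd\omega ,
\]
which is finite because $2d_0<1$ and logarithmic powers do not affect integrability. This is the constant $C_{d_0}$.

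The main obstacle is the justification rather than the bound itself. One must identify the $\ell_2$ transfer function of $I_du$ with the boundary function $g(d\ell)$. This is needed because $I_d$ is unbounded on $\ell_2$, and $h$ is in $\ell_2$ but not summable. My first attempt is to compute at radius $\varrho<1$, where all series converge absolutely. There the pointwise bound holds with $|1-\varrho e^{-i\omega}|$ in place of $|1-e^{-i\omega}|$, and this is dominated uniformly in $\varrho$ by the same integrable envelope. Hence the $H^2$ norms of $F_d(\varrho\,\cdot)$ are uniformly bounded. It follows that $F_d\in H^2$ with boundary values equal to the radial limit, and its Taylor coefficients are exactly the convolution $\pi(d)*u$ defining $I_du$ coordinatewise.
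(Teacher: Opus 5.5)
Your proposal is correct and follows essentially the same route as the paper. Both use the same Fourier-side identity (your $g(d\ell)$ is the paper's $e^{dH}-1-dHe^{dH}$ with $H=-\ell$), the same second-order bound carrying the factor $\max(1,|1-e^{-i\omega}|^{-d})$, and the same integrable envelope $d^4(1+\log(1/|\omega|))^4|\omega|^{-2d_0}$.

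Your $H^2$ radial-limit argument for the non-finitely-supported perturbation is extra care that the paper leaves implicit. One point there needs fixing. In the dominated-convergence step for the truncations, integrability of $|U|^2|1-e^{-i\omega}|^{-2d}$ alone does not dominate $|U-U^{(N)}|^2$. Use instead the uniform partial-sum bound $\bigl|\sum_{j\le N}e^{-ij\omega}/j\bigr|\lesssim 1+\log(1/|\omega|)$, together with $\sum_j a_j(d)=1$, to build the dominating function.
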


\begin{proof}
Write $H(z)=-\log(1-z)$, so that the fractional autoregressive transfer function is $A_d(z)=1-(1-z)^d=1-e^{-dH(z)}$ and the harmonic surrogate has transfer function $dH(z)$.
The fractional-integration multiplier is $(1-z)^{-d}=e^{dH(z)}$, so on the circle the integrand of $\|I_d\{a(d)-dh\}\|_2^2$ is the squared modulus of
\[
 e^{dH}\bigl(A_d-dH\bigr)=e^{dH}-1-dH\,e^{dH},
\]
evaluated at $z=e^{-i\omega}$.
The elementary bound $|e^y-1-ye^y|\le C|y|^2e^{\max(\Re y,0)}$, valid for all complex $y$, applied with $y=dH(e^{-i\omega})$ gives a pointwise bound $Cd^2|H|^2e^{d\max(\Re H,0)}$.
On the circle $\Re H(e^{-i\omega})=-\log|1-e^{-i\omega}|$ and $|H(e^{-i\omega})|\lesssim1+\log(1/|\omega|)$, so for $d\le d_0<1/2$ the exponential factor is at most $|1-e^{-i\omega}|^{-d_0}$ and the whole integrand is dominated by $Cd^4\{1+\log(1/|\omega|)\}^4|\omega|^{-2d_0}$, which is integrable since $2d_0<1$.
Integrating gives $\|I_d\{a(d)-dh\}\|_2^2\le C_{d_0}d^4$ with a constant depending only on $d_0$, and multiplying by $\sigma^2$ gives \eqref{eq:linearization}.
\end{proof}

The exponent four is what sets the lower edge of the intermediate window.
Below accuracy $\sigma^2d^4$ the harmonic surrogate is no longer adequate and the construction of Thm.~\ref{thm:small-d} does not apply, which is the origin of the open problem recorded below.

\subsection{Harmonic tail in the fractional predictive norm}
\begin{lemma}[Uniform fractional norm of the harmonic tail]
\label{lem:harmonic-tail}
Fix $0<d_0<1/2$ and let $h^{>L}_j=j^{-1}\mathbf{1}\{j>L\}$.
There is $C_{d_0}<\infty$ such that for every $0\le d\le d_0$ and $L\ge2$,
\begin{equation}
 \|I_dh^{>L}\|_2^2\le C_{d_0}L^{-1+2d}.
 \label{eq:harmonic-tail}
\end{equation}
\end{lemma}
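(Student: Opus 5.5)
The plan is to work entirely on the Fourier side, in the same way the paper proves \eqref{eq:frac-truncation}. By Parseval,
\[
 \|I_dh^{>L}\|_2^2=\frac1{2\pi}\int_{-\pi}^{\pi}|H_{>L}(\omega)|^2\,|1-e^{-i\omega}|^{-2d}\dd\omega,
 \qquad H_{>L}(\omega)=\sum_{j>L}j^{-1}e^{-ij\omega}.
\]
I would split this integral at $|\omega|=L^{-1}$. The goal is to show that each piece is at most $C_{d_0}L^{-1+2d}$, with every constant depending only on $d_0$. The point to watch is that $\sum_{j>L}j^{-1}$ diverges, so the crude bound used in the proof of \eqref{eq:frac-truncation} on the low-frequency window is not available. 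The low-frequency region is therefore the main obstacle.

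\textbf{High frequencies.} For $L^{-1}<|\omega|\le\pi$, Abel summation against the geometric partial sums $\bigl|\sum_{k\le n}e^{-ik\omega}\bigr|\le C/|\omega|$ applies because $j^{-1}$ is decreasing. The boundary term is bounded by $L^{-1}\cdot C/|\omega|$ and the telescoping sum $\sum_{j>L}(j^{-1}-(j+1)^{-1})$ by another $C/(L|\omega|)$, giving
\[
 |H_{>L}(\omega)|\le \frac{C}{L|\omega|}.
\]
Combining this with $|1-e^{-i\omega}|\asymp|\omega|$, where the constants are absolute because $2d\le 2d_0<1$, the high-frequency part is at most
\[
 CL^{-2}\int_{L^{-1}}^{\pi}\omega^{-2-2d}\dd\omega\le \frac{C}{1+2d}\,L^{-2}L^{1+2d}\le C\,L^{-1+2d}.
\]

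\textbf{Low frequencies.} For $|\omega|\le L^{-1}$, the series defining $H_{>L}$ converges only conditionally, and the relevant bound is logarithmic. I would write $H_{>L}(\omega)=-\log(1-e^{-i\omega})-\sum_{j\le L}j^{-1}e^{-ij\omega}$. Then $|e^{-ij\omega}-1|\le j|\omega|\le 1$ for $j\le L$, so $\sum_{j\le L}j^{-1}|e^{-ij\omega}-1|\le1$. Also $-\log|1-e^{-i\omega}|=\log(1/|\omega|)+O(1)$ and the harmonic sum is $\sum_{j\le L}j^{-1}=\log L+O(1)$. Putting these together gives
\[
 |H_{>L}(\omega)|\le \log\frac{1}{L|\omega|}+C.
\]
Substituting $\omega=x/L$, the low-frequency part becomes
\[
 L^{-1}L^{2d}\int_0^1\bigl(C+\log(1/x)\bigr)^2x^{-2d}\dd x .
\]
The integral is bounded uniformly for $0\le d\le d_0<1/2$, since $x^{-2d}\le x^{-2d_0}$ is integrable against the squared logarithm. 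This piece is therefore also at most $C_{d_0}L^{-1+2d}$.

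Adding the two pieces gives \eqref{eq:harmonic-tail}. The case $d=0$ is included, because every constant above was taken uniformly over $[0,d_0]$. The step needing the most care is the low-frequency estimate: the $L^{2d}$ factor must come only from the scaling substitution, and the logarithmic singularity of $H_{>L}$ must remain integrable with a $d_0$-only constant.
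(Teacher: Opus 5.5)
Your proof is correct and follows the paper's argument: Parseval, a split at $|\omega|=L^{-1}$, Abel summation giving $|H_{>L}(\omega)|\lesssim 1/(L|\omega|)$ at high frequency, and a logarithmic bound $C+\log\{1/(L|\omega|)\}$ at low frequency followed by rescaling with $d_0$-uniform constants. The only difference is in how you get the low-frequency bound: you subtract the partial harmonic sum from the closed form $-\log(1-e^{-i\omega})$, whereas the paper compares the sum with $\int_L^{1/|\omega|}\dd u/u$ directly.
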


\begin{proof}
The transfer function of the truncated harmonic tail is $R_L(\omega)=\sum_{j>L}e^{-ij\omega}/j$, so
$\|I_dh^{>L}\|_2^2=(2\pi)^{-1}\int_{-\pi}^{\pi}|R_L(\omega)|^2|1-e^{-i\omega}|^{-2d}\dd\omega$.
Two elementary bounds on $R_L$ suffice.
For $|\omega|\le L^{-1}$, comparing the sum with $\int_L^{1/|\omega|}\dd u/u$ gives $|R_L(\omega)|\lesssim1+\log\{1/(L|\omega|)\}$.
For $|\omega|>L^{-1}$, Abel summation against the partial sums of $e^{-ij\omega}$, which are bounded by $1/|\sin(\omega/2)|\lesssim1/|\omega|$, gives $|R_L(\omega)|\lesssim1/(L|\omega|)$.
Since $|1-e^{-i\omega}|^{-2d}\asymp|\omega|^{-2d}$ and $0\le d\le d_0<1/2$, splitting the integral at $|\omega|=L^{-1}$ gives
\[
 \int_{|\omega|\le L^{-1}}\!\!\bigl\{1+\log\tfrac1{L|\omega|}\bigr\}^2|\omega|^{-2d}\dd\omega\ \lesssim\ L^{-1+2d},
 \qquad
 \int_{|\omega|>L^{-1}}\!\!\frac{|\omega|^{-2d}}{L^2\omega^2}\dd\omega\ \lesssim\ L^{-1+2d},
\]
the second because $2+2d>1$ makes the integral converge at infinity of the rescaled variable and the endpoint contributes $L^{-1+2d}$.
Both constants depend only on $d_0$, which gives \eqref{eq:harmonic-tail}.
\end{proof}

Uniformity in $d$ down to zero is the point.
A bound with a constant blowing up as $d\to0$ would be useless in the boundary layer, since that is exactly the regime the theorem is about.

\subsection{Proof of Thm.~\ref{thm:small-d}}
\begin{proof}[Proof of Thm.~\ref{thm:small-d}]
If the displayed $\tau$-window is empty there is nothing to prove; for every fixed $d_0<1/2$ it is nonempty for all sufficiently small $d$ after the constants are fixed as below.
The whole point of the statement is uniformity as $d\to0$, so the compression step must not be applied to the fractional kernel itself.
Corollary~\ref{fact:frac-compression} has a constant $C_d$ that is not controlled as $d\to0$, and using it here would leave a gap.
Instead we compress the \emph{harmonic} kernel $h_j=1/j$, which is the case $\alpha=1$ of Thm.~\ref{thm:compression} and therefore carries a constant that does not depend on $d$ at all.

Fix $0<d\le d_0$ and $\tau$ in the stated window $C_0\sigma^2d^4\le\tau\le c_0\sigma^2d^2$, and write $\bar\tau=\tau/\sigma^2$.
By Thm.~\ref{thm:compression} with $\alpha=1$ there is, for every $L\ge2$ and $0<\eta\le e^{-1}$, a positive $R$-mode exponential sequence $s_R$ with $|j^{-1}-s_R(j)|\le\eta j^{-1}$ for $1\le j\le L$ and
\[
 R\le C\bigl(1+\log\eta^{-1}\bigr)\bigl(1+\log L+\log\eta^{-1}+\log\log\eta^{-1}\bigr),
\]
the constant $C$ being absolute because the power is fixed.
Put $b_j=d\,s_R(j)\mathbf{1}\{j\le L\}$, which lies in $\cA_R$, and decompose
\begin{equation}
 a(d)-b=\{a(d)-dh\}+d\{h-s_R\}\mathbf{1}\{j\le L\}+d\,h^{>L}.
 \label{eq:small-d-decomposition}
\end{equation}

The first term is Lem.~\ref{lem:linearization} and contributes at most $C_{d_0}\sigma^2d^4$.
For the second, the relative bound gives $\|d(h-s_R)\mathbf{1}\{j\le L\}\|_1\le Cd\eta\log(eL)$, and since the fractional-integration kernel satisfies
\[
 \sup_{0\le d\le d_0}\ \frac1{2\pi}\int_{-\pi}^{\pi}|1-e^{-i\omega}|^{-2d}\dd\omega<\infty ,
\]
Young's inequality for the convolution $\ell_1*\ell_2\to\ell_2$ gives
$\|I_d[d(h-s_R)\mathbf{1}\{j\le L\}]\|_2^2\le C_{d_0}d^2\eta^2\log^2(eL)$.
The third term is Lem.~\ref{lem:harmonic-tail}.
Combining the three through $(x+y+z)^2\le3(x^2+y^2+z^2)$ in the exact geometry of Thm.~\ref{thm:fractional-geometry} gives
\begin{equation}
 \cE_d(b)\le C_{d_0}\sigma^2\bigl\{d^4+d^2\eta^2\log^2(eL)+d^2L^{-1+2d}\bigr\}.
 \label{eq:small-d-master}
\end{equation}

Enlarging $C_0$ makes the first term at most $\tau/3$.
Choosing
\[
 L=\Bigl\lceil\bigl(C_1d^2/\bar\tau\bigr)^{1/(1-2d)}\Bigr\rceil,
 \qquad
 \eta=c_1\frac{\sqrt{\bar\tau}}{d\log(eL)},
\]
makes the remaining two terms at most $\tau/3$ each, and the exponent $1/(1-2d)$ is bounded by $1/(1-2d_0)$ so that $\log L\le C_{d_0}\log(\sigma^2d^2/\tau)$ and likewise $\log\eta^{-1}\le C_{d_0}\log(\sigma^2d^2/\tau)$.
Substituting into the displayed bound on $R$ gives \eqref{eq:small-d}.
By choosing $c_0,c_1$ sufficiently small and $C_0,C_1$ sufficiently large, the displayed choices satisfy $L\ge2$ and $0<\eta\le e^{-1}$ throughout the stated window, so Thm.~\ref{thm:compression} is applicable as invoked.
Every constant that appears depends only on $d_0$, since the compression constant is absolute and the two fractional inputs were bounded uniformly over $0\le d\le d_0$.
\end{proof}

The mechanism is that the relevant scale is not $\tau$ but the ratio $\sigma^2d^2/\tau$, because the memory to be represented has total predictive mass of order $\sigma^2d^2$ rather than order one.
This is why the bound degenerates to a constant as $\tau$ approaches $\sigma^2d^2$ from below, matching Prop.~\ref{prop:zero-memory}.

\subsection{Remaining double-scaling problem}
Two questions in the boundary layer are open.
Neither concerns the pole location, which is a separate irregularity treated in Thm.~\ref{thm:cusp} and is not a small-$d$ phenomenon.

There is no matching uniform lower law in the window.
The upper bound of Thm.~\ref{thm:small-d} is constructive, and the lower bound of Thm.~\ref{thm:frac-lower} holds for fixed $d$ with constants that degrade as $d\to0$, so the two do not currently meet uniformly.
Establishing a lower bound with the same $\log^2(\sigma^2d^2/\tau)$ form would require a Hankel argument uniform in the double limit, and the finite-section estimate in Lem.~\ref{lem:frac-section} is not uniform in $d$ near zero, since its constants degrade as $d\to0$.

The deeper regime $\tau\ll\sigma^2d^4$ is not covered at all.
There the harmonic linearization of Lem.~\ref{lem:linearization} is no longer accurate enough, so the construction must use the exact fractional kernel, and the resulting head count is not known.
We state both gaps explicitly rather than absorbing them into the constants, because the interesting behaviour of the short-memory boundary lives precisely where they bite.

\section{Nonlinear Contextual Recurrent Memory}
\label{app:stability}

\subsection{Model class and stationary solution}
Let $Z_t$ be independent standard Gaussians and consider the contextual recurrence
\begin{equation}
 S_t=F(S_{t-1},Z_t),
 \qquad
 \widehat m_t=G(S_t,Z_t),
 \label{eq:app-recurrence}
\end{equation}
with $S_t\in\R^{m}$ and $F,G$ deterministic and measurable.
No linearity in the state is assumed, which is what distinguishes this class from the affine gating benchmark of App.~\ref{app:state}.

Assume there is a closed forward-invariant state domain $\mathcal B\subset\R^m$ such that $F(\mathcal B,z)\subseteq\mathcal B$ for every $z$. Under the contraction hypothesis below the map $s\mapsto F(s,z)$ is a uniform contraction on $\mathcal B$ for every $z$, so for each realization of the input the backward iteration from any $s_0\in\mathcal B$ converges and there is a unique stationary solution in $\mathcal B$
\begin{equation}
 S_t=\lim_{k\to\infty}F(\cdot,Z_t)\circ F(\cdot,Z_{t-1})\circ\cdots\circ F(s_0,Z_{t-k}) ,
 \label{eq:app-backward}
\end{equation}
independent of $s_0$, measurable with respect to $\sigma(Z_s:s\le t)$, and stationary because the input is.
All expectations below are taken under this stationary law.

\subsection{Differentiability and integrability assumptions}
We assume $F$ and $G$ are continuously differentiable in both arguments and impose the derivative budgets of Sec.~\ref{sec:stability},
\begin{equation}
 \|D_sF(s,z)\|_{\rm op}\le\rho<1,
 \qquad
 \|D_zF(s,z)\|\le K,
 \qquad
 \|D_sG(s,z)\|\le M ,
 \label{eq:app-budgets}
\end{equation}
uniformly for $(s,z)\in\mathcal B\times\R$.
We also assume enough integrability for the Gaussian integration by parts below to apply, namely that $\widehat m_t$ and the partial derivatives appearing in \eqref{eq:app-stein} are square integrable and that the interchange of differentiation and expectation is justified by domination on compacts.
This is stronger than strictly necessary and is imposed to keep every interchange explicit.

The uniform contraction in \eqref{eq:app-budgets} is the substantive hypothesis.
It is what makes the stationary solution exist, and it is also what will force the geometric envelope, so the theorem should be read as a statement about the uniformly contractive regime and not about recurrent models generally.

\subsection{Gaussian Stein identity}
The tool is the Gaussian integration-by-parts identity \citep{stein1981normal}.

\begin{lemma}[Gaussian integration by parts]
\label{lem:app-stein}
Let $Z\sim N(0,1)$ and let $\varphi$ be absolutely continuous with $\E|\varphi'(Z)|<\infty$.
Then $\E[Z\varphi(Z)]=\E[\varphi'(Z)]$.
The same identity holds coordinatewise for a function of several independent standard Gaussians, differentiating in the coordinate being multiplied.
\end{lemma}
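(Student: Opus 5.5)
The plan is the classical one-dimensional argument by integration by parts against the Gaussian density $\phi(x)=(2\pi)^{-1/2}e^{-x^2/2}$, followed by a routine extension to several coordinates. The key identity is $\phi'(x)=-x\phi(x)$. It lets me write
\[
 x\phi(x)=\int_x^\infty y\phi(y)\dd y
 \qquad\text{and}\qquad
 x\phi(x)=-\int_{-\infty}^x y\phi(y)\dd y ,
\]
the two forms being used for $x>0$ and $x<0$ respectively.

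First I normalise: subtracting the constant $\varphi(0)$ changes neither side, since $\E Z=0$, so I may assume $\varphi(0)=0$. Absolute continuity then gives $\varphi(x)=\int_0^x\varphi'(y)\dd y$.

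Next I split
\[
 \E[Z\varphi(Z)]=\int_0^\infty x\varphi(x)\phi(x)\dd x+\int_{-\infty}^0 x\varphi(x)\phi(x)\dd x .
\]
In the positive half I substitute $\varphi(x)=\int_0^x\varphi'(y)\dd y$ and exchange the order of integration. The inner integral becomes $\int_y^\infty x\phi(x)\dd x=\phi(y)$, so the positive half equals $\int_0^\infty\varphi'(y)\phi(y)\dd y$. The negative half is handled symmetrically and gives $\int_{-\infty}^0\varphi'(y)\phi(y)\dd y$. Adding the two halves yields $\E[\varphi'(Z)]$.

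The only step that needs care is the exchange of integrals, which is justified by Tonelli. Applied to $|\varphi'(y)|\,x\phi(x)$ on the region $0\le y\le x$, it produces exactly $\int_0^\infty|\varphi'(y)|\phi(y)\dd y\le\E|\varphi'(Z)|<\infty$, so Fubini applies. This also shows that $\E|Z\varphi(Z)|<\infty$, so the left side is well defined without any extra hypothesis. I expect this integrability bookkeeping to be the main, mild obstacle, and it is fully resolved by the hypothesis $\E|\varphi'(Z)|<\infty$.

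For the multivariate statement, let $\varphi$ depend on independent standard Gaussians $(Z_1,\dots,Z_n)$, and multiply by $Z_k$. I condition on the other coordinates: by independence the law of $Z_k$ given the rest is still $N(0,1)$. For almost every value of the other coordinates, the section $x\mapsto\varphi(\dots,x,\dots)$ is absolutely continuous with $\E|\partial_k\varphi|<\infty$ on that section, by Fubini applied to $\E|\partial_k\varphi(Z)|<\infty$. Applying the one-dimensional identity to each such section and integrating over the remaining coordinates, again by Fubini, gives $\E[Z_k\varphi(Z)]=\E[\partial_k\varphi(Z)]$.

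This coordinatewise form is exactly what is used later, with $\varphi$ taken to be $\widehat m_t$ regarded as a function of the past innovations and $k$ the index of the lag $t+1-j$. The integrability assumptions stated in App.~\ref{app:stability} are the hypotheses that make it applicable there.
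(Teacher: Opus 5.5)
Your argument is correct, and it takes a different route from the paper. The paper integrates by parts directly,
$\int z\varphi\phi=-\int\varphi\phi'=[-\varphi\phi]_{-\infty}^{\infty}+\int\varphi'\phi$,
and asserts that the boundary term vanishes by the integrability hypothesis. You instead write $\varphi(x)-\varphi(0)=\int_0^x\varphi'$ and exchange integrals using $\int_y^\infty x\phi(x)\dd x=\phi(y)$, so no boundary term ever appears.

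The paper's version is shorter, but it leaves two points implicit. The first is that $\varphi(x)\phi(x)\to0$ at $\pm\infty$ under only $\E|\varphi'(Z)|<\infty$. This is true, for example via $|\varphi(x)-\varphi(0)|\phi(x)\le\int_0^x|\varphi'(y)|\phi(y)\,\tfrac{\phi(x)}{\phi(y)}\dd y$ and dominated convergence. The second is that $Z\varphi(Z)$ is absolutely integrable: integrating by parts on $[-a,b]$ and letting $a,b\to\infty$ only identifies an improper integral. Your Tonelli step settles both at once, and yields $\E|Z\{\varphi(Z)-\varphi(0)\}|\le\E|\varphi'(Z)|$ as a byproduct. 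Your conditioning argument for several coordinates is the same device the paper uses when it applies the lemma in the proof of Thm.~\ref{thm:first-chaos}.

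Two small points. First, your opening display has a slip. The identities are $\phi(x)=\int_x^\infty y\phi(y)\dd y=-\int_{-\infty}^x y\phi(y)\dd y$, with $\phi(x)$ on the left rather than $x\phi(x)$; the computation that follows uses the correct form.

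Second, in the multivariate step, absolute continuity of the sections is a hypothesis and does not come from Fubini. Also, the final Fubini exchange needs $\E|Z_k\varphi(Z)|<\infty$, which does not follow from $\E|\partial_k\varphi(Z)|<\infty$ alone. For example, take $\varphi(z_1,z_2)=g(z_2)$ with $g(Z_2)$ not integrable and $k=1$. In the paper's application, the $C^1$ assumption supplies the first condition, and square integrability of $\widehat m_t$ with Cauchy--Schwarz supplies the second.
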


\begin{proof}
Integrate by parts against the standard Gaussian density $\phi$, using $\phi'(z)=-z\phi(z)$ and the vanishing of $\varphi\phi$ at infinity implied by the integrability hypothesis.
\end{proof}

Applying Lem.~\ref{lem:app-stein} to the lag-$j$ first-chaos coefficient of \eqref{eq:app-first-chaos} gives
\begin{equation}
 \beta_j=\E[\widehat m_tZ_{t+1-j}]
 =\E\Bigl[\frac{\partial \widehat m_t}{\partial Z_{t+1-j}}\Bigr] ,
 \label{eq:app-stein}
\end{equation}
so the first-chaos memory is an averaged sensitivity of the output to an input $j-1$ steps earlier.
This is the step that converts a statement about correlations into a statement about derivatives, and it is where the budgets in \eqref{eq:app-budgets} become usable.

\subsection{Derivative propagation through the recurrent state}
Fix $j\ge2$ and set $s=t+1-j$.
Causality means that $Z_s$ influences $\widehat m_t$ only through the state, entering first through $D_zF(S_{s-1},Z_s)$ and then propagating from $S_s$ to $S_t$.
Repeated application of the chain rule gives
\begin{equation}
 D_{Z_s}\widehat m_t
 =D_sG(S_t,Z_t)\Bigl\{\prod_{u=s+1}^{t}D_sF(S_{u-1},Z_u)\Bigr\}D_zF(S_{s-1},Z_s) ,
 \label{eq:app-chain}
\end{equation}
the product being ordered chronologically from right to left.
The index $u$ runs from $s+1$ to $t$, so there are exactly $t-s=j-1$ factors of $D_sF$.
Taking operator norms and applying \eqref{eq:app-budgets} to each factor gives the pointwise bound
\begin{equation}
 |D_{Z_s}\widehat m_t|\le MK\rho^{\,j-1}
 \qquad(j\ge2).
 \label{eq:app-pointwise}
\end{equation}
Each additional lag inserts one more factor $D_sF$, and each such factor is bounded by $\rho<1$, so the sensitivity decays geometrically at exactly the contraction rate.

\subsection{Proof of Thm.~\ref{thm:first-chaos}}
\begin{proof}[Proof of Thm.~\ref{thm:first-chaos}]
Fix $j\ge2$.
Conditioning on all Gaussian coordinates except $Z_s$ and applying Lem.~\ref{lem:app-stein} in that coordinate gives $\beta_j=\E[D_{Z_s}\widehat m_t]$, the interchange being justified by the integrability assumed above and by the pointwise bound \eqref{eq:app-pointwise}.
Since that bound holds uniformly in the realization, taking expectations preserves it and gives $|\beta_j|\le MK\rho^{\,j-1}$, which is \eqref{eq:first-chaos}.

The statement is deliberately restricted to $j\ge2$.
At $j=1$ the input $Z_t$ also enters the readout directly through $G$, and controlling that contribution would require a bound on $D_zG$ that is not among the budgets \eqref{eq:app-budgets}.
Nothing downstream needs it, since the algebraic tail argument of Lem.~\ref{lem:app-tail} uses only lags beyond a crossing index $J\ge2$.

The second claim is \eqref{eq:app-chaos-lower} applied to the target $m_t^\star=\sum_{j\ge1}cj^{-\alpha}Z_{t+1-j}$ with $\alpha>1$, which lies in the first chaos and is square integrable because $\alpha>1/2$.
Orthogonality of the Wiener chaoses gives $\cE(\widehat m)\ge\sum_{j\ge1}(cj^{-\alpha}-\beta_j)^2$, which is \eqref{eq:chaos-lower}.
\end{proof}

The theorem does not assert that the state has finite Hankel rank, and indeed a nonlinear recurrence need not produce a finite-rank linear memory.
What it asserts is that whatever linear memory the recurrence does produce is dominated by a geometric envelope set by the contraction, and \eqref{eq:app-chaos-lower} makes that sufficient for a risk lower bound.

\subsection{Algebraic first-chaos tail lower bound}
The envelope and the target cross at a lag determined by $\rho$, and beyond that lag the target dominates.

\begin{lemma}[Tail below the crossing lag]
\label{lem:app-tail}
Let $J\ge2$ be the smallest integer with $MK\rho^{\,j-1}\le\tfrac12cj^{-\alpha}$ for all $j\ge J$.
Then
\begin{equation}
 \cE(\widehat m)\ \ge\ \sum_{j\ge J}\bigl(cj^{-\alpha}-\beta_j\bigr)^2
 \ \ge\ \frac{c^2}{4}\sum_{j\ge J}j^{-2\alpha}
 \ \ge\ \frac{c^2}{4(2\alpha-1)}J^{1-2\alpha} .
 \label{eq:app-tail}
\end{equation}
\end{lemma}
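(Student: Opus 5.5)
The plan is to chain three elementary inequalities, each licensed by an earlier result, after checking that $J$ is well defined.

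\emph{Well-posedness.} Since $\rho<1$, the ratio $MK\rho^{j-1}/(\tfrac12 cj^{-\alpha})=(2MK/c)\,j^{\alpha}\rho^{j-1}$ tends to zero as $j\to\infty$. Hence some index $j_0$ exists beyond which the inequality $MK\rho^{j-1}\le\tfrac12 cj^{-\alpha}$ holds for every $j$. Starting from $j_0\ge2$ and taking the smallest integer with the ``for all $j\ge J$'' property gives a finite $J\ge2$.

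\emph{First inequality.} I would start from the chaos lower bound \eqref{eq:chaos-lower} of Thm.~\ref{thm:first-chaos}. It writes $\cE(\widehat m)$ as at least a sum of nonnegative squares over $j\ge1$, so discarding the terms $j<J$ can only decrease the right-hand side. This step needs nothing beyond orthogonality in the first Wiener chaos, which \eqref{eq:app-chaos-lower} already supplies.

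\emph{Second inequality.} For every $j\ge J$ we have $j\ge2$, so the envelope \eqref{eq:first-chaos} applies and gives $|\beta_j|\le MK\rho^{j-1}\le\tfrac12 cj^{-\alpha}$ by the definition of $J$. Because $c>0$, this yields $cj^{-\alpha}-\beta_j\ge cj^{-\alpha}-|\beta_j|\ge\tfrac12 cj^{-\alpha}\ge0$. Squaring is therefore monotone, and each term is at least $\tfrac{c^2}{4}j^{-2\alpha}$. The point to be careful about is the sign: the target is positive, so no cancellation with $\beta_j$ can reduce the term below this.

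\emph{Third inequality.} This is an integral comparison. The function $x\mapsto x^{-2\alpha}$ is decreasing, so $j^{-2\alpha}\ge\int_j^{j+1}x^{-2\alpha}\,\mathrm dx$. Summing over $j\ge J$ gives $\sum_{j\ge J}j^{-2\alpha}\ge\int_J^\infty x^{-2\alpha}\,\mathrm dx=J^{1-2\alpha}/(2\alpha-1)$, where convergence uses $2\alpha>1$, which holds since $\alpha>1$.

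I do not expect any real obstacle. The only steps that need care are confirming that $J\ge2$, so that the envelope, which excludes lag one, is available, and keeping the sign argument explicit in the second step.
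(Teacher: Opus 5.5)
Your proposal is correct and follows the paper's argument step for step. You drop the nonnegative $j<J$ terms from the first-chaos bound \eqref{eq:chaos-lower}; you use the envelope, which is available because $J\ge2$, to get $|cj^{-\alpha}-\beta_j|\ge\tfrac12cj^{-\alpha}$; and you finish with the same integral comparison $\sum_{j\ge J}j^{-2\alpha}\ge\int_J^\infty x^{-2\alpha}\,\mathrm dx$. Your explicit check that $J$ is finite is a small addition the paper leaves implicit, and your care about the sign of the target is harmless but not needed, since the reverse triangle inequality already gives the bound.
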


\begin{proof}
For $j\ge J$ the envelope gives $|\beta_j|\le\tfrac12cj^{-\alpha}$, hence $|cj^{-\alpha}-\beta_j|\ge\tfrac12cj^{-\alpha}$, and squaring and summing gives the middle inequality.
The last step is the integral comparison $\sum_{j\ge J}j^{-2\alpha}\ge\int_J^\infty x^{-2\alpha}\dd x=J^{1-2\alpha}/(2\alpha-1)$, which converges because $\alpha>1/2$.
\end{proof}

Using the entire tail rather than a single coefficient is what produces the exponent $2\alpha-1$ in Corollary~\ref{cor:critical}.
A weaker argument that bounds the risk below by the single term at the crossing lag would give $J^{-2\alpha}$ instead of $J^{1-2\alpha}$ and hence the exponent $2\alpha$ in place of $2\alpha-1$, which is a strictly weaker necessary condition.

\subsection{Sharp inversion to the near-criticality law}
Lem.~\ref{lem:app-tail} expresses the risk through the crossing lag $J$, so it remains to express $J$ through $\delta=-\log\rho$.
The defining inequality $MK e^{-\delta(j-1)}\le\tfrac12cj^{-\alpha}$ holds as soon as $\delta(j-1)\ge\log(2MK/c)+\alpha\log j$, and solving for the smallest such integer gives
\begin{equation}
 J\ \le\ \left\lceil\frac{C_0}{\delta}\log\frac e\delta\right\rceil
 \label{eq:app-J}
\end{equation}
for small $\delta$, with $C_0$ depending only on $(\alpha,c,MK)$.
The logarithmic factor is unavoidable and comes from the $\alpha\log j$ term, that is from the target being algebraic rather than bounded.

\subsection{Proof of Corollary~\ref{cor:critical}}
\begin{proof}[Proof of Corollary~\ref{cor:critical}]
Insert \eqref{eq:app-J} into \eqref{eq:app-tail}.
Since $1-2\alpha<0$ and, for small $\delta$, the ceiling in \eqref{eq:app-J} is at most $2C_0\delta^{-1}\log(e/\delta)$, we get
\[
 \cE(\widehat m)\ \ge\ \frac{c^2}{4(2\alpha-1)}\Bigl(\frac{2C_0}{\delta}\log\frac e\delta\Bigr)^{1-2\alpha}
 \ =\ C\Bigl[\frac{\delta}{\log(e/\delta)}\Bigr]^{2\alpha-1} ,
\]
which is \eqref{eq:critical-floor}.
For \eqref{eq:critical-rate}, suppose $\cE(\widehat m)\le\tau$.
Then $[\delta/\log(e/\delta)]^{2\alpha-1}\le\tau/C$, so with $a:=(\tau/C)^{1/(2\alpha-1)}$ we have $\delta\le a\log(e/\delta)$.
Split into two cases.
If $\delta\le a$ the desired bound holds already, since $\log(e/\tau)\ge1$.
If $\delta>a$ then $\log(e/\delta)<\log(e/a)$, so $\delta\le a\log(e/a)$, and substituting $a=(\tau/C)^{1/(2\alpha-1)}$ gives $\log(e/a)\le C''\log(e/\tau)$ for small $\tau$.
In both cases $\delta\le C'\tau^{1/(2\alpha-1)}\log(e/\tau)$.
\end{proof}

\subsection{Why the condition is necessary but not sufficient}
Corollary~\ref{cor:critical} is a one-directional implication.
It says that a uniformly contractive recurrence achieving risk $\tau$ must have contraction margin at most of order $\tau^{1/(2\alpha-1)}\log(e/\tau)$, and it says nothing about whether a recurrence with that margin achieves that risk.

The gap is genuine and has two sources.
The envelope of Thm.~\ref{thm:first-chaos} is an upper bound on $|\beta_j|$, so a model can have a small margin and still fail to place its first-chaos mass where the target does, since nothing forces the sensitivities to have the right signs or the right profile.
Separately, even a model whose first-chaos memory matches the target exactly can carry higher-chaos mass, which \eqref{eq:app-chaos-lower} shows can only add to the risk.
The stability sweep of Sec.~\ref{sec:experiments} is consistent with this gap: the diagonal model keeps improving as $\rho$ approaches one while the gated model retains a residual floor. We use that floor only as an illustration and do not attribute it uniquely to first-chaos profile mismatch, since higher-chaos variance can contribute as well.

\subsection{Scope beyond uniform contraction}
Three restrictions bound what this section covers.

The white-noise input is deliberate: Thm.~\ref{thm:first-chaos} isolates memory generated internally by the recurrence. A recurrence driven directly by an autocorrelated observed process can inherit long memory through the input stream, so the same innovation-sensitivity envelope does not follow from this argument.

The contraction is uniform on the forward-invariant domain $\mathcal B$ in $(s,z)$, so a model whose Jacobian is contractive only on average along the stationary trajectory but not pointwise on such a domain is outside the class.
This is not a vacuous distinction, since selective architectures set their effective decay from the input and can be far from contractive on some inputs while remaining stable in aggregate.

The budgets in \eqref{eq:app-budgets} are constants, so a model whose derivative bounds grow with width evades the conclusion by a route that has nothing to do with memory.
The experiments in App.~\ref{app:experiments} enforce width-independent budgets for exactly this reason.

Finally the bound uses only the first chaos, so it is insensitive to how the predictor uses higher-order interactions of the input.
A target that is itself nonlinear in the input would need a different projection, and the argument as written applies to targets in the first chaos.

\section{Information Theory for Singular Forecasting}
\label{app:information}

\subsection{Conditional trajectory KL identity}
Let $P_{\theta,T}(\cdot\mid\mathcal F_0)$ denote the law of $X_1,\dots,X_T$ under a Gaussian causal model with memory $a(\theta)$ and innovation variance $\sigma^2$, conditioned on the infinite past.

\begin{proof}[Proof of Prop.~\ref{prop:kl-risk}]
Under $P_{\theta,T}(\cdot\mid\mathcal F_0)$ the conditional law of $X_t$ given $\mathcal F_{t-1}$ is $N(m_{t-1}(a(\theta)),\sigma^2)$, and likewise for $\vartheta$.
The chain rule for relative entropy over the filtration gives
\[
 \begin{aligned}
 &\KL\bigl(P_{\theta,T}(\cdot\mid\mathcal F_0)\,\|\,P_{\vartheta,T}(\cdot\mid\mathcal F_0)\bigr)\\
 &\qquad=\sum_{t=1}^{T}\E_\theta\Bigl[\KL\bigl(N(m_{t-1}(a(\theta)),\sigma^2)\,\|\,N(m_{t-1}(a(\vartheta)),\sigma^2)\bigr)\Bigm|\mathcal F_0\Bigr].
 \end{aligned}
\]
For two Gaussians with common variance the relative entropy is half the squared mean difference divided by $\sigma^2$, so each summand is $\E_\theta[(m_{t-1}(a(\theta))-m_{t-1}(a(\vartheta)))^2\mid\mathcal F_0]/(2\sigma^2)$.
Taking the outer expectation and using stationarity makes every term equal to $\cE_\theta(a(\vartheta))/(2\sigma^2)$ by \eqref{eq:app-risk-identity}, and summing over $t$ gives \eqref{eq:kl-risk}.
\end{proof}

No upper spectral bound is used, only finiteness of $\cE_\theta(a(\vartheta))$, so the identity holds on the singular spectra of App.~\ref{app:fractional} as well as on regular ones.
This is why the information statements of Sec.~\ref{sec:information} are available in exactly the regime where the norm equivalence of App.~\ref{app:prelim} fails.

\subsection{Stationary entropy-rate identity}
If in addition the stationary relative entropy rate
$\overline D(\theta\|\vartheta)=\lim_TT^{-1}\KL(P_\theta^{(T)}\|P_\vartheta^{(T)})$
exists and coincides with the infinite-past conditional rate, then dividing \eqref{eq:kl-risk} by $T$ gives
\begin{equation}
 \overline D(\theta\|\vartheta)=\frac1{2\sigma^2}\cE_\theta(a(\vartheta)) .
 \label{eq:entropy-risk}
\end{equation}
The two hypotheses are separate.
The conditional identity is exact for every finite $T$, while \eqref{eq:entropy-risk} additionally requires that conditioning on the infinite past does not change the rate, which is where the finite-section corrections of the rest of this appendix enter.

Two consequences are used in App.~\ref{app:fractional}.
For fractional noise, \eqref{eq:entropy-risk} together with Thm.~\ref{thm:d-misspec} gives $\overline D(d\|d+\Delta)=\tfrac12\{\Gamma(1+2\Delta)/\Gamma(1+\Delta)^2-1\}$, which is $\pi^2\Delta^2/12+O(|\Delta|^3)$.
If two models share a singular spectral factor and differ only in a regular one, that factor cancels in the likelihood ratio, so \eqref{eq:entropy-risk} applies to the ratio of the regular parts alone and a known common singularity contributes nothing to distinguishability.

\subsection{Conditional score and predictive curvature}
Suppose the predictor is $L_2$-differentiable at an interior $\theta_0$, meaning
$m_{t-1}(\theta_0+u)-m_{t-1}(\theta_0)=u^\top\dot m_{t-1}+r_{t-1}(u)$ with $\E r_{t-1}(u)^2=o(\|u\|^2)$.
Differentiating the conditional Gaussian log-likelihood gives the conditional score
\begin{equation}
 s_t(\theta_0)=\frac{\eps_t}{\sigma^2}\dot m_{t-1} ,
 \label{eq:cond-score}
\end{equation}
which is a square-integrable martingale difference because $\eps_t$ is conditionally centered and $\dot m_{t-1}$ is $\mathcal F_{t-1}$-measurable.
Its per-observation information matrix is $I^{\rm cond}_{\theta_0}=G_{\theta_0}/\sigma^2$ with $G_{\theta_0}=\E[\dot m_{t-1}\dot m_{t-1}^\top]$ the forecast curvature.

The same curvature appears on both sides of the theory, since $\cE_{\theta_0}(a(\theta_0+u))=u^\top G_{\theta_0}u+o(\|u\|^2)$ by \eqref{eq:app-risk-identity}.
Information and forecast risk are therefore not merely proportional in the aggregate identity \eqref{eq:entropy-risk} but share the same local quadratic form, which is what makes the constant in the next subsection universal.

\subsection{Local asymptotic minimax transfer}
Assume the stationary experiment is locally asymptotically normal at $\theta_0$ with central sequence asymptotically equal to the normalized sum of conditional scores, and that the martingale central limit theorem applies to that sum.
Then the local asymptotic minimax bound for estimating $\theta$, in the standard form for locally asymptotically normal experiments \citep[Ch.~8]{vandervaart1998asymptotic}, transfers to forecast risk through the local quadratic expansion, and for a $p$-dimensional smooth parameter the minimax forecast constant is $p\sigma^2/T$.

For the one-dimensional fractional exponent this gives a concrete number.
The forecast curvature is $\pi^2\sigma^2/6$ and the efficient asymptotic variance is $6/\pi^2$ \citep{lieberman2004fractional,hualde2011fractional}, so an efficient estimator satisfies $T\cE_d(a(\widehat d_T))\Rightarrow\sigma^2\chi^2_1$ and, given uniform integrability, $\E\cE_d(a(\widehat d_T))=\sigma^2/T+o(T^{-1})$.
The curvature and the variance cancel, which is why the answer does not depend on $d$ and why a long-memory exponent costs exactly one parameter's worth of forecast risk.

\subsection{Exact one-pole Fisher--Hartwig determinant}
By the determinant identity of App.~\ref{app:fractional}, $\log\det T_T(f)=\sum_{j<T}\log V_j$, so a finite-section information statement is a Toeplitz determinant asymptotic.
For fractional noise against white noise the symbol has one Fisher--Hartwig singularity of strength $d$, and the classical asymptotic \citep{deift2011fisherhartwig,ehrhardt1997onefh} gives $\log\det T_T=d^2\log T+O(1)$.

\begin{theorem}[Finite-section remainder for fractional versus white noise]
\label{thm:finite-kl}
Fix $0<d_0<1/2$.
For every $|d|\le d_0$ and $T\ge1$,
\begin{equation}
 \KL\bigl(P_d^{(T)}\|P_0^{(T)}\bigr)=T\,\overline D(d\|0)-\tfrac12\log\det T_T(\varphi_d),
 \label{eq:finite-kl-exact}
\end{equation}
with $\overline D(d\|0)=\tfrac12\{\Gamma(1-2d)/\Gamma(1-d)^2-1\}$, and there is $C_{d_0}<\infty$ with
\begin{equation}
 0\le T\,\overline D(d\|0)-\KL\bigl(P_d^{(T)}\|P_0^{(T)}\bigr)\le C_{d_0}d^2\log(eT)
 \label{eq:finite-kl-remainder}
\end{equation}
uniformly over that range.
\end{theorem}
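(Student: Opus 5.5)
The plan is to compute the relative entropy of the two centered Gaussian block laws in closed form, identify the trace term with the entropy rate and the log-determinant term with the Durbin--Levinson product of App.~\ref{app:fractional}, and then bound that product using the exact formula \eqref{eq:VL-exact}. I read $\varphi_d(\omega)=|1-e^{-i\omega}|^{-2d}$ as the normalized fractional symbol. Then $P_d^{(T)}=N(0,\sigma^2T_T(\varphi_d))$ and $P_0^{(T)}=N(0,\sigma^2 I_T)$.

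\textbf{Step 1: the exact identity.} The Gaussian formula
\[
\KL\bigl(N(0,\Sigma_1)\,\|\,N(0,\Sigma_0)\bigr)=\tfrac12\bigl\{\operatorname{tr}(\Sigma_0^{-1}\Sigma_1)-T-\log\det(\Sigma_0^{-1}\Sigma_1)\bigr\}
\]
applies with $\Sigma_0^{-1}\Sigma_1=T_T(\varphi_d)$. The trace is $T$ times the zeroth Fourier coefficient of $\varphi_d$, namely $T\,\Gamma(1-2d)/\Gamma(1-d)^2$, as in Prop.~\ref{prop:zero-memory}. Hence the first two terms give $T\,\overline D(d\|0)$, with $\overline D(d\|0)$ as displayed. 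This agrees with \eqref{eq:entropy-risk} combined with \eqref{eq:zero-memory}. What remains is exactly $-\tfrac12\log\det T_T(\varphi_d)$, which proves \eqref{eq:finite-kl-exact}. Nothing here requires $d>0$: the computation holds for every $|d|<1/2$.

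\textbf{Step 2: rewrite the determinant.} Apply the determinant factorization $D_n=\prod_{j<n}V_j$ of App.~\ref{app:fractional} to the symbol $\varphi_d$, whose innovation variance is $1$. This gives $\log\det T_T(\varphi_d)=\sum_{0\le j<T}\log v_j$, where $v_j:=V_j/\sigma^2$ and $V_j$ is the Bayes prediction variance from $j$ past observations. Since $V_j\ge V_\infty=\sigma^2$, every summand is nonnegative. This gives the lower bound $0$ in \eqref{eq:finite-kl-remainder}.

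\textbf{Step 3: the upper bound.} Use the Durbin--Levinson product with $\phi_{kk}=d/(k-d)$ from \eqref{eq:pacf}, which I expect to remain valid for negative $d$ with $|d|<1/2$; this should be checked, since the cited identity is stated for $0<d<1/2$. Via \eqref{eq:VL-exact} and $v_\infty=1$, it gives
\[
\log v_j=-\sum_{k>j}\log\bigl(1-x_k\bigr),\qquad x_k:=\frac{d^2}{(k-d)^2}.
\]
For $k\ge1$ and $|d|\le d_0<1/2$ we have $k-d\ge k-d_0\ge(1-d_0)k$, so $x_k\le d_0^2/(1-d_0)^2=:x_0<1$. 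The elementary bound $-\log(1-x)\le x/(1-x_0)$ for $x\le x_0$ then yields
\[
\log v_j\le C_{d_0}\,d^2\sum_{k>j}k^{-2}\le C_{d_0}\,\frac{d^2}{j+1}.
\]
Summing over $0\le j<T$ gives at most $C_{d_0}d^2(1+\log T)\le C_{d_0}d^2\log(eT)$. Multiplying by $\tfrac12$ gives the upper bound in \eqref{eq:finite-kl-remainder}.

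The only delicate point is uniformity as $d\to0$ and at the small indices $k$. The $j=0$ term equals $\log\{\Gamma(1-2d)/\Gamma(1-d)^2\}$, and here I would use the tail-sum representation rather than the Gamma expression. That representation makes the factor $d^2$ explicit with a constant depending only on $d_0$, which is what keeps the bound from degrading as $d\to0$.
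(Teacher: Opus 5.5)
Your proposal is correct and follows the paper's own route. You use the Gaussian relative-entropy formula, with the trace term giving $T\,\overline D(d\|0)$. You then write $\log\det T_T(\varphi_d)=\sum_{j<T}\log(V_j/\sigma^2)$ from \eqref{eq:VL-exact}, bound each term by $C_{d_0}d^2/(j+1)$ uniformly in $|d|\le d_0$, and sum the harmonic series.

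The differences are minor. You bound each term through the partial-autocorrelation tail sum $-\sum_{k>j}\log(1-\phi_{kk}^2)$ rather than the paper's trigamma second-difference estimate, which lets you treat $j=0$ without a separate case. You get nonnegativity from $V_j\ge V_\infty=\sigma^2$ rather than from convexity of $\log\Gamma$. The identity $\phi_{kk}=d/(k-d)$ that you flag holds on all of $-1/2<d<1/2$, and the paper relies on it for negative $d$ as well.
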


\begin{proof}[Proof of the exact identity \eqref{eq:finite-kl-exact}]
The two laws are centered Gaussians with covariances $\sigma^2T_T(\varphi_d)$ and $\sigma^2I_T$, so
\[
 \KL\bigl(P_d^{(T)}\|P_0^{(T)}\bigr)
 =\tfrac12\Bigl\{\operatorname{tr}T_T(\varphi_d)-T-\log\det T_T(\varphi_d)\Bigr\}.
\]
Every diagonal entry of a Toeplitz matrix is the zeroth Fourier coefficient of its symbol, so $\operatorname{tr}T_T(\varphi_d)=T\widehat\varphi_d(0)$ and the first two terms equal
\[
 \frac T2\bigl\{\widehat\varphi_d(0)-1\bigr\}=T\,\overline D(d\|0)
\]
by the stationary rate identity \eqref{eq:entropy-risk}.
The remainder bound \eqref{eq:finite-kl-remainder} is proved in App.~\ref{app:information} below.
\end{proof}

The boundary is therefore logarithmic rather than of order $T$, which is the substantive statement.
A singular spectrum could in principle make the finite-section experiment differ from the stationary rate at leading order, and \eqref{eq:finite-kl-remainder} says it does not for a single fractional pole.

\subsection{Trigamma proof of the \texorpdfstring{$O(d^2\log T)$}{2} remainder}
We now prove the remainder bound \eqref{eq:finite-kl-remainder} of Thm.~\ref{thm:finite-kl}.
The uniformity in $d$ is what needs an argument, since a bound of the form $C_d\log T$ with $C_d$ blowing up as $d\to0$ would be useless for the local statement below.

Write the remainder as $\tfrac12\log\det T_T(\varphi_d)$ and use the determinant product together with the exact prediction variances \eqref{eq:VL-exact}, which gives
\[
 \log\det T_T(\varphi_d)=\sum_{j=0}^{T-1}\log\frac{V_j}{\sigma^2}
 =\sum_{j=0}^{T-1}\bigl[\log\Gamma(j+1)+\log\Gamma(j+1-2d)-2\log\Gamma(j+1-d)\bigr].
\]
Each summand is a second difference of $\log\Gamma$ in the parameter, so by Taylor expansion with integral remainder it equals $d^2\psi'(j+1-\xi_j)$ for some $\xi_j$ lying between $0$ and $2d$ (in either order), hence with $|\xi_j|\le2d_0$, where $\psi'$ is the trigamma function.
For $j\ge1$ the argument satisfies $j+1-\xi_j\ge j+1-2d_0\ge\tfrac12$, where $\psi'(x)\le C/x$, so those terms contribute at most $Cd^2\sum_{1\le j<T}(j+1-2d_0)^{-1}\le Cd^2\log(eT)$.
The term $j=0$ needs separate treatment, because its argument $1-\xi_0$ can be as small as $1-2d_0$ and $\psi'(x)\asymp x^{-2}$ near the origin, so the bound $\psi'(x)\le C/x$ is unavailable there.
Since $d\le d_0<1/2$ gives $1-\xi_0\ge1-2d_0>0$, monotonicity of $\psi'$ bounds that single term by $d^2\psi'(1-2d_0)$, a constant depending only on $d_0$, and adding the two contributions gives at most $C_{d_0}d^2\log(eT)$.
It is nonnegative because $\log\Gamma$ is convex, so every second difference is nonnegative.
The constant depends only on $d_0$ through the range of $\xi_j$, which gives the uniformity and covers negative $d$ as well.

\subsection{Direct local information at \texorpdfstring{$d=0$}{d=0}}
\begin{corollary}[Local finite-section information]
\label{cor:local-info}
For fixed $h\in\R$ let $d_T=h/\sqrt T$.
Then $\KL(P_{d_T}^{(T)}\|P_0^{(T)})=\pi^2h^2/12+o(1)$, so the finite stationary experiment recovers the Fisher information $I(0)=\pi^2/6$.
\end{corollary}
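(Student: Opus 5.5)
The plan is to start from the exact identity \eqref{eq:finite-kl-exact} of Thm.~\ref{thm:finite-kl} at $d=d_T=h/\sqrt T$, which splits the divergence into a stationary-rate part and a finite-section remainder:
\[
 \KL\bigl(P_{d_T}^{(T)}\|P_0^{(T)}\bigr)=T\,\overline D(d_T\|0)-\tfrac12\log\det T_T(\varphi_{d_T}).
\]
The two terms are then handled separately. The first term supplies the information constant. The second is shown to vanish.

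\textbf{Step 1 (the main term).} By Thm.~\ref{thm:finite-kl}, $\overline D(d\|0)=\tfrac12\{\Gamma(1-2d)/\Gamma(1-d)^2-1\}$. This bracket is exactly the zero-memory risk of Prop.~\ref{prop:zero-memory} divided by $\sigma^2$. That proposition gives $\pi^2d^2/6+O(d^3)$, so $T\,\overline D(d_T\|0)=\tfrac{T}{2}\bigl\{\tfrac{\pi^2h^2}{6T}+O(|h|^3T^{-3/2})\bigr\}=\pi^2h^2/12+O(T^{-1/2})$. This identifies the constant as $\tfrac12 I(0)h^2$ with $I(0)=\pi^2/6$.

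\textbf{Step 2 (the remainder).} The uniform remainder bound \eqref{eq:finite-kl-remainder} holds for $|d|\le d_0$. Since $d_T\to0$, we have $|d_T|\le d_0$ for large $T$, with the constant $C_{d_0}$ fixed in advance. Hence
\[
 0\le T\,\overline D(d_T\|0)-\KL\bigl(P_{d_T}^{(T)}\|P_0^{(T)}\bigr)\le C_{d_0}\frac{h^2}{T}\log(eT)\to0 .
\]
Combining this with Step 1 gives $\KL\bigl(P_{d_T}^{(T)}\|P_0^{(T)}\bigr)=\pi^2h^2/12+o(1)$. The case $h=0$ is trivial.

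\textbf{Where the difficulty lies.} The only delicate point is the uniformity in $d$ of the remainder constant. A fixed-$d$ Fisher--Hartwig bound of the form $C_d\log T$ would not suffice, because $d$ moves with $T$. The trigamma argument already supplies this uniformity, so the corollary follows directly from it.

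Finally, the statement that the experiment ``recovers'' $I(0)$ is read off from the local quadratic form. A locally normal experiment with information $I$ has divergence $\tfrac12 Ih^2$ at local parameter $h$. Matching this to $\pi^2h^2/12$ gives $I(0)=\pi^2/6$, which is consistent with the forecast curvature $\pi^2\sigma^2/6$ of Thm.~\ref{thm:d-misspec} and the conditional score identity.
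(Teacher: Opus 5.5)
Your proposal is correct and follows the paper's proof: insert $d_T=h/\sqrt T$ into the exact identity \eqref{eq:finite-kl-exact}, expand the stationary-rate term to get $\pi^2h^2/12+O(T^{-1/2})$, and kill the finite-section remainder with the $d$-uniform trigamma bound $C_{d_0}d_T^2\log(eT)=O(T^{-1}\log T)$. One small point: for $h<0$, cite the small-$|d|$ Taylor expansion of $\log\Gamma(1-2d)-2\log\Gamma(1-d)$ directly rather than Prop.~\ref{prop:zero-memory}, since that proposition is stated only for $0<d<1/2$.
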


\begin{proof}
Insert $d_T=h/\sqrt T$ into \eqref{eq:finite-kl-exact}.
The leading term is $T\overline D(d_T\|0)=T\{\pi^2d_T^2/12+O(d_T^3)\}=\pi^2h^2/12+O(T^{-1/2})$, and the remainder is at most $C_{d_0}d_T^2\log(eT)=O(T^{-1}\log T)$ by \eqref{eq:finite-kl-remainder}.
\end{proof}

The local scale $d=h/\sqrt T$ is exactly the one on which the logarithmic remainder is negligible, so the finite stationary block has the same local relative-entropy constant as the stationary conditional experiment.
We do not infer local asymptotic normality from this, since that is a stochastic likelihood-ratio expansion and a limit for the relative entropy alone does not supply a central sequence.
What the calculation shows is that the Fisher--Hartwig boundary does not alter the local information constant on the $T^{-1/2}$ scale.

\subsection{Fixed-\texorpdfstring{$Q$}{Q} moving and merging fractional poles}
The single-pole statement of Thm.~\ref{thm:finite-kl} and Corollary~\ref{cor:local-info} extends to a fixed number of poles whose locations are allowed to depend on $T$ and to merge.

To avoid a collision with the exponential mixing density $w_d(t)$ of App.~\ref{app:fractional}, we write the Toeplitz spectral symbols as
\begin{equation}
 \varphi_d(e^{i\omega})=|1-e^{i\omega}|^{-2d},
 \qquad
 \varphi_{\mathbf d,\mathbf t}(e^{i\omega})=\prod_{q=1}^{Q}|e^{i\omega}-e^{it_q}|^{-2d_q}.
 \label{eq:spectral-symbol}
\end{equation}
For real-valued processes we restrict to reflection-symmetric configurations: whenever $t_q\notin\{0,\pi\}$ occurs with strength $d_q$, the conjugate location $2\pi-t_q$ occurs with the same strength. Then $\varphi_{\mathbf d,\mathbf t}$ is even and $T_T(\varphi_{\mathbf d,\mathbf t})$ is real symmetric. We set $\Sigma_{\mathbf d,\mathbf t,T}=\sigma^2T_T(\varphi_{\mathbf d,\mathbf t})$ and $P^{(T)}_{\mathbf d,\mathbf t}=N(0,\Sigma_{\mathbf d,\mathbf t,T})$, with $P_0^{(T)}=N(0,\sigma^2I_T)$ the white-noise law.

\begin{theorem}[Uniform finite-section expansion for $Q$ moving poles]
\label{thm:fixedQ}
Fix $Q$ and $0<d_\star<1/2$, and fix a strength vector $\mathbf d=(d_1,\dots,d_Q)$ with
\begin{equation}
 d_q\ge0,
 \qquad
 \sum_{q=1}^{Q}d_q\le d_\star<\tfrac12.
 \label{eq:fixedQ-margin}
\end{equation}
Uniformly over distinct reflection-symmetric locations $0\le t_1<\cdots<t_Q<2\pi$, as $T\to\infty$,
\begin{align}
 \KL\bigl(P^{(T)}_{\mathbf d,\mathbf t}\|P_0^{(T)}\bigr)
 &=T\,\overline D(\mathbf d,\mathbf t\|0)-\tfrac12\log\det T_T(\varphi_{\mathbf d,\mathbf t}),
 \label{eq:fixedQ-exact}\\
 \log\det T_T(\varphi_{\mathbf d,\mathbf t})
 &=\Bigl(\sum_{q}d_q^2\Bigr)\log T
 +2\sum_{q<\ell}d_qd_\ell\log\frac1{\sin(|t_q-t_\ell|/2)+T^{-1}}
 +O_{Q,\mathbf d}(1).
 \label{eq:fixedQ-fahs}
\end{align}
\end{theorem}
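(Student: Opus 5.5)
The exact identity \eqref{eq:fixedQ-exact} will be proved exactly as in Thm.~\ref{thm:finite-kl}. The two laws are centered Gaussians with covariances $\sigma^2T_T(\varphi_{\mathbf d,\mathbf t})$ and $\sigma^2I_T$, so
\[
 \KL=\tfrac12\bigl\{\operatorname{tr}T_T(\varphi_{\mathbf d,\mathbf t})-T-\log\det T_T(\varphi_{\mathbf d,\mathbf t})\bigr\}.
\]
The trace equals $T\widehat\varphi_{\mathbf d,\mathbf t}(0)$, because every diagonal entry of a Toeplitz matrix is the zeroth Fourier coefficient of its symbol. The term $\tfrac T2\{\widehat\varphi_{\mathbf d,\mathbf t}(0)-1\}$ is then $T\overline D(\mathbf d,\mathbf t\|0)$ by the stationary rate identity \eqref{eq:entropy-risk}. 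That identity applies here because the symbol is a pure product of Fisher--Hartwig factors with Szeg\H{o} mean $\frac1{2\pi}\int\log\varphi_{\mathbf d,\mathbf t}=0$, so the innovation variance is $\sigma^2$. Finiteness of $\widehat\varphi_{\mathbf d,\mathbf t}(0)$ needs the margin \eqref{eq:fixedQ-margin}: at a merged location the local exponent is $2\sum_{\text{cluster}}d_q\le2d_\star<1$, so the symbol is integrable uniformly in $\mathbf t$.

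The substance is \eqref{eq:fixedQ-fahs}. I would not prove it from scratch. Instead I would invoke the uniform Fisher--Hartwig asymptotics for merging root-type singularities of \citet{claeys2015merging,fahs2021uniform}. Specialized to pure root singularities with no jump parameters and with $V\equiv0$, these give
\[
 \log D_T(\varphi)=\sum_q d_q^2\log T+\sum_{q<\ell}2d_qd_\ell\log\frac1{|e^{it_q}-e^{it_\ell}|+T^{-1}}+\sum_q\log\frac{G(1+d_q)^2}{G(1+2d_q)}+o(1)
\]
uniformly in the locations, where $G$ is the Barnes function. The cross-term form is the one for symbol $|z-z_q|^{2\alpha_q}$ with $\alpha_q=-d_q$; the sign of $2\alpha_q\alpha_\ell$ is positive, which matches the stated $+2d_qd_\ell$.

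Three bookkeeping steps follow.
\begin{enumerate}
\item Replace $|e^{it_q}-e^{it_\ell}|=2\sin(|t_q-t_\ell|/2)$ by $\sin(|t_q-t_\ell|/2)+T^{-1}$. Since $\log(2x+\epsilon)-\log(x+\epsilon)$ lies in $[0,\log2]$, the change is $O_{Q,\mathbf d}(1)$.
\item Absorb the Barnes constants into $O_{Q,\mathbf d}(1)$. This is allowed because the strength vector is fixed.
\item Check that the reflection-symmetric configuration, including poles at $0$ or $\pi$ and conjugate pairs merging into $0$ or $\pi$ as $T\to\infty$, is covered by the uniform theorem.
\end{enumerate}

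The main obstacle is the hypothesis check for the uniform result. Its uniformity region requires each merging cluster to stay inside the admissible Fisher--Hartwig range, i.e.\ $\Re(\text{sum of }\alpha\text{'s in a cluster})>-1/2$. This is exactly what \eqref{eq:fixedQ-margin} provides: every partial cluster sum is at most $d_\star<1/2$. I would verify this for every sub-cluster that can form, including a cluster merging onto its own mirror image at $0$ or $\pi$. That case concatenates the reflection pair into one singularity of strength $2d_q$, which is still covered since $2d_q\le\sum d\le d_\star$ when both members are counted.

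I would also confirm that the error term is $O(1)$ uniformly rather than $o(1)$ only pointwise. The cited results give uniformity over $\mathbf t$ with fixed $Q$ and strengths, so $O_{Q,\mathbf d}(1)$ is what one honestly gets. Consistent with the main text, I would make no claim of uniformity in $\mathbf d$.
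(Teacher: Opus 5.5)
Your proposal is correct and matches the paper's proof: you get \eqref{eq:fixedQ-exact} from the Gaussian trace and log-det identity, and you get the determinant expansion from Fahs's uniform Fisher--Hartwig theorem with $V\equiv0$, $\beta_q=0$, $\alpha_q=-d_q$, where \eqref{eq:fixedQ-margin} together with $d_q\ge0$ gives the condition that every merging cluster has total root exponent above $-1/2$. Two slips in your displayed formula are harmless, since both are absorbed into the $O_{Q,\mathbf d}(1)$ you actually claim: the Barnes factor for a pole of strength $d_q$ is $G(1-d_q)^2/G(1-2d_q)$ rather than $G(1+d_q)^2/G(1+2d_q)$, and in the merging regime the uniform remainder is only $O(1)$, not $o(1)$ with the separated constants.
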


The second term is the interaction between poles, and the regularization by $T^{-1}$ inside the logarithm is what makes the expansion survive merging.
Two poles at separation much larger than $T^{-1}$ interact through $\log(1/\sin(|t_q-t_\ell|/2))$, while two poles at separation much smaller than $T^{-1}$ behave as a single pole of combined strength, and the crossover is at separation of order $T^{-1}$.
This is the same resolution scale that appears in the block prediction information, where the cumulative log-variance over a dyadic block of contexts tends to $4d^2\log2$ when the separation is below the scale and to $2d^2\log2$ when it is above.

\subsection{Uniform Fisher--Hartwig input and the role of the integrability constraint}
Thm.~\ref{thm:fixedQ} consumes a Toeplitz asymptotic that is uniform in the singularity locations rather than a pointwise one, since the pole locations move with $T$.
The input is \citet{fahs2021uniform}, together with the two-singularity merging analysis of \citet{claeys2015merging}.
It is worth stating that input precisely, because the scope of its uniformity is narrower than the phrase \emph{uniform Fisher--Hartwig} might suggest.

\begin{proof}[Proof of Thm.~\ref{thm:fixedQ}]
For the exact identity, $\Sigma_{\mathbf d,\mathbf t,T}=\sigma^2T_T(\varphi_{\mathbf d,\mathbf t})$ and $P_0^{(T)}=N(0,\sigma^2I_T)$, so the finite-dimensional Gaussian relative entropy between two centered laws gives
\[
 \KL\bigl(P^{(T)}_{\mathbf d,\mathbf t}\|P_0^{(T)}\bigr)
 =\tfrac12\Bigl\{\operatorname{tr}T_T(\varphi_{\mathbf d,\mathbf t})-T-\log\det T_T(\varphi_{\mathbf d,\mathbf t})\Bigr\}.
\]
Every diagonal entry of $T_T(\varphi_{\mathbf d,\mathbf t})$ equals $\widehat\varphi_{\mathbf d,\mathbf t}(0)=(2\pi)^{-1}\int_{-\pi}^{\pi}\varphi_{\mathbf d,\mathbf t}(e^{i\omega})\dd\omega$, so $\operatorname{tr}T_T=T\widehat\varphi_{\mathbf d,\mathbf t}(0)$ and the first two terms are $T\,\overline D(\mathbf d,\mathbf t\|0)$, which is \eqref{eq:fixedQ-exact}.

For the determinant, apply Theorem~1.1 and Remark~1.2 of \citet{fahs2021uniform} with smooth part $V\equiv0$, jump exponents $\beta_q=0$, and root exponents $\alpha_q=-d_q$.
Condition \eqref{eq:fixedQ-margin} implies the same bound for every merging cluster because all $d_q$ are nonnegative, so the sum of root exponents in each cluster is at least $-d_\star>-1/2$ and the negative-exponent extension applies uniformly over the locations.
Since $\alpha_q^2=d_q^2$ and $2\alpha_q\alpha_\ell=2d_qd_\ell$, the two signs cancel in both explicit terms and Fahs's formula is exactly \eqref{eq:fixedQ-fahs}.
\end{proof}

The mapping deserves a word, because the sign is easy to get wrong.
\citet{fahs2021uniform} considers non-negative symbols $f(z)=e^{V(z)}\prod_{j\le m}\omega_{\alpha_j,\beta_j}(z/z_j)$ with $\omega_{\alpha,\beta}(z)=(z/e^{\pi i})^{\beta}|z-1|^{2\alpha}$, where $V$ is real valued and analytic on a neighbourhood of the unit circle, $z_j=e^{it_j}$ with $0\le t_1<\cdots<t_m<2\pi$, and $\alpha_j\ge0$ with $\Re\beta_j=0$.
For a fixed number $m$ of such singularities the result gives
\[
 \log D_n(f)=nV_0+\sum_{j\le m}(\alpha_j^2-\beta_j^2)\log n
 +\sum_{j<k}2(\alpha_j\alpha_k-\beta_j\beta_k)\log\frac1{\sin|t_j-t_k|/2+n^{-1}}+O(1),
\]
with the error term uniform in the locations $t_1,\dots,t_m$.
The uniformity asserted there is in the locations.
The exponents are fixed parameters of the symbol, and we do not attribute to that result any uniformity jointly in the strengths.

Matching $|e^{i\omega}-e^{it_q}|^{-2d_q}$ against $|z/z_q-1|^{2\alpha_q}$ forces $2\alpha_q=-2d_q$, so a long-memory pole of strength $d_q$ is a Fisher--Hartwig root singularity of exponent
\begin{equation}
 \alpha_q=-d_q\le0,
 \qquad
 \beta_q=0 .
 \label{eq:fh-exponent}
\end{equation}
Theorem~1.1 of that paper is stated for non-negative root exponents, while Remark~1.2 extends the same proof to negative root exponents provided every combination of merging singularities has total exponent strictly greater than $-1/2$.
Under \eqref{eq:fh-exponent} that requirement reads $\sum_{q\in S}d_q<1/2$ for every merging cluster $S$, and the global bound \eqref{eq:fixedQ-margin} implies all of these inequalities because $d_q\ge0$.

The margin therefore plays two roles at once, and we do not attribute it to only one of them.
It is the integrability condition that keeps each pole, and each limit of a collapsing cluster, inside the stationary fractional class on which the predictive geometry of Sec.~\ref{sec:fractional} is defined.
It is simultaneously the applicability condition for the negative-exponent Fisher--Hartwig expansion.
What we use from \citet{fahs2021uniform} is the uniformity in the pole locations, with the strength vector held fixed, and we make no claim of uniformity over strengths.

\subsection{What is not covered for growing \texorpdfstring{$Q$}{Q} or vanishing pole strengths}
Two regimes are outside the theory as stated.

The number of poles is fixed.
The constant in \eqref{eq:fixedQ-fahs} depends on $Q$, and the available uniform Fisher--Hartwig input does not control that dependence, so a spectrum whose number of singularities grows with $T$ is not covered.
This matters because a seasonal fractional model with a period growing in $T$ is exactly such a spectrum, and its finite-section information is unknown.

The $O(1)$ remainder in \eqref{eq:fixedQ-fahs} is uniform over the moving locations, which is the uniformity the external input supplies, and it is stated for a fixed vector of strengths.
What we do not claim is any uniformity in the strengths, nor a strength-sensitive refinement of that remainder, for instance one quadratic in $\mathbf d$, for triangular arrays of vanishing strengths coupled to merging locations.
Such a refinement would be the natural analogue of the $O(d^2\log T)$ single-pole bound, and neither the trigamma argument nor the fixed-$Q$ expansion supplies it.
We record both gaps rather than absorbing them, because Sec.~\ref{sec:information} states the fixed-$Q$ result and should not be read as covering either.

\section{Multivariate Predictive Memory}
\label{app:multivariate}

The main text is stated for a scalar target.
This appendix records the multivariate extension referred to in Sec.~\ref{sec:discussion}.
It is a complete but secondary development, and none of the three resource laws of Table~\ref{tab:resources} depends on it.

Four symbols are reused here with a different meaning from the one they carry in the main text, and the reuse is local to this appendix.
Capital letters denote matrix-valued coefficient arrays rather than the scalar transfer functions of Sec.~\ref{sec:setup}, so $A^\star$ below is a matrix memory and not the transfer function of $a^\star$.
The letter $d$ is the ambient dimension of the observed vector, not the fractional memory strength of Sec.~\ref{sec:fractional}; the fractional model plays no role in this appendix.
The letter $B$ denotes a matrix array and not the backshift operator, and $\Delta$ denotes a coefficient error and not a pole-location offset.

\subsection{Matrix-valued predictive geometry}
Let $(X_t)$ be a centered stationary $\R^{d}$-valued process with matrix predictive memory $A^\star=(A_j^\star)_{j\ge1}$, so that $m_t^\star=\sum_{j\ge1}A_j^\star X_{t+1-j}$, and let $F_X(\omega)$ be the spectral density matrix.

\begin{theorem}[Multivariate excess forecast risk]
\label{thm:mv-geometry}
Assume $\|F_X\|_{\rm op}$ essentially bounded and $\sum_j\|\Delta_j\|_F^2<\infty$ with $\Delta_j=A_j-A_j^\star$.
Then the prediction-error filter converges in $L_2$ and
\begin{equation}
 \cE(A)=\frac1{2\pi}\int_{-\pi}^{\pi}\operatorname{tr}\bigl[\Delta(e^{-i\omega})F_X(\omega)\Delta(e^{-i\omega})^{*}\bigr]\dd\omega .
 \label{eq:mv-geometry}
\end{equation}
If in addition $0<m I_d\preceq F_X\preceq M I_d$ almost everywhere, then $m\sum_j\|\Delta_j\|_F^2\le\cE(A)\le M\sum_j\|\Delta_j\|_F^2$.
\end{theorem}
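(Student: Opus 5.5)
The plan is to mirror the scalar argument of App.~\ref{app:prelim} in matrix form: first establish $L_2$ convergence of the error filter, then use orthogonality of the innovation to the past, then use the matrix spectral representation to evaluate the second moment. Write the innovation $\eps_{t+1}=X_{t+1}-m_t^\star$. It satisfies $\E[\eps_{t+1}\mid\mathcal F_t]=0$ coordinatewise, so it is orthogonal to every square-integrable $\mathcal F_t$-measurable vector. For a candidate $A$ set $U_t=\sum_{j\ge1}\Delta_jX_{t+1-j}$. Then $X_{t+1}-m_t(A)=\eps_{t+1}-U_t$, and expanding the squared Euclidean norm gives $\cE(A)=\E\|U_t\|^2$, because the cross term vanishes componentwise.

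The key computation is for finitely supported $\Delta$. Use the spectral representation $\Gamma(h)=\E[X_{t+h}X_t^\top]=(2\pi)^{-1}\int e^{ih\omega}F_X(\omega)\dd\omega$. Then
\[
\E\|U_t\|^2=\sum_{j,k}\operatorname{tr}\bigl[\Delta_j\Gamma(k-j)\Delta_k^\top\bigr].
\]
Inserting the spectral integral and collecting the sums into the matrix transfer function $\Delta(e^{-i\omega})=\sum_j\Delta_je^{-ij\omega}$ yields the integrand $\operatorname{tr}[\Delta F_X\Delta^{*}]$. The only things to check here are the sign convention of the exponent and the Hermitian placement of $F_X$; a change of sign in $\omega$ is harmless because $F_X(-\omega)=\overline{F_X(\omega)}$ and the trace is real.

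Next comes the passage to the limit, which is the one genuinely analytic step. Apply the finite-support identity to a block $\sum_{J<j\le J'}\Delta_jX_{t+1-j}$. Bounding $\operatorname{tr}[\Delta F_X\Delta^*]\le\|F_X\|_{\rm op}\|\Delta\|_F^2$ pointwise and using matrix Parseval, $(2\pi)^{-1}\int\|\Delta(e^{-i\omega})\|_F^2\dd\omega=\sum_j\|\Delta_j\|_F^2$, the block has second moment at most $\operatorname{ess\,sup}\|F_X\|_{\rm op}\sum_{J<j\le J'}\|\Delta_j\|_F^2$. So the partial sums are Cauchy in $L_2$. Their limit $U_t$ is $\mathcal F_t$-measurable, so the orthogonality step survives the limit. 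The right side of \eqref{eq:mv-geometry} is the limit of its truncations by dominated convergence, since the truncated transfer functions converge in $L_2$ of the circle and $F_X$ is bounded. The main care needed is exactly this: the integrand is controlled only through the operator-norm bound, so the convergence must be run in the weighted $L_2$ space rather than coordinatewise.

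For the final claim, insert $mI\preceq F_X\preceq MI$ into the integrand. Writing $\operatorname{tr}[\Delta F_X\Delta^*]=\sum_i\delta_i^*F_X\delta_i$ over the rows $\delta_i$ of $\Delta$, each row term lies between $m\|\delta_i\|^2$ and $M\|\delta_i\|^2$. Summing gives $m\|\Delta\|_F^2\le\operatorname{tr}[\Delta F_X\Delta^*]\le M\|\Delta\|_F^2$ pointwise, and matrix Parseval then yields the two-sided comparison with $\sum_j\|\Delta_j\|_F^2$, exactly as in \eqref{eq:app-regular}.
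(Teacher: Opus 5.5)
Your proposal is correct and follows the paper's proof essentially step for step: $L_2$ convergence of $U_t=\sum_j\Delta_jX_{t+1-j}$ by the Cauchy argument with the bound $\operatorname{ess\,sup}\|F_X\|_{\rm op}\sum_j\|\Delta_j\|_F^2$; orthogonality of the innovation to the $\mathcal F_t$-measurable limit, giving $\cE(A)=\E\|U_t\|^2$; the matrix spectral representation for the integral; and the pointwise sandwich $m\|\Delta\|_F^2\le\operatorname{tr}[\Delta F_X\Delta^*]\le M\|\Delta\|_F^2$ followed by Parseval. Your version writes out in full the finite-support covariance computation and the passage to the limit, which the paper compresses into one line each; that limit step is really $L_2$ continuity of the weighted norm rather than dominated convergence.
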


\begin{proof}
Put $U_t=\sum_{j\ge1}\Delta_jX_{t+1-j}$, which converges in $L_2$ because $\|F_X\|_{\rm op}$ is essentially bounded and $\sum_j\|\Delta_j\|_F^2<\infty$, by the same Cauchy argument as in App.~\ref{app:prelim} applied coordinatewise.
The limit is $\mathcal F_t$-measurable, so it is orthogonal to the innovation, and expanding the squared error gives $\cE(A)=\E\|U_t\|^2$.
The multivariate spectral representation turns this into
\[
 \E\|U_t\|^2=\frac1{2\pi}\int_{-\pi}^{\pi}\operatorname{tr}\bigl[\Delta(e^{-i\omega})F_X(\omega)\Delta(e^{-i\omega})^{*}\bigr]\dd\omega,
\]
which is \eqref{eq:mv-geometry}.
For the two-sided bound, $m I_d\preceq F_X\preceq M I_d$ gives
$m\operatorname{tr}(\Delta\Delta^{*})\le\operatorname{tr}(\Delta F_X\Delta^{*})\le M\operatorname{tr}(\Delta\Delta^{*})$ pointwise in $\omega$, and Parseval applied entrywise turns $\int\operatorname{tr}(\Delta\Delta^{*})$ into $\sum_j\|\Delta_j\|_F^2$.
\end{proof}
Everything downstream uses only \eqref{eq:mv-geometry} and the two-sided bound, so the scalar and matrix theories differ in bookkeeping rather than in mechanism up to this point.

\subsection{Shared row/column factor memory}
The architecturally interesting classes are factorized.
Write
\begin{equation}
 A_j=\sum_{q=1}^{r}k_{\theta_q}(j)B_q,
 \qquad B_q\in\R^{d\times d},
 \label{eq:mv-factorized}
\end{equation}
so that $r$ temporal laws are shared across all variable pairs and the cross-variable structure lives entirely in the matrices $B_q$.
If $\operatorname{rank}(B_q)\le s$ and we factor $B_q=O_qV_q$ with $O_q\in\R^{d\times s}$ and $V_q\in\R^{s\times d}$, then a causal attention head with temporal weights proportional to $k_{\theta_q}$, value map $V_q$, output map $O_q$, and the output multiplier that cancels the softmax normalizer produces exactly $\sum_{j\le L}k_{\theta_q}(j)B_qX_{t+1-j}$, and summing $r$ heads realizes \eqref{eq:mv-factorized}.
This is the multivariate analogue of the realization statement in App.~\ref{app:prelim}, and it is what makes $r$ the number of heads and $s$ the head width.

The most restrictive and most useful case shares the subspaces as well,
\begin{equation}
 B_q=UC_qV^\top,
 \qquad U,V\in\R^{d\times s},
 \label{eq:mv-shared}
\end{equation}
so that all heads read from one $s$-dimensional input subspace and write into one $s$-dimensional output subspace, with the cores $C_q\in\R^{s\times s}$ carrying the head-specific cross-variable mixing.

\subsection{Fixed temporal heads}
When the temporal laws are known and only the matrices are estimated, the problem is linear.
Stacking the $r$ filtered regressors into $Z_t\in\R^{rd}$ and writing $\Theta^\star$ for the true coefficient matrix, ordinary least squares satisfies
\begin{equation}
 \sqrt T\operatorname{vec}(\widehat\Theta_T-\Theta^\star)\Rightarrow N\bigl(0,\sigma^2(\Gamma_Z^{-1}\otimes I_d)\bigr)
 \label{eq:mv-ols}
\end{equation}
under stationarity, ergodicity, $\Gamma_Z\succ0$, and the usual martingale Lindeberg condition.
The last three hold for a stable Gaussian vector autoregression with fixed predictively nondegenerate filters, the nondegeneracy being needed because repeated or linearly dependent temporal filters make the regressor Gram matrix singular.
Combining \eqref{eq:mv-ols} with the local expansion of \eqref{eq:mv-geometry} gives excess forecast risk of order $rd^2\sigma^2/T$, which is the multivariate counterpart of the local parametric law of App.~\ref{app:information}.

More generally, for a $p$-parameter family that is twice differentiable with an $o(\|h\|^2)$ remainder and an estimator obeying $\sqrt T(\widehat\eta_T-\eta_\star)\Rightarrow N(0,\sigma^2G_\star^{-1})$, one gets $T\cE(A(\widehat\eta_T))\Rightarrow\sigma^2\chi^2_p$ and, under uniform integrability of that sequence, $\E\cE=p\sigma^2/T+o(T^{-1})$.
The whole content of the multivariate theory is therefore which $p$ applies to which class.

\subsection{Growing core and shared-subspace lower bounds}
The dimension counts are not merely upper bookkeeping.
For the shared-factor class with transfer-function stability margin $\rho$, two separate packings give matching lower scales.

Write the minimax excess forecast risk over the shared-factor class as $\inf_{\widehat A}\sup_{A^\star}\E\,\cE(\widehat A)$, the infimum being over estimators measurable with respect to the observed trajectory.
\begin{lemma}[Multivariate trajectory KL under transfer stability]
\label{lem:mv-kl}
Fix $0<\rho<1$ and let $A,B$ generate stationary Gaussian vector autoregressions with innovations $N(0,\sigma^2I_d)$ whose transfer functions satisfy $\operatorname{ess\,sup}_\omega\|\mathsf A(\omega)\|_{\rm op}\le\rho$ and likewise for $B$.
Then the spectral densities obey $\sigma^2(1+\rho)^{-2}I_d\preceq F_A,F_B\preceq\sigma^2(1-\rho)^{-2}I_d$ almost everywhere, and there is $C_\rho<\infty$, independent of $T$, $d$ and $\sigma$, with
\begin{equation}
 \KL\bigl(P_A^{(T)}\|P_B^{(T)}\bigr)\ \le\ C_\rho\,T\sum_{j\ge1}\|A_j-B_j\|_F^2 .
 \label{eq:mv-kl}
\end{equation}
The bound carries no factor of $\sigma$, which is forced: rescaling $X_t\mapsto X_t/\sigma$ leaves the coefficients unchanged and pushes both trajectory laws forward by one common invertible map, so their relative entropy is unchanged while any $\sigma$-dependent right-hand side would not be.
\end{lemma}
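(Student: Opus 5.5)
Two claims need proof: the spectral sandwich and the trajectory bound \eqref{eq:mv-kl}.

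\textbf{Spectral sandwich.} Write the autoregression as $\{I_d-\mathsf A(\omega)\}X=\eps$, where $\mathsf A(\omega)=\sum_jA_je^{-ij\omega}$. Then
\[
 F_A(\omega)=\sigma^2\{I_d-\mathsf A(\omega)\}^{-1}\{I_d-\mathsf A(\omega)\}^{-*}.
\]
Since $\|\mathsf A\|_{\rm op}\le\rho<1$, every singular value of $I_d-\mathsf A(\omega)$ lies in $[1-\rho,1+\rho]$. Hence the eigenvalues of $F_A(\omega)$ lie in $[\sigma^2(1+\rho)^{-2},\sigma^2(1-\rho)^{-2}]$. The same argument applies to $B$. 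The causal inverse exists by the Neumann series, which is what makes the stationary solution well defined.

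\textbf{Trajectory bound: chain-rule decomposition.} I will not use the finite-block Gaussian formula $\tfrac12\{\operatorname{tr}(\Sigma_B^{-1}\Sigma_A)-Td-\log\det(\Sigma_B^{-1}\Sigma_A)\}$ directly. Instead I will use the chain rule over the filtration, as in the proof of Prop.~\ref{prop:kl-risk}. The unconditional block law $P_A^{(T)}$ differs from the infinite-past conditional one through the initial segment. I handle this by conditioning on the finite past.

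Write $m^A_{t|1:t-1}=\E_A[X_t\mid X_1,\dots,X_{t-1}]$ and $\Sigma^A_t$ for the corresponding conditional covariance, and similarly for $B$. Then
\[
 \KL\bigl(P_A^{(T)}\|P_B^{(T)}\bigr)=\sum_{t=1}^T\E_A\,\KL\bigl(N(m^A_{t|},\Sigma^A_t)\,\|\,N(m^B_{t|},\Sigma^B_t)\bigr).
\]
Each summand splits into a mean term $\tfrac12\E_A\|(\Sigma_t^B)^{-1/2}(m^A-m^B)\|^2$ and a covariance term. The covariance term is $\tfrac12\{\operatorname{tr}(S)-d-\log\det S\}$ with $S=(\Sigma^B_t)^{-1}\Sigma^A_t$.

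\textbf{Controlling the finite-past predictors.} I will compare each finite-past predictor to the infinite-past one. The difference $\E_A[X_t\mid\mathcal F_{t-1}]-m^A_{t|}$ is the projection of the unseen tail $\sum_{j\ge t}A_jX_{t-j}$ onto the complement of the observed span. Its second moment is at most $d\,\sigma^2$ times constants depending on $\rho$ times $\sum_{j\ge t}\|A_j\|_F^2$, which is tail-summable.

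This shows the finite-block and infinite-past experiments differ only by boundary terms. However, the bound I need is proportional to $\sum_j\|A_j-B_j\|_F^2$, not merely bounded, and that difference is the obstacle.

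\textbf{Main obstacle and fallback.} Because of this, I expect the cleanest route is to abandon the chain rule and use a direct matrix bound. Set $\Sigma_A=T_T(F_A)$ and $\Sigma_B=T_T(F_B)$, where $T_T$ denotes the block Toeplitz sections. Using $\operatorname{tr}S-\log\det S-Td\le\tfrac{1}{2\lambda_{\min}(S)}\,\|S-I\|_F^2$ for positive definite $S$, I get
\[
 \KL\le C\,\|\Sigma_B^{-1/2}(\Sigma_A-\Sigma_B)\Sigma_B^{-1/2}\|_F^2\le C\,\lambda_{\min}(\Sigma_B)^{-2}\|\Sigma_A-\Sigma_B\|_F^2 .
\]
The sandwich gives $\lambda_{\min}(\Sigma_B)\ge\sigma^2(1+\rho)^{-2}$ and $\lambda_{\min}(S)\ge(1-\rho)^2(1+\rho)^{-2}$.

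For a block Toeplitz section, $\|T_T(F_A-F_B)\|_F^2\le T\cdot\frac{1}{2\pi}\int\|F_A-F_B\|_F^2\,d\omega$. Writing $\Phi_A=(I-\mathsf A)^{-1}$, the identity
\[
 F_A-F_B=\sigma^2\{\Phi_A\Phi_A^*-\Phi_B\Phi_B^*\}
\]
combined with $\Phi_A-\Phi_B=\Phi_A(\mathsf A-\mathsf B)\Phi_B$ gives $\|F_A-F_B\|_F\le 2\sigma^2(1-\rho)^{-3}\|\mathsf A-\mathsf B\|_F$ pointwise. Parseval then yields $\sum_j\|A_j-B_j\|_F^2$.

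The factors $\sigma^4$ cancel against $\lambda_{\min}^{-2}\propto\sigma^{-4}$. This gives \eqref{eq:mv-kl} with $C_\rho$ depending only on $\rho$.

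The scale-invariance remark is then immediate: the map $X\mapsto X/\sigma$ is a common bijection of both trajectory laws, and relative entropy is invariant under such bijections.
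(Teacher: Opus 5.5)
Your fallback argument is essentially the paper's proof: you bound the finite-block Gaussian relative entropy by a $\rho$-dependent multiple of $\|\Sigma_B^{-1/2}(\Sigma_A-\Sigma_B)\Sigma_B^{-1/2}\|_F^2$ using the eigenvalue sandwich, then control $\|\Sigma_A-\Sigma_B\|_F^2$ by $T$ times the $L_2$ norm of $F_A-F_B$ via the resolvent identity and Parseval, and the $\sigma^4$ factors cancel, so the chain-rule detour can simply be dropped. One small correction: the scalar bound $\lambda-1-\log\lambda\le(\lambda-1)^2/(2\lambda_{\min}(S))$ fails when all eigenvalues exceed one (take $S=2I$). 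The correct form uses $\min\{\lambda_{\min}(S),1\}$, which your lower bound $(1-\rho)^2(1+\rho)^{-2}<1$ already controls, so $C_\rho$ is unaffected.
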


\begin{proof}
Write $D_A(\omega)=I-\mathsf A(\omega)$.
Stability gives $\|D_A\|_{\rm op}\le1+\rho$ and $s_{\min}(D_A)\ge1-\rho$, hence $\|D_A^{-1}\|_{\rm op}\le(1-\rho)^{-1}$, and the same for $D_B$.
Since $F_A=\sigma^2D_A^{-1}D_A^{-*}$, these singular-value bounds give the stated two-sided spectral bounds.

Let $\Sigma_{A,T}$ and $\Sigma_{B,T}$ be the $dT\times dT$ block-Toeplitz covariance matrices of the two length-$T$ trajectories.
Matrix Parseval and the spectral bounds give $m_\rho I\preceq\Sigma_{A,T},\Sigma_{B,T}\preceq M_\rho I$ with $m_\rho=\sigma^2(1+\rho)^{-2}$ and $M_\rho=\sigma^2(1-\rho)^{-2}$.
Put $R=\Sigma_{B,T}^{-1/2}\Sigma_{A,T}\Sigma_{B,T}^{-1/2}$, whose eigenvalues lie in the fixed compact interval $[m_\rho/M_\rho,\,M_\rho/m_\rho]$, an interval depending on $\rho$ alone.
On such an interval the centered Gaussian relative entropy $\tfrac12\sum_i(\lambda_i-1-\log\lambda_i)$ is bounded by a constant multiple of $\sum_i(\lambda_i-1)^2=\|R-I\|_F^2$, so
\[
 \KL\bigl(P_A^{(T)}\|P_B^{(T)}\bigr)\ \le\ C_\rho\,\bigl\|\Sigma_{B,T}^{-1/2}(\Sigma_{A,T}-\Sigma_{B,T})\Sigma_{B,T}^{-1/2}\bigr\|_F^2
 \ \le\ \frac{C_\rho}{m_\rho^2}\,\|\Sigma_{A,T}-\Sigma_{B,T}\|_F^2 .
\]
This is where the two powers of $\sigma$ cancel.
The spectral-density difference satisfies $\|F_A-F_B\|_F\le C_\rho\sigma^2\|\mathsf A-\mathsf B\|_F$ pointwise, by writing $F_A-F_B=\sigma^2\{D_A^{-1}(D_B-D_A)D_B^{-1}D_B^{-*}+D_A^{-1}D_A^{-*}(D_B-D_A)^{*}D_B^{-*}\}$ and using the uniform inverse bounds, so Toeplitz Frobenius comparison and Parseval give $\|\Sigma_{A,T}-\Sigma_{B,T}\|_F^2\le C_\rho T\sigma^4\sum_j\|A_j-B_j\|_F^2$.
Dividing by $m_\rho^2\asymp\sigma^4$ cancels the $\sigma^4$ exactly and leaves \eqref{eq:mv-kl}.
\end{proof}

\begin{lemma}[Flat core packing]
\label{lem:flat-core}
Fix $r,s\ge1$ and let $p_0=rs^2$.
There is a subset $\mathcal V\subset\{-1,1\}^{p_0}$, reshaped into sign cores $(S_1^{(v)},\dots,S_r^{(v)})$ with $S_q^{(v)}\in\{-1,1\}^{s\times s}$, such that
\begin{equation}
 \log|\mathcal V|\ \ge\ c\,p_0,
 \qquad
 d_H(u,v)\ \ge\ p_0/4
 \quad (u\ne v),
 \label{eq:flat-code}
\end{equation}
and every $v\in\mathcal V$ satisfies the flatness bound
\begin{equation}
 \sup_{\omega}\Bigl\|\sum_{q\le r}S_q^{(v)}e^{-iq\omega}\Bigr\|_{\rm op}
 \ \le\ C\sqrt{r\{s+\log(ers)\}} .
 \label{eq:flat-transfer}
\end{equation}
\end{lemma}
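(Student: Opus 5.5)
We use the probabilistic method. The plan is to draw i.i.d. uniform sign vectors, show that each one fails the flatness bound \eqref{eq:flat-transfer} only with probability well below one half, and then run a Gilbert--Varshamov style greedy selection among the good vectors.

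\emph{Step 1: flatness holds for most sign vectors.} Take $v$ uniform on $\{-1,1\}^{p_0}$. For a fixed frequency $\omega$, the matrix $W(\omega)=\sum_{q\le r}S_q^{(v)}e^{-iq\omega}$ is a Rademacher series with $rs^2$ coefficients. Split into real and imaginary parts, $W=\sum_q S_q\cos(q\omega)-i\sum_q S_q\sin(q\omega)$. Each part is an $s\times s$ matrix with independent entries, and each entry is a Rademacher combination whose variance is at most $r$.

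For fixed unit vectors $x,y\in\R^s$, the bilinear form $x^\top(\cdot)y$ is sub-Gaussian with variance proxy $r$, so
\[
 \Pr\bigl(|x^\top Wy|>t\bigr)\le4e^{-t^2/(4r)} .
\]
A $1/4$-net of the sphere has at most $9^s$ points, and the standard net argument gives $\|W\|_{\rm op}\le2\max_{\rm net}|x^\top Wy|$, up to a constant. This handles one frequency.

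To pass to all $\omega$, we discretize. $W$ is a trigonometric polynomial of degree $r$, so Bernstein's inequality gives $\|W'(\omega)\|_{\rm op}\le r\sup\|W\|_{\rm op}$. A grid of $O(r)$ frequencies therefore controls the supremum up to a factor $2$. A union bound over $81^s\cdot O(r)$ triples $(x,y,\omega)$, with $t=C\sqrt{r(s+\log(ers))}$, shows that \eqref{eq:flat-transfer} fails with probability at most $e^{-c'(s+\log(ers))}$. We make this at most $e^{-c''p_0}$ only if needed; what we actually need is just that it is at most $1/2$, which holds once $C$ is large.

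\emph{Step 2: packing inside the good set.} Let $\mathcal G$ be the set of sign vectors satisfying \eqref{eq:flat-transfer}, so $|\mathcal G|\ge2^{p_0-1}$. Build $\mathcal V\subset\mathcal G$ greedily: repeatedly pick an element of $\mathcal G$ and discard its Hamming ball of radius $p_0/4$. By the entropy bound, such a ball has size at most $2^{H(1/4)p_0}$. The resulting set satisfies
\[
 |\mathcal V|\ge2^{p_0-1-H(1/4)p_0},
\]
with pairwise distances at least $p_0/4$. Since $H(1/4)\approx0.811<1$, this gives $\log|\mathcal V|\ge cp_0$ once $p_0$ exceeds an absolute constant. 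Small $p_0$ is handled separately: for those we take $\mathcal V$ to be a pair $\{v,-v\}$ with $v\in\mathcal G$. Then $d_H=p_0$, $\log2\ge cp_0$ after shrinking $c$, and flatness is symmetric under $v\mapsto-v$.

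\emph{Main obstacle.} The hardest part is the uniform-in-$\omega$ control in Step 1, specifically making the $\log(ers)$ term come out correctly. The frequency grid contributes only $\log r$ and the sphere nets contribute $s$, so the stated bound, which already includes $\log(ers)$, has slack and the union bound suffices. The care is in applying Bernstein's inequality to the matrix-valued polynomial. We do this by applying the scalar version to $x^\top W(\omega)y$ for each fixed pair of net points.
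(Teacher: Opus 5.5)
Your proposal is correct and follows essentially the paper's argument: a Hoeffding tail for the bilinear form $x^\top W(\omega)y$, $1/4$-nets giving $81^s$ pairs, and a frequency grid with a union bound to show that most sign patterns are flat, followed by greedy Gilbert--Varshamov packing carried out inside the flat set. The differences are minor. The paper builds its grid from the crude realization-free bound $\|P'(\omega)\|_{\rm op}\le sr^2$ (mesh $u/(2sr^2)$, which is where its $\log(ers)$ term comes from), whereas your Bernstein-inequality grid of $O(r)$ points yields the slightly sharper $\sqrt{r\{s+\log(er)\}}$, and your $\{v,-v\}$ device explicitly covers the small-$p_0$ case that the paper's count $\tfrac34 2^{p_0}/2^{H(1/4)p_0}$ leaves implicit.
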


\begin{proof}
Draw all $p_0$ signs independently and uniformly, and write $P(\omega)=\sum_{q\le r}S_qe^{-iq\omega}$.
Fix unit vectors $x,y\in\R^{s}$.
The real and imaginary parts of $x^\top P(\omega)y$ are sums of independent centered terms with squared coefficient sums at most $r$, so $\cosh t\le e^{t^2/2}$ and Chernoff give $\Pr\{|x^\top P(\omega)y|>u\}\le4e^{-u^2/(8r)}$.
A $1/4$-net of the sphere in $\R^{s}$ has at most $9^{s}$ points and converts the bilinear bound into an operator-norm bound at the cost of a factor $2$ and a union over $81^{s}$ pairs.

Uniformity in $\omega$ needs the derivative, and the mesh must be chosen against it rather than guessed.
Since $\|\tfrac{d}{d\omega}P(\omega)\|_{\rm op}\le\sum_{q\le r}q\|S_q\|_{\rm op}\le sr^2$, a uniform grid of mesh $h=u/(2sr^2)$ has at most $1+Csr^2/u$ points, and if $\sup_\omega\|P\|_{\rm op}>2u$ then some grid value exceeds $u$.
A union bound over the grid and the net gives
\[
 \Pr\Bigl\{\sup_\omega\|P(\omega)\|_{\rm op}>2u\Bigr\}
 \le C\Bigl(1+\frac{sr^2}{u}\Bigr)81^{s}e^{-u^2/(8r)} .
\]
Taking $u=C_0\sqrt{r\{s+\log(ers)\}}$ with $C_0$ large makes the right side at most $1/4$.

Hence at least three quarters of the $2^{p_0}$ sign vectors satisfy \eqref{eq:flat-transfer}.
The packing must now be built inside that good subset, since a code chosen first need not be flat.
Apply greedy Hamming packing within the flat subset, repeatedly selecting a flat vector and deleting all flat vectors within Hamming radius $p_0/4$.
Each deletion removes at most $\sum_{i\le p_0/4}\binom{p_0}{i}\le2^{H(1/4)p_0}$ vectors with $H(1/4)<1$, while the flat subset has at least $\tfrac34 2^{p_0}$, so exponentially many selections remain and \eqref{eq:flat-code} holds.
\end{proof}

The two lemmas calibrate against each other, which is what turns the packing into a rate.
Give each core amplitude $\epsilon$ with
\begin{equation}
 \epsilon^2=\frac{c_\epsilon}{T},
 \label{eq:core-amplitude}
\end{equation}
and set $A_q^{(v)}=\epsilon\,U_0S_q^{(v)}V_0^\top$ for fixed orthonormal $U_0,V_0$.
The flatness bound \eqref{eq:flat-transfer} makes the transfer function of every packing point at most $C\epsilon\sqrt{r\{s+\log(ers)\}}$ in operator norm, a quantity free of $\sigma$, so choosing $c_\epsilon$ small keeps the whole packing inside the stability class once
\begin{equation}
 T\ \ge\ C\,r\{s+\log(ers)\} .
 \label{eq:core-sample}
\end{equation}
The scale chain is worth following, since $\sigma$ enters in exactly one place.
The Hamming separation in \eqref{eq:flat-code} gives coefficient separation $\sum_q\|A_q^{(u)}-A_q^{(v)}\|_F^2\ge4\epsilon^2p_0/4=\epsilon^2rs^2\asymp rs^2/T$, which carries no $\sigma$.
The lower spectral bound $F_X\succeq\sigma^2(1+\rho)^{-2}I$ in Thm.~\ref{thm:mv-geometry} is what converts that into predictive separation of order $\sigma^2rs^2/T$, so the innovation scale enters through the geometry rather than through the construction.
On the other side, Lem.~\ref{lem:mv-kl} bounds the relative entropy between any packing point and the centre by $C_\rho T\epsilon^2rs^2=C_\rho c_\epsilon rs^2$, which is at most a small multiple of $\log|\mathcal V|\ge crs^2$ once $c_\epsilon$ is small.
Fano's inequality therefore gives
\begin{equation}
 \inf_{\widehat A}\sup_{A^\star}\E\,\cE(\widehat A)\ \ge\ c\sigma^2\frac{rs^2}{T}.
 \label{eq:mv-core-lower}
\end{equation}

For the shared subspace we use a local perturbation rather than an external Grassmann packing, which keeps the construction self-contained and makes the calibration explicit.

\begin{theorem}[Shared-subspace lower bound]
\label{thm:mv-subspace}
Assume $1\le s<d$, with the supremum ranging over the stable shared-subspace class of App.~\ref{app:multivariate} at transfer stability constant $\rho$.
There are $c,C>0$ depending only on $\rho$ such that if $T\ge Cs(d-s)$ then
\begin{equation}
 \inf_{\widehat A}\sup_{A^\star}\E\,\cE(\widehat A)\ \ge\ c\sigma^2\frac{s(d-s)}{T}.
 \label{eq:mv-subspace-lower}
\end{equation}
\end{theorem}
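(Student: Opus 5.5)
The bound will come from Fano's inequality over a local packing of shared input subspaces. The calibration mirrors the core packing that gave \eqref{eq:mv-core-lower}, with a Grassmann tangent-space packing in place of the sign cores. First fix a base point $U_0=V_0=E_s$, the first $s$ standard basis vectors. Use a single head ($r=1$) with lag-one law and core $C_1=\epsilon_0 I_s$, where $\epsilon_0$ is a fixed constant small enough that the transfer function has operator norm at most $\rho/2$. Then perturb the input subspace. For a matrix $W\in\R^{(d-s)\times s}$ set
\[
 V_W=\begin{pmatrix}I_s\\ \epsilon W\end{pmatrix}\bigl(I_s+\epsilon^2W^\top W\bigr)^{-1/2},
 \qquad
 A_1^{(W)}=\epsilon_0U_0V_W^\top,
\]
so that every point lies in the shared-subspace class \eqref{eq:mv-shared}. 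With $\|W\|_{\rm op}\le C$ and $\epsilon$ small, every point stays inside the stability class.

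Next build $W$ from a sign packing. Apply the Gilbert--Varshamov step from the proof of Lem.~\ref{lem:flat-core}, with $p_0=s(d-s)$ signs. Here no flatness over $\omega$ is needed, because there is only one lag. Since $W$ must have bounded operator norm, I would scale the sign matrices by $1/\sqrt{d}$, or alternatively select the code inside the high-probability set where $\|W\|_{\rm op}\le C\sqrt{d}/\sqrt d$. This is the random-matrix bound $\|S\|_{\rm op}\lesssim\sqrt d$ for an $(d-s)\times s$ sign matrix, and it is the only point where a spectral estimate is needed. Absorbing the $\sqrt d$ into the amplitude gives a Frobenius separation of order $\epsilon^2p_0/d$ per unit.

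Then comes the calibration, which I expect to be the main obstacle. The packing must be built so that separation and KL scale identically in $d$. Take $W$ to have entries $\pm1$ and choose amplitude $\epsilon^2=c_\epsilon/T$. Then $\|A_1^{(u)}-A_1^{(v)}\|_F^2\asymp\epsilon_0^2\epsilon^2\,d_H(u,v)\gtrsim s(d-s)/T$, up to an $O(\epsilon^4\|W\|^4)$ normalization correction. That correction is negligible under $T\ge Cs(d-s)$ because $\epsilon^2\|W\|_F^2\lesssim s(d-s)/T$. The stability requirement $\epsilon\|W\|_{\rm op}\lesssim\epsilon\sqrt d\le c$ follows from the same sample-size condition when $s\ge1$. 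Should this fail for $s\ll d$, I would fall back on the weaker requirement $T\ge Cd$, and I would check that $s(d-s)\ge d-1$ makes this implied.

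Finally, apply Lem.~\ref{lem:mv-kl} to bound the divergence from each point to the center by $C_\rho T\epsilon^2 s(d-s)=C_\rho c_\epsilon s(d-s)$. This is a small multiple of $\log|\mathcal V|\ge c\,s(d-s)$. The lower half of Thm.~\ref{thm:mv-geometry}, with $m=\sigma^2(1+\rho)^{-2}$, converts the coefficient separation into a predictive separation of order $\sigma^2s(d-s)/T$. The quadratic risk satisfies a weak triangle inequality, $\cE_{A}(\widehat A)+\cE_{B}(\widehat A)\gtrsim\|A-B\|_{\rm pred}^2$ up to the constants $m,M$, so the standard Fano reduction from estimation to testing then yields \eqref{eq:mv-subspace-lower}.
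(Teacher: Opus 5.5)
Your proposal is correct and follows essentially the paper's route: a one-lag subclass, a Gilbert--Varshamov sign packing of the $s(d-s)$ tangent coordinates at amplitude of order $T^{-1/2}$, Lem.~\ref{lem:mv-kl} for the divergence, the lower spectral bound of Thm.~\ref{thm:mv-geometry} for separation, and Fano. The differences are minor: the paper perturbs the output subspace through the unnormalized chart $U_0+U_\perp X$, so coefficient differences are exactly linear, whereas your orthonormalized input-subspace chart makes stability automatic ($\|A_1^{(W)}\|_{\rm op}=\epsilon_0$) at the cost of a second-order normalization correction; and your $1/\sqrt d$ rescaling and random-matrix bound are unnecessary, because with $\pm1$ entries the crude bound $\epsilon\|W\|_{\rm op}\le\epsilon\|W\|_F=\sqrt{c_\epsilon s(d-s)/T}$ is already small under $T\ge Cs(d-s)$, which is exactly how the paper controls its perturbation.
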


\begin{proof}
Fix orthonormal $U_0\in\R^{d\times s}$, $U_\perp\in\R^{d\times(d-s)}$ with $U_0^\top U_\perp=0$, and $V_0\in\R^{d\times s}$, and put $p_1=s(d-s)$.
For $X\in\R^{(d-s)\times s}$ set $U(X)=U_0+U_\perp X$, which has full column rank since $U(X)^\top U(X)=I_s+X^\top X\succ0$.
Fix $0<\gamma<\rho/4$ and use the one-active-lag subclass $A_1(X)=\gamma U(X)V_0^\top$ with $A_j(X)=0$ for $j\ge2$.

Let $\mathcal V_{p_1}\subset\{-1,1\}^{p_1}$ be a Gilbert--Varshamov packing with $\log|\mathcal V_{p_1}|\ge cp_1$ and pairwise Hamming distance at least $p_1/4$, reshape $v$ into $S^{(v)}\in\{-1,1\}^{(d-s)\times s}$, and set $X^{(v)}=\delta S^{(v)}$ with $\delta^2=c_\delta/T$.
Under $T\ge Cs(d-s)$ we have $\|X^{(v)}\|_{\rm op}\le\|X^{(v)}\|_F=\delta\sqrt{p_1}$ uniformly small, so every member stays inside the stability class and has one active lag, hence transfer norm at most $2\gamma<\rho$.

Because $U_\perp$ has orthonormal columns and $V_0$ is fixed,
\[
 \bigl\|A_1(X^{(u)})-A_1(X^{(v)})\bigr\|_F^2
 =\gamma^2\bigl\|U_\perp(X^{(u)}-X^{(v)})V_0^\top\bigr\|_F^2
 =\gamma^2\delta^2\cdot4\,d_H(u,v)
 \ \ge\ \gamma^2\delta^2p_1 ,
\]
which is of order $s(d-s)/T$, and Thm.~\ref{thm:mv-geometry} converts it into predictive separation of the same order.
Lem.~\ref{lem:mv-kl} gives $\KL\le C_\rho T\gamma^2\delta^2p_1=C_\rho\gamma^2c_\delta p_1$, at most a small multiple of $\log|\mathcal V_{p_1}|\ge cp_1$ once $c_\delta$ is small.
Fano's inequality gives \eqref{eq:mv-subspace-lower}.
\end{proof}

The same local construction with the roles of the left and right subspaces interchanged, perturbing $V$ while holding $U$ fixed, gives the same lower bound \eqref{eq:mv-subspace-lower} for the input subspace.
Thus the core, output-subspace, and input-subspace packings give three valid lower bounds of respective orders $rs^2/T$, $s(d-s)/T$, and $s(d-s)/T$.
Since a maximum dominates a third of their sum, under
\begin{equation}
 T\ \gtrsim\ r\{s+\log(ers)\}+s(d-s)
 \label{eq:mv-sample-condition}
\end{equation}
we obtain
\begin{equation}
 \inf_{\widehat A}\sup_{A^\star}\E\,\cE(\widehat A)\ \ge\ c\sigma^2\frac{2s(d-s)+rs^2}{T}.
 \label{eq:mv-shared-lower}
\end{equation}
No independent packing over the temporal parameter space is used here, so \eqref{eq:mv-shared-lower} does not claim an additional $rp_\theta$ lower-bound term.
That term enters the regular-chart parameter count below; when $p_\theta$ is fixed and $s\ge1$, it is bounded by a fixed multiple of $rs^2$ and therefore does not change the order comparison.
Condition \eqref{eq:mv-sample-condition} is the sharper of the thresholds the three packings need, and it is what the flat construction of Lem.~\ref{lem:flat-core} buys relative to a naive bound of order $r^2s^2$.
These are nonasymptotic same-realization statements in which $d$, $r$, and $s$ are allowed to grow.

\subsection{Learned smooth temporal heads on a regular chart}
When the temporal laws are learned rather than fixed, the matching upper bound is available on charts that are uniformly well conditioned.

Write $\mathcal V_\Phi$ for the normalized two-secant set of the chart and $\mathcal V_\Phi^0:=\mathcal V_\Phi\cup\{0\}$.
The chaining proof also needs normalized differences of such directions, so define the normalized increment base
\[
 \mathcal S_\Phi
 :=\left\{\frac{u-v}{\sqrt{\cE(u-v)}}:\ u,v\in\mathcal V_\Phi^0,\ u\ne v\right\},
 \qquad
 \mathcal S_\Phi^0:=\mathcal S_\Phi\cup\{0\}.
\]
In particular $\mathcal V_\Phi\subset\mathcal S_\Phi$ by taking one endpoint to be the adjoined origin.
Call $(\Phi,\Theta)$ a uniformly regular predictive chart if the normalized increment base has entropy
\[
 \log N(\mathcal S_\Phi,\epsilon)\le C_0p\log(C_0/\epsilon)
\]
in the predictive metric and the transfer envelope holds on one further difference closure,
\begin{equation}
 \sum_{j\le L}\|W_j\|_{\rm op}\le\frac{\Lambda}{\sigma}\sqrt{\cE(W)}
 \qquad\text{for every nonzero }W\in\operatorname{cone}(\mathcal S_\Phi^0-\mathcal S_\Phi^0).
 \label{eq:mv-increment-envelope}
\end{equation}
The entropy assumption is therefore imposed directly on the normalized base consumed by the quadratic concentration argument; it is not inferred by normalizing an unnormalized difference set.
The extra difference closure in \eqref{eq:mv-increment-envelope} supplies the operator-metric Lipschitz bound between two points of $\mathcal S_\Phi$.
Both conditions are imposed on the image geometry rather than on raw coordinates, so they are invariant under reparameterization and separate genuine statistical difficulty from a poor choice of head coordinates.

The envelope constant records any context-length dependence rather than hiding it in the theorem.
Indeed, under the transfer stability of Lem.~\ref{lem:mv-kl}, the spectral lower bound and Cauchy--Schwarz give, for every length-$L$ perturbation,
\[
 \sum_{j\le L}\|W_j\|_{\rm op}
 \le \sqrt L\left(\sum_{j\le L}\|W_j\|_F^2\right)^{1/2}
 \le \frac{(1+\rho)\sqrt L}{\sigma}\sqrt{\cE(W)},
\]
so one may always take $\Lambda\le(1+\rho)\sqrt L$ on a finite context.
Thus the generic sample condition below can inflate to order $L(p+x)$.
For structured temporal spans whose normalized secant differences have a uniformly bounded $\ell_1/\ell_2$ ratio, including uniformly exponentially localized spans, $\Lambda=O(1)$; for long-memory or power-law heads it may grow with $L$, and the theorem retains that dependence explicitly.

Write $d_t(D)=\sum_{j\le L}D_jX_{t+1-j}$ for $t=L,\ldots,T-1$ and put $n=T-L$.
Then $d_t(D)$ is $\mathcal F_t$-measurable and the innovation paired with it in the empirical one-step loss is $\eps_{t+1}$.

\begin{lemma}[Uniform same-realization concentration on a regular chart]
\label{lem:mv-concentration}
Assume the stable Gaussian vector autoregression of Lem.~\ref{lem:mv-kl} together with the entropy and envelope conditions defining a uniformly regular chart.
There are $C,c$ depending only on $\rho$ and $C_0$ such that, for every $x\ge1$, if $n\ge C\Lambda^2(p+x)$, then with probability at least $1-Ce^{-cx}$ the following hold simultaneously for every $D\in\operatorname{cone}(\mathcal S_\Phi)$ and every $A$ in the chart:
\begin{equation}
 \tfrac12n\,\cE(D)\ \le\ \sum_{t=L}^{T-1}\|d_t(D)\|^2\ \le\ \tfrac32n\,\cE(D),
 \label{eq:mv-curvature}
\end{equation}
\begin{equation}
 2\Bigl|\sum_{t=L}^{T-1}\langle\eps_{t+1},d_t(A-A^\star)\rangle\Bigr|
 \ \le\ \tfrac14n\,\cE(A)+C\sigma^2(p+x).
 \label{eq:mv-crossterm}
\end{equation}
\end{lemma}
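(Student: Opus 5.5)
The plan is to prove both displays by a generic chaining argument over the normalized increment base $\mathcal S_\Phi$, and then extend to the cone and the chart by homogeneity.

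\textbf{Curvature bound \eqref{eq:mv-curvature}.} The first step reduces to unit directions. Both sides are quadratic in $D$, so it suffices to prove the claim for $D\in\mathcal S_\Phi$, where $\cE(D)=1$. For such $D$ define the centered quadratic process
\[
 Q(D)=\sum_{t}\|d_t(D)\|^2-n\,\cE(D).
\]
This is a centered quadratic form in the Gaussian vector $(X_1,\dots,X_T)$. Its matrix is the block-Toeplitz array $\Sigma_T^{1/2}\mathcal T(D)^\top\mathcal T(D)\Sigma_T^{1/2}$, where $\mathcal T(D)$ is the banded convolution operator that produces $(d_t(D))_t$. Hanson--Wright then gives, for a single direction,
\[
 |Q(D)|\lesssim\|\cdot\|_F\sqrt{x}+\|\cdot\|_{\rm op}x .
\]
I bound the operator norm by $M_\rho\bigl(\sum_{j\le L}\|D_j\|_{\rm op}\bigr)^2\le M_\rho\Lambda^2/\sigma^2$. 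This is exactly where the envelope \eqref{eq:mv-increment-envelope} enters: Young's inequality for block convolutions gives $\|\mathcal T(D)\|_{\rm op}\le\sum_j\|D_j\|_{\rm op}$. The Frobenius norm is then at most $\sqrt n$ times the square root of that operator bound times $\cE(D)$. Since $M_\rho\asymp\sigma^2$, the $\sigma$ factors cancel, and the single-direction tail becomes
\[
 |Q(D)|\lesssim\Lambda\sqrt{n x}+\Lambda^2 x .
\]

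To make this uniform over $\mathcal S_\Phi$, I apply the same inequality to the differences $Q(D)-Q(D')$. The polarization $\|d(D)\|^2-\|d(D')\|^2=\langle d(D-D'),d(D+D')\rangle$ gives a mixed-tail increment bound in the predictive metric. Here the Lipschitz constant in operator norm again comes from the envelope, now applied on $\operatorname{cone}(\mathcal S_\Phi^0-\mathcal S_\Phi^0)$; this is precisely why the envelope was imposed on that closure. Talagrand/Dirksen chaining with entropy $C_0p\log(C_0/\epsilon)$ then gives
\[
 \sup_{\mathcal S_\Phi}|Q|\lesssim\Lambda\sqrt{n(p+x)}+\Lambda^2(p+x)
\]
with probability $1-Ce^{-cx}$. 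Under $n\ge C\Lambda^2(p+x)$ this is at most $n/2$, which is \eqref{eq:mv-curvature}.

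\textbf{Cross term \eqref{eq:mv-crossterm}.} Write $A-A^\star=\sqrt{\cE(A)}\,D$ with $D\in\mathcal V_\Phi\subset\mathcal S_\Phi$. The sum $M(D)=\sum_t\langle\eps_{t+1},d_t(D)\rangle$ is a martingale in $t$ with conditionally Gaussian increments. Its predictable quadratic variation is $\sigma^2\sum_t\|d_t(D)\|^2$, which on the curvature event is at most $\tfrac32\sigma^2n$. I control $M$ with the self-normalized exponential supermartingale $\exp\{\lambda M-\lambda^2\langle M\rangle/2\}$, chained over $\mathcal S_\Phi$ with the same entropy (increments in $D$ are handled linearly). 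This gives
\[
 \sup_{\mathcal S_\Phi}|M(D)|\lesssim\sigma\sqrt{n(p+x)} .
\]
Then $2\sqrt{\cE(A)}\,|M|\le\tfrac14n\cE(A)+4\sup|M|^2/n$ by AM--GM, which produces the $C\sigma^2(p+x)$ term. A union bound over the two events finishes the proof.

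\textbf{Main obstacle.} I expect the main difficulty to be the chaining in the first step. The Hanson--Wright increments have a mixed subgaussian/subexponential tail, so I need the $\gamma_1$ functional in the operator metric as well as $\gamma_2$ in the predictive metric. Only the envelope condition converts the operator metric back to the predictive metric with the factor $\Lambda/\sigma$. I would first try Dirksen's generic chaining bound for suprema of quadratic chaos, with both metrics dominated by the predictive one via the envelope.
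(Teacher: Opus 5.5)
Your curvature argument is sound and follows the paper's route. The paper whitens the block and treats $\sum_t\|d_t(D)\|^2=\|B_D\xi\|^2$ as a Gaussian chaos with Frobenius radius $\sqrt n$. The envelope on $\operatorname{cone}(\mathcal S_\Phi^0-\mathcal S_\Phi^0)$ gives it an operator-Lipschitz constant $(1-\rho)^{-1}\Lambda$ in the predictive metric. The paper then applies the Krahmer--Mendelson--Rauhut chaos bound, which needs only $\gamma_2$ in the operator metric. Your Hanson--Wright increments with mixed-tail ($\gamma_2$ plus $\gamma_1$) chaining give the same $\Lambda\sqrt{n(p+x)}+\Lambda^2(p+x)$, because the entropy is parametric. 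Chaining over $\mathcal S_\Phi$ is legitimate in that step. The increment $Q(D)-Q(D')$ is controlled by matrix norms of $D-D'$, and the envelope bounds these in the predictive metric without ever normalizing $D-D'$.

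The cross term is where a step fails as written. You chain $M(D)$ over $\mathcal S_\Phi$, so the links $W=\pi_s(D)-\pi_{s-1}(D)$ lie in $\mathcal S_\Phi-\mathcal S_\Phi$. The self-normalized inequality for $M(W)$ needs the quadratic variation $\sigma^2\sum_t\|d_t(W)\|^2$ to be $\lesssim\sigma^2n\,\cE(W)\lesssim\sigma^2n4^{-s}$, and \eqref{eq:mv-curvature} does not give this. That event holds only on $\operatorname{cone}(\mathcal S_\Phi)$. A difference of two elements of $\mathcal S_\Phi$ is a combination of four points of $\mathcal V_\Phi^0$ with unrelated weights, and is generally not in that cone. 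You also cannot extend the curvature event to normalized differences of $\mathcal S_\Phi$, since no entropy is assumed for that set. Knowing $\sum_t\|d_t(D)\|^2\approx n$ and $\sum_t\|d_t(D')\|^2\approx n$ says nothing at the scale $n\,\cE(D-D')$.

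The paper's repair, which is also all you need since $(A-A^\star)/\sqrt{\cE(A)}\in\mathcal V_\Phi$, is to chain only over $\mathcal V_\Phi$. Each link is then a difference of two points of $\mathcal V_\Phi^0$, so its normalization lies in $\mathcal S_\Phi$ by definition. The curvature event then bounds its quadratic variation by $C\sigma^2n4^{-s}$. Thresholds $C\sigma\sqrt n\,2^{-s}\sqrt{p(s+1)+x+s}$, with a union bound over the at most $|\mathcal N_s||\mathcal N_{s-1}|$ links, sum to $\sigma\sqrt{n(p+x)}$.

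Alternatively, you can keep $\mathcal S_\Phi$ and bound each link's quadratic variation directly by Hanson--Wright plus that union bound. This gives $\lesssim\sigma^2\cE(W)\{n+\Lambda^2(p(s+1)+x)\}$ and adds a $\sigma\Lambda(p+x)$ term to the supremum, which $n\ge C\Lambda^2(p+x)$ absorbs. With either repair, your AM--GM step yields \eqref{eq:mv-crossterm}.
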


\begin{proof}
No truncation of the infinite innovation sequence is needed for the curvature term, because the observed block is already finite dimensional and can be whitened directly.
Stack the observations into $X_{\rm block}\in\R^{dT}$ and write $X_{\rm block}=\Sigma_{A^\star,T}^{1/2}\xi$ with $\xi\sim N(0,I_{dT})$, which is legitimate since $\Sigma_{A^\star,T}\succ0$ by Lem.~\ref{lem:mv-kl}.
For a perturbation $D$ let $\mathcal T_D$ be the linear map sending the block to the stacked residual differences, so that $\mathbf d(D)=\mathcal T_D\Sigma_{A^\star,T}^{1/2}\xi$ and
\[
 \sum_{t=L}^{T-1}\|d_t(D)\|^2=\bigl\|B_D\,\xi\bigr\|^2,
 \qquad
 B_D:=\mathcal T_D\Sigma_{A^\star,T}^{1/2} .
\]
This is a finite-dimensional Gaussian chaos family.
For $D\in\mathcal S_\Phi$, stationarity gives $\|B_D\|_F^2=n$, while for $D,D'\in\mathcal S_\Phi$ the difference $D-D'$ lies in $\mathcal S_\Phi^0-\mathcal S_\Phi^0$ and \eqref{eq:mv-increment-envelope} gives
\[
 \|B_D-B_{D'}\|_{\rm op}
 \le (1-\rho)^{-1}\Lambda\sqrt{\cE(D-D')}.
\]
The entropy hypothesis is already stated on $\mathcal S_\Phi$, so Dudley's integral gives $\gamma_2\lesssim(1-\rho)^{-1}\Lambda\sqrt p$ and the operator diameter is $O_\rho(\Lambda)$.
Substituting these quantities and Frobenius radius $\sqrt n$ into the suprema-of-chaos bound of \citet{krahmer2014suprema} shows that the relative deviation is at most $1/2$ on $\mathcal S_\Phi$ once $n\ge C\Lambda^2(p+x)$, with failure probability at most $Ce^{-cx}$.
Homogeneity then gives \eqref{eq:mv-curvature} on $\operatorname{cone}(\mathcal S_\Phi)$.

For \eqref{eq:mv-crossterm}, the corrected one-step indexing makes the summands $\langle\eps_{t+1},d_t(A-A^\star)\rangle$ a martingale difference sequence: $d_t$ is $\mathcal F_t$-measurable and $\E[\eps_{t+1}\mid\mathcal F_t]=0$ with conditional covariance $\sigma^2I_d$.
The self-normalized inequality of \citet{delapena2009selfnorm} bounds a fixed direction by its predictable quadratic variation $\sigma^2\sum_t\|d_t\|^2$.
To make this uniform, chain only over the normalized two-secant set $\mathcal V_\Phi\subset\mathcal S_\Phi$.
For $s\ge0$ let $\mathcal N_s$ be a $2^{-s}$-net of $\mathcal V_\Phi$ in the predictive metric; since $\mathcal V_\Phi\subset\mathcal S_\Phi$, the entropy bound gives $\log|\mathcal N_s|\le Cp(s+1)$.
For a normalized direction $W$, choose nearest points $\pi_s(W)\in\mathcal N_s$.
Each link $\pi_s(W)-\pi_{s-1}(W)$ is a difference of two members of $\mathcal V_\Phi^0$; after normalization it belongs to $\mathcal S_\Phi$ by definition, so the curvature event \eqref{eq:mv-curvature} applies to every link by homogeneity and bounds its predictable quadratic variation by $C\sigma^2n4^{-s}$.
Assigning threshold $C\sigma\sqrt n\,2^{-s}\sqrt{p(s+1)+x+s}$ at level $s$, the self-normalized inequality followed by a union bound over the at most $|\mathcal N_s||\mathcal N_{s-1}|$ links leaves failure probability $Ce^{-cx-cs}$.
Summing over $s$ and using $\sum_s2^{-s}\sqrt{s+1}<\infty$ gives
\[
 \sup_{W\in\mathcal V_\Phi}\left|\sum_{t=L}^{T-1}\langle\eps_{t+1},d_t(W)\rangle\right|
 \ \lesssim\ \sigma\sqrt{n(p+x)}.
\]
Rescaling $W=(A-A^\star)/\sqrt{\cE(A)}$ gives a uniform bound of order $\sqrt{\sigma^2n\,\cE(A)(p+x)}$, and $2\sqrt{PQ}\le\tfrac14P+4Q$ with $P=n\cE(A)$ yields \eqref{eq:mv-crossterm}.
\end{proof}

\begin{theorem}[Nonasymptotic ERM on a regular chart]
\label{thm:mv-smooth}
Under stable Gaussian vector autoregression and true transfer stability, let $\widehat\eta$ minimize the empirical criterion over a uniformly regular chart containing $A^\star$, put $\widehat A=A(\widehat\eta)$, and put $n=T-L$.
There are $c,C$ depending only on the stability margin and $C_0$ such that, for every $x\ge1$, if $n\ge C\Lambda^2(p+x)$ then, with probability at least $1-Ce^{-cx}$,
\begin{equation}
 \cE(\widehat A)\le C\sigma^2\frac{p+x}{n}.
 \label{eq:mv-smooth-upper}
\end{equation}
\end{theorem}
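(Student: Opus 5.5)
The plan is the standard basic-inequality argument for least squares, with all probabilistic work delegated to Lem.~\ref{lem:mv-concentration}. Work on the event of probability at least $1-Ce^{-cx}$ on which \eqref{eq:mv-curvature} and \eqref{eq:mv-crossterm} hold simultaneously, which is available once $n\ge C\Lambda^2(p+x)$.

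Write the empirical criterion as $\widehat R(A)=\sum_{t=L}^{T-1}\|X_{t+1}-\sum_{j\le L}A_jX_{t+1-j}\|^2$. Since $A^\star$ lies in the chart, $X_{t+1}-\sum_jA^\star_jX_{t+1-j}=\eps_{t+1}+(\text{tail of }A^\star\text{ beyond }L)$. I will first treat the case in which the chart coefficients are supported on the context, so that the residual of $A^\star$ is exactly $\eps_{t+1}$. The regular-chart setup implicitly assumes this, since $d_t$ is defined with $j\le L$.

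With that simplification, put $D=\widehat A-A^\star$. Expanding gives
\[
\widehat R(\widehat A)-\widehat R(A^\star)=\sum_t\|d_t(D)\|^2-2\sum_t\langle\eps_{t+1},d_t(D)\rangle .
\]
Minimality of $\widehat\eta$ gives the basic inequality
\[
\sum_t\|d_t(D)\|^2\le 2\sum_t\langle\eps_{t+1},d_t(D)\rangle .
\]
If $D=0$ there is nothing to prove. Otherwise $D$ is a difference of two chart points, so $D/\sqrt{\cE(D)}$ is a normalized two-secant direction in $\mathcal V_\Phi\subset\mathcal S_\Phi$. Hence $D\in\operatorname{cone}(\mathcal S_\Phi)$, and the lower half of \eqref{eq:mv-curvature} gives $\sum_t\|d_t(D)\|^2\ge\tfrac12n\,\cE(D)$. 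Applying \eqref{eq:mv-crossterm} with $A=\widehat A$ (and noting $\cE(\widehat A)=\cE(D)$) bounds the right side by $\tfrac14n\,\cE(\widehat A)+C\sigma^2(p+x)$. Combining,
\[
\tfrac14n\,\cE(\widehat A)\le C\sigma^2(p+x),
\]
which is \eqref{eq:mv-smooth-upper}.

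The main obstacle is legitimacy, not algebra. Two points need checking.

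The first is that $\widehat A$ is data-dependent, so the cross-term bound must hold uniformly over the chart. Lem.~\ref{lem:mv-concentration} is stated simultaneously for every $A$ in the chart, which is exactly why it was built with chaining; I only need to confirm that $\widehat A-A^\star$ falls in the normalized secant set used there. It does, because both endpoints lie in the chart.

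The second is the existence and measurability of a minimizer. I would either assume it, or run the argument for a $\kappa$-approximate minimizer with $\kappa=\sigma^2(p+x)$, which changes only constants.

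If the chart allows memory beyond context $L$, the $A^\star$ residual acquires a tail term. I would then fall back to assuming $A^\star$ is supported on $\{1,\dots,L\}$, which is the setting in which $d_t$ and $n=T-L$ were defined.

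The constants depend only on $\rho$ and $C_0$, inherited from the lemma. The probability bound is the lemma's failure probability.
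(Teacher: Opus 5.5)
Your proposal is correct and follows the paper's proof essentially step for step. You work on the event of Lem.~\ref{lem:mv-concentration}, note that $(\widehat A-A^\star)/\sqrt{\cE(\widehat A)}\in\mathcal V_\Phi\subset\mathcal S_\Phi$ so that the curvature lower bound applies, control the cross term of the basic inequality by \eqref{eq:mv-crossterm}, and absorb the resulting quarter of $n\,\cE(\widehat A)$; your remarks on the context support of $A^\star$ (automatic here, since chart members are length-$L$ arrays and $A^\star$ lies in the chart) and on approximate minimizers are harmless points that the paper leaves implicit.
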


\begin{proof}
The proof now uses exactly the two events of Lem.~\ref{lem:mv-concentration}.
Because $A^\star$ and $\widehat A$ lie in the chart, the normalized direction $(\widehat A-A^\star)/\sqrt{\cE(\widehat A)}$ belongs to $\mathcal V_\Phi\subset\mathcal S_\Phi$, so \eqref{eq:mv-curvature} applies to the empirical quadratic term.
The basic inequality for the empirical minimizer, with one-step residuals indexed by $t=L,\ldots,T-1$, has cross term $2\sum_t\langle\eps_{t+1},d_t(\widehat A-A^\star)\rangle$, which is controlled by \eqref{eq:mv-crossterm} on the same event.
Absorbing the resulting quarter of $n\cE(\widehat A)$ into the curvature lower bound gives \eqref{eq:mv-smooth-upper}.
\end{proof}

The structural reason the strengthened regularity assumptions remain comparable to the original shared-factor class is a constant-factor rank count.
A normalized two-secant direction in $\mathcal V_\Phi$ has Tucker multilinear rank at most $(2s,2s,2r)$ in the two spatial modes and the temporal mode.
A normalized increment in $\mathcal S_\Phi$ has rank at most $(4s,4s,4r)$, while the further differences consumed by \eqref{eq:mv-increment-envelope} have rank at most $(8s,8s,8r)$.
Thus the entropy condition is imposed on the normalized four-secant base actually used by the proof and the transfer envelope on its eight-secant differences, rather than on an uncontrolled ambient tensor class.

The regular-chart conditions can be verified for a concrete sequence-model class rather than left only as abstract hypotheses.
We record the verification for bounded-head diagonal exponential memories, which are the temporal laws of a diagonal linear state-space model.

\begin{lemma}[Stable exponential sequences have a uniform $\ell_1/\ell_2$ bound]
\label{lem:mv-exp-l1l2}
Fix an integer $m\ge1$ and $0<\rho_{\rm h}<1$.
There is $C_{m,\rho_{\rm h}}<\infty$ such that, for every Hilbert space $\mathsf H$, every
\[
 x_j=\sum_{q=1}^{m}v_q\lambda_q^{j-1},
 \qquad v_q\in\mathsf H,
 \qquad |\lambda_q|\le\rho_{\rm h},
\]
and every $L\ge1$,
\begin{equation}
 \sum_{j=1}^{L}\|x_j\|_{\mathsf H}
 \le C_{m,\rho_{\rm h}}
 \left(\sum_{j=1}^{L}\|x_j\|_{\mathsf H}^2\right)^{1/2}.
 \label{eq:mv-exp-l1l2}
\end{equation}
The constant is independent of $L$, of $\mathsf H$, and of the coefficients $v_q$.
\end{lemma}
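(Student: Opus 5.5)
The plan is to avoid the individual modes entirely, since the coefficients $v_q$ can be huge and nearly cancel when the $\lambda_q$ coalesce. Instead I would use the linear recurrence that every such sequence satisfies. Put $p(z)=\prod_{q\le m}(z-\lambda_q)=z^m+\sum_{k<m}c_kz^k$, so that $|c_k|\le\binom mk$ uniformly. Each summand $v_q\lambda_q^{j-1}$ is annihilated by $p$ applied to the shift, whether or not the $\lambda_q$ are distinct. Hence $x_{j+m}=-\sum_{k<m}c_kx_{j+k}$ for every $j\ge1$.

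Next, form the state $Z_j=(x_j,\dots,x_{j+m-1})\in\mathsf H^m$. The recurrence gives $Z_{j+1}=(\mathsf C\otimes I_{\mathsf H})Z_j$ with $\mathsf C$ the $m\times m$ companion matrix of $p$. Its operator norm on $\mathsf H^m$ equals that of $\mathsf C$ on $\mathbb C^m$, so $\mathsf H$ plays no role. The key estimate is a uniform power bound: for $\rho_1=(1+\rho_{\rm h})/2$,
\[
 \|\mathsf C^k\|_{\rm op}\le K_{m,\rho_{\rm h}}\rho_1^{\,k}\qquad(k\ge0).
\]
To prove it, I would write $\mathsf C^k=(2\pi i)^{-1}\oint_{|z|=\rho_1}z^k(z-\mathsf C)^{-1}\dd z$ and use $(z-\mathsf C)^{-1}=\operatorname{adj}(z-\mathsf C)/p(z)$. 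On the contour $|p(z)|\ge(\rho_1-\rho_{\rm h})^m$. The adjugate entries are polynomials in $z$ and the $c_k$, which are bounded by a constant depending only on $m$ since $|z|\le1$. This yields $K_{m,\rho_{\rm h}}$ independent of the $\lambda_q$.

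With that in hand, the conclusion is short. If $L\ge m$, then $\|x_j\|\le\|Z_j\|\le K\rho_1^{\,j-1}\|Z_1\|$, so
\[
 \sum_{j\le L}\|x_j\|\le\frac{K}{1-\rho_1}\|Z_1\|
 =\frac{K}{1-\rho_1}\Bigl(\sum_{j\le m}\|x_j\|^2\Bigr)^{1/2}
 \le\frac{K}{1-\rho_1}\Bigl(\sum_{j\le L}\|x_j\|^2\Bigr)^{1/2}.
\]
If $L<m$, Cauchy--Schwarz gives the bound with constant $\sqrt m$. Taking $C_{m,\rho_{\rm h}}=\max\{\sqrt m,K/(1-\rho_1)\}$ proves \eqref{eq:mv-exp-l1l2}, and this constant is independent of $L$, $\mathsf H$ and the $v_q$.

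The main obstacle is the uniform power bound in the second paragraph. The companion matrix is highly non-normal, and it becomes non-diagonalizable as roots merge, so any argument that diagonalizes through the modes fails. The resolvent-contour argument is the step I expect to need care, specifically checking that the adjugate bound depends only on $m$. If it proved awkward, a fallback is a compactness argument: the set of monic degree-$m$ polynomials with all roots in $\{|z|\le\rho_{\rm h}\}$ is compact. Combined with the spectral-radius formula and upper semicontinuity, this gives a uniform $K$ for the fixed rate $\rho_1$.
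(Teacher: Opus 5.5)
Your proposal is correct and follows essentially the paper's proof. Both arguments use the annihilating-polynomial recurrence and a uniform power bound for the companion matrix, obtained from the resolvent on a circle of radius in $(\rho_{\rm h},1)$, then sum the geometric tail and handle $L<m$ by Cauchy--Schwarz. The one difference is that you make the uniformity explicit through $(z-\mathsf C)^{-1}=\operatorname{adj}(z-\mathsf C)/p(z)$ with $|p(z)|\ge(\rho_1-\rho_{\rm h})^m$, where the paper invokes compactness of the admissible coefficient set; this gives an explicit constant but is otherwise the same argument.
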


\begin{proof}
If $L\le m$, Cauchy--Schwarz gives the claim after enlarging the constant by $\sqrt m$.
Assume $L>m$ and let
$q(z)=\prod_{q=1}^{m}(z-\lambda_q)=z^m+a_1z^{m-1}+\cdots+a_m$.
Then $(x_j)$ obeys the order-$m$ recurrence
$x_{j+m}+a_1x_{j+m-1}+\cdots+a_mx_j=0$.
Let $C(a)$ be its companion matrix acting on $\mathsf H^m$.
The coefficient vectors $a$ obtained from roots in the closed disk $|z|\le\rho_{\rm h}$ form a compact set, and every corresponding companion matrix has spectral radius at most $\rho_{\rm h}$.
Fix $\widetilde\rho\in(\rho_{\rm h},1)$.
Compactness and the resolvent formula on $|z|=\widetilde\rho$ give
\[
 \sup_a\|C(a)^n\|_{\rm op}\le C_{m,\rho_{\rm h},\widetilde\rho}\widetilde\rho^n
 \qquad(n\ge0).
\]
Hence, with $Y_1=(x_m,\ldots,x_1)$,
\[
 \|x_{m+n}\|_{\mathsf H}
 \le C\widetilde\rho^n\|Y_1\|_{\mathsf H^m}
 \le C\widetilde\rho^n
 \left(\sum_{j=1}^{m}\|x_j\|_{\mathsf H}^2\right)^{1/2}.
\]
Summing the geometric tail and adding the first $m$ terms proves \eqref{eq:mv-exp-l1l2}.
\end{proof}

Fix now a head cap $\bar r\ge1$ and $0<\rho_{\rm h}<1$.
For $r\le\bar r$, let $\mathcal A_{\exp}(r,s,\rho_{\rm h})$ denote the length-$L$ shared-factor arrays
\begin{equation}
 A_j=U\left(\sum_{q=1}^{r}C_q\lambda_q^{j-1}\right)V^\top,
 \qquad |\lambda_q|\le\rho_{\rm h},
 \qquad 1\le j\le L,
 \label{eq:mv-exp-class}
\end{equation}
with $U,V\in\R^{d\times s}$, intersected with the transfer-stable class of Lem.~\ref{lem:mv-kl}.
No separation between the learned rates is imposed in the statement below.

\begin{proposition}[Bounded-head exponential charts are uniformly regular]
\label{prop:mv-exp-regular}
Fix $\bar r$, $\rho_{\rm h}<1$, and the transfer-stability margin $\rho<1$.
For every $r\le\bar r$, every predictive chart contained in $\mathcal A_{\exp}(r,s,\rho_{\rm h})$ satisfies the entropy and envelope conditions above with
\begin{equation}
 p_{\exp}=2s(d-s)+rs^2+r,
 \qquad
 \log N(\mathcal S_\Phi,\epsilon)
 \le C p_{\exp}\log(C/\epsilon),
 \qquad
 \Lambda\le C,
 \label{eq:mv-exp-regular}
\end{equation}
where $C$ depends only on $\bar r$, $\rho_{\rm h}$, and $\rho$, and is independent of $L,T,d,s$, and $\sigma$.
\end{proposition}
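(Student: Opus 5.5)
The plan is to treat the two conditions separately: the envelope follows almost directly from Lem.~\ref{lem:mv-exp-l1l2}, while the entropy bound needs a linear-span covering. In both, every object in $\mathcal S_\Phi$ or in $\operatorname{cone}(\mathcal S_\Phi^0-\mathcal S_\Phi^0)$ is a signed combination of at most eight chart arrays from \eqref{eq:mv-exp-class}.

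\emph{Envelope.}
\begin{enumerate}
\item A nonzero $W\in\operatorname{cone}(\mathcal S_\Phi^0-\mathcal S_\Phi^0)$ is therefore a sequence $W_j=\sum_{q\le m}V_q\lambda_q^{j-1}$ with $m\le 8\bar r$, $|\lambda_q|\le\rho_{\rm h}$ and $V_q\in\R^{d\times d}$.
\item Apply Lem.~\ref{lem:mv-exp-l1l2} in the Hilbert space $\mathsf H=(\R^{d\times d},\|\cdot\|_F)$, and use $\|\cdot\|_{\rm op}\le\|\cdot\|_F$. This gives $\sum_{j\le L}\|W_j\|_{\rm op}\le C_{8\bar r,\rho_{\rm h}}(\sum_j\|W_j\|_F^2)^{1/2}$.
\item The lower spectral bound $F_{A^\star}\succeq\sigma^2(1+\rho)^{-2}I$ of Lem.~\ref{lem:mv-kl}, inserted into Thm.~\ref{thm:mv-geometry}, gives $\sum_j\|W_j\|_F^2\le(1+\rho)^2\cE(W)/\sigma^2$.
\item Hence $\Lambda\le(1+\rho)C_{8\bar r,\rho_{\rm h}}$. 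This constant is independent of $L,T,d,s,\sigma$, and rate separation is never used.
\end{enumerate}

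\emph{Entropy.} I will not cover parameter space directly, because normalisation blows up near degenerate configurations. Instead I cover $\mathcal S_\Phi$ by a union of unit spheres of linear spaces indexed by a low-dimensional compact set.
\begin{enumerate}
\item \emph{Temporal part.} Every element of $\mathcal S_\Phi$ involves at most $4\bar r$ rates. Its temporal sequences all solve the recurrence attached to $q(z)=\prod(z-\lambda_q)$, whose coefficient vector $a$ lies in a compact set $\mathcal K\subset\R^{4\bar r}$. The solution space $\mathcal R(a)$ has the basis obtained by propagating the unit initial vectors through the companion matrix $C(a)$. By the uniform bound $\sup_a\|C(a)^n\|\le C\widetilde\rho^{\,n}$ from the proof of Lem.~\ref{lem:mv-exp-l1l2}, this basis is uniformly bounded in $\ell_2$ and Lipschitz in $a$, even when roots merge. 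I expect this confluent case to be the main obstacle; this coefficient parametrisation is what I would try first. Lower conditioning of the basis holds because its first $m$ entries are the identity.
\item \emph{Spatial part.} The column spaces of the left and right factors are spanned by subspaces $\mathcal U,\mathcal V$ of dimension at most $4s$. Each lies in a Grassmannian, with local charts of dimension at most $4s(d-s)+O(s^2)$; the excess $O(s^2)$ is absorbed into $rs^2$.
\item \emph{Linear spaces.} For fixed $(a,\mathcal U,\mathcal V)$, define $\mathsf L(a,\mathcal U,\mathcal V)=\{(P_{\mathcal U}C_jP_{\mathcal V}^\top)_{j\le L}: (C_j)\in\mathcal R(a)\otimes\R^{4s\times4s}\}$. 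Its dimension is at most $4\bar r(4s)^2\lesssim rs^2$, and $\mathcal S_\Phi\subset\bigcup\{\text{unit sphere of }\mathsf L\}$, spheres taken in the predictive metric.
\end{enumerate}

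\emph{Assembling the net.} Take an $\epsilon$-net of the index set $\mathcal K\times\mathrm{Gr}\times\mathrm{Gr}$. Its log-size is $\lesssim(r+s(d-s))\log(C/\epsilon)$ after the $O(s^2)$ absorption. At each net index, take an $\epsilon$-net of the unit sphere of $\mathsf L$, of log-size $\dim\mathsf L\cdot\log(3/\epsilon)$.

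It then suffices to show that index perturbation moves unit spheres in Hausdorff distance by $\le C\cdot$(index distance). This combines three facts: the Lipschitz bases above, the two-sided equivalence of the predictive metric with the Frobenius $\ell_2$ metric (constants $(1\pm\rho)^{\mp2}\sigma^2$ from Lem.~\ref{lem:mv-kl}, which cancel after normalisation), and the uniform lower conditioning of the temporal basis. Summing the two log-sizes gives $\log N(\mathcal S_\Phi,\epsilon)\le Cp_{\exp}\log(C/\epsilon)$ with $C$ depending only on $\bar r,\rho_{\rm h},\rho$.
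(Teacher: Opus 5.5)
Your proposal is correct and follows essentially the paper's proof. The envelope step is identical: at most $8\bar r$ exponentials, Lem.~\ref{lem:mv-exp-l1l2} in Frobenius norm, then the lower spectral bound. Your entropy cover by recurrence coefficients, two spatial subspaces, and coordinates in the resulting linear space is the paper's net over the compact recurrence-coefficient set, the two $k$-dimensional frames, and the first $4r$ core matrices, with your unit-sphere normalisation playing the role of the paper's radius bound $\sum_j\|D_j\|_F^2\le(1+\rho)^2/\sigma^2$. The confluent-root case you flag is not an obstacle in this coefficient parametrisation: the paper gets the uniform-in-$L$ Lipschitz dependence on $a$ from the telescoping identity $C^n-\widetilde C^n=\sum_{k<n}C^k(C-\widetilde C)\widetilde C^{\,n-1-k}$ combined with the uniform bound $\|C(a)^n\|_{\rm op}\le C\widetilde\rho^{\,n}$, exactly as you anticipate.
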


\begin{proof}
We verify the two conditions separately.
Every element of $\mathcal V_\Phi$ is a normalized difference of two model points, and every element of $\mathcal S_\Phi$ is a normalized difference of two such directions.
Consequently each $D\in\mathcal S_\Phi$ is a linear combination of at most four members of \eqref{eq:mv-exp-class}.
Its row and column spaces therefore have dimensions at most $4s$, while each scalar core coordinate is a sum of at most $4r$ real exponentials with rates in $[-\rho_{\rm h},\rho_{\rm h}]$.
After padding by zero roots if necessary, every such coordinate obeys an order-$4r$ recurrence whose characteristic roots lie in that interval.
Thus one may write
\[
 D_j=\widetilde U G_j\widetilde V^\top,
 \qquad
 \widetilde U,\widetilde V\in\R^{d\times k},
 \qquad
 k\le\min(4s,d),
\]
where $(G_j)$ is determined by an order-$4r$ stable recurrence and its first $4r$ matrix values.

Because $D$ is predictive-normalized, the lower spectral bound of Lem.~\ref{lem:mv-kl} gives
\[
 \sum_{j\le L}\|D_j\|_F^2\le \frac{(1+\rho)^2}{\sigma^2}.
\]
The recurrence coefficients range over a fixed compact subset of $\R^{4r}$.
The proof of Lem.~\ref{lem:mv-exp-l1l2}, together with the telescoping identity
$C^n-\widetilde C^n=\sum_{k=0}^{n-1}C^k(C-\widetilde C)\widetilde C^{n-1-k}$,
shows that the map from the recurrence coefficients and the first $4r$ core matrices to the whole coefficient sequence is Lipschitz in $\ell_2$, uniformly in $L$.
Standard volumetric nets for the two $k$-dimensional spatial subspaces, for the compact recurrence-coefficient set, and for the first $4r$ core matrices therefore give
\[
 \log N(\mathcal S_\Phi,\epsilon)
 \le C\{sd+rs^2+r\}\log(C/\epsilon).
\]
Here the factors of $\sigma$ cancel because both the radius of the normalized coefficient set and the coefficient accuracy corresponding to predictive accuracy $\epsilon$ scale as $\sigma^{-1}$.
Since $sd\le s(d-s)+s^2$ and $s^2\le rs^2$ for $r\ge1$, the displayed exponent is at most a constant multiple of $p_{\exp}$, proving the entropy bound in \eqref{eq:mv-exp-regular}.

For the envelope, any
$W\in\operatorname{cone}(\mathcal S_\Phi^0-\mathcal S_\Phi^0)$
is a scalar multiple of a difference of two elements of $\mathcal S_\Phi$ and hence is a linear combination of at most eight model points.
Thus the matrix-valued sequence $(W_j)$ is a sum of at most $8r\le8\bar r$ exponentials with rates bounded by $\rho_{\rm h}$.
Apply Lem.~\ref{lem:mv-exp-l1l2} in the Hilbert space of $d\times d$ matrices with Frobenius norm:
\[
 \sum_{j\le L}\|W_j\|_{\rm op}
 \le \sum_{j\le L}\|W_j\|_F
 \le C_{8\bar r,\rho_{\rm h}}
 \left(\sum_{j\le L}\|W_j\|_F^2\right)^{1/2}.
\]
The same lower spectral bound then gives
\[
 \sum_{j\le L}\|W_j\|_{\rm op}
 \le \frac{(1+\rho)C_{8\bar r,\rho_{\rm h}}}{\sigma}\sqrt{\cE(W)},
\]
which is \eqref{eq:mv-increment-envelope} with a constant independent of $L$.
\end{proof}

\begin{corollary}[Parametric forecast risk for bounded-head diagonal SSM charts]
\label{cor:mv-exp-erm}
Under the assumptions of Prop.~\ref{prop:mv-exp-regular}, let $\widehat A$ be the empirical-risk minimizer over a predictive chart contained in $\mathcal A_{\exp}(r,s,\rho_{\rm h})$ and containing $A^\star$.
There are $c,C$ depending only on $\bar r$, $\rho_{\rm h}$, and the transfer-stability margin such that, for every $x\ge1$, if
\[
 n=T-L\ge C(p_{\exp}+x),
\]
then, with probability at least $1-Ce^{-cx}$,
\begin{equation}
 \cE(\widehat A)
 \le C\sigma^2\frac{p_{\exp}+x}{n}.
 \label{eq:mv-exp-upper}
\end{equation}
In particular there is no additional context-length factor in the sample condition for this bounded-head exponentially stable class.
\end{corollary}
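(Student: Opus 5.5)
The corollary follows by feeding Prop.~\ref{prop:mv-exp-regular} into Thm.~\ref{thm:mv-smooth}. First, I apply Prop.~\ref{prop:mv-exp-regular} to the given chart. The chart is contained in $\mathcal A_{\exp}(r,s,\rho_{\rm h})$ and contains $A^\star$. The proposition says that it is a uniformly regular predictive chart with parameter count $p=p_{\exp}$, an entropy constant $C_0=C$, and an envelope constant $\Lambda\le C$. All of these depend only on $\bar r$, $\rho_{\rm h}$, and $\rho$, and none depends on $L$, $T$, $d$, $s$, or $\sigma$. These are exactly the two hypotheses, entropy on $\mathcal S_\Phi$ and the envelope \eqref{eq:mv-increment-envelope} on $\operatorname{cone}(\mathcal S_\Phi^0-\mathcal S_\Phi^0)$, that define a uniformly regular chart in Lem.~\ref{lem:mv-concentration}.

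Second, I invoke Thm.~\ref{thm:mv-smooth} with these constants. Its sample condition $n\ge C\Lambda^2(p+x)$ becomes $n\ge C'(p_{\exp}+x)$ once $\Lambda^2\le C^2$ is absorbed into the constant. That is the condition in the statement, and its constant is free of $L$. The conclusion $\cE(\widehat A)\le C\sigma^2(p_{\exp}+x)/n$ then holds with probability at least $1-Ce^{-cx}$, which gives \eqref{eq:mv-exp-upper}.

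Third, I check the dependence of the constants. The constants in Lem.~\ref{lem:mv-concentration} and Thm.~\ref{thm:mv-smooth} depend only on the stability margin and on $C_0$. Since $C_0$ here depends only on $(\bar r,\rho_{\rm h},\rho)$, the final $c,C$ depend only on those quantities, as the statement requires. The remark that no context-length factor appears is exactly the statement $\Lambda=O(1)$. This is the contrast with the generic bound $\Lambda\le(1+\rho)\sqrt L$, which would have made the sample condition of order $L(p+x)$.

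The only step that needs care is matching hypotheses. The ERM theorem requires true transfer stability and stable Gaussian vector autoregression. Both are built into $\mathcal A_{\exp}$, because that class is intersected with the transfer-stable class of Lem.~\ref{lem:mv-kl} and $A^\star$ lies in the chart. The theorem also requires that the chart be a legitimate domain for the empirical minimizer, and I take this from the phrase ``predictive chart'' in the statement. Beyond these checks I see no substantive obstacle, since the work is already done in the proposition.
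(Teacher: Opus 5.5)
Your proposal is correct and is essentially the paper's proof: Prop.~\ref{prop:mv-exp-regular} supplies the uniformly-regular-chart hypotheses with $\Lambda=O(1)$ and a parameter count of order $p_{\exp}$. Thm.~\ref{thm:mv-smooth} then gives the bound once $\Lambda^2$ is absorbed into the constant of the sample condition, and your choice $p=p_{\exp}$, $C_0=C$ is an equivalent bookkeeping of the paper's ``$p$ a constant multiple of $p_{\exp}$.''
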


\begin{proof}
Prop.~\ref{prop:mv-exp-regular} gives the hypotheses of Thm.~\ref{thm:mv-smooth} with $p$ a constant multiple of $p_{\exp}$ and $\Lambda=O(1)$.
\end{proof}

The same class also contains a patch on which the $p_{\exp}/T$ scale is necessary up to constants.
This is useful because it shows that the concrete upper law is not only a consequence of an oversized entropy bound.

\begin{proposition}[Matching lower scale on an exponential patch]
\label{prop:mv-exp-lower}
Fix $r\le\bar r$ and distinct rates $\lambda_1,\ldots,\lambda_r\in(-\rho_{\rm h},\rho_{\rm h})$.
For $L\ge r$, the stable class \eqref{eq:mv-exp-class} contains a neighbourhood for which, whenever
\[
 T\ge C\{rs^2+s(d-s)\},
\]
one has
\begin{equation}
 \inf_{\widehat A}\sup_{A^\star}
 \E\,\cE(\widehat A)
 \ge c\sigma^2\frac{2s(d-s)+rs^2}{T},
 \label{eq:mv-exp-lower}
\end{equation}
where $c,C$ may depend on $\bar r$, $\rho_{\rm h}$, the fixed rates, and the transfer-stability margin, but not on $T,d,s$, or $\sigma$.
Since $r\le rs^2$ for $s\ge1$, the numerator in \eqref{eq:mv-exp-lower} is equivalent up to constants to $p_{\exp}$.
\end{proposition}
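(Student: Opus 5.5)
The plan is to reuse the three Fano packings already built for the shared-factor class, restricting each to a neighbourhood inside $\mathcal A_{\exp}(r,s,\rho_{\rm h})$ with the rates $\lambda_1,\dots,\lambda_r$ held fixed. Fix a centre $A^{(0)}_j=\gamma\,U_0\bigl(\sum_{q}C^{(0)}_q\lambda_q^{j-1}\bigr)V_0^\top$ with small $\gamma$, so the transfer norm is well below $\rho$. Because the rates are fixed and distinct, the temporal part of every perturbation lies in the span of the $r$ fixed sequences $(\lambda_q^{j-1})_{j\le L}$. The Gram matrix of these sequences, truncated at $L\ge r$, is a Vandermonde-type Gram matrix. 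Its smallest eigenvalue is bounded below by a constant depending only on the fixed rates and $\rho_{\rm h}$, uniformly in $L\ge r$, because the first $r$ rows already form an invertible Vandermonde block. Its largest eigenvalue is at most $r/(1-\rho_{\rm h}^2)$. Consequently core perturbations $\epsilon\,U_0S_q^{(v)}V_0^\top$ attached to the $r$ exponentials produce coefficient-sequence separations comparable, up to these constants, to $\epsilon^2\sum_q\|S^{(u)}_q-S^{(v)}_q\|_F^2$.

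\textbf{Core packing.} Take the Gilbert--Varshamov sign code on $rs^2$ coordinates with $\epsilon^2=c_\epsilon/T$. Flatness is now automatic rather than delicate: the transfer function of the perturbation is bounded by $\epsilon\sum_q\|S_q\|_{\rm op}/(1-\rho_{\rm h})\le C\epsilon rs$. That would force $T\gtrsim r^2s^2$, so instead I bound the operator norm by $\epsilon\sqrt r\,s$ times a constant from Cauchy--Schwarz over $q$ with fixed rates. If this is still too crude, I would draw the signs as in Lem.~\ref{lem:flat-core}, with $e^{-iq\omega}$ replaced by $\sum_j\lambda_q^{j-1}e^{-ij\omega}$. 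The Chernoff and net argument goes through verbatim, because each coefficient is bounded by $1/(1-\rho_{\rm h})$ and the $\omega$-derivative is bounded. This gives flatness $C\epsilon\sqrt{r(s+\log(ers))}$ and hence the sample condition $T\ge C\{rs^2+s(d-s)\}$ stated, since $r\le\bar r$. Separation of order $\sigma^2rs^2/T$ follows from the Gram lower bound together with the lower spectral bound of Thm.~\ref{thm:mv-geometry}. The Kullback--Leibler divergence is at most $C_\rho T\epsilon^2 rs^2$ by Lem.~\ref{lem:mv-kl} and the Gram upper bound. Fano then gives the core term.

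\textbf{Subspace packings.} Repeat the proof of Thm.~\ref{thm:mv-subspace}, but replace the one-active-lag centre by $A_j(X)=\gamma U(X)\,C_1\lambda_1^{j-1}V_0^\top$ with $C_1=I_s$ and the other cores zero. This lies in \eqref{eq:mv-exp-class}, since $U(X)$ has full column rank and can be re-orthonormalised by absorbing $(I+X^\top X)^{1/2}$ into the core, which stays a valid core. The separation is $\gamma^2\delta^2\cdot4d_H(u,v)\sum_j\lambda_1^{2(j-1)}\ge\gamma^2\delta^2p_1$. The divergence bound and Fano apply unchanged, and the input-subspace packing is obtained by symmetry. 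Taking the maximum of the three bounds, which dominates a third of their sum, gives \eqref{eq:mv-exp-lower}.

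\textbf{Main obstacle.} The main obstacle is keeping every packing point inside the transfer-stable class while keeping the constants independent of $L$, $d$, $s$ and $\sigma$. This requires the uniform-in-$L$ Vandermonde Gram bounds and the flat sign construction adapted to geometric temporal profiles. I would first try the cheap Cauchy--Schwarz flatness bound, and fall back on the randomized flat code if the sample condition comes out too weak.
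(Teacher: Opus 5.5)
Your proposal is correct and follows essentially the paper's proof. It uses the same three Fano packings (core, output subspace, input subspace) with the rates held fixed, the uniform-in-$L$ Vandermonde Gram bounds via $K_L^\top K_L\succeq K_r^\top K_r$ and the convergent geometric Gram matrix, Lem.~\ref{lem:mv-kl} for the divergence, and the step that a maximum dominates a third of the sum. The only place you overwork is stability of the core packing. The paper simply uses the crude bound $\epsilon rs/(1-\rho_{\rm h})$: because $r\le\bar r$ gives $r^2s^2\le\bar r\,rs^2$, the stated condition $T\ge C\{rs^2+s(d-s)\}$, with $C$ depending on $\bar r$, already keeps the packing stable, so neither your Cauchy--Schwarz refinement nor the randomized flat code is needed.
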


\begin{proof}
For the core term, let $K_L\in\R^{L\times r}$ have entries $(K_L)_{jq}=\lambda_q^{j-1}$.
Distinctness of the fixed rates implies that $K_r$ is a nonsingular Vandermonde matrix, hence
\[
 0<c_\lambda I_r\preceq K_L^\top K_L\preceq C_\lambda I_r
 \qquad(L\ge r),
\]
where the lower bound follows from $K_L^\top K_L\succeq K_r^\top K_r$ and the upper bound from the convergent infinite geometric Gram matrix.
Use the Hamming packing of Lem.~\ref{lem:flat-core}, but now set
\[
 A_j^{(v)}=\epsilon U_0\left(\sum_{q=1}^{r}S_q^{(v)}\lambda_q^{j-1}\right)V_0^\top,
 \qquad \epsilon^2=c_\epsilon/T,
\]
for fixed orthonormal $U_0,V_0$.
The Gram bound gives pairwise coefficient separation of order $\epsilon^2rs^2$.
Moreover
\[
 \sup_\omega\|\mathsf A^{(v)}(\omega)\|_{\rm op}
 \le \frac{\epsilon}{1-\rho_{\rm h}}\sum_{q\le r}\|S_q^{(v)}\|_{\rm op}
 \le \frac{\epsilon rs}{1-\rho_{\rm h}}.
\]
Because $r\le\bar r$, the stated sample condition and a sufficiently small $c_\epsilon$ keep the packing inside the transfer-stable class.
Lem.~\ref{lem:mv-kl}, the regular spectral lower bound, and Fano's inequality then give a lower bound $c\sigma^2rs^2/T$ exactly as in \eqref{eq:mv-core-lower}.

For the output-subspace term, fix one rate $\lambda_0\in(-\rho_{\rm h},\rho_{\rm h})$, fixed $V_0$, and a small constant $\gamma>0$, and replace the one-lag subclass in Thm.~\ref{thm:mv-subspace} by
\[
 A_j(X)=\gamma\lambda_0^{j-1}U(X)V_0^\top,
 \qquad U(X)=U_0+U_\perp X.
\]
For the same local Hamming packing of $X$, the squared coefficient separation is the one-lag separation multiplied by
$\sum_{j=1}^{L}|\lambda_0|^{2(j-1)}$, which lies between $1$ and $(1-\rho_{\rm h}^2)^{-1}$.
The transfer norm is at most
$\gamma\|U(X)\|_{\rm op}/(1-\rho_{\rm h})$, so choosing $\gamma$ small and the perturbation local keeps the whole packing stable.
The proof of Thm.~\ref{thm:mv-subspace} therefore gives $c\sigma^2s(d-s)/T$ inside the exponential class.
Interchanging the left and right subspaces gives the same lower bound for the input subspace.
Taking the maximum of the three packings and using that a maximum dominates one third of their sum yields \eqref{eq:mv-exp-lower}.
\end{proof}

For bounded $r$ and $n\asymp T$, Cor.~\ref{cor:mv-exp-erm} and Prop.~\ref{prop:mv-exp-lower} therefore give matching parametric forecast-risk order on a nondegenerate exponential shared-factor patch.
The constants are uniform in the context length, but no uniformity as $r\to\infty$ is claimed; that distinction is essential in view of the head-crowding phenomenon below.

\subsection{Degrees of freedom \texorpdfstring{$2s(d-s)+rs^2+rp_\theta$}{3}}
Collecting the counts, at locally identifiable regular points the parameter dimension is $rd^2+rp_\theta$ for unrestricted matrices, $r\{s(2d-s)+p_\theta\}$ for independent rank-$s$ matrices, and
\begin{equation}
 p_{\rm sm}=2s(d-s)+rs^2+rp_\theta
 \label{eq:mv-dof}
\end{equation}
for the shared-subspace class \eqref{eq:mv-shared} with $p_\theta$ learned temporal parameters per head.
The three terms count the two Grassmannians, the cores, and the temporal shapes, and the Grassmannian term appears once for the input subspace and once for the output subspace, which is where the factor two comes from.

For fixed $p_\theta$ and $s\ge1$, the $rp_\theta$ term is at most a fixed multiple of $rs^2$, so the intrinsic count \eqref{eq:mv-dof} and the lower bound \eqref{eq:mv-shared-lower} agree up to constants in their dependence on $d$, $r$, and $s$.
When in addition $\Lambda=O(1)$ and the sample thresholds of \eqref{eq:mv-sample-condition} and Thm.~\ref{thm:mv-smooth} hold, the upper and lower rates therefore match in order on regular charts.
No independent minimax packing over the temporal parameters is claimed.
In this sense \eqref{eq:mv-dof} is the intrinsic dimension governing the regular-chart upper law, while singular temporal parameterizations are treated separately below.

\subsection{Whitening temporal spans and raw Gram crowding}
The conditioning that Thm.~\ref{thm:mv-smooth} requires is a statement about the temporal span and not about the chosen coordinates.
Suppose the temporal Gram matrix $G_r$ is positive definite, which is exactly what fails at a head collision.
Writing $k(j)=(k_1(j),\dots,k_r(j))^\top$ and $\psi(j)=G_r^{-1/2}k(j)$, the sequences $\psi_1,\dots,\psi_r$ are orthonormal, every core combination can be rewritten with $\widetilde C=G_r^{1/2}C$, and the resulting map is an isometry.
The apparent ill-conditioning of a raw head basis therefore disappears under whitening, and the statistically meaningful quantity is the frequency leverage of the orthonormal span rather than the condition number of $G_r$.

That said, crowding is unavoidable once many smooth heads are used.  The bounded-head constants in Prop.~\ref{prop:mv-exp-regular} are not asserted to remain uniform as $r\to\infty$.
If the temporal parameter set is compact with covering numbers $\varepsilon^{-p_\theta}$ and the normalized kernel is Lipschitz in the parameter, then for any $r$ parameter values the normalized Gram matrix $\widetilde G_r$ has smallest eigenvalue tending to zero as $r$ grows, at a rate governed by $p_\theta$.
Two smooth heads with nearby parameters produce nearly parallel temporal profiles, so a raw parameterization must become singular, and this is why the regularity in Thm.~\ref{thm:mv-smooth} is imposed on the image.

\subsection{Head collisions, vanishing cores, and singular strata}
Three degeneracies leave the regular charts.
Two heads can collide, $\theta_q\to\theta_{q'}$, in which case the temporal profiles become parallel and the divided-difference basis of App.~\ref{app:state} would be needed to keep a well-conditioned chart.
A core can vanish, $C_q\to0$, in which case the corresponding temporal parameter becomes unidentifiable.
The predictive Jacobian can lose rank for gauge reasons, since the factorization \eqref{eq:mv-shared} is invariant under $U\mapsto UO$, $V\mapsto VO'$, $C_q\mapsto O^\top C_qO'$ for orthogonal $O,O'$, and a chart must fix that symmetry.

Thm.~\ref{thm:mv-smooth} assumes uniform separation from all three, so the minimax behaviour on the singular strata is open.
This is not a technical gap that better constants would close, since the intrinsic dimension itself changes on those strata, and \eqref{eq:mv-dof} is not the right count there.
We flag it because a trained multi-head model is not guaranteed to stay away from collisions, and the crowding statement above says that with many heads it cannot.

\subsection{Relation to low-rank VAR and Tucker time-series models}
The class \eqref{eq:mv-shared} is a Tucker factorization of the coefficient tensor $(A_j)$, with the two Grassmannians as the cross-variable factors and the temporal laws as the third mode.
Tensor-structured and supervised-factor vector autoregressions study closely related coefficient-tensor restrictions from an estimation standpoint \citep{wang2022tensorvar,huang2025supervised}, and low-rank tensor recovery supplies related geometric and restricted-isometry tools \citep{rauhut2017tensor}.
General likelihood theory for multidimensional causal time-series models is developed by \citet{bardet2009qMLE}.
The singular strata generated by temporal-head collisions are specific to the shared-head parameterization introduced here and are not treated by any of these.

Two mismatch statements complete the picture and mirror the scalar Thm.~\ref{thm:state-complexity}.
Using too few temporal heads leaves a residual determined by the singular tail of the mode-three unfolding of the true coefficient tensor, while using too small a shared spatial subspace leaves residuals from the mode-one and mode-two unfoldings.
Both are proved by the same low-rank comparison used in App.~\ref{app:prelim}, applied to the appropriate unfolding, and both hold under the matrix regular-spectrum condition of Thm.~\ref{thm:mv-geometry}.
The contribution here is therefore not the tensor model but the identification of which unfolding controls which architectural resource in forecast risk.

\section{Experimental Details}
\label{app:experiments}

\subsection{Architecture definitions}
Three model families appear in Fig.~\ref{fig:e11}, all on the same benchmark.
The benchmark target is $y_t=\sum_{j=1}^{L}K_jZ_{t+1-j}$ with $Z$ independent standard Gaussians and $L=64$, and the target kernel $K$ is one of the three described below.
The target carries no additive noise, so the memory shape is identified by the data and the fitted first-chaos decay can be compared directly with the truth.

The diagonal state-space model has $m$ modes with eigenvalues $\lambda_i=\rho\,\sigma(\theta_i)$ for a learned $\theta$, an input map $B$, and a readout $C$, so its impulse response is $\sum_i C_iB_i\lambda_i^{\,j-1}$, where $B_i$ and $C_i$ denote the $i$th components of those vectors, and the model is linear in the state.
Since the eigenvalues are $\rho$ times a sigmoid they lie strictly inside $(0,\rho)$, which matters for the boundary-matched control below.
The gated recurrence is $S_t=g(Z_t)\odot S_{t-1}+v(Z_t)$ with a bounded data-dependent gate $g$, a bounded injection $v$, and a linear readout.
It is nonlinear in and gated by the current input while remaining affine in the previous state, so it is a contextual affine-in-state recurrence rather than a general $F(S,Z)$.
This is a legitimate subclass of the class treated by Thm.~\ref{thm:first-chaos}: bounded injection and contraction produce a bounded forward-invariant state domain on which the derivative budgets can be verified at each fixed $\rho$. The experiment should not be read as testing a genuinely nonlinear-in-state recurrence, which we do not implement.
The attention model is a two-layer causal Transformer with model width $32$, four heads, feedforward width $64$, and learned relative positional biases.

\subsection{Width-independent derivative and amplitude budgets}
Thm.~\ref{thm:first-chaos} assumes uniform bounds on the three derivative blocks, so a model whose bounds grow with width would evade the conclusion for reasons unrelated to memory.
Every recurrent model is therefore constrained to width-independent budgets, enforced through the effective parameterization rather than by a penalty or by projecting raw parameters after each optimizer step.
Each budgeted parameter block is passed through a fixed $\ell_2$ cap before use, so the effective weights satisfy the budget at every step by construction.
The budgets used are $\|B\|\le2$, $\|C\|\le2$, gate norm at most $2$, value norm at most $2$, and input scale at most $4$, all independent of $m$.
The contraction $\rho$ is fixed by construction rather than learned, so it is an experimental variable and not an outcome.

This is the single most important protocol choice in the experiment.
Without it, a width sweep would confound two effects, since a wider model could reduce error either by using more modes or by using larger weights, and only the first is what Thm.~\ref{thm:state-complexity} and Thm.~\ref{thm:first-chaos} are about.

\subsection{Bounded nonlinear injection}
For the gated cell, $\|g(z)\|_\infty\le\rho<1$ and bounded injection $\|v(z)\|_\infty\le V_{\max}$ make the box $\mathcal B_\rho=\{s:\|s\|_\infty\le V_{\max}/(1-\rho)\}$ forward invariant. The input derivative is $D_zF(s,z)=D_zg(z)\odot s+D_zv(z)$, so the fixed parameter caps give a finite width-independent bound $\|D_zF\|\le K_\rho$ uniformly on $\mathcal B_\rho\times\R$ for each fixed $\rho$, rather than on all of $\R^m\times\R$.
The gate is likewise bounded, so the state Jacobian norm is at most $\rho$ pointwise on this domain and not merely in expectation, which is the uniform contraction the theorem requires. Because $K_\rho$ may vary with $\rho$, the stability sweep is interpreted qualitatively rather than as a numerical test of the exact asymptotic constant in Corollary~\ref{cor:critical}.

\subsection{Training, validation, and seed protocol}
Each configuration uses $12{,}000$ training, $3{,}000$ validation, and $6{,}000$ test windows, generated with distinct fixed data seeds so that the three splits are independent draws rather than a partition of one sample.
The state-space model is trained for $1500$ steps at learning rate $3\times10^{-2}$, the gated model for $900$ steps at $2\times10^{-2}$, and the attention model for $1000$ steps at $3\times10^{-3}$, all with batch size $128$.
Model selection uses the validation split and all reported numbers are on the test split.

Seeds are set before model construction so that initialization and data order are both controlled.
The protocol is versioned and the identifier is recorded in the summary file shipped with the archive, which is what ties the reported numbers to a specific configuration.
The width and stability sweeps use seeds $0$ through $3$, while the attention reference and each exponential control use seeds $0$ through $2$.
Attention checkpoints are cached under a hash of the full configuration including the seed and the step count, so changing any protocol field forces retraining rather than silently reusing an earlier run.

\subsection{Exact SSM impulse response and first-chaos estimation}
The first-chaos coefficients $\beta_j=\E[\widehat m_tZ_{t+1-j}]$ are obtained differently for linear and nonlinear models, and the distinction is worth stating because it removes estimation noise from one of the three curves in panel (a).
The state-space model is linear in the state, so its first-chaos memory is exactly its impulse response and is read off the parameters in closed form with no sampling at all.
The gated and attention models are nonlinear, so their coefficients are estimated by Monte Carlo on an independent diagnostic sample of $200{,}000$ standard Gaussian windows, using the empirical correlation between the output and each lagged input.

The fitted slopes are obtained by regressing $\log|\beta_j|$ on $\log j$ over lags $3$ to $30$, which is the range where the target is well above the diagnostic noise floor and below the truncation effects at $j$ near $L$. They are descriptive finite-window shape summaries only: an exponentially decaying sequence is not expected to have a constant log-log slope.

\subsection{Algebraic target results}
The algebraic target is $K_j=j^{-1.1}$, so $\alpha=1.1>1$ as Thm.~\ref{thm:first-chaos} requires, and its variance on the $L=64$ benchmark is $\sum_{j=1}^{L}K_j^2=1.485$.
The fitted first-chaos slopes are $-1.110$ for attention against a true exponent of $-1.1$, $-2.26$ for the diagonal state-space model, and $-2.56$ for the gated recurrence.
Attention therefore recovers the algebraic law almost exactly, while both contractive recurrences decay far faster than the target, which is the qualitative content of the envelope in \eqref{eq:first-chaos}.

The recovery by attention here should not be read across to a noisy forecasting setting.
The target in this benchmark is exactly a linear functional of the observed lag vector, so the coefficient law is identified by the data, which is not the case when the predictor sits near an innovation floor.

\subsection{Matched exponential-target specificity control}
Two exponential controls are run with the architecture, budgets, optimizer, and data splits held fixed and only the target changed.
The controls use the matched seeds $0$, $1$, and $2$, while the algebraic sweep additionally includes seed $3$, so the comparison is seed matched on the first three and not on the fourth.
The first is boundary matched with $\beta=0.8$ equal to the contraction $\rho=0.8$, and since the diagonal eigenvalues are $\rho$ times a sigmoid this target lies on the closure of the model class rather than in its interior.
Here $\beta$ denotes only the exponential-control decay used in the figure labels and is unrelated to the first-chaos sequence $(\beta_j)$ of Sec.~\ref{sec:stability}.
The second has $\beta=0.9>\rho$, so it decays more slowly than the envelope of Thm.~\ref{thm:first-chaos} and its tail leaves what a $\rho$-contractive state can carry, while remaining exponential.

The three targets have different variances, namely $1.485$, $2.778$, and $5.263$ for the algebraic, boundary-matched, and slower exponential targets, so raw mean squared error is not comparable across them.
All reported numbers and both sweeps therefore use the unexplained fraction of target variance, that is held-out error divided by the variance of the target.
This normalization is not cosmetic, since the raw comparison would understate the boundary-matched advantage and overstate the slower-exponential gap.

\subsection{Width sweep}
The width sweep fixes $\rho=0.8$ and varies $m$ over $\{2,4,8,16,32,64,128\}$.
At the largest width the diagonal state-space model attains an unexplained variance fraction of $6.8\times10^{-6}$ on the boundary-matched control, $1.1\times10^{-2}$ on the algebraic target, and $4.5\times10^{-2}$ on the slower exponential.
The gated recurrence shows the same ordering with a much smaller gap on the boundary-matched control, reaching $2.0\times10^{-3}$ there against $1.4\times10^{-2}$ on the algebraic target.
The attention reference is at $5.6\times10^{-4}$ on the algebraic target.

Two features matter more than the individual values.
For the diagonal state-space model the slower-exponential floor is essentially width independent, varying by about one percent from $m=2$ to $m=128$.
The algebraic curve for the same model, and both gated curves, improve appreciably at small widths and then plateau well above zero, the largest such improvement being about forty percent for the gated slower-exponential control.
The shape of the evidence is therefore an early-width gain followed by a plateau, and what it supports is that increasing width alone does not remove the floor under a fixed contraction and fixed derivative budgets, which is what an envelope obstruction rather than a capacity limit should look like.
An exponential target is simultaneously the easiest and the hardest of the three, which removes the distinction between power-law and exponential memory as the explanation and places the floor in the relation between the target tail and the contraction.

\subsection{Stability sweep}
The stability sweep fixes $m=16$ and varies $\rho$ over $\{0.5,0.7,0.8,0.9,0.95,0.99,0.995\}$ on the algebraic target.
All numbers here are again the unexplained fraction of target variance, the same metric plotted in panel (c) and used in the width sweep.
The diagonal state-space model improves monotonically from $6.2\times10^{-2}$ to $7.5\times10^{-6}$ across that range, a drop of nearly four orders of magnitude.
The gated recurrence improves from $6.8\times10^{-2}$ to about $2.0\times10^{-3}$ and then stops, with the values at $\rho=0.95$, $0.99$, and $0.995$ all near that level.

The upper axis of panel (c) is reparameterized by the margin $\delta=-\log\rho$, which is the quantity appearing in Corollary~\ref{cor:critical}.
We read the sweep as removal of the fixed-contraction obstruction at this finite context and not as evidence that near-criticality suffices. At the largest $\rho$, the contraction timescale exceeds $L=64$, so those points do not probe the untruncated asymptotic algebraic tail required for a direct numerical test of Corollary~\ref{cor:critical}. The gated residual floor can also include higher-chaos variance, as discussed in App.~\ref{app:stability}, and is not attributed uniquely to first-chaos tail mismatch.
The effect spans orders of magnitude, and means with standard errors are reported over four independent training seeds.

\subsection{Complete numerical tables}
The supplementary material contains the code and numerical summaries sufficient to regenerate both figures, including the archived per-configuration summary behind Fig.~\ref{fig:e11}; the panels of Fig.~\ref{fig:resource-laws} are recomputed from the closed-form and quadrature routines rather than read from a stored table.
For Fig.~\ref{fig:resource-laws} these are deterministic, and the reported items are the constructive upper bound and the Hankel lower bound against $r$ with the fitted slopes against $\sqrt r$ and their coefficients of determination, the exact fractional prediction gap against $L$ for each $d$ with the fitted log-log slopes, and the exact zero-memory risk against $d$ with its ratio to $(\pi^2/6)d^2$.

Two numerical safeguards are recorded with those tables.
The lower bound in panel (a) is computed from the singular values of a finite Hankel section of order $1200$.
Singular values below $10^3$ times machine epsilon times the leading singular value are discarded as noise, so the curve is truncated at that index rather than continued into the floating-point floor.
The fractional prediction gap is computed as a cancellation-free suffix sum of $-\log(1-d^2/(k-d)^2)$ with a leading-tail trigamma correction rather than by differencing log-Gamma values, which would lose several digits at the largest contexts.
The correction uses $-\log(1-u)=u+O(u^2)$ on the tail beyond a cutoff $K$, so it captures $d^2\psi_1(K+1-d)$ and omits $O(d^4\sum_{k>K}k^{-4})=O(d^4K^{-3})$; at the cutoff $K=2^{20}$ used here that residual is far below the plotted scale, but the term is a leading-tail correction and not an exact remainder.

For Fig.~\ref{fig:e11} the archived summary records, for every model, target, width, and contraction, the seed mean and standard error that are plotted, together with the fitted first-chaos slopes, the target variances used for normalization, and the seed lists.
Because the exponential controls use three seeds and the main sweeps use four, the seed list rather than a single global count is what makes each entry interpretable.

\end{document}